%% file: main.tex
\documentclass[11pt,a4paper]{article}

\input{settings/header.tex}

\excludefalse
\usepackage{booktabs}

\theoremstyle{plain}
\newtheorem{theorem}{Theorem}[section]
\newtheorem{proposition}[theorem]{Proposition}
\newtheorem{lemma}[theorem]{Lemma}

\theoremstyle{definition}

\theoremstyle{remark}
\newtheorem{remark}[theorem]{Remark}

\newcommand{\ICNN}{\operatorname{ICNN}}
\newcommand{\HyCNN}{\operatorname{HyCNN}}
\newcommand{\ReLU}{\operatorname{ReLU}}

\DeclareMathOperator*{\argmax}{argmax}
\DeclareMathOperator*{\argmin}{argmin}

\begin{document}
\title{\textbf{Hyper Input Convex Neural Networks for Shape Constrained Learning and Optimal Transport}}
\input{settings/meta.tex}
\newcommand{\parabold}[1]{\subsection{#1}}

\section{Introduction}\label{sec:introduction}

This paper introduces a novel neural network architecture, called Hyper Input Convex Neural Network (HyCNN)%
, which is $(i)$ always convex in its inputs, $(ii)$ explicitly designed to harness depth for approximation, and $(iii)$ outperforms alternative procedures in terms of predictive power when trained at depth for large data sets, see \Cref{fig:HyCNN}. 

\parabold{Shape constrained learning}
Among shape constraints, \emph{convexity in the input} is favorable for multiple reasons.
First, convexity goes beyond being a convenient inductive bias as it is intrinsic to several core problems.
In optimal transport (OT) with squared Euclidean costs, Brenier's theorem asserts that the OT map between absolutely continuous probability measures with finite second moment is given by the gradient of a convex potential \citep{brenier1991polar,villani2008optimal,peyre2019computational}. Hence, learning an OT maps is naturally linked to learning a convex function \citep{taghvaei20192,makkuva2020optimal,bunne2023learning}. In invertible generative modeling, gradients of (strongly) convex potentials yield monotone diffeomorphisms that serve to define normalizing flows \citep{rezende2015variational,huang2020convex}. 
Convex models also arise in optimal control, where convexity is used to ensure planning computational feasibility while using expressive learned dynamics or cost surrogates \citep{chen2018optimal}.

\begin{figure}[t!]
  \begin{center}
    \centerline{\includegraphics[width=\textwidth]{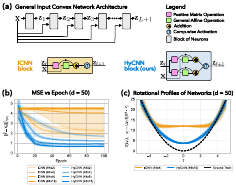}}
    \caption{{Input convex neural networks (ICNNs) versus Hyper Input Convex Neural Networks (HyCNNs).} \textbf{(a)} Depiction of generic input convex network architecture. ICNN blocks comprise one lane, whereas Hyper ICNN (HyCNN) blocks involve two lanes of matrix operations. \textbf{(b)} Averaged Empirical MSE over 10 repetitions with $(10\%-90\%)$ confidence bands for learning the function $f_0(x) = \|\bx\|_2^2$ on $\RR^d$ with $d = 50$ based on $n = 5,000$ samples of the form $(\bX_i, Y_i)_{i\in [n]}$ with $\bX_i \sim \mathrm{Unif}[-1,1]^d$ and $Y_i = f_0(\bX_i) + \epsilon_i$ with $\epsilon_i \sim \mathcal{N}(0,1)$. Two network architectures are used, ICNN \citep{amos2017input} and HyCNN (ours)  where the label $(m \times L)$ indicates width $m$ and depth $L$ of the network.  \textbf{(c)} Rotational profiles $t \mapsto f(t \bv_i)$ of ground truth (dashed) and learned networks with best MSE performance within network class, for $100$ randomly drawn directions $\bv_i \sim \mathrm{Unif}(\SS^{d-1})$.}
    \label{fig:HyCNN}
 \end{center}
\end{figure}

Second, for a real-valued function $f(\cdot)$ that is convex and coercive, operations such as $$\arg\min_{\bx\in \RR^d} f(\bx)$$ become tractable via convex optimization, and gradients/subgradients are globally well-behaved. As such, network architectures that are convex in the input are natural candidates for learning energy functions, which are ubiquitous in machine learning and statistics, e.g., to devise better optimization heuristics and parameterizations for data-driven mirror descent \citep{tan2023data}, to design convolutional networks which are more robust to noise \citep{sivaprasad2021curious}, and to enforce related shape-specific properties on level sets  \citep{nesterov2022learning}.

Third, convexity lies at the heart of classical shape-restricted statistics problems such as convex regression \citep{seijo2011convex,hannah2013multivariate,guntuboyina2015global,balazs2015near} and log-concave density estimation \citep{samworth2018recent}. Convex energy parameterizations have recently also been used to scale such problems with deep learning infrastructure \citep{lin2023falcon} and interact naturally with score-based learning objectives \citep{hyvarinen2005estimation,song2021score}, see \cite{feng2024optimal} for a recent example of shape-constrained score-based regression. 

Despite these motivations, the performance of existing convex neural architectures in small to moderate dimensions is often only comparable to that of unconstrained (multi-layer perceptron, MLP) networks, even when convexity is known to hold, see \Cref{app:interpolation} of the supplementary material, in particular for deeper architectures. 
A potential reason is that most input-convex constructions are not easy to train at depth, and thus fail to leverage the approximation benefits of deeper architectures. Moreover, as we show in \Cref{sec:approximation}, ICNNs are provably inefficient at approximating quadratic functions, which are fundamental building blocks for convex functions. Our architecture aims to resolve both issues. 

\parabold{Neural networks which are convex in the input}\label{sec:related_work_icnns} 
\emph{Input convex neural networks} (ICNNs)  \citep{amos2017input} are the 
dominant architecture to guarantee convexity with respect to the inputs. A principled initialization strategy improves the practical performance of ICNNs \citep{hoedt2023principled}. 
Modifications to ICNNs have been proposed to improve their expressiveness, e.g., by using bilinear positive definite skip connections from the input \citep{bunne2023learning}.

Based on the representation of convex piecewise-affine functions as maxima of affine forms, alternative input convex network constructions focus on maxout units \citep{goodfellow2013maxout}. This includes
groupwise maximum networks \citep{warin2023groupmax} for convex approximation and input convex Kolmogorov--Arnold networks (KANs) \citep{warin2024p1,deschatre2025input,thakolkaran2025can}. Despite the conceptual appeal of these architectures, they have not been able to consistently outperform ICNNs in practice, and their quantitative approximation properties are not well understood.

\parabold{Contributions} 
We introduce \emph{Hyper Input Convex Neural Networks (HyCNNs)}, a novel shape-constrained architecture designed to approximate convex functions while leveraging depth and remaining trainable at scale.
At a high level, a HyCNN combines two ideas:
$(i)$ parameter constraints as in ICNNs to ensure convexity, and $(ii)$ using maxout units to allow each layer to refine the piecewise-affine structure of the approximation. %
With appropriate nonnegativity constraints on hidden-to-hidden weights, HyCNNs are convex by construction (Proposition~\ref{prop:HyCNN_convexity}), include standard ReLU ICNNs as a special case, and inherit known universal approximation guarantees for convex functions \citep{amos2017input,chen2018optimal}. We also provide a principled initialization strategy for HyCNNs that ensures stable signal propagation across layers at initialization, which is crucial for training deep networks. 

Our core technical insight is that the modified convex unit fundamentally changes the \emph{depth--expressiveness trade-off}.
Specifically, compared to a classical ICNN layer, each HyCNN layer can \emph{double} the number of affine pieces in the induced piecewise-affine approximation, leading to an exponential-in-depth refinement mechanism. We formalize this by proving that constant-width HyCNNs approximate the quadratic function on $[0,1]$ with uniform error decaying as $O(2^{-2L})$ in the number of layers $L$ (Theorem~\ref{thm:quadratic_approximation}).
In contrast, ReLU-based ICNNs in one dimension remain convex and piecewise-affine, but their number of linear pieces grows only linearly with depth (\Cref{prop:piecewise_linear_icnn}), which yields a polynomial lower bound on the best achievable uniform error for $x\mapsto x^2$ (Theorem~\ref{thm:lowerBoundPiecewiseLinear}).
This gap provides a concrete explanation for why ``making ICNNs deeper'' often fails to deliver commensurate gains in practice.

We then apply HyCNNs to learn high-dimensional OT maps, where we outperform
multiple variants of ICNNs and  the Monge Gap MLP estimator in terms of predictive performance. In addition, we demonstrate the versatility of HyCNNs by applying them to various synthetic regression problems, demonstrating superior performance over existing convex neural architectures.

\parabold{Paper organization} 
Section~\ref{sec:HyCNNs} introduces the HyCNN architecture, establishes convexity, and discusses initialization and training rules. 
Section~\ref{sec:approximation} compares the approximation properties of HyCNNs and ICNNs. 
Section \ref{sec:applications} presents applications of HyCNNs to optimal transport map estimation. Section \ref{sec:experiments} provides empirical results. The supplementary material  contains additional results to our approximation theory, omitted proofs, and additional experiments.

\section{HyCNNs: Hyper Input Convex Neural Networks}\label{sec:HyCNNs}

\parabold{Architecture}
Our architecture of the HyCNN takes inspiration from maxout networks \citep{goodfellow2013maxout} and input convex neural networks \citep{amos2017input}. To introduce HyCNNs, we first recall the construction of ICNNs. Formally, an ICNN 
takes a $d$-dimensional input $\bx \in \RR^d$ and outputs a real number; it exhibits $L\in \NN$ hidden layers and the number of hidden neurons per layer is characterized by $(d_1, \dots, d_L)\in \NN^L$. The output of each layer and the function is given by the following equations for $\ell = 0, \dots, L-1$ (see \Cref{fig:HyCNN}(a)):
\begin{align}\label{eq:ICNN}
    \   \operatorname{ICNN}(\bx) := \bz_{L+1} \coloneqq V_{L}\bz_L + W_L \bx + \bb_L, \qquad \bz_{\ell+1} &:= \sigma(V_\ell \bz_{\ell} + W_{\ell} \bx + \bb_{\ell}),
\end{align}
where $\bz_0 \coloneqq \mathbf{0}_d$ and $\bz_\ell$ is given by the layerwise activation function $\sigma\colon \RR \to \RR$ applied componentwise, with parameters 
consisting of weight matrices and biases. 
To ensure that $f$ is convex in $\bx$,
the activation function $\sigma$ is convex and non-decreasing (e.g., ReLU),
the weight-matrices $\{V_\ell\}_{\ell=0}^L$ are constrained to be non-negative, while $\{W_\ell\}_{\ell=0}^L$ and $\{\bb_\ell\}_{\ell=0}^L$ are unconstrained \citep[Proposition 1]{amos2017input}.

For the HyCNN architecture, we modify the layerwise construction of ICNNs by introducing an additional lane of matrix operations. Concretely, a HyCNN 
with $L$ layers and $(d_1, \dots, d_L)$ hidden neurons per layer is given by the following equations for $\ell = 0, \dots, L-1$:
\begin{align}\label{eq:HyCNN}
\operatorname{HyCNN}(\bx) := \bz_{L+1} = V_{L}\bz_L + W_L \bx + \bb_L, \qquad     \bz_{\ell+1} &:= \overline\sigma\left((V_\ell^{k} \bz_{\ell} + W_{\ell}^{k} \bx + \bb_{\ell}^{k})_{k\in \{1,2\}}\right), 
\end{align}
where $\bz_0\coloneqq \mathbf{0}_d$ and $\overline \sigma$ represents the \emph{(gating) activation function} $\overline \sigma\colon \RR^2 \to \RR$ that is convex and componentwise non-decreasing, and the notation $\overline\sigma\left((a_k)_{k\in \{1,2\}}\right)$ denotes the vector with entries $\overline\sigma(a_1, a_2)$, applied componentwise.
The parameters consist of weight matrices and biases 
with appropriate dimensions to ensure compatibility.

Based on this construction, we have the following convexity guarantee for HyCNNs.

\begin{proposition}\label{prop:HyCNN_convexity}
	Let $\overline \sigma \colon \RR^2 \to \RR$ be convex and componentwise non-decreasing, and let the matrices $\{V_{\ell}^1, V_{\ell}^2\}_{\ell =0}^{L-1}$ and $V_{L}$ be componentwise non-negative. Then, $\operatorname{HyCNN} \colon \RR^d\to \RR$ in \eqref{eq:HyCNN} is convex. 
\end{proposition}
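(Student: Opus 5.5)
The plan is to prove, by induction on $\ell$, the stronger statement that every coordinate of the hidden state $\bz_\ell$ is a convex function of $\bx$ on $\RR^d$ for $\ell = 0,\dots,L$. The claim for the output $\bz_{L+1}$ then follows in one final step. The base case is immediate, since $\bz_0 = \mathbf{0}_d$ is constant and hence convex (indeed affine) in $\bx$.

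For the inductive step, assume every coordinate of $\bz_\ell$ is convex in $\bx$ and fix a neuron index $j$. For $k\in\{1,2\}$ set
\begin{equation*}
a_j^k(\bx) \coloneqq (V_\ell^k \bz_\ell(\bx))_j + (W_\ell^k \bx)_j + (\bb_\ell^k)_j .
\end{equation*}
The first term is a combination of convex functions with the nonnegative coefficients $(V_\ell^k)_{ji}\ge 0$, so it is convex. The remaining terms are affine in $\bx$. Hence each $a_j^k$ is convex, and $(\bz_{\ell+1})_j = \overline\sigma(a_j^1(\bx), a_j^2(\bx))$.

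The only nontrivial step is a vector composition rule: if $g\colon\RR^2\to\RR$ is convex and componentwise non-decreasing and $h_1,h_2$ are convex, then $g(h_1,h_2)$ is convex. To prove it, take $\bx,\by$ and $\lambda\in[0,1]$. By convexity of $h_k$,
\begin{equation*}
h_k(\lambda\bx+(1-\lambda)\by)\le \lambda h_k(\bx)+(1-\lambda)h_k(\by).
\end{equation*}
Monotonicity of $g$ in each argument, applied one coordinate at a time, gives
\begin{equation*}
g\bigl(h_1(\lambda\bx+(1-\lambda)\by),\,h_2(\lambda\bx+(1-\lambda)\by)\bigr)\le g\bigl(\lambda h_1(\bx)+(1-\lambda)h_1(\by),\,\lambda h_2(\bx)+(1-\lambda)h_2(\by)\bigr).
\end{equation*}
Convexity of $g$ on $\RR^2$ then bounds the right-hand side by $\lambda g(h_1(\bx),h_2(\bx))+(1-\lambda)g(h_1(\by),h_2(\by))$. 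Applying this rule with $g=\overline\sigma$ and $h_k=a_j^k$ shows that $(\bz_{\ell+1})_j$ is convex, which closes the induction.

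Finally, $\operatorname{HyCNN}(\bx)=V_L\bz_L+W_L\bx+\bb_L$ is a nonnegative combination of the convex coordinates of $\bz_L$, because $V_L\ge0$, plus an affine function of $\bx$. It is therefore convex. I do not expect any real obstacle here. The one point needing care is the monotone composition step, where the inequalities must be combined in the right order: monotonicity first, then convexity of $\overline\sigma$. Note also that no sign constraint is needed on the $W_\ell^k$ or $\bb_\ell^k$, because they enter only through affine terms.
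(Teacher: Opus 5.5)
Your proof is correct and follows the paper's argument essentially verbatim: induction over the layers, a composition rule for convex, componentwise non-decreasing $\overline\sigma$, and a final non-negative combination plus an affine term. The paper isolates the composition rule as Lemma~\ref{lemma_comp_convexnondecre} and proves it with the same chain of inequalities you use, applying monotonicity first and then convexity of $\overline\sigma$.
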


\begin{proof}
Note that $\bz_0=\mathbf{0}_d$ is a convex function of $\bx$. For the induction step, assume that each component of $\bz_\ell$ is a convex function. 
Then, since every component of $V_\ell^{k}$ is non-negative, for each $k \in \{1,2\}$
each component of $V_\ell^{k} \bz_{\ell} + W_{\ell}^{k} \bx + \bb_{\ell}^{k}$ is convex.  
Since $\overline \sigma$ is convex and componentwise non-decreasing, $\bz_{\ell+1}$ is convex as well (See Lemma~\ref{lemma_comp_convexnondecre}).
Finally, the output layer is given by a non-negative affine combination of convex functions plus an affine function, which is again a convex function. Thus, by induction, $\operatorname{HyCNN}$ is convex.
\end{proof}

The default choice for $\overline \sigma$ in analysis and experiments is the componentwise maximum,  also known as \emph{maxout unit} \citep{goodfellow2013maxout}, i.e., 
\begin{align}\label{eq:maxout}
    \overline \sigma(a_1, a_2) = \max(a_1, a_2).
\end{align}
This will result in piecewise-affine HyCNNs, which exhibit favorable approximation properties as we show in \Cref{sec:approximation}. Moreover, by choosing $\overline \sigma(a_1, a_2) = \sigma(a_1)$, where $\sigma\colon \RR \to \RR$ is a convex and non-decreasing activation function such as the ReLU, we recover the standard ICNN architecture. Hence, HyCNNs can be interpreted as a generalization of ICNNs, and enjoy similar universal approximation guarantees for convex functions as classical ICNNs derived by \citet{chen2018optimal}. 

\begin{remark}[Smooth HyCNNs]
To obtain smooth HyCNNs, we can use a smooth approximation of the max function, such as the log-sum-exp function for parameter $\tau>0$,  %
\begin{align}\label{eq:logsumexp}
    \overline \sigma_\tau(a_1, a_2) = \tau \log\left(e^{a_1/\tau} + e^{a_2/\tau}\right).
\end{align}
\end{remark}

\begin{remark}[Comparison with Maxout and GroupMax networks]
	HyCNNs exhibit some similarities and differences with Maxout networks \citep{goodfellow2013maxout} and GroupMax networks \citep{warin2023groupmax}. Unlike HyCNNs, in Maxout network the weights for hidden-to-hidden neuron connections are not constrained to be non-negative. Imposing this constraint results in the GroupMax network architecture. A key distinction of HyCNNs to both Maxout and GroupMax networks is the presence of (distinct) skip connections entering into both inputs of the maxout unit. This is a central technical devise in our approximation theory of the quadratic function, and seems to stabilize the gradient norm of parameters across different layers. Our empirical results in \Cref{app:regression_experiments} also suggest that this architectural change is crucial for training deeper networks.
\end{remark}

\parabold{Initialization of HyCNNs}\label{sec:trainingHyCNNs} 
To train HyCNNs, in particular, deep architectures, the gradient norms of parameters across different layers need to be stable at initialization to mitigate the vanishing/exploding gradient problem and to ensure that all layers are effectively trained. In the following, we introduce a suitable initialization scheme, see \Cref{fig:HyCNN_profiles_init} for numerical evidence. 

\begin{figure}[t!]
    \begin{center}
    \centerline{\includegraphics[width=\textwidth, trim=0 0 0 29, clip]{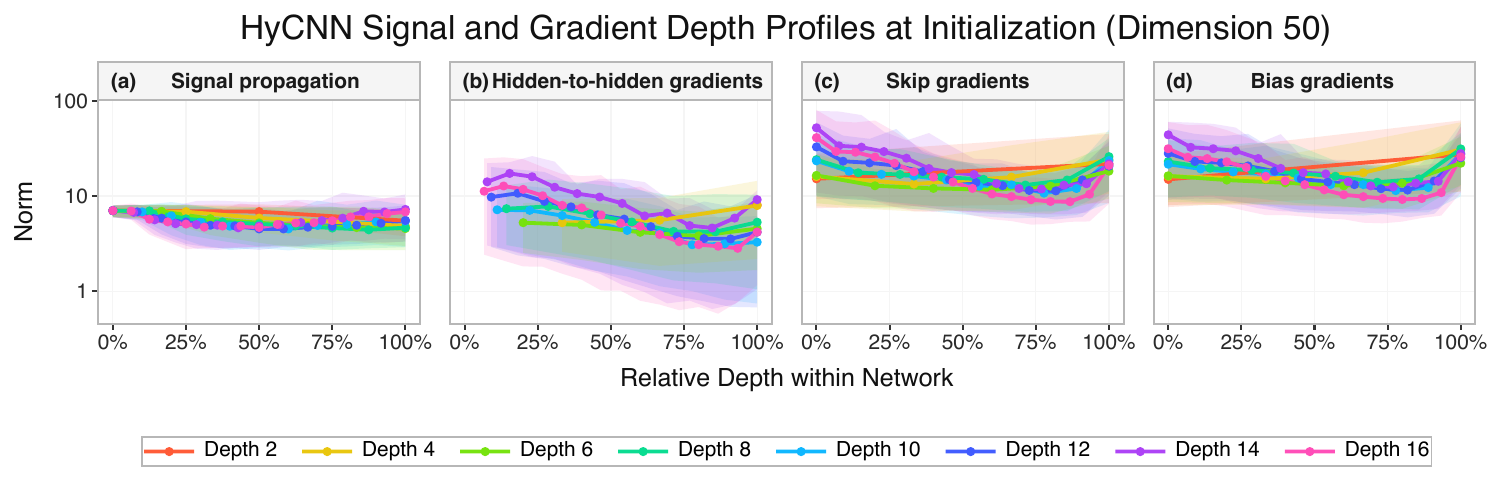}}
        \caption{Signal propagation and gradient profiles across relative depth in HyCNN at initialization for width $W=48$ and depths $L \in \{2,4,\ldots,16\}$ (colors) for input dimension $d = 50$.  Relative depth denotes the respective layer index normalized by network depth. 
        \textbf{(a)} Signals are computed from a single forward pass at initialization with standard Gaussian input $\bX \sim \calN(0,I_d)$. Each curve is the mean hidden-state $\ell_2$-norm of the neurons over 100 random seeds, shaded bands show the $10$th$-$$90$th percentile range. $0\%$ corresponds to the input norm $\|\bX\|_2$ (which is close to $\sqrt{d} \approx 7.07$) and $100\%$ to the final hidden layer
       \textbf{(b-d)} Gradients are computed from a single backward pass at initialization for least-squares regression on synthetic data of size $n = 1000$, with $\bX_i \sim \calN(0, I_d)$ and $Y_i = \|\bX_i\|_2^2 + \epsilon_i$, where $\epsilon_i \sim N(0,1)$. Each  curve is the mean of the gradient norm for (b) hidden-to-hidden weight, (c) skip weights, and (d) bias weights, and shaded bands show the $10$th$-$$90$th percentile range.}
         \label{fig:HyCNN_profiles_init}
         \end{center}
\end{figure}

Due to the non-negativity constraints of the hidden-to-hidden weights, standard initialization strategies based on centered weight distributions, which are commonly adopted in unconstrained neural networks \citep{lecun2002efficient, he2015delving}, are not directly applicable to HyCNNs. Inspired by the approach of \citet{hoedt2023principled} for ICNNs, we develop in the following an initialization strategy for HyCNNs. 
The key objective of our approach is to ensure that pre-activations exhibit comparable statistical properties across all layers at initialization.
To this end, we examine how the statistics of the pre-activations evolve from one layer to the next and derive fixed-point equations that keep their mean, variance, and covariance stable.

More specifically, for $\ell \in [L-1]$ let
\begin{align*}
    \bs_{\ell+1} = (s_{\ell+1,i})_{i \in [d_{\ell}]} := V_\ell^{1} \bz_{\ell} + W_{\ell}^{1} \bx + \bb_{\ell}^{1}, \qquad \bt_{\ell+1} = (t_{\ell+1,i})_{i \in [d_{\ell}]} := V_\ell^{2} \bz_{\ell} + W_{\ell}^{2} \bx + \bb_{\ell}^{2}.
\end{align*}
Notice that $\bz_{\ell} = \max(\bs_{\ell}, \bt_{\ell})$.
At initialization, the entries of $V_\ell^1$ and $V_\ell^2$ are drawn i.i.d.\ from a distribution supported on $\RR_+$.
The entries of $W_\ell^1$ and $W_\ell^2$ are drawn i.i.d.\ from a centered Gaussian distribution, while the entries of $\bb_\ell^1$ and $\bb_\ell^2$ are set to a fixed value.
Since the variables 
$s_{\ell,1}, t_{\ell,1}, \ldots, s_{\ell,d_\ell}, t_{\ell,d_\ell}$
are exchangeable as a collection of $2d_\ell$ variables,\footnote{A collection of random variables $x_1,\ldots,x_n$ is exchangeable if its joint distribution is invariant under permutations.} %
it is sufficient to impose the following
fixed-point equations
\begin{align*}
    \EE(s_{\ell+1,1}) = \EE(s_{\ell,1}),\qquad 
    \EE(s^2_{\ell+1,1}) = \EE(s_{\ell,1}^2),\qquad 
    \EE(s_{\ell+1,1} t_{\ell+1,1}) = \EE(s_{\ell,1} t_{\ell,1})
\end{align*}
to ensure that the mean, variance, and covariance of all neuron pre-activations are stable across layers at initialization and thus induce a well-conditioned signal propagation. 
To obtain explicit solutions to these equations, we assume that $\|\bx\|^2_2 \approx d$ and use a Gaussian approximation of the joint distribution of $s_{\ell,1},\ldots,s_{\ell,d_{\ell}},t_{\ell,1},\ldots,t_{\ell,d_{\ell}}.$ 

\textit{Initialization scheme for HyCNNs:}
\begin{itemize}[leftmargin=1.5em, labelsep=0.5em]
    \item The entries of $V_\ell^1$ and $V_\ell^2$ for $\ell \in [L-1]$ are drawn i.i.d. from a distribution supported on $\RR_+$ with mean and variance, respectively, given by
    $$\mu := \sqrt{\frac{1}{d_{\ell}^2 + (1-\tfrac{1}{\pi})d_{\ell}}}, \quad \sigma^2 := \frac{1}{4d_{\ell}}.$$ 
    We take the log-normal distribution as it is easy to sample from. %
    The above constraints are then realized by drawing
    $$X\coloneqq \exp(\tilde X), \quad  \tilde{X} \sim \mathcal{N}\left( \ln\Big(\frac{\mu^2}{\sqrt{\tau}}\Big), \ln\Big(\frac{\tau}{\mu^2}\Big)\right) \quad \text{ with }\, \tau\coloneqq \mu^2 + \sigma^2,$$
    where $\calN(\mu, \sigma^2)$ denotes the normal distribution with mean $\mu$ and variance $\sigma^2.$
    \item The entries of $W_\ell^1$ and $W_\ell^2$  for $\ell \in [L-1]$ are drawn i.i.d. from $\calN(0, 1/(4d))$.
    \item The entries of $\bb_\ell^1$ and $\bb_\ell^2$  for $\ell \in [L-1]$ are set to $-\sqrt{\frac{d_{\ell}}{2 \pi d_{\ell} + 2 \pi -2}}$. 
    \item The parameters $V_L^0, W_L^0, \bb_L^0$ in the last layer use the same initialization scheme.
    \item The parameters $W_0^{1},W_0^{2}$ and $\bb_0^{1}, \bb_0^{2}$ in the first layer adopt the initialization of \citet{lecun2002efficient}  (i.e., entries are i.i.d.\ samples from $N(0,1/d)$).
\end{itemize}

Derivations are provided in Appendix~\ref{appendix_init}. We note that the initialization scheme for HyCNNs is motivated by that of ICNNs proposed by \citet{hoedt2023principled}, but tailored to the architectural differences of HyCNNs. 
Based on this initialization scheme, we empirically verify that the signals across different layers are stable at initialization (see Figure \ref{fig:HyCNN_profiles_init}(a)), which is crucial for training deep networks. For a visualization of HyCNN at initialization across different depths see \Cref{fig:HyCNN_init_vis}. 

\section{Approximation of Quadratic Functions}\label{sec:approximation}

The quadratic function $x\mapsto x^2$ is fundamental for approximating more complex convex functions, and thus serves as a natural testbed for understanding the approximation capabilities of convex neural architectures. 
In the following we analyze the capabilities of ICNNs and HyCNNs in approximating the quadratic function on the unit interval $[0,1]$.  While the focus of this section is on the univariate setting for simplicity, the results can be easily extended to the multivariate setting, and we discuss this in \Cref{app:QuantitativeApprox}. All proofs of this section are deferred to Appendix~\ref{app:proofs}.

\parabold{Limited Expressivity of ICNNs} 
A preliminary observation is that (standard or leaky) ReLU ICNNs are piecewise-affine, and the number of linear pieces grows only linearly with the number of layers. Recall that a leaky ReLU is defined as $\sigma(x) = \max(\alpha x, x)$ for some $\alpha \in (0,1)$.

\begin{proposition} \label{prop:piecewise_linear_icnn}
Consider an  ICNN $f \colon \RR\to \RR$ with $L$ hidden layers and $d_\ell$ neurons on layer $\ell$, whose activation function is given by a (standard or leaky) ReLU. Then, $f$ is piecewise-affine with at most $d_1 + 2d_2 + \dots + 2d_L$ pieces.
\end{proposition}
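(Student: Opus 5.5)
Throughout I take the input dimension to be $d=1$ and $\bz_0=\mathbf 0$, so the first layer is $\bz_1=\sigma(W_0x+\bb_0)$. The plan is to count \emph{breakpoints}, meaning points where the slope of a continuous piecewise-affine function changes. A continuous piecewise-affine function on $\RR$ with at most $B$ breakpoints has at most $B+1$ pieces. I will show that each neuron in layer $1$ contributes at most one breakpoint and each neuron in layers $\ell\ge 2$ contributes at most two new breakpoints. Summing gives at most $d_1+2d_2+\dots+2d_L$ breakpoints overall.

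The key structural fact is convexity: by the induction in Proposition~\ref{prop:HyCNN_convexity} (the ICNN case), every hidden unit $z_{\ell,i}$ is a convex function of $x$. I will prove by induction on $\ell$ that every component of $\bz_\ell$, and every pre-activation $p_{\ell+1,i}=(V_\ell\bz_\ell+W_\ell x+\bb_\ell)_i$, is piecewise affine with breakpoints in a set $S_\ell$, where $|S_1|\le d_1$ and $|S_\ell|\le|S_{\ell-1}|+2d_\ell$.

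For layer $1$, each $z_{1,i}=\sigma(w_ix+b_i)$ is affine composed with a two-piece function. It therefore has at most one breakpoint, at $x=-b_i/w_i$ (none if $w_i=0$), which gives $|S_1|\le d_1$.

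For the step to layer $\ell+1$: the pre-activation $p=p_{\ell+1,i}$ is a nonnegative combination of convex piecewise-affine functions plus an affine term. Hence $p$ is convex and piecewise affine with breakpoints contained in $S_\ell$. Now $\sigma(p)=\max(\alpha p,p)$ with $\alpha\in[0,1)$. New breakpoints can only occur where $p$ changes sign and crosses zero. Because $p$ is convex, the set $\{p<0\}$ is an interval, so $p$ has at most two sign-crossing points.
\begin{itemize}
\item Away from these points, $\sigma(p)$ equals either $p$ or $\alpha p$ locally, so it has no breakpoints outside $S_\ell$.
\item At a crossing point $x_0$ lying in the interior of an affine piece of $p$, $\sigma\circ p$ may acquire a breakpoint. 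At most two such points exist per neuron.
\end{itemize}
So neuron $i$ adds at most two points to $S_{\ell+1}$, which gives $|S_{\ell+1}|\le |S_\ell|+2d_{\ell+1}$.

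The output $V_L\bz_L+W_Lx+\bb_L$ introduces no breakpoints beyond $S_L$. Therefore $|S_L|\le d_1+2(d_2+\dots+d_L)$.

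The main obstacle is the off-by-one between breakpoints and pieces: $B$ breakpoints yield at most $B+1$ pieces, while the statement asserts $d_1+2d_2+\dots+2d_L$ pieces. To close this gap I would sharpen the layer-$1$ count. The $d_1$ first-layer breakpoints give at most $d_1+1$ pieces, but to reach the stated bound I would argue as follows.
\begin{itemize}
\item In layer $\ell\ge2$, a convex piecewise-affine $p$ that crosses zero exactly twice must have a breakpoint strictly between the two crossings (a function that is affine there cannot vanish at two points unless it is identically zero). That breakpoint already lies in $S_\ell$.
\item In the degenerate case of two crossings, one can trade in this way. Alternatively, I would check whether the intended convention counts breakpoints, and adopt the cleanest counting that gives the stated bound.
\end{itemize}
I expect reconciling this constant, rather than the inductive argument itself, to be the delicate point.
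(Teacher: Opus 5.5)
Your breakpoint induction is exactly the paper's proof. Each first-layer neuron contributes one kink. Each neuron in layers $\ell\ge 2$ contributes at most two new kinks, because its pre-activation is convex and changes sign at most twice. The output layer adds none. The paper then simply declares that the bound $d_1+2d_2+\dots+2d_L$ on the number of kinks bounds the number of pieces. In other words, it makes precisely the breakpoint/piece identification you flagged.

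The step you hope to sharpen cannot be fixed, because the statement as written is false. Already for $L=1$, $d_1=1$, the ICNN $f(x)=\max(x,0)$ has two pieces, and $\sum_{i=1}^{d_1}\max(x-i,0)$ has $d_1+1$. Your trade-off only acts on layers $\ell\ge 2$, so it cannot reach this case. It also fails for deeper networks:
\begin{itemize}
\item With leaky ReLU, the breakpoint of $p$ inside $\{p<0\}$ survives in $\alpha p$.
\item With ReLU, that breakpoint is flattened only in that one neuron and may be kept alive by another.
\end{itemize}
For instance, take ReLU with $d_1=d_2=2$, $z_{1,1}=\sigma(x)$, $z_{1,2}=\sigma(x-10)$, $p_A=2z_{1,1}+z_{1,2}-x-1$, $p_B=z_{1,1}+2z_{1,2}-2x+9$ and $f=\sigma(p_A)+\sigma(p_B)$. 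Neuron $A$ flattens the kink at $0$ but keeps the one at $10$, and neuron $B$ does the opposite. Hence $f$ has kinks at $-1,0,1,9,10,11$, giving $7>d_1+2d_2$ pieces.

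What your argument (and the paper's) actually proves is the corrected statement: $f$ has at most $d_1+2d_2+\dots+2d_L$ breakpoints, and therefore at most $d_1+2d_2+\dots+2d_L+1$ pieces. You cannot instead reinterpret ``pieces'' as breakpoints, since $\mathcal{G}_k$ in Theorem~\ref{thm:lowerBoundPiecewiseLinear} counts affine intervals. The ICNN lower bounds there remain valid after replacing $d_1+2\sum_{\ell\ge 2}d_\ell$ by $d_1+2\sum_{\ell\ge 2}d_\ell+1$, which only affects constants.
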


\begin{theorem}\label{thm:lowerBoundPiecewiseLinear}
    Let $\calG_k$ be the set of real-valued, piecewise-affine, convex functions on $[0,1]$ with at most $k$ pieces.
    Then,     \begin{align}\label{eq:lowerBoundApproxPiecewiseLinear}
       \inf_{g\in \calG_k} \, \int_{0}^1 \big|g(x)-x^2 \big| \, dx \geq \frac{1}{16 k^2}, \quad \text{ and } \quad
       \inf_{g\in \calG_k} \, \sup_{x\in [0,1]} \, |g(x)-x^2| \geq \frac{1}{8 k^2}.
    \end{align}
    In particular, any (standard or leaky) ReLU ICNN $f\colon \RR\to \RR$ with $L$ hidden layers and $d_\ell$ neurons in layer $\ell$ satisfies
    \begin{align*}
    \int_{0}^1 \big|f(x)-x^2 \big| \, dx \geq \frac{1}{16(d_1 + 2 \sum_{\ell = 2}^L d_\ell)^2} \quad \text{and} \quad 
    \sup_{x\in [0,1]}|f(x)-x^2| \geq \frac{1}{8 (d_1 + 2 \sum_{i = 2}^L d_\ell)^2}.
    \end{align*}
\end{theorem}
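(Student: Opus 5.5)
We reduce to a one-piece estimate on each subinterval. Fix $g\in\calG_k$ and let $0=a_0<a_1<\dots<a_m=1$, $m\le k$, be a partition such that $g$ is affine on each $[a_{j-1},a_j]$. On each such interval the error $e(x)=g(x)-x^2$ is a concave quadratic with leading coefficient $-1$, i.e. $e(x)=-(x-c)^2+\beta$ for some $c,\beta$ depending on the piece. So it suffices to bound, for an interval $I$ of length $h$, the quantities $\inf_{\alpha,\gamma}\sup_{I}|\alpha x+\gamma-x^2|$ and $\inf_{\alpha,\gamma}\int_I|\alpha x+\gamma-x^2|\,dx$ from below by constants times $h^2$ and $h^3$ respectively. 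Convexity of $g$ is not needed for the lower bound; only the piece count matters.

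\emph{Uniform bound.} By translation invariance of the problem (replacing $x$ by $x+a$ changes $x^2$ by an affine function), we may take $I=[-h/2,h/2]$. The function $x^2-(\alpha x+\gamma)$ takes values at $-h/2,0,h/2$ whose second difference is $2(h/2)^2=h^2/2$. That is, $e(-h/2)-2e(0)+e(h/2)=-h^2/2$ for $e=\alpha x+\gamma-x^2$. Hence $4\sup_I|e|\ge h^2/2$, giving $\sup_I|e|\ge h^2/8$. Since $\sum h_j=1$ with $m\le k$ pieces, some piece has $h_j\ge 1/k$, so $\sup_{[0,1]}|g-x^2|\ge 1/(8k^2)$.

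\emph{Integral bound.} On $I=[-h/2,h/2]$, use the same three-point idea in averaged form. For $t\in[0,h/2]$, the identity $e(-t)-2e(0)+e(t)=-2t^2$ does not integrate cleanly because $e(0)$ is a single point. Instead, use orthogonality: the polynomial $p(x)=x^2-h^2/12$ is orthogonal to affine functions on $I$, and $\sup_I|p|=h^2/6$. Then
\[
\int_I |e|\,dx \ \ge\ \frac{1}{\sup_I|p|}\Bigl|\int_I e\,p\,dx\Bigr| \ =\ \frac{6}{h^2}\int_I p^2\,dx \ =\ \frac{6}{h^2}\cdot\frac{h^5}{180}\ =\ \frac{h^3}{30}.
\]
This is stronger than needed: the target $1/(16k^2)$ follows from $h^3/16$ summed, but $h^3/30$ is smaller. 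I would therefore check the constant. The exact $L^1$ best approximation of $x^2$ by lines on $I$ uses interpolation at the Chebyshev-type nodes $\pm h/(2\sqrt2)$ and gives error $h^3/32$. Summing over pieces, Hölder/convexity gives $\sum_j h_j^3\ge m\cdot(1/m)^3=1/m^2\ge 1/k^2$, so we obtain $1/(32k^2)$. This falls short of the claimed $1/(16k^2)$ by a factor of 2.

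\emph{Main obstacle.} The main obstacle is matching the stated constant in the integral bound. I would first recompute the exact minimal $L^1$ error on a length-$h$ interval, which I expect to be $h^3/32$. I would then check whether the intended normalization differs, for instance by considering pieces of length $2/k$ or by using convexity of $g$ to couple adjacent pieces. If not, I would settle for the constant actually obtained.

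\emph{ICNN consequence.} The ICNN statement is immediate. By \Cref{prop:piecewise_linear_icnn}, $f$ restricted to $[0,1]$ is convex and piecewise affine with at most $k=d_1+2\sum_{\ell\ge2}d_\ell$ pieces, so $f\in\calG_k$, and the bounds above apply.
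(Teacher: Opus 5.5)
Your uniform bound is correct. The second-difference identity $e(-h/2)-2e(0)+e(h/2)=-h^2/2$ gives $\sup_I|e|\ge h^2/8$ directly. This is a slightly more elementary route than the paper's, which centers the interval, discards the slope via $|t^2-d|\le\max\{|t^2-ct-d|,|t^2+ct-d|\}$, and then optimizes over the constant.

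The genuine gap is the integral bound. You never establish $\int_0^1|g(x)-x^2|\,dx\ge 1/(16k^2)$, and the obstacle you describe comes from a miscalculation. The least $L^1$ error of a line against $x^2$ on an interval of length $h$ is $h^3/16$, not $h^3/32$. In fact $h^3/32<h^3/30$ would contradict the bound you had just proved rigorously with $p(x)=x^2-h^2/12$. The points $\pm h/(2\sqrt2)$ are where the sup-norm-optimal constant $h^2/8$ crosses $x^2$. The $L^1$-optimal crossing points are $\pm h/4$, i.e.\ the zeros $\pm\tfrac12$ of $U_2(t)=4t^2-1$ after rescaling.

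The fix is the paper's Lemma~\ref{lem:bestline_lp}. On $I=[-h/2,h/2]$ one has $|x^2-\gamma|\le\tfrac12\bigl(|x^2-\alpha x-\gamma|+|x^2+\alpha x-\gamma|\bigr)$. Integrating, and noting that the two integrals agree under $x\mapsto -x$, shows the slope cannot lower the $L^1$ error. Next, $\gamma\mapsto\int_I|x^2-\gamma|\,dx$ is minimized at the median $\gamma=h^2/16$ of $x^2$ for $x$ uniform on $I$. A direct computation then gives $\int_I|x^2-h^2/16|\,dx=h^3/16$. With this per-piece value, your summation $\sum_j h_j^3\ge 1/m^2\ge 1/k^2$ yields exactly $1/(16k^2)$. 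No renormalization and no use of convexity is needed.

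Your deduction of the ICNN bounds is the same as the paper's. However, the step $f|_{[0,1]}\in\mathcal{G}_k$ with $k=d_1+2\sum_{\ell\ge 2}d_\ell$ does not actually hold. The argument for \Cref{prop:piecewise_linear_icnn} bounds the number of \emph{kinks} by this $k$, so $f$ can have $k+1$ pieces. Concretely, for $L=1$ and $d_1=1$, the ReLU ICNN $f(x)=(x-\tfrac12)_+ +\tfrac{x}{2}-\tfrac{1}{32}$ has $\sup_{x\in[0,1]}|f(x)-x^2|=\tfrac1{32}<\tfrac18$. What the first part of the theorem legitimately gives is the two ICNN bounds with $d_1+2\sum_{\ell\ge2}d_\ell+1$ in place of $d_1+2\sum_{\ell\ge2}d_\ell$. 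This off-by-one is inherited from the paper's own argument rather than introduced by you.
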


\parabold{Expressivity of HyCNNs} 
Next, we focus on the approximation capabilities of HyCNNs.

\begin{theorem}[Quadratic Function Approximation]\label{thm:quadratic_approximation}
	For any $L\in \NN$ and even numbers $d_1, \ldots, d_L$, there exists a HyCNN $h \colon \RR\to \RR$ with $\overline \sigma(a,b) = \max(a,b)$,  $L$ hidden layers, and $(d_1, \dots, d_L)$ neurons per layer %
    such that 
     \begin{align*} 
        \sup_{x\in [0,1]}|h(x) - x^2| \leq \frac{1}{8\prod_{\ell=1}^L d_{\ell}^2}.
     \end{align*}
\end{theorem}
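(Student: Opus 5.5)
The plan is to take $h$ to be the best uniform piecewise-affine approximant of $x^2$ on a uniform grid with $N \coloneqq \prod_{\ell=1}^L d_\ell$ cells, and to show that a HyCNN with widths $(d_1,\dots,d_L)$ can realise it exactly. Let $g_N$ be the linear interpolant of $x^2$ at the nodes $k/N$, $k=0,\dots,N$. Since $g_N$ is convex,
\begin{align*}
g_N(x) = \max_{0\le k\le N-1} a_k(x), \qquad a_k(x) = \tfrac{2k+1}{N}x - \tfrac{k(k+1)}{N^2}.
\end{align*}
On each cell $g_N - x^2$ is a parabola vanishing at both endpoints, with maximum $h^2/4 = 1/(4N^2)$ at the midpoint. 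Hence $h \coloneqq g_N - \tfrac{1}{8N^2}$ satisfies $\sup_{[0,1]}|h - x^2| \le \tfrac{1}{8N^2}$, which is the claimed bound. The constant shift is absorbed into the output bias $\bb_L$, so everything reduces to representing $g_N$.

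The structural tool is self-similarity of the interpolant. For $m \mid N$, cell $j$ of the coarse grid, $[j/m,(j+1)/m]$, satisfies
\begin{align*}
x^2 = \tfrac{1}{m^2}(mx-j)^2 + \tfrac{2j}{m}x - \tfrac{j^2}{m^2}.
\end{align*}
Consequently the affine pieces of $g_N$ living on that cell are exactly the pieces of $\tfrac{1}{m^2}g_{N/m}(mx-j)$ plus the affine function $\tfrac{2j}{m}x - \tfrac{j^2}{m^2}$, where $g_{N/m}$ is extended by its own affine pieces, i.e.\ written as a max. Taking the max over $j$ therefore recovers $g_N$ exactly on all of $\RR$. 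No term can overshoot, because every term is a max of genuine pieces of $g_N$.

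I would build the network by induction on $L$, processing the factors $d_L, d_{L-1}, \dots$ from the coarse scale down, and keep the following invariant. After layer $\ell$, the neurons of $\bz_\ell$ are convex piecewise-affine functions of $x$, and some nonnegative combination of them plus an affine skip term equals a max over a block of pieces of the target. The role of evenness of $d_\ell$ is that the maxout unit $\max(s,t)$ merges exactly two blocks. Concretely, the $d_\ell$ neurons of layer $\ell$ are split into $d_\ell/2$ pairs. The two neurons of a pair compute maxima of two affine combinations. Each combination is a nonnegative sum of previous neurons plus a distinct skip term $W^k_\ell x + \bb^k_\ell$, which supplies the affine shifts $\tfrac{2j}{m}x - \tfrac{j^2}{m^2}$ from the self-similarity identity. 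This is precisely where the two separate skip lanes of \eqref{eq:HyCNN} are needed. Each layer should then multiply the number of pieces by $d_\ell$, so that the first layer gives $d_1$ pieces and the output layer assembles all $N$.

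The main obstacle is the width bookkeeping in this inductive step. Only nonnegative sums of previous neurons and a single $\max$ of two arguments are available, so a $d_\ell$-fold refinement must be encoded across $d_\ell$ neurons rather than by taking a $d_\ell$-ary max directly. The real task is to show that a sum of $\max$'s of the form produced above can equal $\max_j\bigl(\tfrac{1}{m^2}G(mx-j) + A_j(x)\bigr)$, using that the $G$'s differ only by affine terms. My first attempt would be the telescoping identity $\max(a,b) = a + \max(0, b-a)$ applied to consecutive blocks. Differences of adjacent pieces of $g_N$ are affine with positive slope, and their positive parts, which are ReLUs of affine functions, add up to the convex function. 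One must verify that these positive parts can be generated layer by layer with nonnegative hidden weights, so that the nested structure yields the product $\prod_\ell d_\ell$ of pieces rather than their sum. If this fails, the fallback is to check the construction directly for $d_\ell = 2$, where doubling per layer gives $2^L$ pieces, and then group layers.
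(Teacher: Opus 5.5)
Your reduction is correct as far as it goes. The target $g_N-\tfrac{1}{8N^2}$ with $N=\prod_\ell d_\ell$ is exactly the function the paper's network outputs, namely $-\tfrac{1}{8N^2}+\tfrac{x}{N}+\tfrac{2}{N}\sum_{k=1}^{N-1}(x-k/N)_+$, and your cellwise error bound is right. But the whole content of the theorem is that this $N$-piece interpolant is \emph{exactly} computed by a maxout HyCNN with widths $(d_1,\dots,d_L)$. That step is missing: you flag it yourself as the main obstacle, and none of your sketched routes closes it.

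Write a unit as $\max(s,t)=s+(t-s)_+$. Your telescoping idea uses differences $t-s$ that are affine, or at least convex. Then the kink-counting argument of \Cref{prop:piecewise_linear_icnn} applies verbatim: each neuron adds at most two kinks, so you never exceed $d_1+2\sum_{\ell\ge 2}d_\ell$ pieces, let alone $\prod_\ell d_\ell$. The self-similarity route needs the $m$ block functions $\tfrac{1}{m^2}g_{N/m}(mx-j)+\tfrac{2j}{m}x-\tfrac{j^2}{m^2}$ computed in parallel and then an $m$-ary maximum, which costs width and depth you do not have. Moreover, these blocks are maxima over disjoint sets of pieces with different kinks, so they do not differ by affine terms. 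The $d_\ell=2$ fallback yields at most the special case $d_\ell\equiv 2$, since depth cannot be regrouped into width.

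The missing idea is a mechanism by which one two-input max inserts a new kink in \emph{every} current cell at once. That forces $t-s$ to be a non-convex, oscillating difference of two convex functions. After layer $\ell$, the paper keeps three nonnegative combinations of the neurons on the grid of mesh $1/D_\ell$, where $D_\ell=\prod_{i\le\ell}d_i$. The first is the hinge sum $u_\ell$ with equal weights $2/D_\ell$. The other two, $\phi_\ell$ and $\psi_\ell$, use the same hinges weighted $a_\ell$ at odd and $b$ at even grid points, respectively the reverse.

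Then $\psi_\ell-\phi_\ell+\tfrac{a_\ell-b}{2}x$ is a triangle wave with slope $(-1)^k\tfrac{a_\ell-b}{2}$ on the $k$-th cell. Hence $\max\bigl(\phi_\ell,\psi_\ell+\tfrac{a_\ell-b}{2}x-\tfrac{(a_\ell-b)j}{2D_{\ell+1}}\bigr)$ switches branch exactly once per cell, with nonnegative slope jumps (Lemma~\ref{lemma_z_equal}). The switch sits at offset $j/D_{\ell+1}$ in even cells and $(d_{\ell+1}-j)/D_{\ell+1}$ in odd cells. Using $j=1,\dots,d_{\ell+1}-1$ plus one neuron passing $u_\ell$ through, nonnegative combinations with parity-alternating weights rebuild $u_{\ell+1},\phi_{\ell+1},\psi_{\ell+1}$ on the refined grid (Lemma~\ref{lemma_u_affine}).

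This is where evenness of $d_{\ell+1}$ enters. It makes $j$, $d_{\ell+1}-j$ and the fine index $kd_{\ell+1}+j$ all share one parity, so a single set of weights serves even and odd cells alike and the alternating pattern propagates. Evenness is not about splitting neurons into $d_\ell/2$ pairs, as you guessed. Without some version of this zigzag construction, your argument proves only the approximation bound for $g_N$, not that a HyCNN of the stated architecture realises it.
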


    A first key takeaway from \Cref{thm:quadratic_approximation} is that wider HyCNNs lead to a \emph{polynomial} increase of the approximation accuracy, while deeper networks lead to an \emph{exponential} increase of the approximation accuracy. In particular, HyCNNs are able to \emph{leverage depth} for approximation, which is in stark contrast to ICNNs. Our empirical findings show that deeper HyCNNs can also be effectively trained. %

\begin{remark}[Interpretation of Theorem \ref{thm:quadratic_approximation}]
    The proof is detailed in Supplement \ref{app:proofs}. A similar result is well-known for feedforward ReLU networks. However, the known construction crucially relies on highly oscillating sawtooth functions, which are not input convex \citep{telgarsky2016, yarotski2017, schmidthieber2017}. It is therefore surprising that each HyCNN with at least two neurons per layer can at least \emph{double} the number of linear pieces in the piecewise-affine approximation of $x^2$, leading to an exponential decay of the approximation error with respect to the number of layers. To explain why it works, we derive a second proof for the case $d_1=\ldots=d_L=2$ that reveals how the oscillatory functions can be cached in input convex functions. %
\end{remark}

Based on our construction for the quadratic function, we also obtain an efficient approximation of higher order monomials $x\mapsto x^n$ on $[0,1]$ for $n=2,3,\ldots$ by suitably concatenating HyCNN blocks. %

\begin{theorem}[Monomial Function Approximation]\label{thm_monomial}
    For any integer $n \geq 2$, $L\in \NN$ and odd number $m \geq 3$, there exists a HyCNN $h \colon \RR\to \RR$ with $\overline \sigma(a,b) = \max(a,b)$, $ \lceil \log_2(n)\rceil L$ layers and $m$ neurons per layer such that 
    \begin{align*}
        \sup_{x\in [0,1]} \big|h(x) - x^n\big| \leq \frac{n}{2} (m-1)^{-2L}.
    \end{align*}
\end{theorem}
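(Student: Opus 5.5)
The plan is to build $x^n$ by repeated squaring and multiplication, using the quadratic construction of Theorem~\ref{thm:quadratic_approximation} as the basic block. Products come from the polarization identity $ab = \tfrac12\big((a+b)^2 - a^2 - b^2\big)$. Subtracting squares is not convexity preserving, so I will avoid it. Writing $x^n$ with $k=\lceil\log_2 n\rceil$, the idea is to compute, block by block, approximations of $x^{2}, x^{4},\dots$ together with the partial product of the binary expansion. Each of the $k$ stages consists of $L$ layers.

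The key observation is that every monomial on $[0,1]$ is convex and non-decreasing. Within a stage, the input to the quadratic sub-network is the previous stage's output $g\approx x^{j}$, which is convex in $x$. The quadratic block $q$ of Theorem~\ref{thm:quadratic_approximation} is convex and non-decreasing on $[0,1]$; its construction interpolates $x^2$ at dyadic points, so it is monotone there, and it can be clamped via a max with a constant. The composition $q\circ g$ therefore remains convex. In HyCNN terms, the previous stage's output enters the next stage through the nonnegative hidden-to-hidden weights, and the skip connections $W\bx$ are set to zero.

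For odd $n$, or more generally for $n$ not a power of two, I will use the width $m$ (odd): $m-1$ neurons (even) run the quadratic block, giving error $\tfrac18 (m-1)^{-2L}$, and the extra neuron carries a monotone convex quantity forward. Rather than using polarization, I will exploit $x^n$'s own structure. Write $x^{n}=(x^{\lceil n/2\rceil})^{2}\cdot x^{-(2\lceil n/2\rceil-n)}$; this is awkward. A cleaner route is to approximate $x^n$ through $x^{n}=\big(x^{n/2}\big)^2$ when $n$ is even. When $n$ is odd, I will use the convex, increasing interpolant of $x\mapsto x^{n/n'}$ …

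The main obstacle is exactly this odd-exponent step: it needs either a convex product gadget or a carried extra neuron. The first thing I would try is to show that the map $(u,v)\mapsto$ piecewise-linear interpolant of $uv$ on the relevant monotone curve is convex along the path, using the extra neuron for the carried factor.

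The error accounting is then routine. If stage $i$ has error $\epsilon_i\le\tfrac18(m-1)^{-2L}$ (or $\tfrac12$ of that with the product step) and the square map has Lipschitz constant $2$ on $[0,1]$, the errors accumulate roughly additively weighted by derivative factors. Summing over the stages, with the exponent growing to $n$, gives a total proportional to $n$. Specifically, the derivative of $y\mapsto y^{n/j}$ on $[0,1]$ is at most $n/j$, and a telescoping bound yields $\sum_i (n/2^{i})\epsilon\le n\cdot\tfrac12(m-1)^{-2L}$. Finally, I will ensure all intermediate values stay in $[0,1]$ by clamping with $\max$ against $0$; this is permitted by the maxout unit and keeps convexity.
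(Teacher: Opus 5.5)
\textbf{Main gap: exponents that are not powers of two.} Your argument only covers $n=2^k$. For every other $n$ the decisive step is missing, as you acknowledge yourself: the sentence on the interpolant of $x\mapsto x^{n/n'}$ breaks off, and a HyCNN cannot implement a product gadget, since $(u,v)\mapsto uv$ is not convex. The identity you set aside as awkward is in fact the right one: $x^{2j-1}=(x^{j})^2/x$.

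Write $k=\lceil\log_2 n\rceil$ and $2^k-n=\sum_{i=1}^{k-1}b_i2^{i-1}$ with $b_i\in\{0,1\}$. Then $n$ is reached from $1$ in exactly $k$ steps of the form $j\mapsto 2j-b$, $b\in\{0,1\}$. So it suffices to approximate the two operators $g\mapsto g^2$ and $g\mapsto g^2/x$, each with one block of $L$ layers.

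The idea you are missing for the second operator is the \emph{perspective} of the quadratic block $q$. Multiply every bias of $q$, including the output bias, by a second input $y$. By positive homogeneity of $\max$, the resulting network $\tilde q$ satisfies $\tilde q(uy,y)=y\,q(u)$ for $y\ge 0$. Hence $\tilde q(g(x),x)=x\,q\big(g(x)/x\big)$, which differs from $g(x)^2/x$ by at most $\sup_{t\in[0,1]}|q(t)-t^2|$ as long as $0\le g(x)\le x$. Moreover, $x\mapsto\tilde q(g(x),x)$ is again a HyCNN. The former biases become skip connections from the network input $x$, which may have any sign, while $g(x)$ enters only through non-negative weights (\Cref{lem:uy_y_yhu}, \Cref{lem_approx_operator_0}(ii)).

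\textbf{Two further points need repair.}

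First, the invariant to propagate is $0\le g(x)\le x$, not merely $g(x)\in[0,1]$. You divide by $x$, and the error recursion $|a^2-b^2|/x\le 2|a-b|$ requires $a,b\le x$. Clamping with $\max(\cdot,0)$ only gives the lower half; the upper half has to come from the block itself. The paper shifts the network of \Cref{thm:quadratic_approximation} up by $\frac{1}{8M^2}$, with $M=(m-1)^L$, so that it interpolates $t^2$ at $t=0$ and $t=1$; convexity then yields $0\le q(t)\le t$. This doubles the per-block error to $\frac14(m-1)^{-2L}$. Since errors at most double from one block to the next, the total is $(2^k-1)\frac14(m-1)^{-2L}<\frac n2(m-1)^{-2L}$. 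This replaces your $\sum_i (n/2^i)\epsilon$ heuristic and is where the constant $\frac n2$ comes from.

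Second, composition is not obtained by setting the skip connections to zero. The quadratic block uses its input in every hidden layer, through the term $\frac{a_\ell-b}{2}x$ in \eqref{tmp_13}, and also in the output layer. So in every block, not only for odd $n$, the spare neuron must carry $g(x)$ forward and feed it in with non-negative weights. That this is possible relies on the sign pattern of that particular construction (\Cref{rem.74f}, \Cref{lemma_hycnn_comp}), not merely on $q$ being non-decreasing on $[0,1]$. For instance, the two-neuron construction starts with $|x/2-1/4|$, which carries a negative weight on its input and could not be composed this way.
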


The underlying HyCNN construction is not a straightforward extension of the network construction for the quadratic function (\Cref{thm:quadratic_approximation}), and requires a more involved proof strategy. This is in contrast to feedforward ReLU networks where the approximation of, say, $x\mapsto x^3$ first constructs a network approximately computing $x\mapsto (x,x^2)$ and then concatenates the outcome with a multiplication network that approximates the map $(u,v)\mapsto uv,$ therefore computing approximately $x\mapsto x x^2 =x^3.$ However,  $(u,v)\mapsto uv$ is not a convex function and therefore not representable by a HyCNN.

We close this section with further implications of the above results. 
Denote by $\ICNN_d(L,m)$, $\HyCNN_d(L,m)$ and $\ReLU_d(L,m)$ the respective classes of all (ReLU) ICNNs, (maxout) HyCNNs and ReLU networks with $d$ input neurons, $L$ hidden layers and $m$ neurons per layer, respectively. As a direct consequence of Theorems \ref{thm:lowerBoundPiecewiseLinear} and \ref{thm:quadratic_approximation} we have that whenever $m'$ is even and $(m')^{L'} \geq 2mL,$
    \begin{align}
        \HyCNN_d(L',m') \subsetneq \ICNN_d(L,m) \subseteq \HyCNN_d(L,m)
        \subseteq \ReLU_d(L,3m+2d).
        \label{eq.84gr}
    \end{align}
    This shows that a much larger number of ICNN parameters is needed to represent all HyCNN functions. The $\subsetneq$ part also reveals that ICNNs are limited in their ability to represent the maximum of two functions as otherwise they could emulate HyCNNs contradicting \eqref{eq.84gr}.   
    The last inclusion implies that any lower bound on the approximation error of ReLU networks directly translate to those of HyCNNs. Moreover, known bounds on the VC dimension and covering number of ReLU networks can be used to derive corresponding bounds for HyCNNs.
    A formal proof of \eqref{eq.84gr} can be found in Appendix~\ref{appendix_proof_equation_inclusion}.

\section{Application: Neural Optimal Transport Map Learning}\label{sec:applications}

\label{sec:neural_ot}

Learning an optimal transport (OT) map between probability measures is a central problem with broad impact in machine learning \citep{courty2016optimal,gordaliza2019obtaining}, statistics \citep{hallin2021distribution,de2024transport}, and natural sciences \citep{wang2010optimal,schiebinger2019optimal,tameling2021colocalization,bunne2023learning}.  
To formalize the problem, take probability measures $P$ and $Q$ be on $\RR^d$ with finite second moments. For squared Euclidean costs, the \emph{OT map} is defined as%
\begin{equation*}
  \overline T \in \argmin_{T:\,T_\# P = Q}\; \int_{\RR^d}\|\bx-T(\bx)\|^2\,dP(\bx), \quad \text{ where } T_\# P \coloneqq P\circ T^{-1}, 
\end{equation*} assuming it exists. If $P$ and $Q$ are absolutely continuous, Brenier's theorem \citep[Theorem 1.22]{santambrogio2015optimal} guarantees the existence of a $P$-a.s.\ unique convex potential $\phi:\RR^d\to\RR$, the \emph{OT potential}, such that $\overline T = \nabla \phi$. This links OT map estimation with learning the OT potential.

Recent work has produced a variety of estimators for $\overline T$, including plug-in methods \citep{manole2024plugin}, entropic approaches \citep{seguy2017large,pooladian2021entropic}, semi-dual formulations \citep{hutter2021minimax,divol2025optimal}, and neural estimators \citep{taghvaei20192,makkuva2020optimal,bunne2023learning,uscidda2023monge}. It is worth noting that empirical plug-in and entropic estimators, though adaptive to low-dimensional structures \citep{weed2019sharp,hundrieser2024empirical,groppe2024lower,stromme2024minimum,divol2025optimal} are oftentimes outperformed by neural estimators in genuinely high-dimensional settings \citep{makkuva2020optimal,uscidda2023monge}, which motivates the use of neural OT in this regime.

We use the variational formulation of \citet{makkuva2020optimal}, based on the semidual framework by \cite{hutter2021minimax}, which characterizes the OT potential $\phi$ via a saddle-point problem, 
\begin{align*}
  \phi \in \underset{\text{convex } f\in L^1(P),f^*\in L^1(Q)}{\argmax}
  \;\; \underset{\vphantom{f^*}g\in L^1(Q)\vphantom{f^*}}{\vphantom{g}\inf\vphantom{g}}\; -\EE_{\bX\sim P}[f(\bX)]
  -\EE_{\bY\sim Q}\!\big[\langle \bY,\nabla g(\bY)\rangle - f(\nabla g(\bY))\big]. 
\end{align*}
Given $f$, the optimal $g$ corresponds to the convex conjugate of $f$, i.e., $g = f^*$. In practice, we replace the expectations by empirical averages (using mini-batches), and the function classes are restricted to neural network families. %
Denoting the training data $(\bx_i)_{i=1}^n\sim P$ and $(\by_j)_{j=1}^m\sim Q$, we follow the mini-batch training protocol of \citet{makkuva2020optimal} but replace the ICNNs by  HyCNNs with smooth gated activation function $\overline\sigma_\tau(a_1,a_2) = \tau \log(e^{a_1/\tau} + e^{a_2/\tau})$. This leads to a saddle-point problem for  mini-batches $(\bX_i)_{i=1}^M$ and $(\bY_j)_{j=1}^M$ from $(\bx_i)_{i=1}^n$ and $(\by_j)_{j=1}^m$, respectively, of size $M\in \NN$, 
\begin{align}\label{eq:HyCNN_OT_potential_estimation}
  \hat\phi \in &
  \argmax_{f\in \texttt{HyCNN}_\tau(\RR^d)}
  \;\underset{{g\in \texttt{HyCNN}_\tau(\RR^d)}}{\vphantom{g}\min\vphantom{g}}\;
  \textstyle -\frac{1}{M}\left(\sum_{i = 1}^{M} f(\bX_i)+\langle \bY_i, \nabla g(\bY_i)\rangle - f(\nabla g(\bY_i)) \right).
\end{align}
The \textit{HyCNN based OT map estimator} is then defined by $\hat T \coloneqq \nabla \hat \phi$, see \Cref{fig:HyCNN_OT_shapes_2d} for an illustration. The training method is detailed in \Cref{alg:HyCNN_ot} in \Cref{app:OT_implementation}, it uses Adam \citep{kingma2014adam} and involves an outer loop (with $T$ iterations) and an inner loop (with $S$ iterations per outer iteration). During training we parametrize hidden-to-hidden weights via a softplus function to ensure non-negativity, deviating from the approach by \citet{makkuva2020optimal} who use a regularization term for a soft-enforcement of non-negativity. 

\begin{figure}[t!]
    \begin{center}
    \centerline{\includegraphics[width=\textwidth, trim=0 16 0 5, clip]{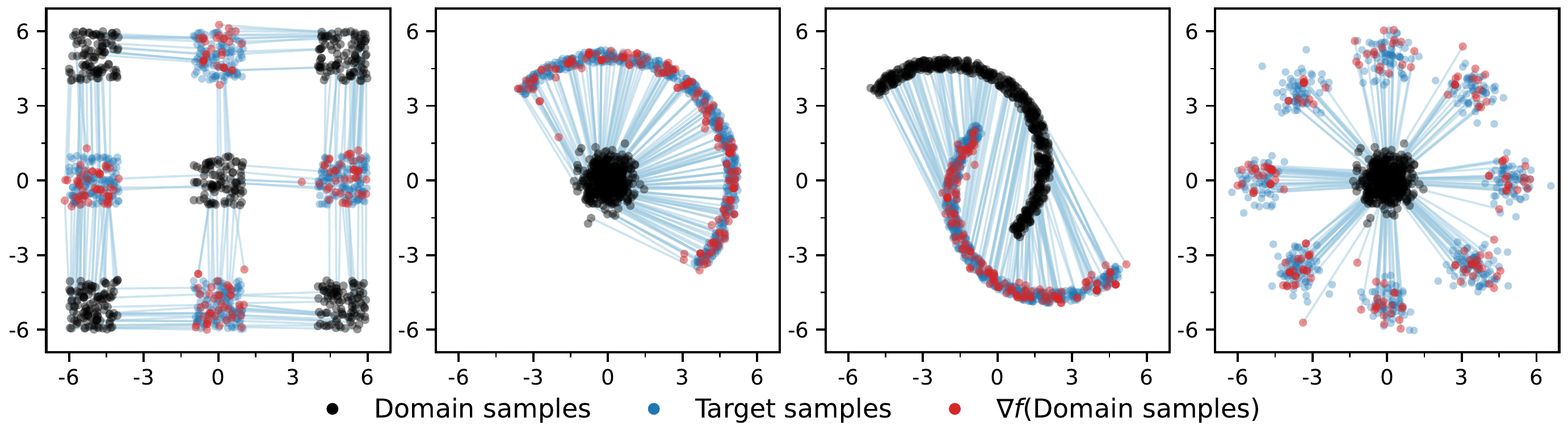}}
        \caption{Pushforwards under learned HyCNN OT map (width $48$, depth $4$). The HyCNN is trained for $n = m = M = 2000$ data points for $2500$ outer iterations for learning $f$, with $S = 10$ steps per iterations for learning $g$, via Adam and decaying learning rates and smoothness parameter $\tau$. 
        }
         \label{fig:HyCNN_OT_shapes_2d}
         \end{center}
\end{figure}

\begin{figure}[b!]
  \begin{center}
    \centerline{\includegraphics[width=\textwidth, trim = 0 0 0 0, clip]{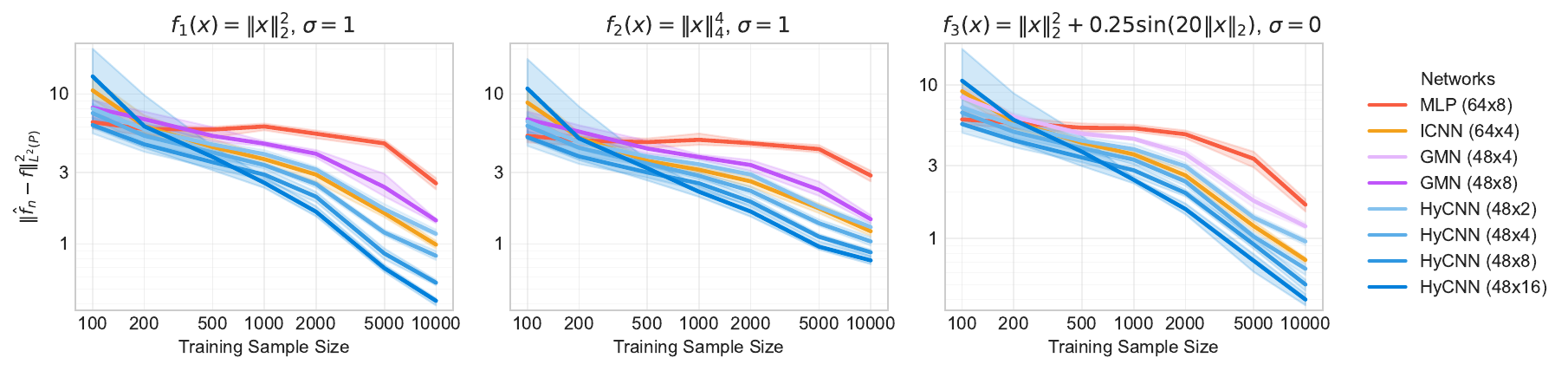}}
    \caption{Prediction MSE for different training sample sizes $n$ for the $50$-dimensional regression tasks with 10th-90th percentile bands across $10$ runs. Each panel shows HyCNNs with depths $2,4,8,16$ and the best MLP, ICNN, GMN at $n = 10^4$ among depths $\in \{2,4,8,16\}$. %
    }
    \label{fig:HyCNN_training_dim50}
  \end{center}
  \vspace{-5mm}
\end{figure}

\section{Experiments}\label{sec:experiments}

\parabold{Convex regression} 
We draw covariates $(\bX_i)_{i=1}^n$ from $\textup{Unif}[-1,1]^d$ with responses according to the model $Y_i = f(\bX_i) + \epsilon_i$, where $\epsilon_i\in N(0,\sigma^2)$ are independent Gaussian errors with standard deviation $\sigma\geq 0$, and $f\colon \RR^d \to \RR$ is one of the following functions, 
\begin{align*}
   &\textstyle \text{(a) } f_1(\bx) = \|\bx\|^2_2, &&\text{(b) } f_2(\bx) = \|\bx\|_4^4, \;&&\text{(c) } f_3(\bx) = \|\bx\|^2_2 + 0.25\sin\left(20\|\bx\|_2\right)%
\end{align*}
Note that $f_3$ is not convex, but it can be nonetheless approximated by a convex function. In \Cref{fig:HyCNN_training_dim50} we display the prediction MSE for an independent testing set for $f_1, f_2, f_3$ for $d = 50$ and with $\sigma = 1$ for $f_1, f_2$ and $\sigma = 0$ for $f_3$ and training sample size $n \in \{100, 200, 500, 1000, 2000, 5000, 10000\}$. For each regression setting, we train feedforward ReLU networks (multilayer perceptrons, MLPs), ICNNs \citep{amos2017input}, GroupMaxNets (GMNs, \citealt{warin2023groupmax}, i.e., HyCNNs without skip connections from the input), and HyCNNs. For the MLPs and ICNNs we consider width $m = 64$ whereas for GroupMaxNets and HyCNNs we consider width $m = 48$, as GroupMaxNets and HyCNNs require approximately twice as many parameters as other models with the same width. Further we consider depth $L \in \{2, 4, 8, 16,32\}$ for all architectures. %
HyCNNs with depth at least four consistently outperform all other methods across all regression settings, with deeper HyCNNs performing better than shallower ones. The same findings are observed in an extensive simulation study in \Cref{app:additional_experiments}.

\parabold{Neural OT map estimation}
To assess the performance of the HyCNN-based OT map estimator, we consider the following OT potentials with corresponding OT maps given by their gradients,
\begin{align*}
    &\textstyle \text{(a) } \textstyle\phi_1(\bx) = \frac{\|\bx\|^2_2}{2}, \quad T_1(\bx) =  \bx, &&\text{(b) } \textstyle\phi_2(\bx) = \frac{1}{2}\sum_{i = 1}^{d} \big(1+ \frac{\sin(i)}{2}\big)x_i^2, \quad T_2(\bx) = \big((1+\frac{\sin(i)}{2})x_i\big)_{i=1}^d, %
\end{align*}
and draw $n=m = 5000$ independent samples from $P = N(0, I_d)$ and $Q = (\nabla \phi_\ell)_\# P$ for $\ell \in \{1,2\}$ for $d = 50$. In \Cref{fig:HyCNN_training_OT}\textbf{(a)} we display the (dimension-normalized) prediction MSE for an independent testing set for the four OT map estimation tasks for $d = 50$ across training iterations. We compare the performance of HyCNNs with widths $48$ and depths $4$ and smoothness parameter $\tau = 10$ with multiple variants of ICNNs \citep{makkuva2020optimal}, and the Monge Gap MLP estimator \citep{uscidda2023monge}, all with widths $64$ and depths $L \in \{2, 4,6\}$. HyCNNs outperform all other methods across all tasks, with ICNNs performing better than the Monge Gap MLP estimator. Moreover, for the 4i single-cell imaging dataset, preprocessed by \cite{bunne2023learning}, we train HyCNNs and ICNNs to map the unperturbed control distribution onto each drug-perturbed distribution and find that HyCNNs attains a lower Sinkhorn divergence  on a $20\%$ held-out test partition than the best ICNN baseline across nearly all considered perturbations, see \Cref{fig:HyCNN_training_OT}\textbf{(b)} or \Cref{fig:HyCNN_training_OT_large} for an enlarged view. Details on training rules and further aspects on the data analysis are given in  Appendices \ref{app:additional_experiments} and \ref{app:single_cell}. To make the comparison robust to variability across checkpoints, each reported value averages the test-set Sinkhorn divergence over the $K = 10$ checkpoints with smallest validation loss; see Appendix \ref{app:single_cell} for details.
\Cref{fig:HyCNN_training_OT}\textbf{(b)} contrasts a short and a long training schedule per architecture. Longer training improves performance, but architecture outweighs training budget: with $100$ times less compute, the short HyCNN competes with the long ICNN, and is outperformed by the long HyCNN.

\begin{figure}[h!]
  \begin{center}
    \centerline{%
    \includegraphics[width=\textwidth, trim = 0 0 0 0, clip]{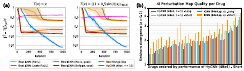}}
    \caption{Performance of OT map estimation for synthetic and 4i single cell RNA-seq data. \textbf{(a)}. Prediction MSE (normalized by dimension $d=50$) for $n = 5000$ training samples for OT map estimation  tasks with 10th-90th percentile bands across $10$ runs. 
    Both panels show the same HyCNN and the best ICNN with ReLU or leaky ReLU, or with a quadratic term in the first layer with ReLU or Softplus, and the Monge Gap MLP estimator, each within a class of width $64$, depths $L \in \{2, 4,6\}$ and $\tau \in \{0.1, 1, 10\}$. \textbf{(b)} Sinkhorn divergence between predicted and ground-truth drug-perturbed distributions on a $20\%$ held-out test partition of the 4i dataset for each drug based on ICNN ($64\times4$, Softplus, $\tau = 1$, Quad) and HyCNN ($48\times 4$, logsumexp, $\tau = 1$) trained for $T = 1,000$ outer iterations and $S = 5$ inner iterations (short) as well as $T = 50,000$ iterations and $S = 10$ inner iterations (long). %
    }
    \label{fig:HyCNN_training_OT}
  \end{center}
  \vspace{-5mm}
\end{figure}

\section{Discussion and Limitations}\label{sec:limitations}
HyCNNs appear as a promising architecture for learning convex functions in high dimensions: for regression, deeper models perform better than shallower ones, and for OT map estimation, HyCNNs with large $\tau$ perform better than those with small $\tau$. 
We note two limitations of our work. By construction, HyCNNs exhibit an inductive bias toward smooth, coercive convex functions, which benefits many applications but may constrain training in settings that deviate from these assumptions. Further, we note that our empirical study is bounded by computational constraints: though we conducted extensive simulations, our training were still limited, and a more large-scale study remains an future step.

\section*{Acknowledgements}
\addcontentsline{toc}{section}{Acknowledgements}
S.\ Hundrieser is funded from the German National Academy of Sciences Leopoldina under grant number LPDS 2024-11. I.\ Kong and J.\ Schmidt-Hieber are supported
by ERC grant A2B (grant agreement number 101124751). S.\ Hundrieser further acknowledges helpful discussions with Paul Hagemann in an early phase of this project. 
%

%
%
%
%
%
%
\newpage
\appendix
\onecolumn

\section*{Supplementary Material}
The supplementary material is organized as follows. \Cref{app:QuantitativeApprox} provides the multivariate extensions of the approximation bounds for the quadratic function $\bx \mapsto \|\bx\|_2^2$ on $[0,1]^d$ by ReLU ICNNs and HyCNNs, respectively. \Cref{app:proofs} collects all omitted proofs of the results stated in the main text and in \Cref{app:QuantitativeApprox}. \Cref{appendix_init} details the derivation of the initialization scheme for HyCNNs introduced in \Cref{sec:HyCNNs}. \Cref{app:additional_experiments} contains additional experimental results and implementation details (complementing \Cref{sec:experiments}), including visualizations of randomly initialized HyCNNs, further regression experiments, and additional results on neural OT map estimation. \Cref{app:single_cell} presents further details to neural OT based single-cell RNA-seq perturbation response prediction for the prepocessed 4i dataset \citep{bunne2023learning}.

\section{Quantitative Approximation of Multivariate Quadratic Functions}\label{app:QuantitativeApprox}

In this section we state the multivariate extensions of the approximation lower and upper bounds for approximating the quadratic function $\bx \mapsto \|\bx\|_2^2$ on $[0,1]^d$ by ReLU ICNNs and HyCNNs, respectively. The proofs are deferred to Appendix \ref{appendix_proof_multivariate}.

\begin{proposition}\label{prop:lowerBoundMultivariateICNN}
    Any (standard or leaky) ReLU ICNN $f_\theta\colon \RR^d \to \RR$ with $L$ hidden layers and $d_\ell$ neurons in layer $\ell$ satisfies the lower bound 
    \begin{align*}
        \int_{[0,1]^d} \big|f_\theta(\bx)-\|\bx\|_2^2 \big| \, d\bx &\geq \frac{1}{16(d_1 + 2 \sum_{\ell = 2}^L d_\ell)^2} \quad \text{and}\\
        \sup_{\bx\in [0,1]^d}|f_\theta(\bx)-\|\bx\|_2^2| &\geq \frac{d}{8 (d_1 + 2 \sum_{\ell = 2}^L d_\ell)^2}.
    \end{align*}
\end{proposition}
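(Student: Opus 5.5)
Both bounds follow from the univariate results by restricting the ICNN to lines. In each case the restricted function is again a univariate ReLU ICNN of the same architecture, so \Cref{prop:piecewise_linear_icnn} and \Cref{thm:lowerBoundPiecewiseLinear} apply. Set $K := d_1 + 2\sum_{\ell=2}^L d_\ell$. For fixed $\bx_0,\bv\in\RR^d$, the map $t\mapsto f_\theta(\bx_0+t\bv)$ is an ICNN in the scalar input $t$: in every layer we replace $W_\ell\bx$ by $(W_\ell\bv)t + W_\ell\bx_0$ and absorb $W_\ell\bx_0$ into the bias. The non-negativity constraints on $V_\ell$ are untouched. By \Cref{prop:piecewise_linear_icnn}, the restricted function is therefore convex and piecewise-affine with at most $K$ pieces.

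\textbf{Sup-norm bound.} I restrict to the diagonal $\bx = t\mathbf{1}$, $t\in[0,1]$, where $\|\bx\|_2^2 = d t^2$. The function $g(t) := f_\theta(t\mathbf{1})/d$ lies in $\calG_K$, since scaling by $1/d$ preserves convexity and the number of pieces. By \Cref{thm:lowerBoundPiecewiseLinear},
\[
\sup_{\bx\in[0,1]^d}\big|f_\theta(\bx)-\|\bx\|_2^2\big| \;\ge\; d\,\sup_{t\in[0,1]}|g(t)-t^2| \;\ge\; \frac{d}{8K^2}.
\]

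\textbf{$L^1$ bound.} I use lines parallel to the first coordinate axis. Write $\bx=(x_1,\bx')$ with $\bx'\in[0,1]^{d-1}$. Then $\|\bx\|_2^2 = x_1^2 + c(\bx')$ with $c(\bx') = \|\bx'\|_2^2$. For fixed $\bx'$, the function $g_{\bx'}(x_1) := f_\theta(x_1,\bx') - c(\bx')$ is a univariate ICNN shifted by a constant. It is therefore in $\calG_K$, because subtracting a constant changes neither convexity nor the piece count. \Cref{thm:lowerBoundPiecewiseLinear} gives $\int_0^1 |g_{\bx'}(x_1)-x_1^2|\,dx_1 \ge 1/(16K^2)$ for every $\bx'$. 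Integrating over $\bx'\in[0,1]^{d-1}$ with Fubini, which is valid because the integrand is continuous and hence measurable, yields the first claimed bound.

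\textbf{Main point to check.} The only step requiring care is that the restricted network is a genuine ICNN in the sense of \eqref{eq:ICNN}. This means that the bias absorption and the rescaling of the input weights respect the ICNN constraints, and that the piece bound of \Cref{prop:piecewise_linear_icnn} holds for any such parameters, including degenerate cases such as $W_\ell\bv=0$. Both hold, since only the $W_\ell$ and $\bb_\ell$ change and these are unconstrained. The rest is direct. The $L^1$ bound does not improve with $d$, because averaging over parallel lines captures only the curvature in one direction. The diagonal, by contrast, carries curvature $d$, which is why the sup bound scales with $d$.
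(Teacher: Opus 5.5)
Your reduction is the paper's own. For the sup bound you restrict to the diagonal and use $t\mapsto d^{-1}f_\theta(t\mathbf{1}_d)$. For the $L^1$ bound you use slices parallel to the first axis: you integrate the slice bound over all $\bx'$ by Fubini, while the paper picks one slice whose integral is at most the total, which is equivalent. The restriction-to-a-line step is fine.

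The problem is the step you declare safe: ``by \Cref{prop:piecewise\_linear\_icnn}, the restricted function has at most $K$ pieces.'' The proof of that proposition bounds the number of \emph{kinks} by $K=d_1+2\sum_{\ell\ge2}d_\ell$, so the number of affine pieces is at most $K+1$. Already $x\mapsto(x-\tfrac12)_+$ is a one-neuron ReLU ICNN with two pieces. So the restricted function lies in $\calG_{K+1}$, not $\calG_K$.

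This is not cosmetic: the statement is false as written. The paper's proof has the same defect, since it goes through the same proposition via \Cref{thm:lowerBoundPiecewiseLinear}. Take $d=1$, $L=1$, $d_1=1$, so $K=1$.
\begin{itemize}
\item The ICNN $f(x)=\tfrac{x}{2}-\tfrac{1}{32}+(x-\tfrac12)_+$, which is \Cref{lemma_sq_approx_pc} with $m=2$, has $\sup_{[0,1]}|f(x)-x^2|=\tfrac{1}{32}<\tfrac18$.
\item The ICNN $f(x)=\tfrac{x}{2}-\tfrac{3}{64}+(x-\tfrac12)_+$ has $\int_0^1|f(x)-x^2|\,dx=\tfrac{1}{64}<\tfrac{1}{16}$.
\end{itemize}
More generally, \Cref{lemma_sq_approx_pc} with $m=d_1+1$ is a one-hidden-layer ICNN with $d_1$ neurons and sup error $1/(8(d_1+1)^2)<1/(8d_1^2)$.

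The sup claim fails in every dimension. With one hidden neuron, $f(\bx)=\tfrac{x_1}{2}+(x_1-\tfrac12)_++\sum_{i\ge2}x_i-\tfrac{1}{32}-\tfrac{d-1}{8}$ has sup error $\tfrac{d-1}{8}+\tfrac{1}{32}<\tfrac{d}{8}$.

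What your argument, and the paper's, actually establishes is the statement with $K$ replaced by $K+1$. That gives the lower bounds $1/(16(K+1)^2)$ and $d/(8(K+1)^2)$, and with that correction every other step of your proposal goes through unchanged.
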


\begin{proposition}\label{thm:quadratic_approximation_multi}
	For any $d, L, m\in \NN$, with $m\geq 3$ there exists a HyCNN $f\colon \RR^d \to \RR$ with $L$ hidden layers and $m$ neurons per layer, %
    such that 
     \begin{align*}
        \sup_{\bx\in [0,1]^d} \big|f(\bx) - \|\bx\|_2^2\big| \leq \frac{d}{8} \inf_{p,q \in \NN: pq \geq d}  \left\lfloor \frac{m-1}{q}-1 \right\rfloor^{-2\lfloor L/p \rfloor}.
     \end{align*}
\end{proposition}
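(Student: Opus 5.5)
We reduce the statement to the univariate construction of \Cref{thm:quadratic_approximation}, applied coordinatewise. Since $\|\bx\|_2^2=\sum_{i=1}^d x_i^2$, it suffices to approximate each $x_i\mapsto x_i^2$ on $[0,1]$ uniformly within $\frac{1}{8}\lfloor\frac{m-1}{q}-1\rfloor^{-2\lfloor L/p\rfloor}$ and then sum the $d$ approximants. The non-negative output layer keeps the sum convex, and the errors add up to at most $d$ times the single-coordinate error. Fix $p,q\in\NN$ with $pq\ge d$. Split the $d$ coordinates into $p$ groups of at most $q$ coordinates each, and process the groups sequentially in depth. Group $j$ uses layers $(j-1)\lfloor L/p\rfloor+1,\dots,j\lfloor L/p\rfloor$. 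Any leftover layers at the end, as well as the later layers for earlier groups, simply carry values forward. Within a block, the $q$ coordinates of the current group run in parallel.

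\textbf{Width budget.} Each of the $q$ parallel univariate sub-networks receives $k$ neurons, where $k$ is the largest even integer $\le \frac{m-1}{q}$; hence $k\ge\lfloor\frac{m-1}{q}-1\rfloor$. By \Cref{thm:quadratic_approximation} with all $d_\ell=k$ and $\lfloor L/p\rfloor$ layers, each sub-network has uniform error $\le \frac18 k^{-2\lfloor L/p\rfloor}$. The remaining neuron (the ``$-1$'' in $m-1$) is reserved as an accumulator lane. It carries the running sum $\sum_{i\text{ done}} h_i(x_i)$ of already completed convex approximants. Each block layer forwards it via $\max(a,a)=a$ with $V$-weight $1$, which is admissible since the accumulated quantity is convex and the weight is non-negative. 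At the end of a block, the outputs of the $q$ sub-networks must be added into the accumulator. The last layer of each univariate construction is affine in the previous hidden state with non-negative $V$-weights, so we fold it into the accumulator update $z^{\mathrm{acc}}\mapsto z^{\mathrm{acc}}+V\bz+W\bx+\bb$. This update has non-negative hidden weights and sits inside a $\max(\cdot,\cdot)$ with both entries equal. The next group restarts from scratch: its first layer uses only skip connections $W\bx$ (with zero $V$-weights to the old sub-network neurons), which is exactly the first layer of \Cref{thm:quadratic_approximation} applied to the new coordinate. At the final output we read off the accumulator (plus the last group's affine output layer). The total error is then at most $d\cdot\frac18 k^{-2\lfloor L/p\rfloor}$, and taking the infimum over $(p,q)$ gives the claim. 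The case $\lfloor\frac{m-1}{q}-1\rfloor\le 0$ is vacuous, since the right-hand side is then $+\infty$ under the natural convention; the case $\lfloor L/p\rfloor=0$ gives the trivial bound $d/8$, obtained from the affine approximation $x_i\mapsto x_i-\tfrac18$.

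\textbf{Main obstacle.} The delicate step is verifying that the univariate construction of \Cref{thm:quadratic_approximation} can be embedded in this way. Three things need checking. First, its intermediate hidden states must be functions of $x_i$ alone, with zero weights to the other coordinates. Second, its first layer must not depend on a zero initial state beyond what skip connections supply. Third, the final affine readout must be absorbable into an accumulator neuron without breaking the sign constraints. I expect this to hold because the construction only uses skip connections from $x_i$ together with non-negative hidden weights. If the univariate construction instead requires an exactly even width strictly larger than the $k$ available, the fallback is to use the $m$-odd form of the construction underlying \Cref{thm_monomial}, which gives error $(m'-1)^{-2L}$ with $m'$ neurons. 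That matches the $\lfloor\cdot-1\rfloor$ slack in the stated bound.
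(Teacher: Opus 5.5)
Your proposal is correct and is essentially the paper's proof. It uses $p$ depth-blocks of $\lfloor L/p\rfloor$ layers, each running $q$ parallel copies of the univariate network of \Cref{thm:quadratic_approximation} with even width $k\ge\lfloor\frac{m-1}{q}-1\rfloor$, plus one accumulator neuron; the paper forwards the accumulator via $\max(\,\cdot\,,-d)$ rather than your $\max(a,a)$, which makes no difference. One small slip: for $q>m-1$ the floor equals $-1$, so the right-hand side is $d/8$ rather than $+\infty$. Your affine approximant $\sum_i x_i-\tfrac d8$ already covers that case, and the fallback to \Cref{thm_monomial} is never needed (it would also lose the constant $\tfrac18$).
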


The key conceptual insights are that also in high-dimensional settings, HyCNNs can approximate the quadratic function $\bx \mapsto \|\bx\|_2^2$ with an error that decays exponentially in the depth and polynomially in the width, whereas (standard or leaky) ReLU ICNNs can only achieve a polynomial decay in the width and no decay in the depth.
Indeed, combining Theorem 11 of \citet{liang2017why} with the last inclusion in \eqref{eq.84gr} shows that the exponential decay with respect to the depth is optimal, for both ReLU networks and HyCNNs.

\section{Omitted Proofs}\label{app:proofs}

\subsection{Technical lemmas}

\begin{lemma} \label{lemma_comp_convexnondecre}
    Let $\overline \sigma \colon \RR^2 \to \RR$ be convex and componentwise non-decreasing. 
    If $f: \mathbb{R}^d \to \mathbb{R}$ and $g: \mathbb{R}^d \to \mathbb{R}$ are convex functions, then
    $h : \bx \mapsto \overline \sigma(f(\bx), g(\bx))$ is convex.
\end{lemma}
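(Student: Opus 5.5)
I will verify convexity directly from the definition, combining the two hypotheses on $\overline\sigma$ in two consecutive inequalities. Fix $\bx,\by\in\RR^d$ and $\lambda\in[0,1]$, and write $\bx_\lambda := \lambda\bx + (1-\lambda)\by$.

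First I use convexity of $f$ and $g$:
\begin{align*}
f(\bx_\lambda) &\le \lambda f(\bx) + (1-\lambda) f(\by),\\
g(\bx_\lambda) &\le \lambda g(\bx) + (1-\lambda) g(\by).
\end{align*}
Since $\overline\sigma$ is non-decreasing in each argument separately, I can raise the first argument while holding the second fixed, and then raise the second argument while holding the first fixed. Doing this in two steps gives
\begin{align*}
h(\bx_\lambda) = \overline\sigma\big(f(\bx_\lambda), g(\bx_\lambda)\big) \le \overline\sigma\big(\lambda f(\bx)+(1-\lambda)f(\by),\, \lambda g(\bx)+(1-\lambda)g(\by)\big).
\end{align*}
The only point needing care is that componentwise monotonicity yields monotonicity for the coordinatewise partial order on $\RR^2$. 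The two-step argument above establishes exactly this: $\overline\sigma(a,b)\le\overline\sigma(a',b)\le\overline\sigma(a',b')$ whenever $a\le a'$ and $b\le b'$.

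Next I observe that the right-hand argument is the convex combination $\lambda(f(\bx),g(\bx)) + (1-\lambda)(f(\by),g(\by))$ of two points in $\RR^2$. Joint convexity of $\overline\sigma$ on $\RR^2$ then bounds the right-hand side by
\begin{align*}
\lambda\,\overline\sigma\big(f(\bx),g(\bx)\big) + (1-\lambda)\,\overline\sigma\big(f(\by),g(\by)\big) = \lambda h(\bx) + (1-\lambda) h(\by).
\end{align*}
Chaining the two inequalities gives $h(\bx_\lambda)\le\lambda h(\bx)+(1-\lambda)h(\by)$, which is the convexity of $h$.

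There is no real obstacle in this argument. The one step deserving explicit mention is the order of the two moves: monotonicity must be applied before convexity of $\overline\sigma$, and it must be applied coordinate by coordinate.
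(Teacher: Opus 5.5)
Your proof is correct and follows the paper's argument: convexity of $f$ and $g$, then componentwise monotonicity of $\overline\sigma$ to pass to the convex combination of the pairs $(f(\cdot),g(\cdot))$, then joint convexity of $\overline\sigma$ on $\RR^2$. Your two-step justification that componentwise monotonicity gives monotonicity in the coordinatewise order is a detail the paper leaves implicit.
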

\begin{proof}
    For any $\bx_1, \bx_2 \in \RR^d$ and $t \in [0,1]$, we have
    \[
        f\big( t\bx_1 + (1-t)\bx_2\big) \leq t f(\bx_1) + (1-t) f(\bx_2), \quad 
        g\big( t\bx_1 + (1-t)\bx_2\big) \leq t g(\bx_1) + (1-t) g(\bx_2).
    \]
    Hence, by the assumptions imposed on $\overline \sigma$ we get
    \begin{align*}
        h\big( t\bx_1 + (1-t)\bx_2  \big) 
        &= \overline \sigma\Big(f\big( t\bx_1 + (1-t)\bx_2  \big), \, g\big( t\bx_1 + (1-t)\bx_2  \big)\Big)\\
        & \leq \overline \sigma\big(t f(\bx_1) + (1-t) f(\bx_2), \, t g(\bx_1) + (1-t) g(\bx_2) \big)\\
        & = \overline \sigma\Big(t \big(f(\bx_1), g(\bx_1)\big)^{\top} + (1-t) \big(f(\bx_2), g(\bx_2)\big)^{\top} \Big)\\
        & \leq  t \overline \sigma\big(f(\bx_1), g(\bx_1)\big) +
        (1-t) \overline \sigma\big(f(\bx_2), g(\bx_2)\big)\\
        & = t h(\bx_1) +
        (1-t) h(\bx_2).\qedhere
    \end{align*}
\end{proof}

The composition of two convex functions is not necessarily again a convex function. The same holds for the composition of two HyCNNs. The next result reveals that specific compositions of HyCNNs can still be represented by deeper HyCNNs.

\begin{lemma}[Composition of two HyCNNs]
\label{lemma_hycnn_comp}
    Consider a HyCNN $g \colon \RR^{d} \to \RR^{p} $ with $L_1$ hidden layers and $(d_1, \dots, d_{L_1})$ neurons per layer,  specified for $\ell \in \{0, \dots, L_1-1\}$ by
\begin{align*}
    g(\bx) \coloneqq V_{L_1}\bz_{L_1} + W_{L_1} \bx + \bb_{L_1}, \qquad \bz_{\ell+1} &:= \overline\sigma\left((V_\ell^{k} \bz_{\ell} + W_{\ell}^{k} \bx + \bb_{\ell}^{k})_{k\in \{1,2\}}\right)
\end{align*}
with $\bz_0 \coloneqq \mathbf{0}_d$.
    In addition, consider another HyCNN $h \colon \RR^{p+d} \to \RR^{q} $ with $L_2$ hidden layers and $(p_{1}, \dots, p_{L_2})$ neurons per layer,  specified for $\ell \in \{0, \dots, L_2-1\}$ by 
\begin{align*}
    h(\by,\bx) \coloneqq H_{L_2}\bu_{L_2} + M_{L_2} \by + N_{L_2} \bx + \bc_{L_2}, \qquad \bu_{\ell+1} &:= \overline\sigma\left((H_\ell^{k} \bu_{\ell} + M_\ell^{k} \by + N_\ell^{k} \bx + \bc_{\ell}^{k})_{k\in \{1,2\}}\right)
\end{align*}
with $\bu_0 \coloneqq \mathbf{0}_p$.
If the matrices $\{M_{\ell}^{1}, M_{\ell}^{2}\}_{\ell =0}^{L_2 - 1}$ and
$M_{L_2}$ are all componentwise non-negative, 
then there exits a HyCNN $f \colon \RR^{d} \to \RR^{q} $ with $L_1+L_2$ hidden layers and $(d_1, \dots, d_{L_1}, p_{1}+p, \ldots, p_{L_2}+p)$ neurons per layer such that $f(\bx) = h(g(\bx),\bx)$ for all $\bx\in \RR^d$.
\end{lemma}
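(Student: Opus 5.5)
The construction stacks the hidden layers of $g$ on top of those of $h$, and carries $g(\bx)$ forward through $p$ extra neurons per layer. The first $L_1$ hidden layers of $f$ are exactly those of $g$, so after layer $L_1$ we have the state $\bz_{L_1}$. The affine map $\by = g(\bx) = V_{L_1}\bz_{L_1} + W_{L_1}\bx + \bb_{L_1}$ is then never formed as a pre-activation. Instead it is substituted wherever $\by$ enters $h$. This is legitimate because $\by$ enters $h$ only linearly, through $M_\ell^k\by$.

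The difficulty is that at hidden layer $L_1+\ell+1$ (for $\ell\ge 1$), $\by$ is no longer expressible from the previous hidden state $\bu_\ell$ and $\bx$ alone. This is why each of these layers gets $p$ additional neurons: they store a copy of $\by$. Concretely, for $\ell = 0,\dots,L_2-1$, layer $L_1+\ell+1$ has state $(\bu_{\ell+1}, \bv_{\ell+1})\in\RR^{p_{\ell+1}+p}$.

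The first of these layers, $\ell=0$, is computed from $\bz_{L_1}$ and $\bx$. For $\bu_1$ the two lanes have pre-activations
\[
M_0^k V_{L_1}\bz_{L_1} + (M_0^k W_{L_1} + N_0^k)\bx + M_0^k\bb_{L_1} + \bc_0^k, \qquad k\in\{1,2\}.
\]
The hidden-to-hidden weight here is $M_0^kV_{L_1}$, a product of non-negative matrices and hence non-negative. For the copy, both lanes are set to $V_{L_1}\bz_{L_1} + W_{L_1}\bx + \bb_{L_1}$, so that $\bv_1 = \overline\sigma(\by,\by)$.

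Each subsequent layer is computed from $(\bu_\ell,\bv_\ell)$ and $\bx$. Where $\by$ would enter, we use $\bv_\ell$ with weight $M_\ell^k$, which is non-negative. The copy is propagated by feeding $\bv_\ell$ with identity weight into both lanes. The output layer uses $H_{L_2}\bu_{L_2} + M_{L_2}\bv_{L_2} + N_{L_2}\bx + \bc_{L_2}$. Every hidden-to-hidden block is non-negative: $H_\ell^k$, $M_\ell^k$, $I$, or $M_0^kV_{L_1}$, and zeros elsewhere.

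The main obstacle is that the copy is only exact if $\overline\sigma(a,a)=a$. This holds for the maxout unit $\max(a,a)=a$, which is the case relevant later, and it reproduces $h(g(\bx),\bx)$ exactly by induction on $\ell$. For a general $\overline\sigma$ I would state the lemma for the maxout gate, or for any $\overline\sigma$ with $\overline\sigma(a,a)=a$ such as ReLU-type gates applied after a shift. If the copy is not exact, one could instead try to carry $\overline\sigma(\by,\by)$ and invert it, but that need not be affine. I therefore expect to restrict to $\overline\sigma=\max$, consistent with its use in Theorem~\ref{thm_monomial}.

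With exact copying in place, an induction on $\ell$ shows that $\bv_\ell = g(\bx)$ and that $\bu_\ell$ agrees with the $h$-state evaluated at $\by=g(\bx)$. This gives $f(\bx)=h(g(\bx),\bx)$ with the claimed widths $(d_1,\dots,d_{L_1},p_1+p,\dots,p_{L_2}+p)$.
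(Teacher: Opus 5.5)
Your construction matches the paper's block for block. The first $L_1$ layers are copied from $g$. The transition layer uses hidden-to-hidden weights $M_0^kV_{L_1}$ stacked over $V_{L_1}$, with skips $M_0^kW_{L_1}+N_0^k$ and $W_{L_1}$. The later layers use $\begin{bmatrix} H_\ell^k & M_\ell^k\\ \mathbf{0} & I_p\end{bmatrix}$ with zero skip and bias on the copy, and the output layer is $\begin{bmatrix} H_{L_2} & M_{L_2}\end{bmatrix}$.

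Your caveat is correct and exposes an assumption the paper's proof uses silently. The extra $p$ neurons hold $\overline\sigma(g(\bx),g(\bx))$, and the paper treats this as $g(\bx)$. That holds for the maxout gate, which is the only gate the lemma is later applied with. It fails for a general convex, componentwise non-decreasing $\overline\sigma$: log-sum-exp gives $g(\bx)+\tau\log 2$, which a bias correction absorbs, whereas a plain ReLU gate $\overline\sigma(a,b)=\max(a,0)$ does not satisfy $\overline\sigma(a,a)=a$, so your ReLU example is not an instance. Stating the lemma for $\overline\sigma=\max$ is the right repair.
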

\begin{proof}
    We construct the HyCNN $f$ with weight matrices $\{R_\ell^k\}$, skip connection matrices $\{S_\ell^k\}$ and bias vectors $\{\bd_\ell^k\}$ as follows. The first $L_1$ hidden layers exactly match that of $g$, i.e., for $\ell \in \{0, \dots, L_1-1\}$ and $k \in \{1,2\}$, we set $R_\ell^k = V_\ell^k$, $S_\ell^k = W_\ell^k$ and $\bd_\ell^k = \bb_\ell^k$. Choosing the matrices and bias vectors identical  ensures that the output of the $L_1$-th hidden layer of $f$ is exactly $\bz_{L_1}$.

    For the $(L_1+\ell)$-th hidden layer with $\ell\in \{1, \dots, L_2\}$, we consider $p_{\ell}+p$ neurons, where the first $p_\ell$ neurons are used to compute the $\ell$-th hidden layer of $h$ and the remaining $p$ neurons are used to propagate the output of $g$ to the $\ell$-th hidden layer of $h$. Concretely, for
    $k \in \{1,2\}$,
    we set 
    \[R_{L_1}^k = \begin{bmatrix}
         M_0^{k} V_{L_1}\\
        V_{L_1}
        \end{bmatrix}, \qquad
    S_{L_1}^k = \begin{bmatrix}
        M_0^{k} W_{L_1} + N_0^{k}   \\ W_{L_1}
         \end{bmatrix} , \qquad 
    \bd_{L_1}^k = 
    \begin{bmatrix}
        M_0^{k} \bb_{L_1} + \bc_0^{k} \\
        \bb_{L_1}
    \end{bmatrix},\]
    which yields
    \begin{align*}
        R_{L_1}^k \bz_{L_1}
        + S_{L_1}^k \bx
        + \bd_{L_1}^k
    &= \begin{bmatrix}
    M_0^{k} V_{L_1} \bz_{L_1} + M_0^{k} W_{L_1} \bx + N_0^{k} \bx + M_0^{k} \bb_{L_1} + \bc_0^{k} \\
    V_{L_1} \bz_{L_1} + W_{L_1} \bx +  \bb_{L_1}
    \end{bmatrix}\\
    &=
    \begin{bmatrix}
       M_0^{k} g(\bx) + N_0^{k} \bx + \bc_0^{k}\\
       g(\bx)
    \end{bmatrix}.
    \end{align*}
    Hence, the $(L_1 + 1)$-st hidden layer of $f$ represents the first hidden layer of $h$ with the input $(g(\bx),\bx)$.    
    Then, for $\ell \in \{1, \dots, L_2-1\}$ and $k \in \{1,2\}$, we set weight matrices, skip connection matrices and bias vectors as
    \begin{align*}
        R_{L_1+\ell}^k = \begin{bmatrix}
        H_\ell^k & M_\ell^{k}\\
        \mathbf{0}_{p \times p_\ell} & I_p
        \end{bmatrix}, \qquad S_{L_1+\ell}^k =  \begin{bmatrix}
        N_\ell^{k} \\ \mathbf{0}_{p \times p}
         \end{bmatrix}
        , \qquad \bd_{L_1+\ell}^k = \begin{bmatrix}
        \bc_\ell^k\\
        \mathbf{0}_{p}
         \end{bmatrix}.
    \end{align*}
    Finally, for $\ell = L_1 + L_2$, we set
    \begin{align*}
        R_{L_1+L_2} = \begin{bmatrix}
        H_{L_2} & M_{L_2}
        \end{bmatrix}, \qquad S_{L_1+L_2} = 
            N_{L_2}, \qquad \bd_{L_1+L_2} = 
        \bc_{L_2}
         .
    \end{align*}
    By construction, the first $L_1$ hidden layers of $f$ compute the output of $g$, which is then propagated to the remaining hidden layers. The remaining hidden layers compute the same transformations as those of $h$ with input $g$ and thus the output of $f$ is exactly $h \circ g$.
    In addition, since the matrices $\{R_{\ell}^{1}, R_{\ell}^{2}\}_{\ell =0}^{L_1 + L_2 - 1}$ and
$R_{L_1 + L_2}$ are all componentwise non-negative, we obtain the assertion.
\end{proof}

\subsection{Proofs for limited expressivity of ICNNs}

\begin{proof}[Proof of \Cref{prop:piecewise_linear_icnn}]
Consider an ICNN architecture with $L$ hidden layers, $d_\ell$ neurons in layer $\ell$, and activation function given either by the ReLU $\sigma(x)= \max(x,0)$ or the leaky ReLU  $\sigma(x) = \max(x,\alpha x)$ with $\alpha \in (0,1)$. For $\ell\in \{0, \dots, L\}$ and $j\in [d_\ell],$ we denote  by $W^{j}_\ell$, $V^j_\ell$, and $b^j_\ell$ the $j$-th row of the respective weight matrices $W_\ell$, $V_\ell$, and the bias $\bb_\ell$.

For $\ell = 1$, each hidden neuron $z_{1,j} = \sigma(W_1^{j} x + b_1^{j})$ for $j \in [d_1]$ can be viewed as a piecewise-affine function in $x$ with at most one discontinuity of the derivative with respect to $x$ at the location where $x\mapsto W_1^{j} x + b_1^{j}$ crosses zero.  Denote by $\calK_1$ the set of locations of discontinuities of the gradient of the first layer. 

For $\ell = 2$, each function $x\mapsto z_{2,j} = \sigma(V_1^{j} \bz_1 + W_2^{j} x + b_2^{j})$ for $j \in [d_2]$ is also a piecewise-affine function of the input $x$ with discontinuities of the derivative with respect to $x$ at most at the locations of discontinuities of the previous layer and at the locations of the roots of $x\mapsto V_1^{h} \bz_1 + W_2^{h} x + b_2^{h}.$ Since all elements of $V_1^{h}$ are non-negative and $\bz_{1}$ is convex as a function of the input $x$, it follows that $x\mapsto V_1^{h} \bz_1 + W_2^{h} x + b_2^{h}$ is again a  convex function and therefore has at most two roots. Thus, the set $\calK_2$ of locations of discontinuities of the second layer fulfills $|\calK_2| \leq |\calK_1| + 2 d_2 \leq d_1 + 2 d_2$. 

Likewise, by induction, it follows that the set $\calK_\ell$ of locations of discontinuities of the derivative with respect to $x$ of the $\ell$-th hidden layer satisfies $|\calK_\ell| \leq |\calK_{\ell-1}| + 2 d_\ell \leq d_1 + 2 d_2 + \dots + 2 d_\ell$. Finally, for the output layer, note that $f(x) = V_{L} \bz_{L} + W_{L+1} x + b_{L+1}$ is piecewise-affine with discontinuities at most at the locations of discontinuities of the last hidden layer. Thus, the set of locations of discontinuities of $f$ is given by $\calK_{L}$ and by the induction argument, $|\calK_L| \leq d_1 + 2 d_2 + \dots + 2 d_L$.
\end{proof}

The proof of \Cref{thm:lowerBoundPiecewiseLinear} is based on the following bound on approximation of quadratic functions by affine functions.

\begin{lemma}\label{lem:bestline_lp}
    For any real numbers $a<b$, 
\begin{align*}
\inf_{\ell\ \text{\em affine}}\ \int_a^b \big|x^2-\ell(x)\big| \, dx &= \frac{(b-a)^{3}}{16},\\
\inf_{\ell\ \text{\em affine}}\ \sup_{x\in[a,b]}|x^2-\ell(x)| &= \frac{(b-a)^2}{8}.
\end{align*}
\end{lemma}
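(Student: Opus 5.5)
Both identities are affine-invariant, so the plan is to reduce to a reference interval and then solve two classical one-dimensional best-approximation problems.

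\emph{Reduction.} Substitute $x = c + h t$ with $c=(a+b)/2$, $h=(b-a)/2$ and $t\in[-1,1]$. Then
\[
x^2-\ell(x) = h^2 t^2 - \tilde\ell(t)
\]
for another affine $\tilde\ell$, because $2cht + c^2$ is absorbed into the affine part. Since $dx = h\,dt$:
\begin{itemize}
\item the $L^1$ infimum equals $h^3 \inf_{\tilde\ell}\int_{-1}^1|t^2-\tilde\ell(t)|\,dt$;
\item the sup infimum equals $h^2\inf_{\tilde\ell}\sup_{t\in[-1,1]}|t^2-\tilde\ell(t)|$.
\end{itemize}
With $h^3=(b-a)^3/8$ and $h^2=(b-a)^2/4$, it suffices to show that the two reference infima are $1/2$ and $1/2$.

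\emph{Sup norm.} Take $\tilde\ell\equiv 1/2$. Then $t^2-1/2$ has modulus at most $1/2$ on $[-1,1]$, so $1/2$ is attained. For the matching lower bound, use the equioscillation points $t=-1,0,1$, where $t^2-1/2$ takes the values $1/2,-1/2,1/2$. For any affine $\tilde\ell$ we have
\[
\tfrac12\big(e(-1)+e(1)\big)-e(0) = 1, \qquad e = t^2-\tilde\ell,
\]
because the affine part cancels in this combination. The left side has modulus at most $2\max|e|$, so $\max|e|\ge 1/2$. This is Chebyshev's theorem in its simplest form.

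\emph{$L^1$ norm.} The plan is to use $L^1$ duality with a sign function orthogonal to affine functions. Let $s(t)=\operatorname{sign}(t^2-1/4)$, which equals $+1$ for $|t|>1/2$ and $-1$ for $|t|<1/2$. One checks that $\int_{-1}^1 s\,dt = 1-1 = 0$, and $\int_{-1}^1 t\,s\,dt=0$ by symmetry. Hence for every affine $\tilde\ell$,
\[
\int|t^2-\tilde\ell| \ge \int (t^2-\tilde\ell)s = \int t^2 s\,dt = 2\Big(\int_{1/2}^1 t^2\,dt-\int_0^{1/2}t^2\,dt\Big) = 2\cdot\tfrac{7-1}{24}=\tfrac12.
\]
Equality holds for $\tilde\ell\equiv1/4$, since $t^2-1/4$ has exactly the sign pattern $s$. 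So the reference value is $1/2$, and after rescaling the $L^1$ infimum is $(b-a)^3/16$.

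\emph{Main obstacle.} The only genuinely nonroutine step is finding the correct sign function for the $L^1$ lower bound. Its switching points $\pm1/2$ are the scaled zeros of the Chebyshev polynomial of the second kind $U_2$, which is the standard characterization of best $L^1$ approximation. Once $s$ is in hand, everything else is direct computation.
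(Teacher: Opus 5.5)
Your proof is correct, and after the same affine rescaling to $[-1,1]$ it takes a different route from the paper's.

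\textbf{The paper's route.} It uses a symmetrization trick. Since $t^2$ is even,
$|t^2-d|\le\tfrac12\bigl(|t^2-ct-d|+|t^2+ct-d|\bigr)$,
so in both norms the slope $c$ can be dropped. The remaining one-parameter problems are then solved directly:
\begin{itemize}
\item for the $L^1$ norm, $\inf_d\int_{-1}^1|t^2-d|\,dt=\tfrac12$, with $d=1/4$ asserted as optimal (the median of $t^2$);
\item for the sup norm, $\inf_d\max\{|d|,|1-d|\}=\tfrac12$ at $d=1/2$.
\end{itemize}

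\textbf{Your route.} You exhibit explicit dual certificates instead:
\begin{itemize}
\item for the sup norm, the three-point functional $e\mapsto\tfrac12\bigl(e(-1)+e(1)\bigr)-e(0)$, which annihilates affine functions and has norm $2$;
\item for the $L^1$ norm, the sign function $s=\operatorname{sign}(t^2-1/4)$, which is orthogonal to $1$ and $t$.
\end{itemize}
Each certificate gives a lower bound for all affine $\tilde\ell$ at once, and the explicit minimizers $\tilde\ell\equiv 1/2$ and $\tilde\ell\equiv 1/4$ attain it.

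\textbf{What each buys.} The paper handles both norms with one short trick. However, it depends on the target being symmetric about the interval's midpoint, and it does not separately verify that $d=1/4$ is optimal in the $L^1$ case. Your argument checks both the lower bound and its attainment by direct computation, so it needs no auxiliary minimization. It is also the standard framework of Chebyshev alternation for the sup norm and sign-orthogonality for $L^1$. That framework extends to non-symmetric targets and to approximation by higher-degree polynomials, once the right alternation points or sign pattern are found.
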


The second claim of the previous result has been previously shown in Lemma 4.5 of \citealt{eckle2019comparison}. For the sake of completeness, we include a proof below. 

\begin{proof}
Substitute $t:=(x-m)/h \in [-1,1]$ with $m=(a+b)/2$ and $h=(b-a)/2.$ There exists a one-to-one relationship between lines $\ell(x)=\alpha x+\beta$ and real parameters $c,d$ such that $x^2-\ell(x)=h^2\bigl(t^2-ct-d\bigr).$ Hence,
\begin{align}\label{eq:reduce}
\inf_{\ell\ \text{\em affine}}\ \int_a^b \big|x^2-\ell(x)\big| \, dx =
h^{3} \inf_{c,d}\ \int_{-1}^1 |t^2-ct-d| \, dt.
\end{align}
For any real numbers $c,d,$ and any $t\in[-1,1]$, it holds 
\begin{align*}
\bigl|t^2-d\bigr|
=\left|\frac{(t^2-ct-d)+(t^2+ct-d)}{2}\right|
\le \frac{|t^2-ct-d|+|t^2+ct-d|}{2}.
\end{align*}
Consequently, it follows
\begin{align*}\int_{-1}^1 |t^2-ct-d| \, dt  &= \int_{0}^1 \left(|t^2-ct-d| + |t^2+ct-d| \right) \, dt\\
&\geq 2 \int_{0}^1 |t^2-d| \, dt \\
&= \int_{-1}^1 |t^2-d| \, dt.
\end{align*}
Thus, the linear term $ct$ cannot improve the uniform error, the infimum in \eqref{eq:reduce} is attained for $c=0$, and
\begin{align*}
\inf_{c,d}\ \int_{-1}^1 |t^2-ct-d| \, dt
=\inf_{d}\ \int_{-1}^1 |t^2-d| \, dt
=2\inf_{d}\ \int_{0}^1 |t^2-d| \, dt = \frac{1}{2},
\end{align*}
where the equality is attained for $d=1/4.$
Combined with \eqref{eq:reduce}, we obtain 
\[
\inf_{\ell\ \text{\em affine}}\ \int_a^b \big|x^2-\ell(x)\big| \, dx =
\Big( \frac{b-a}{2} \Big)^3 \inf_{c,d}\ \int_{-1}^1 |t^2-ct-d| \, dt = \frac{(b-a)^3}{16}.
\]

To prove the second claim of the lemma, \eqref{eq:reduce} can be replaced by
\begin{equation}\label{eq:reduce2}
\inf_{\ell\ \text{\em affine}}\ \sup_{x\in[a,b]}|x^2-\ell(x)|=h^2 \inf_{c,d}\ \sup_{t\in[-1,1]}\bigl|t^2-ct-d\bigr|.
\end{equation}
Using that for any $t\in[-1,1]$,
\begin{align*}
\bigl|t^2-d\bigr|
&=\left|\frac{(t^2-ct-d)+(t^2+ct-d)}{2}\right|\\
&\le \frac{|t^2-ct-d|+|t^2+ct-d|}{2}\\
&\le \max\{|t^2-ct-d|,|t^2+ct-d|\}
\end{align*}
yields $\sup_{t\in[-1,1]}|t^2-d|
\le \sup_{t\in[-1,1]}|t^2-ct-d|.$ Thus, the linear term $ct$ cannot improve the uniform error, and the infimum in \eqref{eq:reduce2} is attained for $c=0.$ Since $t^2$ takes all values in $[0,1]$ on $[-1,1]$, we have $\sup_{t\in[-1,1]}|t^2-d|=\max\{|0-d|,|1-d|\}$, which is minimized at $d=\tfrac12$ with value $\tfrac12$. Therefore,
\begin{align*}
\inf_{\ell\ \text{\em affine}}\ \sup_{x\in[a,b]}|x^2-\ell(x)|
= h^2\cdot\frac12
= \frac{(b-a)^2}{8}.&\qedhere
\end{align*}
\end{proof}

\begin{proof}[Proof of Theorem \ref{thm:lowerBoundPiecewiseLinear}]
Let $g\in\calG_k$. Then there exists a partition $0=t_0<t_1<\cdots<t_m=1$ with $m\le k$ such that $g$ is affine on each $[t_{k-1},t_k]$. Let $L_k:=t_k-t_{k-1}$ and $L_{\max}:=\max_k L_k$. Since $\sum_k L_k=1$, we have $L_{\max}\ge 1/m\ge 1/k$.
On the interval $I=[t_{j-1},t_j]$ with $L_j=L_{\max}$ the function $g$ is affine, hence by Lemma~\ref{lem:bestline_lp}
\begin{align*}
\sup_{x\in [0,1]}|g(x)-x^2| \geq \sup_{x\in I}|g(x)-x^2|
\;\ge\; \inf_{\ell\ \text{affine}}\ \sup_{x\in I}|\ell(x)-x^2|
= \frac{L_{\max}^2}{8}
\;\ge\; \frac{1}{8k^2}.
\end{align*}

On the other hand, 
Lemma~\ref{lem:bestline_lp} and Jensen's inequality applied to $z \mapsto z^{3}$ yields
\begin{align*}
    \int_0^1 \big|g(x)-x^2\big | \, dx \geq 
    \sum_{k=1}^m
    \left(\inf_{\ell\ \text{\em affine}}\ \int_{t_{k-1}}^{t_k} \big|x^2-\ell(x)\big| \, dx \right) = \frac{1}{16} \,   \sum_{j=1}^m L_j^{3}
    \geq \frac{1}{16 m^2} \geq \frac{1}{16 k^2}.
\end{align*}
 Since these bounds hold for every $g\in \calG_k$, the assertion follows by taking the infimum over $g\in\calG_k$.
 By combining these results with Proposition~\ref{prop:piecewise_linear_icnn}, we obtain the remaining assertions. 
\end{proof}

\begin{remark}[Lower bounds for higher order monomials]
 For higher order monomials $x\in [0,1]\mapsto x^n$ with $n=3,\ldots,$ similar lower bounds as for the quadratic function hold, though the proof is more involved. E.g.,  \citet[Equation (6.17)]{mcclure1975nonlinear} asserts for $n\geq 2$ that $$\liminf_{k\to \infty} k^2\inf_{g\in \calG_k}\smash{\int_{0}^{1}}|g(x) - x^n| \, dx\geq C_n>0$$ where $\calG_k$ is the set of real-valued piecewise-affine, convex functions and $C_n$ only depends on $n$. %
\end{remark}

\subsection{Proof for approximation of quadratic functions with HyCNNs}

In this section, we provide the proof of Theorem \ref{thm:quadratic_approximation}.
We write $x_+ \coloneq \max(x,0)$. 

\begin{lemma}\label{lemma_sq_approx_pc}
    For any $m=1,2,\ldots,$
    $$\sup_{x \in [0,1]} \left|\left(- \frac{1}{8 m^2} + \frac{x}{m} + \frac{2}{m}\sum_{k=1}^{m-1} \Big(x - \frac{k}{m}\Big)_+ \right) - x^2  \right|\leq \frac{1}{8m^2}.$$
\end{lemma}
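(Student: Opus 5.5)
The plan is to identify the bracketed expression as a shifted version of the piecewise-linear interpolant of $x^2$ at the nodes $k/m$, and then reduce the bound to an elementary estimate on each cell. Define
\[
p(x) \coloneqq \frac{x}{m} + \frac{2}{m}\sum_{k=1}^{m-1}\Big(x-\frac{k}{m}\Big)_+ .
\]
I first show that $p$ coincides with the linear interpolant $I_m$ of $x\mapsto x^2$ at the grid $\{0,1/m,\dots,1\}$. Both functions are continuous and piecewise affine with breakpoints contained in $\{k/m\}$. Moreover, $p(0)=0=I_m(0)$. On $[0,1/m]$ the slope of $p$ is $1/m$, and each breakpoint $k/m$ increases the slope by $2/m$. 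Hence on $[j/m,(j+1)/m]$ the slope of $p$ is $(2j+1)/m$. The slope of $I_m$ on the same interval is $\big((j+1)^2-j^2\big)/m^2 \cdot m=(2j+1)/m$. Since the two functions agree at $0$ and have equal slopes on every cell, they are equal. Equivalently, one can check $p(j/m)=j^2/m^2$ directly by summing the arithmetic series.

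Next I estimate $I_m(x)-x^2$ on a cell $[a,a+h]$ with $h=1/m$. There the interpolation error is the chord minus the parabola, namely $(x-a)(a+h-x)$. This quantity ranges over $[0,h^2/4]=[0,1/(4m^2)]$: it vanishes at the endpoints and is maximal at the midpoint. Hence $I_m(x)-x^2\in[0,1/(4m^2)]$ for all $x\in[0,1]$.

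Subtracting the constant $1/(8m^2)$ centers this range, so
\[
p(x)-\frac{1}{8m^2}-x^2\in\Big[-\frac{1}{8m^2},\frac{1}{8m^2}\Big],
\]
which is the claim. This is also consistent with the sharpness shown in Lemma~\ref{lem:bestline_lp}.

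The only step needing care is the slope bookkeeping that identifies $p$ with the interpolant. Everything else is the standard error formula for piecewise-linear interpolation of a quadratic.
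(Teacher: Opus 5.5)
Your proof is correct and follows essentially the same route as the paper. On each cell $[(j-1)/m, j/m]$ the paper writes the bracketed expression as the affine function $\frac{2j-1}{m}x - \frac{j(j-1)}{m^2} - \frac{1}{8m^2}$, which is the chord of $x^2$ shifted down by $\frac{1}{8m^2}$, and then completes the square; this is the same as your interpolant identification combined with the cell error $(x-a)(a+h-x)\in[0,1/(4m^2)]$.
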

\begin{proof}
    We fix $j \in [m]$. For any $x \in [\tfrac{j-1}{m}, \tfrac{j}{m}]$,
    observe that
\begin{align*}
- \frac{1}{8 m^2} + \frac{x}{m} + \frac{2}{m}\sum_{k=1}^{m-1} \Big(x - \frac{k}{m}\Big)_+
= 
    - \frac{1}{8m^2} + \frac{x}{m} + \frac{2}{m} \sum_{i=1}^{j-1} \left(x-\frac{i}{m}\right)_+
    = \frac{2j-1}{m}x - \frac{j(j-1)}{m^2} - \frac{1}{8m^2}.
\end{align*}
Hence,
$$\sup_{x \in [(j-1)m, j/m]} \left|\frac{2j-1}{m}x - \frac{j(j-1)}{m^2} - \frac{1}{8m^2} - x^2 \right| = \sup_{x \in [(j-1)m, j/m]}  \left|-\left(x - \frac{j-\tfrac{1}{2}}{m} \right)^2 + \frac{1}{8m^2}\right| \leq \frac{1}{8m^2}.$$
Since the inequality holds for arbitrary $j \in [m]$, we obtain the assertion.
\end{proof}

\begin{proof}[Proof of Theorem \ref{thm:quadratic_approximation}]
See Figure \ref{fig_qudratic_proof_illu} for an illustration of the proof. 
We define   
$D_{\ell} := \prod_{i=1}^{\ell} d_{i}$ for $\ell \in [L]$.
The construction of the network is done by induction with respect to the layer $\ell$ via the following three steps:
\begin{enumerate}
    \item[(i)] Let $\bz_{\ell}(x) = (z_{\ell, j}(x))_{j=0}^{d_{\ell}-1}$ be the values of the $d_{\ell}$ hidden neurons in the $\ell$-th hidden layer.
    \item[(ii)] We define three functions $u_\ell(x), \phi_\ell(x)$ and $\psi_\ell(x)$,
    which are non-negative affine combinations of $\bz_{\ell}(x)$.
    \item[(iii)] We define $\bz_{\ell+1}(x) = (z_{\ell+1, j}(x))_{j=0}^{d_{\ell+1}-1}$ via \eqref{tmp_14} and \eqref{tmp_13}. Since $u_\ell(x), \phi_\ell(x)$ and $\psi_\ell(x)$ are non-negative affine combinations of $\bz_{\ell}(x)$, $\bz_{\ell+1}(x)$ is of the form
    $\max (V_\ell^{1} \bz_{\ell}(x) + \bw_{\ell}^{1} x + \bb_{\ell}^{1}, V_\ell^{2} \bz_{\ell}(x) + \bw_{\ell}^{2} x + \bb_{\ell}^{2})$ with element-wise non-negative $V_\ell^{1}$ and $V_\ell^{2}$.
\end{enumerate}

We begin the induction with the first hidden layer $\ell=1$.
In the first hidden layer, we construct $d_1-1$ functions  $\bz_{1}(x) = (z_{1,j}(x))_{j=1}^{d_{1}-1}$ of the form
\begin{align}
z_{1,j}(x) := \left(x - \frac{j}{d_1}\right)_+  \text{ for } j \in [d_1-1]. \label{tmp_15}
\end{align}
Only $d_1 - 1$ neurons are used here.
We consider three non-negative affine combinations of $\bz_{1}(x)$:
\begin{align*}
    u_1(x) &:=  \sum_{k=1}^{d_1-1} \frac{2}{d_1} \left(x - \frac{k}{d_1}\right)_+,\\
    \phi_1(x) &:= \sum_{k=1}^{d_1-1} \big(a_1\mathds{1}(k\text{ odd}) + b\mathds{1}(k\text{ even})\big) \left(x - \frac{k}{d_1}\right)_+,\\
    \psi_1(x) &:= \sum_{k=1}^{d_1-1} \big(b\mathds{1}(k\text{ odd}) + a_1\mathds{1}(k\text{ even})\big) \left(x - \frac{k}{d_1}\right)_+
\end{align*}
with $a_1 := 4/d_1$ and $b := 4/(L D_{\ell})$. This proves step (ii) for $\ell=1.$

Let
$$a_{\ell+1}:= \frac{a_\ell - (d_{\ell+1}-1) b}{d_{\ell+1}}.$$
This implies that $a_\ell \geq (L+1-\ell) (\prod_{i=\ell+1}^L d_i) b\geq b$ for every $\ell \in [L]$. Indeed, this is true for $\ell=1$ as $a_1 = L (\prod_{i=2}^L d_i) b_1$. Moreover, if $a_\ell \geq (L+1-\ell) (\prod_{i=\ell+1}^L d_i) b$, then
$$a_{\ell+1} \geq \frac{a_\ell - d_{\ell+1} b}{d_{\ell+1}} \geq \frac{(L-\ell) (\prod_{i=\ell+1}^L d_i) b}{d_{\ell+1}} = (L-\ell) \Big(\prod_{i=\ell+2}^L d_i\Big) b.$$

We now describe the induction step $\ell \to \ell+1,$ assuming that for a given $\ell \in [L-1],$ 
\begin{align}
    u_\ell(x) &:=  \sum_{k=1}^{D_{\ell}-1} \frac{2}{D_{\ell}} \left(x - \frac{k}{D_{\ell}}\right)_+, \label{def_u_ell_0}\\
    \phi_\ell(x) &:= \sum_{k=1}^{D_{\ell}-1} \big(a_\ell\mathds{1}(k\text{ odd}) + b\mathds{1}(k\text{ even})\big) \left(x - \frac{k}{D_{\ell}}\right)_+,
    \label{def_u_ell_1} \\
    \psi_\ell(x) &:= \sum_{k=1}^{D_{\ell}-1} \big(b\mathds{1}(k\text{ odd}) + a_\ell\mathds{1}(k\text{ even})\big) \left(x - \frac{k}{D_{\ell}}\right)_+,
    \label{def_u_ell_2}
\end{align}
are non-negative affine combinations of the values of the neurons in the $\ell$-th hidden layer $\bz_{\ell}(x) = (z_{\ell, j}(x))_{j=0}^{d_{\ell}-1}.$

\begin{figure}[h!] 
  \centering  \includegraphics[width=.97\textwidth]{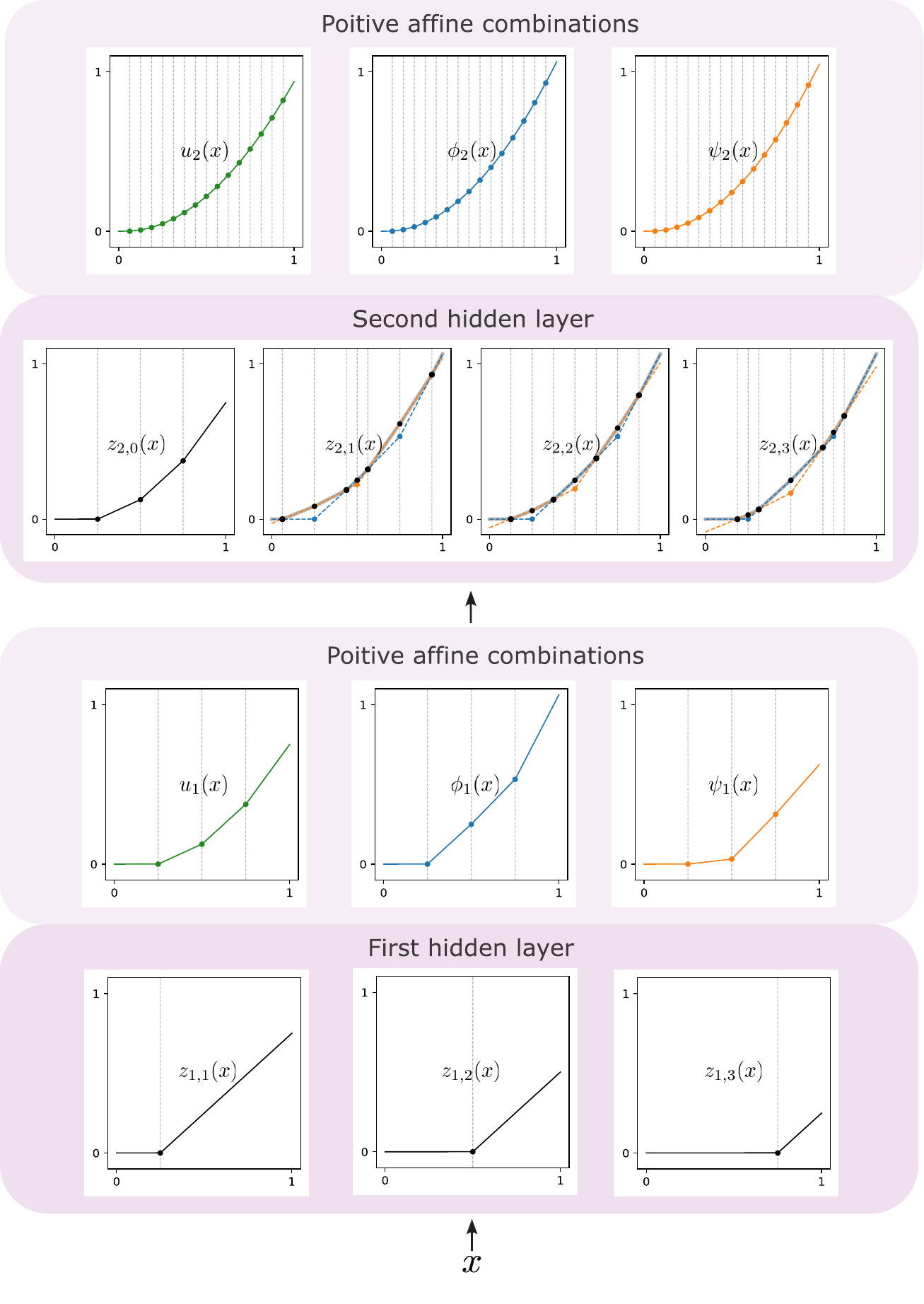}
  \caption{\textbf{Illustration of the proof of Theorem \ref{thm:quadratic_approximation}} for $L=2$ and $(d_1, d_2)=(4,4)$. 
  $u_{\ell}(x)$, $\phi_{\ell}(x)$ and $\psi_{\ell}(x)$ are defined by \eqref{def_u_ell_0}, \eqref{def_u_ell_1} and \eqref{def_u_ell_2}, respectively.  
  From \eqref{tmp_14} and \eqref{tmp_13}, 
  $z_{2,0}(x) = u_{1}(x)$ and
    $z_{2,j}(x)=\max(\phi_1(x), \psi_1(x) + \frac{7}{16} x - \frac{7j}{256})$ for $j \in \{1,2,3\}.$ Gray vertical lines are drawn at the kink locations of the functions.}
    \label{fig_qudratic_proof_illu}
\end{figure}

In the $(\ell+1)$-st hidden layer, we can construct $d_{\ell+1}$ neurons with 
\begin{align}
    z_{\ell+1, 0}(x) \coloneq \max\big( u_\ell(x), 0 \big) 
    =u_\ell(x)=  \sum_{k=1}^{D_{\ell}-1} \frac{2}{D_{\ell}} \left(x - \frac{k}{D_{\ell}}\right)_+
    \label{tmp_14}
\end{align}
and 
\begin{align}
    z_{\ell+1, j}(x)
    &\coloneq \max\left(\phi_\ell(x), \psi_\ell(x) + \frac{a_\ell - b}{2} x - \frac{(a_\ell - b)j}{2D_{\ell+1}} \right). \label{tmp_13}
\end{align}
for $j \in [d_{\ell+1}-1]$. The following two lemmas complete the induction step. Their proofs are given below.
\begin{lemma} \label{lemma_z_equal} For each $j \in \{1,\ldots, d_{\ell+1}-1\}$, $z_{\ell+1, j}(x)$ defined by \eqref{tmp_13} equals
\begin{align*} \sum_{k=1}^{D_{\ell}-1} b \,  \Big(x - \frac{k}{D_{\ell}}\Big)_++  \sum_{k=0}^{D_{\ell}-1}  \frac{a_\ell - b}{2} \left( \mathds{1}(k\text{ even}) \Big(x - \frac{k}{D_{\ell}} - \frac{j}{D_{\ell+1}}\Big)_+
+ \mathds{1}(k\text{ odd}) \Big(x - \frac{k}{D_{\ell}} - \frac{d_{\ell+1} - j}{D_{\ell+1}}\Big)_+ \right).
\end{align*}
\end{lemma}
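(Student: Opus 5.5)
We prove the identity by comparing both sides as continuous piecewise-affine functions of $x$. Write $\Delta := \frac{a_\ell-b}{2}\geq 0$ (nonnegative since $a_\ell\geq b$), $D:=D_\ell$, $d:=d_{\ell+1}$, so $D_{\ell+1}=Dd$. The first step is to rewrite the two arguments of the maximum in \eqref{tmp_13}. Since $\phi_\ell$ and $\psi_\ell$ share the common part $b\sum_{k}(x-k/D)_+$, we have
\begin{align*}
\phi_\ell(x) &= B(x) + 2\Delta \sum_{k \text{ odd}} \Big(x-\frac kD\Big)_+, \qquad
\psi_\ell(x) = B(x) + 2\Delta \sum_{k\geq 2 \text{ even}} \Big(x-\frac kD\Big)_+,
\end{align*}
where $B(x):=b\sum_{k=1}^{D-1}(x-k/D)_+$. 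Thus $z_{\ell+1,j}-B = \Delta\,\max(F(x),G_j(x))$, where $F:=2\sum_{k \text{ odd}}(x-k/D)_+$ and $G_j:=2\sum_{k\geq2\text{ even}}(x-k/D)_+ + x - \frac{j}{Dd}$. It therefore suffices to show that $\max(F,G_j)$ equals
\[
H_j(x):=\sum_{k \text{ even}}\Big(x-\frac kD-\frac{j}{Dd}\Big)_+ + \sum_{k \text{ odd}}\Big(x-\frac kD-\frac{d-j}{Dd}\Big)_+ ,\qquad k\in\{0,\dots,D-1\}.
\]
Note that the $k=0$ term $(x-\frac{j}{Dd})_+$ is where the extra $+x$ in $G_j$ comes from.

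The second step is to compute $F-G_j$ explicitly. The function $F-G_j = 2\sum_{k=0}^{D-1}(-1)^{k+1}(x-k/D)_+ + \frac{j}{Dd}$, where we have rewritten $x = (x-0)_+$ for $x\ge 0$, so that the $k=0$ term enters with the even sign. This is a sawtooth: on $[k/D,(k+1)/D]$ with $k$ even it has slope $-1$, and with $k$ odd it has slope $+1$. It starts at the value $\frac{j}{Dd}$ at each even grid point $k/D$ and dips to $\frac{j}{Dd}-\frac1D$ at each odd grid point. Hence on an even interval $F\ge G_j$ exactly for $x\le k/D + j/(Dd)$, and on an odd interval $F\ge G_j$ exactly for $x\ge (k+1)/D-j/(Dd)$, i.e.\ for $x\geq k/D + (d-j)/(Dd)$. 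The crossing points thus lie at precisely the shifted kinks appearing in $H_j$. Since $0<j<d$, these shifted kinks fall strictly inside the intervals, which is where the restriction $j\in[d-1]$ is used.

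The third step verifies that $\max(F,G_j)=H_j$ on each interval $[k/D,(k+1)/D]$. This is a slope-counting argument. Both sides are continuous and piecewise affine, so I would check that they agree at $x=0$ (or at $x\le 0$, where everything vanishes) and that their slopes agree on every subinterval delimited by the kinks. The slope of $H_j$ on a subinterval equals the number of shifted kinks to its left. The slope of $F$ or $G_j$, whichever is active, equals twice the number of odd, respectively even, grid points to the left, counting the $+x$ term in $G_j$ as the even point $0$. On an even interval one has $G_j$ active to the right of the crossing and $F$ active to its left. The count of shifted kinks, $2\cdot\#\{\text{even } k'<k\}+\ldots$, must be matched carefully to these slopes. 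Rather than chase this bookkeeping, I would prove the identity by induction over intervals: at each grid point both sides have matching values, and the slope changes by $+1$ at each shifted kink of $H_j$, while the active branch switches at the crossing points.

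The main obstacle is exactly this bookkeeping. One must track which branch is active near $0$, since for $x<j/(Dd)$ we have $F=0\ge G_j$, and correctly include the $k=0$ even term. One must also confirm that values match, not just slopes. For the values, I would verify equality at $x=0$, where both sides are $0$ because $G_j(0)<0$, and then rely on matching slopes on every piece.
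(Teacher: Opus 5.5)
Your reduction is sound and matches the paper's argument up to normalization. After subtracting $B$ and factoring out $\Delta$, your sign analysis of $F-G_j$ is exactly the paper's Step~1 analysis of $t=\mathcal{L}+\psi_\ell-\phi_\ell$, and your crossing points are the paper's $x_k$. The gap is the step that actually proves the identity: showing that $\max(F,G_j)$ and $H_j$ have the same slope on every piece. You explicitly skip it (``rather than chase this bookkeeping''), and the proposed induction over intervals only restates what has to be shown.

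Moreover, the ingredients you set up for that step are off by one at the $+x$ term. Your displayed formula $F-G_j=2\sum_{k=0}^{D-1}(-1)^{k+1}(x-k/D)_+ +\frac{j}{Dd}$ puts coefficient $-2$ on the $k=0$ hinge, whereas the correct coefficient is $-1$. With your formula the slopes would be $-2,0,-2,0,\dots$, contradicting the $\mp1$ sawtooth you then correctly describe. Likewise, your rule ``slope of $G_j$ $=$ twice the number of even grid points to the left, counting $0$'' gives slope $2$ on $(x_0,\frac1D)$ with $x_0=\frac{j}{Dd}$. There $G_j$ is active and both $G_j$ and $H_j$ actually have slope $1$, so followed literally your bookkeeping breaks on the second piece.

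The repair is short. On $I_k=(\frac kD,\frac{k+1}D)$ the slope of $F$ is $2\#\{k'\le k:\ k'\text{ odd}\}$ and that of $G_j$ is $1+2\#\{2\le k'\le k:\ k'\text{ even}\}$. Combined with your description of which branch is active, the active slope is $k$ to the left of $x_k$ and $k+1$ to its right, in all four parity/side cases. That is exactly the number of kinks of $H_j$ to the left. Together with $\max(F(0),G_j(0))=0=H_j(0)$ and continuity, this proves the claim.

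Equivalently, in the form of the paper's Steps~2--3 (jump heights plus hinge representation): at each grid point $\frac kD$ the kink belongs to the inactive branch, so $\max(F,G_j)$ has no kink there. At each $x_k$ its slope jumps by $|(F-G_j)'|=1$. Since it vanishes with zero right slope at $0$, it equals $\sum_{k=0}^{D-1}(x-x_k)_+=H_j$. Your normalization makes this cleaner than the paper's version, where the grid points carry jumps $b$, but the computation still has to be carried out.
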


\begin{lemma}\label{lemma_u_affine}
    If $d_{\ell+1}$ is even, then, the functions $u_{\ell+1}$, $\phi_{\ell+1}$, and $\psi_{\ell+1}$, defined by
    \begin{align*}
    u_{\ell+1}(x) &:=  \sum_{k=1}^{D_{\ell+1}-1} \frac{2}{D_{\ell+1}} \left(x - \frac{k}{D_{\ell+1}}\right)_+,\\
    \phi_{\ell+1}(x) &:= \sum_{k=1}^{D_{\ell+1}-1} \big(a_{\ell+1}\mathds{1}(k\text{ odd}) + b\mathds{1}(k\text{ even})\big) \left(x - \frac{k}{D_{\ell+1}}\right)_+,\\
    \psi_{\ell+1}(x) &:= \sum_{k=1}^{D_{\ell+1}-1} \big(b\mathds{1}(k\text{ odd}) + a_{\ell+1}\mathds{1}(k\text{ even})\big) \left(x - \frac{k}{D_{\ell+1}}\right)_+,
\end{align*}
    are non-negative affine combinations of $(z_{\ell+1, j}(x))_{j=0}^{d_{\ell+1}-1}$, defined by \eqref{tmp_14} and \eqref{tmp_13}.
\end{lemma}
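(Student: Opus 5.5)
The plan is to compare kink locations and kink weights. By Lemma~\ref{lemma_z_equal}, every $z_{\ell+1,j}$ with $j\ge 1$, and also $z_{\ell+1,0}=u_\ell$, is a sum of ReLU terms $(x-t)_+$ with non-negative coefficients. There are no extra affine parts. The targets $u_{\ell+1},\phi_{\ell+1},\psi_{\ell+1}$ have the same form, with kinks on the grid $\{i/D_{\ell+1}: 1\le i\le D_{\ell+1}-1\}$. Two such functions coincide exactly when the coefficients at each kink agree. So it suffices to find non-negative coefficients that match the kink weights.

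Write $d:=d_{\ell+1}$, $c:=(a_\ell-b)/2$, and index grid points as $i=kd+r$ with $0\le k\le D_\ell-1$ and $0\le r\le d-1$. In this indexing, $z_{\ell+1,0}$ has weight $2/D_\ell$ at every $i=kd$ with $k\ge1$. For $j\ge1$, Lemma~\ref{lemma_z_equal} gives $z_{\ell+1,j}$ weight $b$ at each $i=kd$ with $k\ge 1$. It also has weight $c$ at $i=kd+j$ for $k$ even and at $i=kd+(d-j)$ for $k$ odd.

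Because $d$ is even, $r$ and $d-r$ have the same parity, and the parity of $i$ equals the parity of $r$. Let $S_{\mathrm{odd}}:=\sum_{j\text{ odd}}z_{\ell+1,j}$ and $S_{\mathrm{even}}:=\sum_{j\text{ even},\,j\ge2}z_{\ell+1,j}$. Then:
\begin{itemize}
\item $S_{\mathrm{odd}}$ has weight $c$ at every odd $i$, weight $0$ at even $i\notin d\NN$, and weight $\tfrac d2 b$ at multiples of $d$;
\item $S_{\mathrm{even}}$ has weight $c$ at every even $i\notin d\NN$, weight $0$ at odd $i$, and weight $(\tfrac d2-1)b$ at multiples of $d$.
\end{itemize}

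For each target we first fix the weights off the multiples of $d$, and then repair the multiples of $d$ with a non-negative multiple $\gamma$ of $z_{\ell+1,0}$.

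\emph{Target $u_{\ell+1}$.} Take $\frac{2}{D_{\ell+1}c}(S_{\mathrm{odd}}+S_{\mathrm{even}})+\gamma z_{\ell+1,0}$. We need $\gamma\ge0$, which is equivalent to $(d-1)b\le c$, i.e.\ $a_\ell\ge(2d-1)b$. This follows from the established bound $a_\ell\ge (L+1-\ell)\big(\prod_{i>\ell}d_i\big)b\ge 2d_{\ell+1}b$ for $\ell\le L-1$.

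\emph{Target $\phi_{\ell+1}$.} Take $\frac{a_{\ell+1}}{c}S_{\mathrm{odd}}+\frac{b}{c}S_{\mathrm{even}}+\gamma z_{\ell+1,0}$. At multiples of $d$ the weight must equal $b$, so we need $a_{\ell+1}\tfrac d2+b(\tfrac d2-1)\le c$. By the recursion $d\,a_{\ell+1}=a_\ell-(d-1)b$, this holds with equality. Hence $\gamma=0$, which is exactly why $a_{\ell+1}$ is defined this way.

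\emph{Target $\psi_{\ell+1}$.} Take $\frac{b}{c}S_{\mathrm{odd}}+\frac{a_{\ell+1}}{c}S_{\mathrm{even}}+\gamma z_{\ell+1,0}$. The required inequality reduces to $b-a_{\ell+1}\le0$, which holds since $a_{\ell+1}\ge b$. This gives $\gamma=\frac{D_\ell}{2}\cdot\frac{b(a_{\ell+1}-b)}{c}\ge0$.

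The main obstacle is this bookkeeping at the multiples of $d$, where three sources of kinks overlap. It also has to be checked that no kink of any $z_{\ell+1,j}$ falls at $0$ or at $1$. This holds because $1\le j\le d-1$ and $k\le D_\ell-1$, which give $kd+r\le D_{\ell+1}-1$. Once those weights are verified, the three identities follow, and all coefficients are non-negative, as required.
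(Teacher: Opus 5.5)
Your argument is essentially the paper's. Your coefficients $a_{\ell+1}/c$ and $b/c$ on odd/even $j$ are exactly the paper's $r_j$ (for $\phi_{\ell+1}$) and $s_j$ (for $\psi_{\ell+1}$), and your identity $\frac d2 a_{\ell+1}+(\frac d2-1)b=c$ is the paper's $\sum_j r_j=1$.

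There is one computational slip in the $\psi_{\ell+1}$ case. The grid points at multiples of $d$ are even, so the target weight of $\psi_{\ell+1}$ there is $a_{\ell+1}$, not $b$. The correct repair coefficient is therefore $\gamma=\frac{D_\ell}{2}(a_{\ell+1}-b)\frac{b+c}{c}$, which is the paper's $s_0$. The value $\frac{D_\ell}{2}\cdot\frac{b(a_{\ell+1}-b)}{c}$ you wrote gives weight $b$ at those points, so it does not reproduce $\psi_{\ell+1}$. The corrected $\gamma$ is non-negative exactly when $a_{\ell+1}\ge b$, so your sign condition and your conclusion still stand.

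For $u_{\ell+1}$, your direct combination is valid, but it needs the extra bound $a_\ell\ge(2d_{\ell+1}-1)b$, which you correctly derive from $a_\ell\ge 2d_{\ell+1}b$. The paper avoids this bound by observing that $u_{\ell+1}=\frac{2}{D_{\ell+1}(a_{\ell+1}+b)}(\phi_{\ell+1}+\psi_{\ell+1})$.
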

Hence, by Lemma \ref{lemma_u_affine} and the induction argument, the function 
$$u_{L}(x) :=  \sum_{k=1}^{D_{L}-1} \frac{2}{D_{L}} \left(x - \frac{k}{D_{L}}\right)_+$$
is a non-negative affine combinations of the last hidden layer $\bz_{L}(x)$.
The final output of the network is then given by
\begin{align}
    f(x) \coloneqq - \frac{1}{8 D_{L}^2} + \frac{x}{D_{L}} + u_{L}(x) = - \frac{1}{8 D_{L}^2} + \frac{x}{D_{L}} + \frac{2}{D_{L}}\sum_{k=1}^{D_{L}-1} \left(x - \frac{k}{D_{L}}\right)_+.
    \label{tmp_16}
\end{align}
Finally, by applying Lemma \ref{lemma_sq_approx_pc}, we have $|x^2-f(x)| \leq 1/(8 D_{L}^2)$ and obtain the assertion.
\end{proof}

\begin{remark}
\label{rem.74f}
    From \eqref{tmp_15}, \eqref{tmp_14} and \eqref{tmp_13},  we observe that not only elements of $\{V_\ell^{1}\}_{\ell=0}^{L-1}$ and $V_L$, but also
    $\{\bw_\ell^{1}\}_{\ell=0}^{L-1}$ and $w_L$ in the network are non-negative.
    This property is exploited in the approximation of monomial functions in Appendix~\ref{appendix_monomial}. 
\end{remark}

\begin{proof}[Proof of Lemma \ref{lemma_z_equal}]
    We fix $\ell \in \NN$, $j \in [d_{\ell+1}-1]$ and recall that $D_{\ell} = \prod_{i=1}^{\ell} d_{i}$ and
\begin{align*}
    \phi_\ell(x) &= \sum_{k=1}^{D_{\ell}-1} \big(a_\ell\mathds{1}(k\text{ odd}) + b\mathds{1}(k\text{ even})\big) \left(x - \frac{k}{D_{\ell}}\right)_+,\\
    \psi_\ell(x) &= \sum_{k=1}^{D_{\ell}-1} \big(b\mathds{1}(k\text{ odd}) + a_\ell\mathds{1}(k\text{ even})\big) \left(x - \frac{k}{D_{\ell}}\right)_+.
\end{align*}    
    Further, we write $\Delta \coloneq \Delta_{\ell} = a_\ell - b \geq 0$ and  
\begin{align*}
\mathcal{L}(x):= - \frac{\Delta j}{2 D_{\ell} d_{\ell+1}} + \frac{\Delta}{2}\,x , \qquad 
F(x):=\max\Bigl(\phi_\ell(x),\,\mathcal{L}(x)+\psi_\ell(x)\Bigr).
\end{align*}
We prove that on $[0,1]$
\begin{align*}
F(x)=\sum_{k=1}^{D_{\ell}-1} b  \Big(x - \frac{k}{D_{\ell}}\Big)_+
+ \sum_{k=0}^{D_{\ell}-1} \frac{\Delta}{2} \left( \mathds{1}(k\text{ even}) \Big(x - \frac{k}{D_{\ell}} - \frac{j}{D_{\ell+1}}\Big)
+ \mathds{1}(k\text{ odd}) \Big(x - \frac{k}{D_{\ell}} - \frac{d_{\ell+1} - j}{ D_{\ell+1}}\Big) \right).
\end{align*}
The proof is divided into multiple steps.

\emph{Step 1: Slopes and switching locations.}
Define
\begin{align*}
    t(x) &\coloneqq \mathcal{L}(x) + \psi_\ell(x) - \phi_\ell(x) \\
    &= - \frac{\Delta j}{2 D_{\ell} d_{\ell+1}} + \frac{\Delta}{2}\,x + \sum_{k=1}^{D_{\ell}-1} \big( - \Delta \mathds{1}(k\text{ odd}) + \Delta \mathds{1}(k\text{ even})\big) \left(x - \frac{k}{D_{\ell}}\right)_+.
\end{align*}
Then, $t$ is a continuous and piecewise affine function. Moreover, on each open interval  
$I_k \coloneqq (\frac{k}{D_{\ell}},\frac{k+1}{D_{\ell}})$ for $k =0,\dots,D_{\ell}-1$,
\begin{align} \label{eq:slope-d_2}
    t'(x) = \frac{\Delta}{2} + \sum_{i=1}^k (-1)^i \Delta = (-1)^k \frac{\Delta}{2}.
\end{align}
We have
$t(0)=-\Delta j (2 D_{\ell} d_{\ell+1})^{-1}<0$. Using \eqref{eq:slope-d_2} and induction over $k = 0,1,\ldots, D_{\ell}-1$, it follows that 
\begin{align*}
t\!\left(\frac{k}{D_{\ell}}\right)
=\begin{cases}
-\frac{j}{d_{\ell+1}} \, \frac{\Delta}{2 D_{\ell} }, & k \text{ even},\\
(1-\frac{j}{d_{\ell+1}}) \frac{\Delta}{2 D_{\ell} }, & k \text{ odd}.
\end{cases}
\end{align*}
Thus on every $I_k$ the continuous piecewise affine function $t$ changes sign exactly once at
\begin{align*}
x_k:= 
\begin{cases}
\frac{k}{D_{\ell}} + \frac{j}{d_{\ell+1}} \, \frac{1}{D_{\ell}}, & k \text{ even},\\
\frac{k}{D_{\ell}} + \frac{d_{\ell+1} - j}{d_{\ell+1}} \, \frac{1}{D_{\ell}} , & k \text{ odd}.
\end{cases}
\end{align*}
Consequently, on $I_k$ we have 
\begin{align} \label{tmp_31}
    F= \begin{cases}
        \phi_\ell &\text{ on }(0,x_0)\cup (x_{1}, x_2) \cup (x_{3}, x_4) \cup \ldots \cup (x_{D_{\ell}-1}, 1)\\
        \mathcal{L}+ \psi_\ell &\text{ on } (x_0, x_1) \cup (x_2,x_3) \cup \ldots \cup (x_{D_{\ell}-2}, x_{D_{\ell}-1}).
    \end{cases}
\end{align}
In particular,
$F$ is piecewise-affine on $[0,1]$ with kinks at
\begin{align}
     \Bigl\{\tfrac{k}{D_{\ell}}: k=1,\dots,D_{\ell}-1\Bigr\} 
     \cup
    \Bigl\{x_k: k=0,\dots,D_{\ell}-1\Bigr\},
\label{tmp_6}
\end{align}
In addition, $F(0)=0$ and $F'(0)=0.$

\emph{Step 2: Jump heights of the derivative.}
For a real-valued function $f:\RR \to \RR$ which has right and left derivatives, we define the jump height of $f'$ as 
$$[\![ f ]\!](x) := \lim_{h\downarrow 0}\frac{f(x+h)-f(x)}{h} - \lim_{h\uparrow 0}\frac{f(x+h)-f(x)}{h}.$$
At each coarse grid point $\frac{k}{D_{\ell}}$ for $k \in [D_{\ell}-1]$, we pass from 
$I_{k-1}$ to $I_k$. 
By \eqref{tmp_31}  the jump height of $F'$ at $\frac{k}{D_{\ell}} \in (x_{k-1},x_k)$ is 
\begin{align}
    [\![F]\!]\Big(\frac{k}{D_{\ell}}\Big) &= \begin{cases}
    [\![\phi_\ell]\!](\frac{k}{D_{\ell}}) = b, & k \text{ even},\\
    [\![\mathcal{L} + \psi_\ell]\!](\frac{k}{D_{\ell}}) = b, & k \text{ odd}.
    \end{cases}\label{tmp_7}
\end{align}
At each kink $x_k$ the active branch switches, and by \eqref{eq:slope-d_2}
the jump height of $F'$ equals
\begin{align}
[\![F]\!](x_k)&=(-1)^k \bigl(( \mathcal{L} +\psi_\ell)'-(\phi_\ell)'\bigr)\big|_{x_k}=\frac{\Delta}{2}. \label{tmp_8}
\end{align}
Since every jump height of $F'$ on $[0,1]$ is non-negative, $F$ is convex.

\emph{Step 3: Hinge representation.}
Let $G$ be convex, piecewise-affine on $[0,1]$ with $G(0)=G'(0)=0$, and let
$\{\delta_m\}$ be its kink locations with jump height $\alpha_m > 0$ of $G'$ at $\delta_m$.
Then $G'(x)=\sum_m \alpha_m \,\1(x>\delta_\ell)$ on $x \in [0,1] \setminus \{\delta_m\}$, hence
\begin{align*}
G(x)=\int_0^x G'(u)\,du=\sum_m \alpha_m\,\pp{x-\delta_m}.
\end{align*}
Applying this to $G=F$ with kink locations obtained in \eqref{tmp_6} and jump heights
obtained in \eqref{tmp_7} and \eqref{tmp_8} finally yields
\begin{align*}
F(x)=\sum_{k=1}^{D_{\ell}-1} b  \Big(x - \frac{k}{D_{\ell}}\Big)_+
+ \sum_{k=0}^{D_{\ell}-1} \frac{\Delta}{2} \left( \mathds{1}(k\text{ even}) \Big(x - \frac{k}{D_{\ell}} - \frac{j}{ D_{\ell+1}}\Big)
+ \mathds{1}(k\text{ odd}) \Big(x - \frac{k}{D_{\ell}} - \frac{d_{\ell+1} - j}{ D_{\ell+1}}\Big) \right).
\end{align*}
\end{proof}

\begin{proof}[Proof of Lemma \ref{lemma_u_affine}]
Lemma \ref{lemma_z_equal} states that
\begin{align*}
    z_{\ell+1, j}(x) =& \sum_{k=1}^{D_{\ell}-1} b  \Big(x - \frac{k}{D_{\ell}}\Big)_+ \\
&+ \sum_{k=0}^{D_{\ell}-1} \frac{\Delta_{\ell}}{2} \left( \mathds{1}(k\text{ even}) \Big(x - \frac{k}{D_{\ell}} - \frac{j}{D_{\ell+1}}\Big)_+
+ \mathds{1}(k\text{ odd}) \Big(x - \frac{k}{D_{\ell}} - \frac{d_{\ell+1} - j}{D_{\ell+1}}\Big)_+ \right).
\end{align*}
We define constants $r_1, \ldots, r_{d_{\ell+1}-1}$ as 
\begin{align}
r_j := \frac{2a_{\ell+1}}{\Delta_{\ell}} \mathds{1}(j\text{ odd}) + \frac{2b}{\Delta_{\ell}} \mathds{1}(j\text{ even}).
\label{tmp_10}
\end{align}
By assumption $d_{\ell+1}$ is even, $a_{\ell+1}= (a_\ell - (d_{\ell+1}-1) b)/d_{\ell+1},$ and therefore
\begin{align}
    \sum_{j=1}^{d_{\ell+1}-1} r_j
&= \frac{d_{\ell+1}}{2} \, \frac{2a_{\ell+1}}{\Delta_{\ell}} + \frac{d_{\ell+1}-2}{2} \, \frac{2b}{\Delta_{\ell}}= \frac{a_\ell - (d_{\ell+1}-1) b + (d_{\ell+1}-2)b}{\Delta_{\ell}} =1. \label{tmp_9}
\end{align}
It follows
\begin{align*}
    \sum_{j=1}^{d_{\ell+1}-1} r_j \, z_{\ell+1}^{(j)}(x) 
    &= 
    \sum_{k=1}^{D_{\ell}-1} b  \Big(x - \frac{k}{D_{\ell}}\Big)_+\\
    & \quad  + \sum_{k=0}^{D_{\ell}-1} \sum_{j=1}^{d_{\ell+1}-1}
    \frac{r_j \Delta_{\ell}}{2} 
    \left( \mathds{1}(k\text{ even}) \Big(x - \frac{k}{D_{\ell}} - \frac{j}{D_{\ell+1}}\Big)_+
+ \mathds{1}(k\text{ odd}) \Big(x - \frac{k}{D_{\ell}} - \frac{d_{\ell+1} - j}{D_{\ell+1}}\Big)_+ \right)\\
&= \sum_{k=1}^{D_{\ell}-1} b  \Big(x - \frac{k}{D_{\ell}}\Big)_+
+ \sum_{k=0}^{D_{\ell}-1} \sum_{j=1}^{d_{\ell+1}-1}
    \frac{r_j \Delta_{\ell}}{2} 
    \Big(x - \frac{k}{D_{\ell}} - \frac{j}{D_{\ell+1}}\Big)_+\\
&= \sum_{k=1}^{D_{\ell}-1} b  \Big(x - \frac{k}{D_{\ell}}\Big)_+
+ \sum_{k=0}^{D_{\ell}-1} \sum_{j=1}^{d_{\ell+1}-1}
    \Big( a_{\ell+1} \mathds{1}(j\text{ odd}) + b \mathds{1}(j\text{ even}) \Big) 
    \Big(x - \frac{k}{D_{\ell}} - \frac{j}{D_{\ell+1}}\Big)_+\\
&= \sum_{k=1}^{D_{\ell}-1} b  \Big(x - \frac{k d_{\ell+1}}{D_{\ell+1}}\Big)_+
+ \sum_{k=0}^{D_{\ell}-1} \sum_{j=1}^{d_{\ell+1}-1}
    \Big( a_{\ell+1} \mathds{1}(j\text{ odd}) + b \mathds{1}(j\text{ even}) \Big) 
    \Big(x - \frac{k d_{\ell+1} + j}{D_{\ell+1}}\Big)_+\\
    & = \sum^{D_{\ell+1}-1}_{q=1} \Big( a_{\ell+1} \mathds{1}(q\text{ odd}) + b \mathds{1}(q\text{ even}) \Big) \Big(x - \frac{q}{D_{\ell+1}}\Big)_+ \\
    &=:u^{(1)}_{\ell+1}(x).
\end{align*}
We used \eqref{tmp_9} and \eqref{tmp_10} for the first and third equality, the second equality uses that $d_{\ell+1}$ is even, and the fourth equality expands $q=kd_{\ell+1}+j.$ Since $d_{\ell+1}$ is even, $q$ is even if and only if $j$ is even. Hence $u^{(1)}_{\ell+1}(x)$ is a non-negative affine combination of $(z^{(0)}_{\ell+1}(x))_{j=0}^{d_{\ell+1}-1}$.

Similarly, define $$s_0 := \frac{D_{\ell}}{2} \,\Big(a_{\ell+1} - b \sum_{j=1}^{d_{\ell+1}-1} s_j \Big), \quad \text{and} \quad s_j := \frac{2b}{\Delta_{\ell}}  \mathds{1}(j\text{ odd}) + \frac{2a_{\ell+1}}{\Delta_{\ell}}  \mathds{1}(j\text{ even}),\quad j=1,\ldots,d_{\ell+1}-1.$$
From  $\sum_{j=1}^{d_{\ell+1}-1} s_j \leq \sum_{j=1}^{d_{\ell+1}-1} r_j =1$, we get $s_0 \geq 0$.
Moreover,
\begin{align}
    \frac{2s_0}{D_{\ell}} + b \sum_{j=1}^{d_{\ell+1}-1} s_j
&= a_{\ell+1}, \label{tmp_11}
\end{align}
and for $j \in \{1,\ldots, d_{\ell+1}-1\}$,
\begin{align}
\frac{s_j \Delta_{\ell}}{2}  &=   
b \mathds{1}(j\text{ odd}) +  a_{\ell+1}  \mathds{1}(j\text{ even}). \label{tmp_12}
\end{align}
Using arguments as for the rewriting of $\sum_{j=1}^{d_{\ell+1}-1} r_j \, z_{\ell+1}^{(j)}(x)$ above, 
\begin{align*}
    \sum_{j=0}^{d_{\ell+1}-1} s_j \, z_{\ell+1}^{(j)}(x) 
    &= \frac{2s_0}{D_{\ell}} \sum_{k=1}^{D_{\ell}-1}\left(x - \frac{k}{D_{\ell}}\right)_+ + \Big( \sum_{j=1}^{d_{\ell+1}-1} s_j \Big) 
    \sum_{k=1}^{D_{\ell}-1} b  \Big(x - \frac{k}{D_{\ell}}\Big)_+\\
    & \quad  + \sum_{k=0}^{D_{\ell}-1} \sum_{j=1}^{d_{\ell+1}-1}
    \frac{s_j \Delta_{\ell}}{2}
    \left( \mathds{1}(k\text{ even}) \Big(x - \frac{k}{D_{\ell}} - \frac{j}{D_{\ell+1}}\Big)_+
+ \mathds{1}(k\text{ odd}) \Big(x - \frac{k}{D_{\ell}} - \frac{d_{\ell+1} - j}{D_{\ell+1}}\Big)_+ \right)\\
&= \sum_{k=1}^{D_{\ell}-1} a_{\ell+1}  \Big(x - \frac{k}{D_{\ell}}\Big)_+
+ \sum_{k=0}^{D_{\ell}-1} \sum_{j=1}^{d_{\ell+1}-1}
    \frac{s_j \Delta_{\ell}}{2}
    \Big(x - \frac{k}{D_{\ell}} - \frac{j}{D_{\ell+1}}\Big)_+\\
&= \sum_{k=1}^{D_{\ell}-1} a_{\ell+1}  \Big(x - \frac{k}{D_{\ell}}\Big)_+
+ \sum_{k=0}^{D_{\ell}-1} \sum_{j=1}^{d_{\ell+1}-1}
    \Big( b \mathds{1}(j\text{ odd}) + a_{\ell+1} \mathds{1}(j\text{ even}) \Big) 
    \Big(x - \frac{k}{D_{\ell}} - \frac{j}{D_{\ell+1}}\Big)_+\\
    & = \sum^{D_{\ell+1}-1}_{q=1} \Big( b \mathds{1}(q\text{ odd}) + a_{\ell+1} \mathds{1}(q\text{ even}) \Big) \Big(x - \frac{q}{D_{\ell+1}}\Big)_+ \\
    =: u^{(2)}_{\ell+1}(x)
\end{align*}
where we used \eqref{tmp_11} and \eqref{tmp_12} for the second and third equalities, respectively. Hence, $u^{(2)}_{\ell+1}(x)$ is also a non-negative affine combination of $(z^{(0)}_{\ell+1}(x))_{j=0}^{d_{\ell+1}-1}$.
Finally, we get the remaining assertion by noticing that 
$$u^{(0)}_{\ell+1}(x) = \sum_{k=1}^{D_{\ell+1}-1} \frac{2}{D_{\ell+1}} \left(x - \frac{k}{D_{\ell+1}}\right)_+ = \frac{2}{D_{\ell+1}(a_{\ell+1} + b)} \left(u^{(1)}_{\ell+1}(x) + u^{(2)}_{\ell+1}(x)\right)$$
can also be represented as a non-negative affine combination of $(z^{(0)}_{\ell+1}(x))_{j=0}^{d_{\ell+1}-1}$.
\end{proof}

We also present an alternative theorem and proof below, which is more accessible.
The network construction below is similar to that of original proof for ReLU networks (e.g., \citet{yarotski2017}),
and highlights the key idea behind the exponential approximation rate.

\begin{lemma} \label{lemma_comp_multiple_bump}
Let $\psi:[0,1]\to [0,1]$ be the function $\psi(u)=|2u-1|$ and define $\psi^{\circ k}=\psi \circ \ldots \circ \psi$ as the $k$-fold composition of $\psi.$ Then, 
\begin{itemize}
    \item[(i)] $\psi^{\circ k}+(1-\psi)^{\circ k}=1,$
    \item[(ii)] for a real number $u$ denote by $\{u\} \coloneqq u - \lfloor u \rfloor \in [0,1)$ the fractional part. For any $y\in [0,1],$ and any $k=1,2,\ldots,$ $$\psi^{\circ k}(y)=\psi\big(\{2^{k-1}y\}\big),$$ 
    \item[(iii)] for any $L=1,2,\ldots,$ the function $$x\mapsto \sum_{k=1}^L 4^{-k} \big(1-\psi^{\circ k}(x)\big)$$ interpolates $x\mapsto x(1-x)$ at the point $\ell 2^{-L}$ and for any $\ell=0,\ldots, 2^L,$
    and is affine on the interval $[\ell 2^{-L}, (\ell +1 )2^{-L}]$ for any $\ell = 0,\ldots, 2^L - 1$,
    \item[(iv)] for any $L=1,2,\ldots$
\begin{align*}
\sup_{x\in [0,1]} \, \Big|x+\sum_{k=1}^L 4^{-k}\big(\psi^{\circ k}(x)-1\big) - 2^{-2L-3} 
-x^2\Big|= 2^{-2L-3}. 
\end{align*}
\end{itemize}
\end{lemma}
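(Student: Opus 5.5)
The plan is to prove the four items in order, each building on the previous one. Throughout, $\psi(u)=|2u-1|$ on $[0,1]$.

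\emph{Items (i) and (ii).} I read (i) as the identity $\psi(1-u)=\psi(u)$. This follows from $|2(1-u)-1|=|1-2u|$, and by induction $\psi^{\circ k}(1-u)=\psi^{\circ k}(u)$. If (i) is meant literally as $\psi^{\circ k}+(1-\psi)^{\circ k}=1$, I will instead use $1-\psi(u)=1-|2u-1|$, which is the tent map, and check the identity by induction from the same symmetry. For (ii) I induct on $k$. The case $k=1$ holds because $\{y\}=y$ for $y\in[0,1)$, and $\psi(0)=\psi(1)=1$ handles $y=1$. For the step, I need $\psi(\psi(\{2^{k-1}y\}))=\psi(\{2^k y\})$, i.e.\ $\psi(|2v-1|)=\psi(\{2v\})$ for $v\in[0,1)$. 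I split into $v<1/2$, where $|2v-1|=1-2v$, $\{2v\}=2v$, and symmetry of $\psi$ gives equality, and $v\ge 1/2$, where $|2v-1|=2v-1=\{2v\}$.

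\emph{Item (iii).} By (ii), $1-\psi^{\circ k}(x)$ is the sawtooth: it is piecewise affine with kinks only at multiples of $2^{-k}$, equals $0$ at even multiples of $2^{-k}$, and equals $1$ at odd multiples. Hence $g_L(x):=\sum_{k=1}^L 4^{-k}(1-\psi^{\circ k}(x))$ is affine on each dyadic interval of length $2^{-L}$. I prove the interpolation by induction on $L$ via the classical Yarotsky identity. Let $h(x)=x(1-x)$, and let $\ell_L$ be its piecewise linear interpolant at the nodes $j2^{-L}$. On an interval $[a,a+2\delta]$ with $\delta=2^{-L}$, the midpoint of the chord lies below $h$ by exactly $\delta^2$, since $h$ has second derivative $-2$. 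Thus $\ell_{L}-\ell_{L-1}$ is a tent of height $4^{-L}$ on each interval of length $2^{-L+1}$, which is exactly $4^{-L}(1-\psi^{\circ L})$. The base case is $\ell_0\equiv0$.

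\emph{Item (iv).} We have $x-x^2=h(x)$, so $x+\sum_k 4^{-k}(\psi^{\circ k}-1)=x-g_L(x)$. By (iii), $x-g_L$ is the linear interpolant of $x^2$ on the grid with spacing $2^{-L}$. Shifting it by $-2^{-2L-3}$ gives the function analysed in Lemma~\ref{lemma_sq_approx_pc} with $m=2^L$. On each cell, the interpolant minus $x^2$ equals $(x-a)(a+\delta-x)\in[0,\delta^2/4]$. After the shift the error lies in $[-\delta^2/8,\delta^2/8]$, and both endpoints are attained, giving supremum exactly $2^{-2L-3}$.

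\emph{Main obstacle.} I expect the hardest step to be item (i): it must be interpreted correctly and its role pinned down, since it is presumably what allows caching the oscillating terms in a convex (HyCNN) form. The remaining items are routine inductions.
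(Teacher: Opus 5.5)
Your proposal is correct and follows essentially the paper's proof. The literal reading of (i), as the composition of the tent map $1-\psi$, is the intended one, and the paper proves it by exactly your induction using $\psi\circ(1-\psi)=\psi\circ\psi$; items (ii) and (iv) match the paper, and your argument for (iii) (successive interpolants differ by a tent of height $4^{-L}$) repackages the paper's node-wise induction via the midpoint identity $h(x)=t^2+\tfrac12\bigl(h(x-t)+h(x+t)\bigr)$ for $h(x)=x(1-x)$. The one misjudgment is your closing remark: (i) is a one-line consequence of the symmetry $\psi(1-u)=\psi(u)$, not the main obstacle, and it is not used in (ii)--(iv) or in the subsequent HyCNN construction.
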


Before we give the proof of the lemma, we state the main result. 

\begin{theorem}[Special case of Theorem \ref{thm:quadratic_approximation}]
    For any $L\in \NN$, there exists a HyCNN $h \colon \RR\to \RR$ with $L$ hidden layers and $2$ neurons per layer,  
    such that 
     \begin{align*}
       \int_{[0,1]^d}|h_\theta(x)- x^2| \, dx\leq  \sup_{x\in [0,1]}|h_\theta(x) - x^2| = 2^{-2L-3}.
     \end{align*}
\end{theorem}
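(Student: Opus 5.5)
The plan is to realise the function from Lemma~\ref{lemma_comp_multiple_bump}(iv) exactly, using a width-$2$ HyCNN. Set
\[
F_k(x) := x+\sum_{i=1}^k 4^{-i}\big(\psi^{\circ i}(x)-1\big),\qquad F_0(x)=x .
\]
By Lemma~\ref{lemma_comp_multiple_bump}(iii), $F_k$ is the piecewise-linear interpolant of $x^2$ on the grid $2^{-k}\mathbb{Z}\cap[0,1]$. The output will be $h(x)=F_L(x)-2^{-2L-3}$, and Lemma~\ref{lemma_comp_multiple_bump}(iv) then gives $\sup_{x\in[0,1]}|h(x)-x^2|=2^{-2L-3}$. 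The integral bound is immediate from the sup bound, since $[0,1]$ has unit length. The whole difficulty is therefore to produce $F_L$ as a non-negative affine combination of the last hidden layer, while only ever storing \emph{convex} quantities in the neurons. The sawtooth $\psi^{\circ k}$ itself is not convex, so it cannot be a neuron.

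The key identity is
\[
\psi^{\circ(k+1)}=\big|2\psi^{\circ k}-1\big|=\max\big(2\psi^{\circ k}-1,\;1-2\psi^{\circ k}\big).
\]
For any $c\ge 0$ it gives
\[
F_k+c\,\psi^{\circ(k+1)}=\max\big(F_k+2c\,\psi^{\circ k}-c,\;F_k-2c\,\psi^{\circ k}+c\big).
\]
I will keep the oscillation "cached" inside the convex function $F_k$. The two neurons of layer $k$ will be
\[
N_k^{\pm}=F_{k-1}+c_k^{\pm}\,\psi^{\circ k},\qquad 0\le c_k^-<c_k^+ .
\]
Both are convex provided the slope jumps of $c\,\psi^{\circ k}$ do not overpower those of $F_{k-1}$. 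For $k=1$ this is immediate: $N_1^\pm = x+c_1^\pm|2x-1|$, and each is the max of two affine maps with an affine skip term.

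For the induction step, note first that $N_k^\pm$ are linearly independent combinations of $F_{k-1}$ and $\psi^{\circ k}$. Using $F_k=F_{k-1}+4^{-k}(\psi^{\circ k}-1)$, any function of the form $F_k\pm 2c\,\psi^{\circ k}$ plus an affine term equals
\[
F_{k-1}+(4^{-k}\pm 2c)\,\psi^{\circ k}+\text{affine}.
\]
Such a function is a \emph{non-negative} combination $\lambda N_k^++\mu N_k^-$, with $\lambda+\mu=1$, exactly when its $\psi^{\circ k}$-coefficient lies in $[c_k^-,c_k^+]$. So the two max-arguments needed for $N_{k+1}^{\pm}$ are admissible, i.e.\ $V_k^1,V_k^2\ge 0$, as soon as
\[
c_k^-\le 4^{-k}-2c_{k+1}^\pm \quad\text{and}\quad 4^{-k}+2c_{k+1}^\pm\le c_k^+ .
\]
The same observation applied to $F_L=F_{L-1}+4^{-L}(\psi^{\circ L}-1)$ handles the output layer: it requires $c_L^-\le 4^{-L}\le c_L^+$.

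I expect the main obstacle to be choosing the constants $c_k^\pm$ so that the interval constraints and convexity hold simultaneously. The convexity requirement for $F_{k-1}+c\,\psi^{\circ k}$ can be checked kink by kink. At the new midpoints of the grid $2^{-k}$, the only kinks are those of $c\,\psi^{\circ k}$, which have jump $c\cdot 2^{k+1}>0$ (these are valleys of $\psi^{\circ k}$). At the points of the coarse grid $2^{-(k-1)}$, the jump of $F_{k-1}$ is $2^{2-k}$, while $\psi^{\circ k}$ has a peak there with jump $-c\,2^{k+1}$. So convexity is equivalent to $c\le 2\cdot 4^{-k}$.

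I would therefore try the scale-invariant ansatz $c_k^-=\alpha 4^{-k}$ and $c_k^+=\beta 4^{-k}$. The recursion then reads
\[
\alpha\le 1-\beta/2 \quad\text{and}\quad 1+\beta/2\le\beta ,
\]
which is satisfied by $\beta=2$ and $\alpha=0$. These values also meet the convexity bound $c\le 2\cdot4^{-k}$ with equality, and they satisfy the output condition $0\le 1\le 2$. If the boundary case $\beta=2$ causes trouble at the endpoints, I would instead check whether a slightly smaller $\beta$ works once the affine skip terms absorb the constants.
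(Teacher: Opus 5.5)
Your proof is correct and is essentially the paper's construction. With $\alpha=0,\beta=2$, your neurons $N_k^+=F_{k-1}+2\cdot 4^{-k}\psi^{\circ k}$ and $N_k^-=F_{k-1}$ agree, up to the affine shift $x$ plus constants, with the paper's pair $\bigl(\sum_{i<k}4^{-i}\psi^{\circ i}+2\cdot 4^{-k}\psi^{\circ k},\,\sum_{i<k}4^{-i}\psi^{\circ i}\bigr)$, and your max identity is the paper's absolute-value reformulation \eqref{eq.layer_reform} written out: $N_{k+1}^+=\max\bigl(N_k^+-\tfrac32 4^{-k},\,N_k^--\tfrac12 4^{-k}\bigr)$ and $N_{k+1}^-=\tfrac12(N_k^++N_k^-)-4^{-k}$. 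The boundary value $\beta=2$ causes no trouble and is in fact forced by $1+\beta/2\le\beta$ together with $\alpha\ge 0$, so your fallback of a smaller $\beta$ is neither needed nor viable.
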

\begin{proof}
For real numbers $u,v,$ we have $2\max(u,v)=u+v+|u-v|.$ Therefore, a layer of the form
\[\bz\mapsto  %
    \max\big(V^{1} \bz + W^{1} \bx + \bb^{1}, V^{2} \bz + W^{2} \bx + \bb^{2}\big)\]
with $V^1, V^2$ entrywise non-negative, is equivalent to the existence of matrices $A, B, D, E$ and vectors $\ba, \bb$ such that the absolute value of every entry in $D$ is upper bounded by the corresponding entry of $A$ (implying that $A-D$ and $A+D$ are entrywise non-negative) and 
\begin{align}
    \bz\mapsto 
A\bz +B\bx +\ba + \big|D\bz+E\bx+\bb\big|,
\label{eq.layer_reform}
\end{align}
where $|\cdot|$ is the componentwise absolute value. Let $\psi:[0,1]\to [0,1]$ be the function $\psi(u)=|2u-1|$ and define $\psi^{\circ k}=\psi \circ \ldots \circ \psi$ as the $k$-fold composition of $\psi.$ For any $m=1,2,\ldots$ we now show that there exists a network with $L$ hidden layers, one input neuron, one output neuron, and two neurons in each of the hidden layers, 
such that the $m$-th hidden layer computes 
\[
\left(
\begin{array}{c}
2^{-2m}\psi^{\circ m}(x)+\sum_{k=1}^m 2^{-2k}\psi^{\circ k}(x)    \\
\sum_{k=1}^{m-1} 2^{-2k}\psi^{\circ k}(x)
\end{array}\right).
\]
Then, we can average the two activations in the output layer and add the term $x-\sum_{k=1}^L 2^{-2k}  - 2^{-2L-3}$ to get a network computing 
$$x+\sum_{k=1}^L 2^{-2k}\big(\psi^{\circ k}(x)-1\big) - 2^{-2L-3},$$
and get the assertion by applying Lemma~\ref{lemma_comp_multiple_bump} (iv).

The case $m=1$ follows directly from the form \eqref{eq.layer_reform} with $A=B=\ba=D=0,$ $E=1/2,$ $\bb=-1/4.$ For the induction step from $m$ to $m+1$, take $B=E=\ba=(0,0)^\top,$ 
\[
A= \left(
\begin{array}{ccc}
  1/2   & 1/2  \\
 1/2   & 1/2 
\end{array}\right), \quad  
D= 
\left(
\begin{array}{cc}
1/2   & -1/2  \\
 0  &  0
\end{array}\right)
, 
\quad  \bb = \left(\begin{array}{c}
-2^{-2m-1} \\
     0 
\end{array}\right).
\]
The choice satisfies the constraint that the entries in $D$ are in absolute value dominated by the corresponding entries in $A.$ Observe that
$$
2^{-2m-1}\psi^{\circ (m+1)}(x)
= 2^{-2m-1} |2\psi^{\circ m}(x) -1|
= |2^{-2m} \psi^{\circ m}(x) -2^{-2m-1} |. 
$$
If $\bz_m$ where the activations of the previous layer and using the induction hypothesis, $D\bz_m=2^{-2m}\psi^{\circ m}$ and $$A\bz_m+|D\bz_m+\bb|= \sum_{k=1}^m 2^{-2k}\psi^{\circ k}(x)\left(
\begin{array}{c}
     1  \\
     1 
\end{array}
\right)+  2^{-2m-2}\psi^{\circ (m+1)}(x)
\left(
\begin{array}{c}
     2  \\
     0 
\end{array}
\right)$$ proving the induction hypothesis.

\end{proof}

\begin{proof}[Proof of Lemma~\ref{lemma_comp_multiple_bump}] \textit{(i):}
We prove $\psi^{\circ k}+(1-\psi)^{\circ k}=1$ using induction on $k.$ The formula holds for $k=1.$ To go from $k$ to $k+1,$ observe that by $\psi^{\circ (k+1)}=\psi^{\circ k}\circ \psi$, $(1-\psi)^{\circ (k+1)}=(1-\psi)^{\circ k}\circ (1-\psi)$, the induction hypothesis, and the fact that $\psi$ is symmetric around $1/2$ in the sense that $\psi(u)=\psi(1-u),$ thus, $\psi\circ \psi=\psi\circ (1-\psi),$ we obtain $\psi^{\circ (k+1)}+(1-\psi)^{\circ (k+1)}= \psi^{\circ k}\circ \psi + 1-  \psi^{\circ k}\circ (1-\psi)=1,$ proving the induction step. 

\textit{(ii):} The formula holds trivially for $k=1.$ 

We now prove it for $k=2:$ Note that $y=\{2y\}/2$ if $0\leq y< 1/2$ and $y=1/2+\{2y\}/2$ if $1/2\leq y< 1.$ Therefore, $\psi(y)= |2 y - 1 | = |\{2y\}-1| = 1 - \{2y\} $ if $0\leq y<1/2$ and $\psi(y)= |2 y - 1 | = \{2y\}$ if $1/2\leq y <1.$
Since $\psi(u)=\psi(1-u)$ for all real numbers $u,$ we find $\psi\circ \psi(y)=\psi(\{2y\})$ for all $0\leq y<1.$ For $y=1$, the formula holds since $\psi(0)=\psi(1)=1.$ 

To prove $\psi^{\circ k}(y)=\psi(\{2^{k-1}y\})$ for general $k$, it is enough to show the induction step $k\to k+1.$ Using the induction hypothesis and the case $k=2$ proved above, $\psi^{\circ (k+1)}(y)=\psi(\psi^{\circ k}(y))=\psi\circ \psi(\{2^{k-1}y\})=\psi(\{2\{2^{k-1}y\}\})=\psi(\{2^ky\}).$

\textit{(iii):} By (ii), $\psi^{\circ k}(y)=\psi(\{2^{k-1}y\}).$ Since $\psi(x)=|2x-1|$ is affine on $[0,1/2]$ and $[1/2,1],$ it follows that  $\psi^{\circ k}$ is piecewise-affine on the intervals $[\ell/2^k,(\ell+1)/2^k]$ with endpoints $\psi^{\circ k}(\ell/2^k) = \psi(1/2) =0$ if $\ell$ is odd and $\psi^{\circ k}(\ell/2^k) = \psi(0)= 1$ if $\ell$ is even. 

For convenience, write $g(x) = x(1-x).$ In the next step, we show that for any $m \geq 1$ and any $\ell =0,1, \ldots, 2^L,$
\begin{align*}
	g(\ell 2^{-L}) = \sum_{k=1}^L 4^{-k} \big(1-\psi^{\circ k}(\ell 2^{-L})\big).
\end{align*}
We prove this by induction over $L.$ For $L=1$ the result can be checked directly. For the inductive step, suppose that the claim holds for $L.$ If $\ell$ is even we use that $\psi^{\circ (L+1)}(\ell 2^{-L-1})=1$ to obtain that $g(\ell 2^{-L-1}) = \sum_{k=1}^L 4^{-k}(1-\psi^{\circ k}(\ell 2^{-L-1})) = \sum_{k=1}^{L+1} 4^{-k}(1-\psi^{\circ k}(\ell 2^{-L-1})).$ It thus remains to consider $\ell$ odd. Recall that $x\mapsto \sum_{k=1}^L 4^{-k} (1-\psi^{\circ k}(x))$ is affine on $[(\ell-1)2^{-L-1}, (\ell+1)2^{-L-1}]$ and observe that for any real $t,$
\begin{align*}
	g(x) = t^2 + \frac{g(x-t)+g(x+t)}{2}.
\end{align*}
For odd $\ell,$ choose $x =\ell 2^{-L-1}$ and $t=2^{-L-1}.$ Applying the identity and $\psi^{\circ (L+1)}(\ell 2^{-L-1}) =0$ yield
\begin{align*}
	g(\ell 2^{-L-1})  
&= 2^{-2L-2} + \frac{g\big((\ell-1)2^{-L-1}\big) + g\big((\ell+1)2^{-L-1}\big)}{2}  \\
&= 4^{-L-1}+\frac{1}{2}\bigg(
\sum_{k=1}^L 4^{-k}\Big(1-\psi^{\circ k}\big((\ell-1) 2^{-L-1}\big)\Big)
+
\sum_{k=1}^L 4^{-k}\Big(1-\psi^{\circ k}\big((\ell+1) 2^{-L-1}\big)\Big)
\bigg) \\
&= 4^{-L-1}+\sum_{k=1}^L 4^{-k}\big(1-\psi^{\circ k}(\ell 2^{-L-1})\big) \\
    &= \sum_{k=1}^{L+1} 4^{-k} \big(1-\psi^{\circ k}(\ell 2^{-L-1})\big).
\end{align*}
This completes the inductive step and proves assertion \textit{(iii)}.

\textit{(iv):} For a quadratic function $h$ with $h(a)=h(b)=0$ and $K=h''(a)=h''(b),$ one has that $h(x)=\tfrac K2 (x-a)(x-b).$ Since $g''(x)=-2,$ for any $\ell =0,\ldots, 2^L-1,$ and any $x\in [\ell 2^{-L}, (\ell+1) 2^{-L}],$ 
\begin{align}
	g(x) - \sum_{k=1}^L 4^{-k} \big(1-\psi^{\circ k}(x)\big) 
	&= -\big(x - (\ell+1)2^{-L}\big) \big(x-\ell2^{-L}\big) = \big((\ell+1)2^{-L} -x\big) \big(x-\ell2^{-L}\big).
\end{align}
For any $x\in [\ell 2^{-L}, (\ell+1) 2^{-L}],$
\begin{align*}
   0 \leq \Big(\frac{\ell+1}{2^{L}} -x\Big) \Big(x-\frac{\ell}{2^{L}}\Big) \leq
   \Big(\frac{\ell+1}{2^{L}} -\frac{\ell+1/2}{2^{L}}\Big) \Big(\frac{\ell+1/2}{2^{L}}-\frac{\ell}{2^{L}}\Big)= 2^{-2L-2}.
\end{align*}
The second inequality becomes an equality for the interval midpoint $x=(\ell+1/2)2^{-L}.$ This shows that 
\begin{align*}
    \sup_{x\in [0,1]} \, \Big|g(x) - \sum_{k=1}^L 4^{-k} \big(1-\psi^{\circ k}(x)\big) -2^{-2L-3}\Big|=2^{-2L-3}.
\end{align*}
The assertion follows using that $g(x)=x(1-x)=x-x^2.$

\end{proof}

\subsection{Proofs for monomial functions}
\label{appendix_monomial}

The following two lemmas state how HyCNNs can approximate the functional operators $\calD_0$ and $\calD_1$, defined by
$$\calD_0 [f](x) := f^2(x), \qquad \calD_1 [f](x) := f^2(x)/x.$$ 
These operators are crucially used in the proof of Theorem~\ref{thm_monomial}.
Approximating $\calD_0$ is straightforward from Theorem~\ref{thm:quadratic_approximation}, whereas approximating $\calD_1$ is a non-trivial problem. 

\begin{lemma}\label{lem_approx_operator_0}
Consider a HyCNN $g \colon \RR\to \RR$ with $L'$ hidden layers and $m'$ neurons per hidden layer.
Assume that $0 \leq g(x) \leq x$ for every $x \in [0,1]$.
Then, for any $L \in \NN$ and $m \in 2 \NN$, 
\begin{enumerate}
    \item [(i)] there exists a HyCNN $\tilde g_0 \colon \RR\to \RR$ with $L' + L$ hidden layers and $\max(m', m+1)$ neurons per hidden layer such that $0 \leq \tilde g_0(x) \leq x$ for every $x \in [0,1]$ and
$$\sup_{x \in [0,1]} \big|\tilde g_0(x) - \calD_0 [g](x) \big| \leq \frac{1}{4 m^{2L}}.$$
    \item[(ii)] there exists a HyCNN $\tilde g_1 \colon \RR\to \RR$ with $L' + L$ hidden layers and $\max(m', m+1)$ neurons per hidden layer such that $0 \leq \tilde g_1(x) \leq x$ for every $x \in [0,1]$ and 
$$\sup_{x \in [0,1]} \big|\tilde g_1(x) - \calD_1 [g](x) \big| \leq \frac{1}{4 m^{2L}}.$$
\end{enumerate}

\end{lemma}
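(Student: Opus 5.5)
The plan is to compose $g$ with the network from the proof of Theorem~\ref{thm:quadratic_approximation}, using Lemma~\ref{lemma_hycnn_comp}. Let $D:=m^L$; taking $d_1=\dots=d_L=m$ is allowed since $m$ is even. I will not use the shifted output \eqref{tmp_16}. Instead I use the unshifted piecewise-affine interpolant
\begin{align*}
u(y):=\frac{y}{D}+\frac{2}{D}\sum_{k=1}^{D-1}\Big(y-\frac{k}{D}\Big)_+ ,
\end{align*}
which the same network computes if the output bias is set to $0$. By Lemma~\ref{lemma_sq_approx_pc}, $u$ interpolates $y^2$ at the grid points $k/D$ and satisfies $0\le u(y)-y^2\le \frac{1}{4D^2}$ on $[0,1]$. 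Since $u$ is the chordal interpolant of the convex function $y^2$ on $[0,1]$, it also lies below the chord $y\mapsto y$ through $(0,0)$ and $(1,1)$. Hence
\begin{align*}
y^2\le u(y)\le y \quad \text{for } y\in[0,1].
\end{align*}

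\emph{Part (i).} I set $\tilde g_0(x):=u(g(x))$. By Remark~\ref{rem.74f}, every weight acting on the input $y$ in the construction is non-negative. In the first layer the neurons are $\max(y-j/D_1,0)$; in the later layers the $y$-coefficient in \eqref{tmp_13} is $(a_\ell-b)/2\ge 0$, and the remaining lane has coefficient $0$; in the output layer the $y$-coefficient is $1/D$. I will double-check this for both lanes. It means the matrices $M_\ell^k$ of Lemma~\ref{lemma_hycnn_comp} are non-negative, so $x\mapsto u(g(x))$ is a HyCNN with $L'+L$ layers. The widths are $m'$ in the first $L'$ layers and at most $(m-1)+1\le m+1$ afterwards; padding with zero neurons gives width $\max(m',m+1)$. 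Since $g(x)\in[0,1]$, we get $|\tilde g_0(x)-g(x)^2|\le \frac{1}{4m^{2L}}$. Moreover $0\le g^2\le \tilde g_0\le g\le x$.

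\emph{Part (ii).} The function $(y,x)\mapsto y^2/x$ is the perspective of $y^2$, so I approximate it by the perspective of $u$:
\begin{align*}
P(y,x):=x\,u(y/x)=\frac{y}{D}+\frac{2}{D}\sum_{k=1}^{D-1}\Big(y-\frac{k}{D}x\Big)_+ ,\qquad x>0.
\end{align*}
The key observation is that $\max$ is positively homogeneous. Therefore, if every bias $b$ in the HyCNN computing $u$ is replaced by the skip term $b\,x$ (an unconstrained weight on the extra input $x$), then by induction over the layers each hidden neuron becomes $x$ times its original value evaluated at $y/x$. This gives a HyCNN in the inputs $(y,x)$ with the same architecture, and its $y$-weights are still non-negative. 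Applying Lemma~\ref{lemma_hycnn_comp} with $h=P$ yields $\tilde g_1(x):=P(g(x),x)$, with the same depth and width as in part (i).

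For $x\in(0,1]$, set $t=g(x)/x\in[0,1]$. Then
\begin{align*}
|\tilde g_1(x)-g(x)^2/x| = x\,|u(t)-t^2|\le \frac{x}{4D^2}\le \frac{1}{4m^{2L}},
\end{align*}
and $0\le x t^2\le \tilde g_1(x)\le x t=g(x)\le x$. At $x=0$ we have $g(0)=0$, so $\tilde g_1(0)=0$, consistent with the continuous extension of $\calD_1[g]$.

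The main obstacle is the bookkeeping in part (ii). I must verify that homogenizing the biases of the construction in Theorem~\ref{thm:quadratic_approximation} really produces a valid HyCNN with non-negative $y$-weights in both lanes. In particular, the neuron $z_{\ell+1,0}=\max(u_\ell,0)$ and the offsets $-(a_\ell-b)j/(2D_{\ell+1})$ in \eqref{tmp_13} must become $x$-terms correctly. I must also check that the identity $P(y,x)=x\,u(y/x)$ holds on the relevant range $0\le y\le x$.
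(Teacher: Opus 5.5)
Your proposal is correct and is essentially the paper's proof: part (i) combines Lemma~\ref{lemma_positive_qudratic} (the unshifted interpolant with non-negative skip weights and $y^2\le u(y)\le y$) with Lemma~\ref{lemma_hycnn_comp}, and part (ii) is the bias-homogenization of Lemma~\ref{lem:uy_y_yhu}, which gives $x\,h(g(x)/x)$, followed by the same composition. You handle $x=0$ more explicitly than the paper does, and your only slip is cosmetic: layers $2,\dots,L$ of the quadratic network have $m$ neurons rather than $m-1$, so the composed width is exactly $m+1$, which is still within the stated budget.
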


To prove this lemma, we need the following two key results.
First, we modify our network to approximate the quadratic function, in order to ensure that the \textit{skip-connection weight matrices}, i.e., the matrices
$\{W_{\ell}^1, W_{\ell}^2\}_{\ell =0}^{L-1}$ and $W_{L}$ in \eqref{eq:HyCNN}, are element-wise non-negative. This is relevant for proving \Cref{lem_approx_operator_0}$(i)$, concretely in order to ensure composability of HyCNNs according to \Cref{lemma_comp_convexnondecre}. 

\begin{lemma}\label{lemma_positive_qudratic}
    For any $L \in \NN$ and $m \in 2 \NN$, there exists a HyCNN $f : \RR \to \RR$ with $L$ hidden layers and $m$ neurons per hidden layer such that
$$\sup_{x \in [0,1]} \big|f(x) - x^2 \big| \leq \frac{1}{4 m^{2L}}$$
and $0 \leq f(x) \leq x$ for every $x \in [0,1]$.
Moreover, all the skip-connection weight matrices of $f$ are element-wise non-negative. 
\end{lemma}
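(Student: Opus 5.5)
The plan is to reuse the network constructed in the proof of Theorem~\ref{thm:quadratic_approximation} with $d_1=\dots=d_L=m$, so that $D_L=m^L$. I will change only the output layer: drop the constant shift $-\frac{1}{8D_L^2}$. The candidate is therefore
\[
f(x) := \frac{x}{D_L} + u_L(x) = \frac{x}{D_L} + \frac{2}{D_L}\sum_{k=1}^{D_L-1}\Big(x-\frac{k}{D_L}\Big)_+ .
\]
By Lemma~\ref{lemma_u_affine} and the induction in that proof, $u_L$ is a non-negative affine combination of the last hidden layer. Hence $f$ is a HyCNN with $L$ hidden layers and $m$ neurons per layer; recall that $m$ is even, as Lemma~\ref{lemma_u_affine} requires.

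The first step is the error bound. On each interval $[\frac{j-1}{D_L},\frac{j}{D_L}]$, the computation in the proof of Lemma~\ref{lemma_sq_approx_pc} shows that $f(x)-x^2=\big(x-\frac{j-1/2}{D_L}\big)^2\cdot(-1)+\frac{1}{4D_L^2}$. Equivalently, $f(x)-x^2=\big(\frac{j}{D_L}-x\big)\big(x-\frac{j-1}{D_L}\big)\in[0,\frac{1}{4D_L^2}]$. So $f$ is the piecewise-linear interpolant of $x^2$ on the grid $\{k/D_L\}$, and
\[
\sup_{x\in[0,1]}|f(x)-x^2|\le \frac{1}{4D_L^2}=\frac{1}{4m^{2L}}.
\]

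The second step is the sandwich $0\le f(x)\le x$. The lower bound is immediate from $f\ge x^2\ge 0$. For the upper bound, $f$ is convex with $f(0)=0$ and $f(1)=1$, since it interpolates $x^2$ at the grid points $0$ and $1$. Convexity then puts $f$ below its chord on $[0,1]$, which is the identity, so $f(x)\le x$.

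The third step, which I expect to be the only point needing real care, is checking that every skip-connection weight is non-negative. I go through the layers in order.
\begin{itemize}
\item First layer: the neurons \eqref{tmp_15} are $\max(x-j/m,\,0)$. In both arguments of the max the coefficient of $x$ is $1$ or $0$, and the negative constants sit only in the biases, which are unconstrained.
\item Neuron $z_{\ell+1,0}$: it equals $\max(u_\ell,0)$ in \eqref{tmp_14}, so both of its skip weights are $0$.
\item Neurons $z_{\ell+1,j}$ for $j\ge1$, given by \eqref{tmp_13}: the first argument $\phi_\ell$ has skip weight $0$. The second argument has skip weight $(a_\ell-b)/2$, which is non-negative because $a_\ell\ge b$, as shown in the proof of the theorem.
\item Output layer: the skip weight is $1/D_L>0$.
\end{itemize}
This is exactly the observation of Remark~\ref{rem.74f}, extended to the second lane. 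The hidden-to-hidden weights are unchanged from the original construction, so they remain non-negative.

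If any lane turns out to carry a negative coefficient on $x$, the fallback is to absorb it. Since all hidden units are convex in $x$, a term $-cx$ would be added to both arguments of the max and compensated by adding $cx$ at the next layer, where the unit $z_{\ell+1,0}$ or the output layer can absorb it. I expect, however, that the direct check above already suffices.
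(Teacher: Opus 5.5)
Your proposal is correct and essentially matches the paper's proof. The paper also takes the network of Theorem~\ref{thm:quadratic_approximation} with $d_1=\dots=d_L=m$, removes the constant shift $-1/(8m^{2L})$ from the output, bounds the error via Lemma~\ref{lemma_sq_approx_pc}, and gets $0\le f(x)\le x$ from monotonicity and convexity together with $f(0)=0$, $f(1)=1$. Your explicit lane-by-lane check of the skip weights, in particular $(a_\ell-b)/2\ge 0$ in the second lane of \eqref{tmp_13}, is more complete than the paper's appeal to Remark~\ref{rem.74f}, which mentions only the first lane. The fallback paragraph is therefore unnecessary, and as phrased it would move a negative skip weight into the next layer rather than remove it.
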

\begin{proof}
    We define $M \coloneqq m^{L}$.
    By adding $(8M^2)^{-1}$ to \eqref{tmp_16} in the proof of Theorem~\ref{thm:quadratic_approximation}, we can construct a HyCNN $f: \RR \to \RR$ with $L$ hidden layers and $m$ neurons per hidden layer such that
    \[
    f(x) = \frac{x}{M} + \frac{2}{M}\sum_{k=1}^{M-1} \left(x - \frac{k}{M}\right)_+,
    \]
    and all the skip-connection weight matrices of $f$ are element-wise non-negative (See \Cref{rem.74f}). 
    By Lemma~\ref{lemma_sq_approx_pc}, we obtain
    $$\sup_{x \in [0,1]} |f(x)-x^2| \leq (8M^2)^{-1} + \sup_{x \in [0,1]} \left|-(8M^2)^{-1}+h(x)-x^2\right| \leq (4M^2)^{-1} = \frac{1}{4m^{2L}}.$$
    Noticing that $f(0)=0$ and 
    $$f(1) = \frac{1}{M} + \frac{2(M-1)}{M} - \frac{M(M-1)}{M^2} = 1.$$ 
    Since $f$ is non-decreasing and convex, we obtain the remained assertion by
    \[
    0 \leq f(x) \leq (1-x)f(0) + x f(1) = x.
    \]
\end{proof}

The next lemma serves as a crucial tool to prove Lemma \ref{lem_approx_operator_0}(ii). 
\begin{lemma}\label{lem:uy_y_yhu}
Consider a HyCNN $h \colon \RR^d \to \RR$ with $L$ hidden layers and $m$ neurons per hidden layer,  specified for $\ell \in \{0, \dots, L-1\}$ by
\begin{align*}
    h(\bx) = \bv_L^{\top} \bz_L(\bx) + \bw_L^{\top} \bx + b_L, \quad
\bz_{\ell+1}(\bx) = \max_{k\in \{1,2\}}\left(V_\ell^k \bz_{\ell}(\bx) + W_{\ell}^k \bx  + \bb_{\ell}^k\right), \quad
\bz_{0}(\bx)=0.
\end{align*}
For $\bx_1 \in \RR^d$ and $x_2 \in \RR$,
define $\tilde h (\bx_1, x_2)$ by 
\begin{equation}\label{tmp_17}
\begin{aligned}
\tilde h(\bx_1, x_2) &\coloneqq \bv_L^{\top} \tilde \bz_L(\bx_1, x_2) + \bw_L^{\top} \bx_1 + b_L x_2,\\
\tilde \bz_{\ell+1}(\bx_1, x_2) &\coloneqq 
\max_{k\in \{1,2\}}\left(V_\ell^k \tilde \bz_{\ell}(\bx_1, x_2) + W_{\ell}^k \bx_1 + \bb_{\ell}^k x_2 \right).
\end{aligned}
\end{equation}    
with $\bz_{0}(\bx_1, x_2)=\mathbf{0}_{d+1}$.
Then, it follows that $\tilde h(\bu y, y) = y h (\bu)$ for any $\bu \in \RR^d$ and $y \geq 0$.
\end{lemma}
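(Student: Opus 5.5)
The plan is an induction over the layers, built on positive homogeneity. The key fact is that the maximum is positively $1$-homogeneous: for $y \geq 0$ and real numbers $a_1,a_2$ we have $\max(y a_1, y a_2) = y\max(a_1,a_2)$. The perspective network $\tilde h$ in \eqref{tmp_17} is exactly the original network with every bias replaced by a bias multiplied by the extra input $x_2$. On the ray $(\bx_1,x_2)=(\bu y, y)$, every pre-activation should therefore be $y$ times the corresponding pre-activation of $h$ at $\bu$.

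Concretely, I would prove by induction on $\ell\in\{0,\dots,L\}$ that
\[
\tilde \bz_\ell(\bu y, y) = y\, \bz_\ell(\bu) \qquad \text{for all } \bu\in\RR^d,\ y\geq 0 .
\]
The base case $\ell=0$ is trivial, since both sides are zero vectors. For the induction step, I use the hypothesis and linearity to obtain, for each $k\in\{1,2\}$,
\[
V_\ell^k \tilde \bz_\ell(\bu y,y) + W_\ell^k (\bu y) + \bb_\ell^k y = y\bigl(V_\ell^k \bz_\ell(\bu) + W_\ell^k \bu + \bb_\ell^k\bigr).
\]
Taking the componentwise maximum over $k$ and pulling out the nonnegative scalar $y$ gives $\tilde\bz_{\ell+1}(\bu y,y) = y\,\bz_{\ell+1}(\bu)$. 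The sign of the hidden-to-hidden weights plays no role here; the only requirement is $y\ge 0$, which is needed to commute $y$ with the max.

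For the output layer, I apply the same linearity computation:
\[
\tilde h(\bu y,y) = \bv_L^\top y\,\bz_L(\bu) + \bw_L^\top \bu y + b_L y = y\,h(\bu).
\]
There is no real obstacle. The only points needing care are two bookkeeping details: the dimension of $\tilde\bz_0$ should match that of $\bz_0$ (the stated $\mathbf{0}_{d+1}$ is just the zero initialization of the hidden state), and the case $y=0$ is covered, since both sides then equal zero by the same argument.
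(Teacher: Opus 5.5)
Your proof is correct and follows the paper's argument essentially verbatim: an induction over the layers establishing $\tilde\bz_\ell(\bu y, y) = y\,\bz_\ell(\bu)$ via positive homogeneity of the maximum for $y \geq 0$, followed by linearity of the output layer. Your remark that the initialization should be read as $\tilde\bz_0 = \mathbf{0}$ in the hidden-layer dimension, rather than $\mathbf{0}_{d+1}$, also correctly fixes a small typo in the statement.
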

\begin{proof}
The claim follows from induction on the depth $\ell$. For $\ell = 0$, we have 
\begin{align*}
   \tilde  \bz_1(\bu y, y) &= \max_{k\in \{1,2\}}\left(W_{0}^k \bu y  + \bb_{0}^k y \right) \\
    &= y \max_{k\in \{1,2\}}  \left(W_{0}^k \bu  + \bb_{0}^k \right)  \\
    &= y \bz_1(\bu).
\end{align*}
Assuming that $\tilde \bz_\ell(\bu y, y) = y \bz_\ell(\bu)$ for some $\ell\in \{0, \dots, L-1\}$, we get 
\begin{align*}
    \tilde  \bz_{\ell+1}(\bu y, y) &= \max_{k\in \{1,2\}}\left(V_\ell^k \tilde \bz_{\ell}(\bu y, y) +W_{\ell}^k \bu y  + \bb_{\ell}^k y \right)  \\
    &= \max_{k\in \{1,2\}}  \left(y V_\ell^k \bz_{\ell}(\bu) + W_{\ell}^k \bu y  + \bb_{\ell}^k y\right)  \\
    &= y \max_{k\in \{1,2\}}  \left(V_\ell^k \bz_{\ell}(\bu) + W_{\ell}^k \bu  + \bb_{\ell}^k \right)  \\
    &= y \bz_{\ell+1}(x),
\end{align*}
which proves the induction step. Finally,
\begin{align*}
    \tilde h(\bu y, y) &= \bv_L^{\top} \tilde \bz_L(uy, y)  + \bw_L^{\top} \bu y + b_L y \\
    &= y V_L \bz_L(\bu) + \bw_L^{\top} \bu y + b_L y \\
    &= y h(\bu).\qedhere
\end{align*}
\end{proof}

\begin{proof}[Proof of Lemma~\ref{lem_approx_operator_0}]
{(i).}
    By Lemma~\ref{lemma_positive_qudratic}, we can construct a HyCNN $h: \RR \to \RR$ with $L$ hidden layers and $m$ neurons per hidden layer such that
    $$\sup_{x \in [0,1]} |h(x)-x^2| \leq \frac{1}{4 m^{2L}},$$
    $0 \leq h(x) \leq x$ for every $x \in [0,1]$, and 
    all the skip-connection weight matrices of $h$ are element-wise non-negative.
    By Lemma~\ref{lemma_hycnn_comp},
there exists a HyCNN $\tilde g_0: \RR \to \RR$ with $L' + L$ hidden layers and $\max(m', m+1)$ neurons per hidden layer such that $\tilde g_0(x) = h \circ g(x)$.
We obtain the upper bound of the approximation error by
\begin{align*}
    \sup_{x \in [0,1]} \big|\tilde g_0(x) - g^2(x) \big| 
    =
    \sup_{x \in [0,1]} \big| h \circ g(x) - g^2(x) \big|    
    &\leq 
    \sup_{z \in [0,1]} \big| h(z) -  z^2 \big|\\    
    &\leq 
    \frac{1}{4m^{2L}},
\end{align*}
provided that $0 \leq g(x) \leq 1$.
Finally, from 
$0 \leq h \circ g(x) \leq x$ for every $x \in [0,1]$, we get
$0 \leq \tilde g_0(x) \leq x$.

\medskip 

{(ii).}
    By Lemma~\ref{lemma_positive_qudratic}, we can construct a HyCNN $h: \RR \to \RR$ with $L$ hidden layers and $m$ neurons per hidden layer such that
    $$\sup_{x \in [0,1]} |h(x)-x^2| \leq \frac{1}{4 m^{2L}},$$
    $0 \leq h(x) \leq x$ for every $x \in [0,1]$, and 
    all the skip-connection weight matrices of $h$ are element-wise non-negative.
 
By applying Lemma~\ref{lem:uy_y_yhu} to the HyCNN $h$, 
we define a function $\tilde h \colon \RR^2 \to \RR$ by
\begin{align*}
\tilde h(x_1, x_2) &\coloneqq \bv_L^{\top} \tilde \bz_L(x_1, x_2) + w_L x_1 + b_L x_2,\\
\tilde \bz_{\ell+1}(x_1, x_2) &\coloneqq 
\max_{k\in \{1,2\}}\left(V_\ell^k \tilde \bz_{\ell}(x_1, x_2) + \bw_{\ell}^k x_1 + \bb_{\ell}^k x_2 \right)
\end{align*}
with $\tilde z_{0}(x_1, x_2)=0$, where the 
parameters
$\bv_{L}, w_{L}, \bb_{L}$, $\{V_\ell^1, \bw_{\ell}^1, \bb_{\ell}^1\}_{\ell=0}^{L-1}$ and $\{ V_\ell^2, \bw_{\ell}^2, \bb_{\ell}^2\}_{\ell=0}^{L-1}$ coincide with those of the HyCNN $h$.
Then, $\tilde h$ 
satisfies $\tilde h(uy, y) = y h (u)$ for any $u \in \RR$ and $y \geq 0$.     
By plugging in $u=g(x)/x$ and $y=x$, we get
$$\tilde h\big(g(x), x\big) = x h\big(g(x)/x\big), \qquad \forall x \in [0,1].
$$
All the $\bv_{L}, w_{L}$ and $\{V_\ell^1, \bw_{\ell}^1, V_\ell^2, \bw_{\ell}^2 \}_{\ell=0}^{L-1}$ of $h$ (and of $\tilde h$) are element-wise non-negative.
Therefore, we can apply
Lemma~\ref{lemma_hycnn_comp}
to obtain a HyCNN $\tilde g: \RR \to \RR$ with $L' + L$ hidden layers and $\max(m', m+1)$ neurons per hidden layer such that $\tilde g(x) = \tilde h(g(x), x).$
We obtain the upper bound of the approximation error by
\begin{align*}
    \sup_{x \in [0,1]} \big|\tilde g(x) - g^2(x)/x \big| 
    =
    \sup_{x \in [0,1]} \big|x h\big(g(x)/x\big) - x (g(x)/x)^2 \big|    
    &\leq 
    \sup_{x \in [0,1]} \big| h\big(g(x)/x\big) -  (g(x)/x)^2 \big|\\    
    &\leq 
    \frac{1}{4m^{2L}},
\end{align*}
provided that $0 \leq g(x)/x \leq 1$.
Finally, from 
$0 \leq h(g(x)/x) \leq 1$ for every $x \in [0,1]$, we get
$0 \leq \tilde g(x) \leq x$.
\end{proof}

\begin{proof}[Proof of Theorem~\ref{thm_monomial}]
    For functions $f\colon \RR\to \RR$, define the operators
    $$\calD_0 [f](x) := f^2(x), \qquad \calD_1 [f](x) := f^2(x)/x.$$ 
    Further, let $\phi_m(x) = x^m$ for $m\in \NN$. 
    We define $k \coloneqq \lceil \log_2(n)\rceil$ and 
    represent $2^k - n$ through its binary digits.
    Since $0 \leq 2^k - n \leq 2^{k-1}-1$, there exist $b_1, \ldots, b_{k-1}$ with $b_i \in \{0,1\}$ for $i=1,\ldots,k-1$ such that
    $$2^k - n = \sum_{i=1}^{k-1} b_i 2^{i-1}.$$  
    Set $b_k \coloneqq 0$. 
    Since 
    $\calD_{b}[\phi_m](x) = \phi_{2m-b}(x)$ for any $b \in \{0,1\}$ and $m \in \NN$, 
    we get
 \begin{align}
   \calD_{b_1} \circ \calD_{b_2} \circ \ldots \circ \calD_{b_{k-2}} \circ \calD_{b_{k-1}} \circ \calD_{b_k}[\phi_1](x) &=
   \calD_{b_1} \circ \calD_{b_2} \circ \ldots \circ \calD_{b_{k-2}} \circ \calD_{b_{k-1}}[\phi_2](x) \notag \\
   &= \phi_n(x)
   \label{tmp_18}
\end{align}
since
\begin{align*}
    \overbrace{2 \Big( \ldots 2\big(2(2}^{k-1 \text{ times}} \cdot \, 2 - b_{k-1}) - b_{k-2}\big)- b_{k-3} \ldots \Big) - b_1 =&
2^{k} - 2^{k-2} b_{k-1} - 2^{k-3} b_{k-2} - \ldots - b_1\\
=& 2^k - \sum_{i=1}^{k-1} b_i 2^{i-1} = n.
\end{align*}

We approximate the  composition with HyCNNs and track the approximation error using backward induction, starting with $k$ and then going down to $1.$ By Lemma~\ref{lem_approx_operator_0}, there exists a HyCNN $h_1: \RR \to \RR$ with $L$ hidden layers and $m$ neurons per hidden layer such that 
$$\sup_{x \in [0,1]} \big|h_1(x) - \calD_{b_k}[\phi_1](x) \big| 
=
\sup_{x \in [0,1]} \big|h_1(x) - x^2 \big| 
\leq \frac{1}{4 m^{2L}}$$
and $0 \leq h_1(x) \leq x$ for every $x \in [0,1]$.
Assume that for $i \in \{1,2,\ldots,k-1\}$, 
there exists a HyCNN $h_i: \RR \to \RR$ with $Li$ hidden layers and $m+1$ neurons per hidden layer such that 
$$\sup_{x \in [0,1]} \big|h_i(x) - \underbrace{\calD_{b_{k+1-i}} \circ \calD_{b_{k+2-i}} \circ \ldots \circ \calD_{b_k}}_{i \text{ times}}[\phi_1](x) \big|  
\leq \frac{2^i - 1}{4m^{2L}}$$
and $0 \leq h_i(x) \leq x$ for every $x \in [0,1]$.
For $x \in [0,1]$, we also have
$$0 \leq \calD_{b_{k+1-i}} \circ \ldots \circ \calD_{b_k}[\phi_1](x) \leq x.$$
By applying Lemma \ref{lem_approx_operator_0}-(i) for the case $b_{k-i}=0$ and Lemma \ref{lem_approx_operator_0}-(ii) 
for the case $b_{k-i}=1$, there exists a HyCNN 
$h_{i+1}: \RR \to \RR$ with $L(i+1)$ hidden layers and $m+1$ neurons per hidden layer such that 
\begin{align*}
    \sup_{x \in [0,1]} \big|h_{i+1}(x) - \calD_{b_{k-i}}[h_{i}](x) \big|  
\leq \frac{1}{4m^{2L}}
\end{align*}
and $0 \leq h_{i+1}(x) \leq x$ for every $x \in [0,1]$.
Using triangle inequality and the induction hypothesis,
\begin{align*}
    &\sup_{x \in [0,1]} \big|h_{i+1}(x) - \underbrace{\calD_{b_{k-i}} \circ \calD_{b_{k+1-i}} \circ \ldots \circ \calD_{b_k}}_{(i+1)\text{ - times}}[\phi_1](x) \big|\\
    &\leq 
    \sup_{x \in [0,1]} \big|h_{i+1}(x) - \calD_{b_{k-i}}[h_{i}](x) \big| 
    +
    \sup_{x \in [0,1]} \Big|\calD_{b_{k-i}}[h_{i}](x) - \calD_{b_{k-i}} \circ \calD_{b_{k+1-i}} \circ \ldots \circ \calD_{b_k}[\phi_1](x) \Big|\\   
    & = \sup_{x \in [0,1]} \big|h_{i+1}(x) - \calD_{b_{k-i}}[h_{i}](x) \big|
    +
    \sup_{x \in [0,1]} \left|\frac{h_{i}^2(x)}{x^{b_{k-i}}} - 
    \frac{\big(\calD_{b_{k+1-i}} \circ \ldots \circ \calD_{b_k}[\phi_1](x)\big)^2}{x^{b_{k-i}}}
    \right|\\
    & \leq \frac{1}{4m^{2L}} + 
    \sup_{x \in [0,1]} \frac{\left|\big(h_{i}(x) - \calD_{b_{k+1-i}} \circ \ldots \circ \calD_{b_k}[\phi_1](x)\big) 
    \big(h_{i}(x) + \calD_{b_{k+1-i}} \circ \ldots \circ \calD_{b_k}[\phi_1](x)\big)
    \right|}{x^{b_{k-i}}}\\
    & \leq \frac{1}{4m^{2L}} + 
    2 \cdot \frac{2^i - 1}{4m^{2L}} = \frac{2^{i+1} - 1}{4m^{2L}}.
\end{align*}
Hence, by backward induction on $i$, 
there exists a HyCNN 
$h_{k}: \RR \to \RR$ with $Lk$ hidden layers and $m+1$ neurons per hidden layer such that 
$$\sup_{x \in [0,1]} \big|h_k(x) - \calD_{b_{1}} \circ \calD_{b_{2}} \circ \ldots \circ \calD_{b_k}[\phi_1](x) \big|  
\leq \frac{2^k - 1}{4m^{2L}} < \frac{2^{k-2}}{m^{2L}} \leq \frac{n}{2 m^{2L}}.$$
The assertion then follows by recalling \eqref{tmp_18} and replacing $m$ by $m-1$. 
\end{proof}

\subsection{Proofs for multivariate quadratic
functions} \label{appendix_proof_multivariate}

\begin{proof}[Proof of \Cref{prop:lowerBoundMultivariateICNN}]
    Let $f \colon \RR^d \to \RR$ be a multivariate (standard or leaky) ReLU ICNN as in the claim. Then, by continuity of $f$ and $\|\cdot\|_2^2$, there exists $\bx' \in [0,1]^{d-1}$ such that
    \begin{align*}
         &\int_{[0,1]^d} \big|f(\bx)-\|\bx\|_2^2 \big| \, d\bx\geq  \int_0^1 \big|f(x, \bx') - x^2 - \|\bx'\|_2^2\big| \, dx
    \end{align*}
    In particular, the function $x\mapsto f(x, \bx')$ is a univariate (standard or leaky) ReLU ICNN with $L$ hidden layers and $d_\ell$ neurons in layer $\ell$. Hence, by \Cref{thm:lowerBoundPiecewiseLinear}, we have 
    \begin{align*}
        \int_0^1 \big|f(x, \bx') - x^2 - \|\bx'\|_2^2\big| \, dx &\geq \frac{1}{16(d_1 + 2 \sum_{\ell = 2}^L d_\ell)^2}.
    \end{align*}
    For the supremum norm, we infer upon denoting $\mathbf{1}_d = (1, \dots, 1)^{\top} \in \RR^d$ that 
    \begin{align*}
        \sup_{\bx\in [0,1]^d}\big|f(\bx)-\|\bx\|_2^2\big| \geq \sup_{t\in [0,1]}\big|f(t\mathbf{1}_d)-t^2 d\big| = d\sup_{t\in [0,1]}|d^{-1}f(t \mathbf{1}_d)-t^2|.
    \end{align*}
    Since $t\mapsto d^{-1}f(t\mathbf{1})$ is a univariate (standard or leaky) ReLU ICNN on $[0,1]$ with $L$ hidden layers and $d_\ell$ neurons in layer $\ell$, the asserted lower bound also follows from \Cref{thm:lowerBoundPiecewiseLinear}. 
\end{proof}

\begin{proof}
[Proof of \Cref{thm:quadratic_approximation_multi}]

\begin{figure}[t]
  \centering  \includegraphics[width=.6\textwidth]{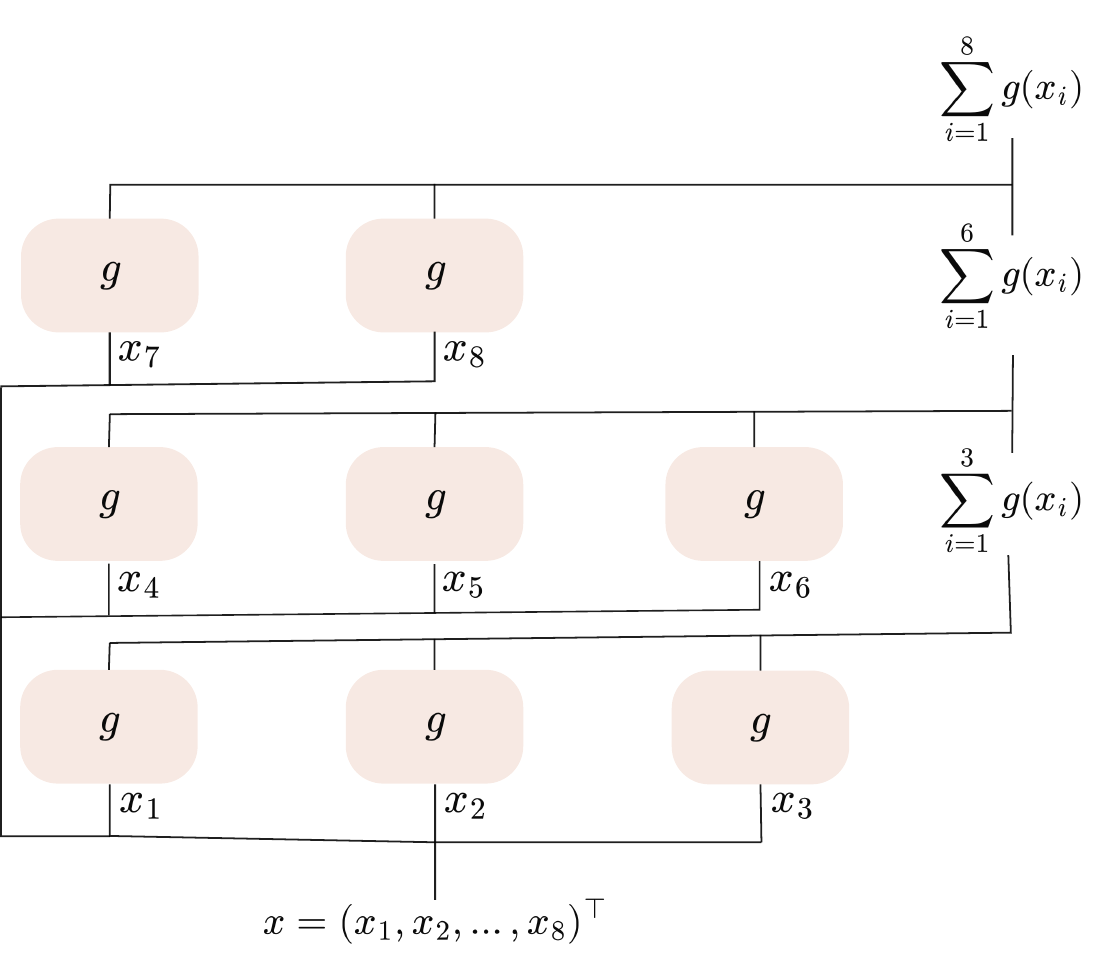}
  \caption{{Illustration of the proof of \Cref{thm:quadratic_approximation_multi}} for $d=8$ and $p=q=3$. 
    } \label{fig_multi_qudratic}
\end{figure}

 We write $\bx = (x_1, \ldots, x_d)^{\top}$.
 We define $\kappa : \NN \to 2\NN_0$ as 
 \[
\kappa(n) \coloneq   \begin{cases}
     n &n \text{ is even},\\
     n-1 &n \text{ is odd}.\\
 \end{cases}
 \]
 Consider arbitrary $p,q \in \NN$ with $pq \geq d$.
 By Theorem~\ref{thm:quadratic_approximation}, there exists a HyCNN $g \colon \RR\to \RR$ with $\lfloor L/p \rfloor$ hidden layers, $\kappa(\lfloor (m-1)/q \rfloor)$ neurons per layer and $\sigma(a,b) = \max(a,b)$, such that 
     \begin{align}
        \sup_{x\in [0,1]}|g(x) - x^2| \leq \frac{1}{8} \kappa\left(\left\lfloor \frac{m-1}{q} \right\rfloor\right)^{-2 \lfloor L/p \rfloor} \leq \frac{1}{8} \left\lfloor \frac{m-1}{q} -1 \right\rfloor^{-2 \lfloor L/p \rfloor}. 
        \label{eq.8gfew}
     \end{align}
     We construct a HyCNN $f\colon \RR^d \to \RR$ with $L$ hidden layers and $m$ neurons per layer as follows. 
    For each $j \in [p]$,
    we approximate $x_{(j-1)q+1}^2,\ldots, x_{j q}^2$ by applying $q$ parallel copies of the HyCNN $g$ in the hidden layers from $((j-1)\lfloor L/p \rfloor+1)$ to $j \lfloor L/p \rfloor$ of $f$. 
    This is possible as the network input is accessible in each hidden layer.
    If $pq > d$, we apply the construction only up to index $d$. 
    This construction requires $p \lfloor L/p \rfloor \leq L$ layers and $q \kappa(\lfloor (m-1)/q \rfloor) \leq m-1$ neurons per layer.

    It remains to sum the outputs $g(x_1), \ldots, g(x_d)$.
    This can be done by adding one neuron per hidden layer that stores the sum of the approximated square functions using 
    $$\max\big( g(x_1)+\ldots + g(x_k), -d \big) = g(x_1)+\ldots + g(x_k), \qquad \forall k \in [d].$$
    See Figure~\ref{fig_multi_qudratic} for an illustration.
    The HyCNN output $f(\bx)$ is then $\sum_{i=1}^d g(x_i)$. By triangle inequality and \eqref{eq.8gfew}, the approximation error is upper bounded by 
    \[
    \sup_{\bx\in [0,1]^d} \big|f(\bx) - \|\bx\|_2^2\big| \leq \sum_{i=1}^d \sup_{x_i \in [0,1]^d} \big|g(x_i) - x_i^2 \big| \leq \frac{d}{8} \inf_{p,q \in \NN: pq \geq d} \,  \left\lfloor \frac{m-1}{q}-1 \right\rfloor^{-2\lfloor L/p \rfloor}.
    \]   
\end{proof}

\subsection{Proof of Inclusions \eqref{eq.84gr}} \label{appendix_proof_equation_inclusion}
\begin{proof}
    We write $\bx = (x_1,\ldots,x_d)^{\top}$.
    Theorems \ref{thm:lowerBoundPiecewiseLinear} and \ref{thm:quadratic_approximation} yield
    \[
    \inf_{f \in \ICNN_d(L,m)} \,  \sup_{\bx \in [0,1]^d}|f(\bx)-x_1^2| \geq \frac{1}{8 (m + 2(L-1)m)^2} > \frac{1}{8(2mL)^2}
    \]
    and
    \[
    \inf_{h \in \HyCNN_d(L',m')} \sup_{\bx \in [0,1]^d}|f(\bx)-x_1^2| \leq \frac{1}{8 (m')^{L'}} \leq \frac{1}{8(2mL)^2}.
    \]
    respectively. Hence, we get $\HyCNN_d(L',m') \subsetneq \ICNN_d(L,m)$. 

    The second assertion follows by setting
    $V_\ell^{2} = W_\ell^{2} = 0$ and $\bb_\ell^{2} = 0$ for each $\ell = 0, \ldots, L-1$ in \eqref{eq:HyCNN}.

    To establish the last inclusion, write $(x_1,\ldots, x_d)_+ \coloneqq (x_1 \vee 0, \ldots, x_d \vee 0)$.
    For a given HyCNN $h \in \HyCNN_d(L,m)$,
    we construct a ReLU network $f \in \ReLU_d(L,3m+2d)$ that represents $h$.
    In each hidden layer of $f$, we store $\bx_+$ and $(-\bx)_+$, which represent the positive and negative parts of input vector, respectively.
    This requires $2d$ neurons per layer, and these neurons can be used to represent skip connections from the input via 
    \[
        \ba^{\top} \bx = \ba^{\top}\big(\bx_+ - (-\bx)_+\big) = \ba^{\top} \bx_+ - \ba^{\top} (-\bx)_+.
    \]
    On the other hand, from 
    $x \vee y = x + (y-x)_+$ and $x = x_+ - (-x)_+$,
    we have
    \begin{align*}
            (\ba_1^{\top} \bz + b_1) 
    \vee (\ba_2^{\top} \bz + b_2) 
    &= (\ba_1^{\top} \bz + b_1) + \big( (\ba_2 - \ba_1)^{\top} \bz + b_2 - b_1 \big)_+\\  
    &= (\ba_1^{\top} \bz + b_1)_+ - (-\ba_1^{\top} \bz - b_1)_+  + \big( (\ba_2 - \ba_1)^{\top} \bz + b_2 - b_1 \big)_+.    
    \end{align*}
    Hence, instead of $m$ neurons in each hidden layer of $h$ computing the right hand side of the previous identity, $3m$ neurons in the ReLU network $f$ can compute the right hand side.
    This proves the assertion.
\end{proof}

\section{Derivation of the Initialization Scheme for HyCNNs} \label{appendix_init}
In this section, we derive the initialization scheme introduced in Section \ref{sec:HyCNNs}, closely following the steps for the ICNN initialization proposed by \cite{hoedt2023principled} but appropriately adapted for HyCNNs.

For $\ell \in [L-1]$, 
we rewrite the HyCNN layer \eqref{eq:HyCNN} as
\begin{align*}
    \bs_{\ell+1} &:= V_{\ell}^{1} \bz_{\ell} + W_{\ell}^{1} \bx + \bb_{\ell}^{1}, \\
    \bt_{\ell+1} &:= V_{\ell}^{2} \bz_{\ell} + W_{\ell}^{2} \bx + \bb_{\ell}^{2}, \\
    \bz_{\ell+1} &:= \max(\bs_{\ell+1}, \bt_{\ell+1}).
\end{align*}
We write $\bx = (x_i)_{i \in [d]}$ and for $k \in \{1,2\}$,
\begin{align*}
    \bs_{\ell} = (s_{\ell, i})_{i \in [d_{\ell}]}, \quad \bt_{\ell} &= (t_{\ell, i})_{i \in [d_{\ell}]}, \quad \bz_{\ell} = (z_{\ell, i})_{i \in [d_{\ell}]}, \quad \text{for all} \ \ell \in [L], \\
    \quad V_{\ell}^k = (v^{(k)}_{\ell,i,j})_{i \in [d_{\ell+1}], j \in [d_{\ell}]}, \quad
    W_{\ell}^k &= (w^{(k)}_{\ell,i,j})_{i \in [d_{\ell+1}], j \in [d]}, \quad \bb^k_{\ell} = (b^{(k)}_{\ell,i})_{i \in [d_{\ell+1}]} \quad \text{for all} \ \ell\in [L-1].
\end{align*}
At initialization, the weight parameters are drawn independently with distributions ($\ell \in [L-1]$),
\begin{align*}
    v_{\ell, i,j}^{(k)} &\sim \calD_{\ell}, && k \in \{1,2\}, \, i \in [d_{\ell+1}], \, j \in [d_{\ell}],\\
    w_{\ell, i,j}^{(k)} &\sim N(0, \sigma_{w, \ell}^2), && k \in \{1,2\}, \, i \in [d_{\ell+1}], \, j \in [d],\\
    b_{\ell, i}^{(k)} &\sim N(\mu_{b, \ell}, \sigma_{b, \ell}^2), && k \in \{1,2\}, \, i \in [d_{\ell+1}],    
\end{align*}
where $\calD_{\ell}$ is a distribution in a pre-specified class with support on $\RR_+$. 
Denote by $\mu_{v,\ell}$ and $\sigma_{v,\ell}^2$ the expectation and the variance of $\calD_{\ell}$, respectively.  
For each $\ell \in [L]$, the $2d_{\ell}$
random variables 
$s_{\ell,1}, t_{\ell,1}, \ldots, s_{\ell,d_\ell}, t_{\ell,d_\ell}$
are exchangeable.
It follows that $\EE(s_{\ell,1}) = \EE(s_{\ell,i}) = \EE(t_{\ell,i})$, $\EE(s_{\ell,1}^2) = \EE(s_{\ell,i}^2) = \EE(t_{\ell,i}^2)$ and 
$\EE(s_{\ell,1} t_{\ell,1}) =  \EE(s_{\ell,i} s_{\ell,i'}) = \EE(t_{\ell,i} t_{\ell,i'}) = 
\EE(s_{\ell,i} t_{\ell,i}) = \EE(s_{\ell,i} t_{\ell,i'})$ for any $i, i' \in [d_{\ell}]$ with $i \neq i'$.
Moreover,
$\EE(z_{\ell,1}) = \EE(z_{\ell,i})$, $\EE(z_{\ell,1}^2) = \EE(z_{\ell,i}^2),$ and $\EE(z_{\ell,1} z_{\ell,2}) = \EE(z_{\ell,i} z_{\ell,i'}).$

We now fix a layer $\ell \in [L-1]$.
By definition, for $i \in [d_{\ell+1}]$ we have 
\begin{align}
    s_{\ell+1,i} &= \sum_{j=1}^{d_{\ell}} v_{\ell, i, j}^{(1)} z_{\ell, j} + \sum_{j=1}^d w_{\ell, i, j}^{(1)} x_j + b_{\ell, i}^{(1)}, \label{tmp_20} \\
    t_{\ell+1,i} &= \sum_{j=1}^{d_{\ell}} v_{\ell, i, j}^{(2)} z_{\ell, j} + \sum_{j=1}^d w_{\ell, i, j}^{(2)} x_j + b_{\ell, i}^{(2)}
    \label{tmp_21}
\end{align} 
with
$$z_{\ell,j} = s_{\ell,j} \vee t_{\ell,j}.$$

The key objective is to ensure
that pre-activations exhibit comparable statistical properties across all layers at initialization.
To this end, we derive fixed-point equations for the mean, variance, and covariance, 
\begin{align}
    \EE(s_{\ell+1,1}) = \EE(s_{\ell,1}), \quad
    \EE(s^2_{\ell+1,1}) = \EE(s_{\ell,1}^2), \quad 
    \EE(s_{\ell+1,1} t_{\ell+1,1}) = \EE(s_{\ell,1} t_{\ell,1}),  \label{sp_cond}
\end{align} 
and identify suitable hyperparameter configurations
$(\mu_{v,\ell}, \sigma_{v,\ell}^2, \sigma_{w,\ell}^2, \mu_{b,\ell}, \sigma_{b,\ell}^2)$.
For simplicity, we drop the subscript $\ell$ from the hyperparameters below.  
From \eqref{tmp_20} and \eqref{tmp_21}, 
\begin{align*}
    &\EE(s_{\ell+1,1}) = d_{\ell} \mu_v \EE(z_{\ell,1}) + \mu_b,\\
    &\EE(s^2_{\ell+1,1}) = \sigma_b^2 + \mu_b^2 + \sigma_w^2 \|\bx\|_2^2 + d_{\ell}(\mu_v^2 + \sigma_v^2) \EE(z_{\ell,1}^2) + d_{\ell}(d_{\ell}-1) \mu_v^2 \EE(z_{\ell,1} z_{\ell,2}) + 2d_{\ell} \mu_b \mu_v \EE(z_{\ell,1}),\\
    &\EE(s_{\ell+1,1} t_{\ell+1,1}) = \mu_b^2 + d_{\ell} \mu_v^2 \EE(z_{\ell,1}^2) + d_{\ell}(d_{\ell}-1) \mu_v^2 \EE(z_{\ell,1} z_{\ell,2}) + 2d_{\ell} \mu_b \mu_v \EE(z_{\ell,1}).
\end{align*}
Assume we already picked hyperparameters in the previous layer so that $\EE(s_{\ell,1}) =0$. 
Choosing $\mu_b = -d_{\ell} \mu_v \EE(z_{\ell,1})$ ensures then $\EE(s_{\ell+1,1}) = 0,$
$\mu_b^2 + 2d_{\ell} \mu_v \mu_b \EE(z_{\ell,1}) = -d_{\ell}^2 \mu_v^2 \EE(z_{\ell,1})^2$, and
\begin{align}
    &\EE(s^2_{\ell+1,1}) = \sigma_b^2 + \sigma_w^2 \|\bx\|_2^2 + d_{\ell}(\mu_v^2 + \sigma_v^2) \EE(z_{\ell,1}^2)  + d_{\ell}(d_{\ell}-1) \mu_v^2 \EE(z_{\ell,1} z_{\ell,2}) -d_{\ell}^2 \mu_v^2 \EE(z_{\ell,1})^2, \label{eq_sm_1}\\
    &\EE(s_{\ell+1,1} t_{\ell+1,1}) =  d_{\ell} \mu_v^2 \EE(z_{\ell,1}^2)  + d_{\ell}(d_{\ell}-1) \mu_v^2 \EE(z_{\ell,1} z_{\ell,2}) -d_{\ell}^2 \mu_v^2 \EE(z_{\ell,1})^2.\label{eq_sm_2}
\end{align}
The joint distribution of the random variables $s_{\ell,1},\ldots,s_{\ell,d_{\ell}},t_{\ell,1},\ldots,t_{\ell,d_{\ell}}$ by a Gaussian distribution with the same mean and covariance matrix. Define $A := \EE(s_{\ell,1}^2)$ and $B := \EE(s_{\ell,1} t_{\ell,1})$. Applying Lemma \ref{lemma_mul_momentum} with $(X,Y,Z,W)=(s_{\ell,1}, t_{\ell,1}, s_{\ell,2}, t_{\ell,2})$, $\sigma^2=A$, and $\rho=B/A,$ we have
\begin{align}
    \EE(z_{\ell,1}) &= \EE(s_{\ell,1} \vee t_{\ell,1}) = \sqrt{\frac{A-B}{\pi}}, \label{eq_zm_1}
    \\
    \EE(z_{\ell,1}^2) &= \EE\big((s_{\ell,1} \vee t_{\ell,1})^2\big) = A, \label{eq_zm_2} 
    \\
    \EE(z_{\ell,1} z_{\ell,2}) &= \EE\big((s_{\ell,1} \vee t_{\ell,1}) (s_{\ell,2} \vee t_{\ell,2})\big) = B + \frac{A-B}{\pi}. \label{eq_zm_3} 
\end{align}
Now, we
solve the equations \eqref{sp_cond}.
By plugging \eqref{eq_zm_1}, \eqref{eq_zm_2} and \eqref{eq_zm_3} into \eqref{eq_sm_1}, we obtain
\begin{align}
    A = \EE(s_{\ell+1,1}^2) 
    &= \sigma_b^2 + \sigma_w^2 \|\bx\|_2^2 + d_{\ell}(\mu_v^2 + \sigma_v^2) A + d_{\ell}(d_{\ell}-1) \mu_v^2 \left(B + \frac{A-B}{\pi} \right) - d_{\ell}^2 \mu_v^2 \frac{A-B}{\pi} \nonumber\\
    &= \sigma_b^2 + \sigma_w^2 \|\bx\|_2^2 + d_{\ell}(\mu_v^2 + \sigma_v^2) A + d_{\ell}(d_{\ell}-1) \mu_v^2 B -d_{\ell} \mu_v^2 \frac{A-B}{\pi}.
    \label{tmp_2}
\end{align}
Similarly, by plugging \eqref{eq_zm_1}, \eqref{eq_zm_2} and \eqref{eq_zm_3} into \eqref{eq_sm_2}, we obtain
\begin{align}
    B = \EE(s_{\ell+1, 1} t_{\ell+1, 1}) 
    &= d_{\ell} \mu_v^2 A + d_{\ell}(d_{\ell}-1) \mu_v^2 B -d_{\ell} \mu_v^2 \frac{A-B}{\pi}.
    \label{tmp_3}
\end{align}
Using $\rho = B/A,$ \eqref{tmp_3} yields
$$\mu_v^2 = \frac{B}{d_{\ell}A + d_{\ell}(d_{\ell}-1)B - \tfrac{d_{\ell}(A-B)}{\pi}} = \frac{\rho}{d_{\ell} + d_{\ell}(d_{\ell}-1)\rho - \tfrac{d_{\ell}(1-\rho)}{\pi}}.$$
Also, by subtracting \eqref{tmp_3} from \eqref{tmp_2} we get
$$A(1 - \rho) = \sigma_b^2 + \sigma_w^2 \|\bx\|_2^2 + d_{\ell}\sigma_v^2 A.$$
Since the extreme cases $\rho = 0 = \mu_v^2$ and $\rho = 1$ are undesirable, we choose $\rho = \tfrac12$.
It follows
$$\mu_v = \sqrt{\frac{1}{d_{\ell}^2 + (1-\tfrac{1}{\pi})d_{\ell}}}$$
and
$$\frac{1}{2}A = \sigma_b^2 + \sigma_w^2 \|\bx\|_2^2 + d_{\ell}\sigma_v^2 A.$$
Provided that $\|\bx\|_2^2 \approx d$, a reasonable choice for other hyperparameters would be $\sigma_b^2 = 0$, $\sigma_w^2 = 1/(4d)$ and $\sigma_v^2 = 1/(4d_{\ell})$.
Then we get $A = 1$ and $B=\tfrac12$.
Finally, we obtain
$$\mu_b = -d_{\ell} \mu_v \EE(z_{\ell, 1}) = -d_{\ell}  \sqrt{\frac{1}{d_{\ell}^2 + (1-\tfrac{1}{\pi})d_{\ell}}}
\sqrt{\frac{1}{2 \pi}} =  -  \sqrt{\frac{d_{\ell}}{2 \pi d_{\ell} + 2 \pi -2}}.$$

\begin{lemma} \label{lemma_mul_momentum}
If $\rho \in (-1,1)$ and
    $$
\begin{pmatrix}
X \\ Y \\ Z \\ W
\end{pmatrix}
\sim \mathcal{N}\!\left(
\begin{pmatrix}
0 \\ 0 \\ 0 \\ 0
\end{pmatrix},
\,
\sigma^2
\begin{pmatrix}
1      & \rho & \rho & \rho \\
\rho   & 1    & \rho & \rho \\
\rho   & \rho & 1    & \rho \\
\rho   & \rho & \rho & 1
\end{pmatrix}
\right),
$$
then, 
\begin{align*}
    \EE[X \vee Y] =  \sigma \sqrt{\frac{1-\rho}{\pi}}, \quad
\EE[(X \vee Y)^2] = \sigma^2, \quad
\EE[(X \vee Y)(Z \vee W)] = \left(\rho + \frac{1-\rho}{\pi} \right) \sigma^2.
\end{align*}
\end{lemma}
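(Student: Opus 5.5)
We need to prove Lemma \ref{lemma_mul_momentum}: the three moments of $X\vee Y$ for the exchangeable Gaussian vector. The plan is to use the identity $2\max(u,v)=u+v+|u-v|$ throughout, since it reduces everything to moments of sums and differences. The key structural fact is that the difference $D:=X-Y$ has variance $2\sigma^2(1-\rho)$. The sum $S:=X+Y$ satisfies $\mathrm{Cov}(S,D)=\sigma^2-\sigma^2=0$, so $S$ and $D$ are independent. The same holds for $S':=Z+W$ and $D':=Z-W$. Moreover, $D'$ is uncorrelated with $S$ and with $D$, because every cross covariance equals $\rho\sigma^2-\rho\sigma^2=0$. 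Likewise $D$ is uncorrelated with $S'$. By joint Gaussianity, $D$ and $D'$ are therefore each independent of the remaining relevant variables.

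\textbf{First moment.} We have $\EE[X\vee Y]=\tfrac12\EE|D|$. For a centered Gaussian with standard deviation $s$, $\EE|D|=s\sqrt{2/\pi}$. With $s=\sigma\sqrt{2(1-\rho)}$ this gives $\tfrac12\sigma\sqrt{2(1-\rho)}\sqrt{2/\pi}=\sigma\sqrt{(1-\rho)/\pi}$.

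\textbf{Second moment.} Write $4(X\vee Y)^2=(S+|D|)^2=S^2+2S|D|+D^2$. The cross term has mean zero, because $S$ and $D$ are independent and $\EE S=0$. The remaining terms give $\EE S^2+\EE D^2=2\sigma^2(1+\rho)+2\sigma^2(1-\rho)=4\sigma^2$, so $\EE[(X\vee Y)^2]=\sigma^2$. Equivalently, one can use the symmetry $(X\vee Y)^2+(X\wedge Y)^2=X^2+Y^2$, which is a cross-check.

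\textbf{Mixed moment.} Expand
\[
4(X\vee Y)(Z\vee W)=(S+|D|)(S'+|D'|)=SS'+S|D'|+S'|D|+|D||D'|.
\]
Here $\EE[SS']=4\rho\sigma^2$. The terms $S|D'|$ and $S'|D|$ have zero mean by the independence noted above. For the last term, compute $\mathrm{Cov}(D,D')=\rho\sigma^2-\rho\sigma^2-\rho\sigma^2+\rho\sigma^2=0$, so $D$ and $D'$ are independent. Hence $\EE|D||D'|=(\EE|D|)^2=4\sigma^2(1-\rho)/\pi$. Dividing by $4$ gives $\big(\rho+\tfrac{1-\rho}{\pi}\big)\sigma^2$.

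The only delicate step is verifying the independence claims in the mixed moment, namely that $D'$ is independent of $S$ and that $D$ is independent of $D'$. This reduces to the covariance bookkeeping above, which works because all off-diagonal entries of the covariance matrix are equal to the same value $\rho\sigma^2$. The assumption $\rho\in(-1,1)$ is only needed for nondegeneracy, and the formulas also remain valid in the degenerate limits.
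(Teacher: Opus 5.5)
Your proof is correct and is essentially the paper's argument: write $X\vee Y=\tfrac12(S+|D|)$ with $S=X+Y$, $D=X-Y$ (and likewise for $Z,W$), obtain the required independences from vanishing covariances under joint Gaussianity, and read off the three moments. One small correction to your closing aside: the stated matrix is a valid covariance only for $\rho\ge -1/3$ (its eigenvalues are $\sigma^2(1-\rho)$ and $\sigma^2(1+3\rho)$), so $\rho\in(-1,1)$ is not the condition governing nondegeneracy. This does not affect the proof, since, as you note, the argument never uses nondegeneracy.
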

\begin{proof}
    We define 
    $$S_1 := X+Y, \quad S_2 := Z+W, \quad D_1 := X-Y, \quad D_2 := Z-W.$$
    Since $(S_1, S_2, D_1, D_2)^{\top}$ is jointly Gaussian and $\operatorname{Var}(X)=\operatorname{Var}(Y)=\operatorname{Var}(Z)=\operatorname{Var}(W)=1$, we have $S_1 \perp D_1, S_1 \perp D_2, S_2 \perp D_1, S_2 \perp D_2$, $D_1 \perp D_2$ and
    $$X \vee Y = \frac{S_1 + |D_1|}{2}, \quad Z \vee W = \frac{S_2 + |D_2|}{2}.$$
    The absolute value of a standard Gaussian random variable has mean $\sqrt{2/\pi}$.
    From $S_1, S_2 \sim N(0, 2\sigma^2(1+\rho))$ and $D_1, D_2 \sim N(0, 2\sigma^2(1-\rho))$, we get
    \begin{align*}
        \EE[X \vee Y] = \EE\left[\frac{S_1 + |D_1|}{2}\right] = \frac{1}{2}\sqrt{\frac{4\sigma^2(1-\rho)}{\pi}} = \sigma \sqrt{\frac{1-\rho}{\pi}}
    \end{align*}
    and
    \begin{align*}
        \EE\big((X \vee Y)^2\big) 
        = \EE\left(\frac{S_1^2 + D_1^2 + 2S_1 |D_1|}{4}\right)
        = \frac{2\sigma^2(1+\rho) + 2\sigma^2(1-\rho)}{4} = \sigma^2.
    \end{align*}    
    We derive the last statement from
    \begin{align*}
        \EE\big((X \vee Y)(Z \vee W)\big) &= \EE\left(\frac{S_1 S_2 + S_1 |D_2| + S_2 |D_1| + |D_1||D_2|}{4}\right)\\
        &= \EE\left(\frac{(X+Y)(Z+W)}{4}\right) + \frac{1}{4}\EE(|D_1|)(|D_2|)\\
        &= \rho \sigma^2 + \frac{1-\rho}{\pi}\sigma^2.
    \end{align*}
\end{proof}

\renewcommand{\parabold}[1]{\paragraph{#1.}}
\section{Additional Experiments}\label{app:additional_experiments}

In this section, we provide additional experimental results and implementation details complementing the experiments described in \Cref{sec:experiments}. The section is organized as follows. In \Cref{app:initialization_visualization}, we visualize randomly initialized HyCNNs. In \Cref{app:regression_experiments}, we present additional regression experiments, including algorithmic details, univariate regression fits, results across different sample sizes and dimensions, and detailed performance tables. In \Cref{app:OT_experiments}, we present the HyCNN-based OT map estimation procedure, simulation settings, and detailed performance tables. 
All experiments were run on a MacBook Pro 16" with Apple M4 Pro with 24\,GB of unified memory and 512GB of storage using the integrated GPU with the Metal framework. ChatGPT~5.4 Codex and Claude Code Opus~4.6 were used to assist in implementation and debugging, and generated code was carefully reviewed and tested by the authors. The code to some experiments is provided on GitHub at \url{https://www.github.com/hundrieser/HyCNN}.

Beyond the experiments reported in the paper, the full research project required additional compute for preliminary and exploratory work that is not included in the final manuscript. This includes the development and validation of the HyCNN initialization scheme, hyperparameter exploration (e.g., learning rate, batch size, number of epochs, width, and depth), ablations on skip-connection variants, as well as failed runs and debugging. All such runs were executed on the same hardware described above. We estimate that the total compute consumed during the project exceeded that required to reproduce the reported experiments by approximately a factor of three to five. %

\subsection{Visualization of HyCNN initialization}\label{app:initialization_visualization}

Following our initialization scheme from \Cref{sec:HyCNNs}, we display the surfaces of randomly initialized HyCNNs across different depths in \Cref{fig:HyCNN_init_vis} for input dimension $d=2$ (i.e. $\bx\in [-10,10]^2$) and width $m = 32$. Upon close inspection, we see that shallow networks behave more polyhedral, while deeper networks exhibit more smoothness. Moreover, the surfaces also do not explode at initialization as the depth increases, and are roughly centered around the origin.

\begin{figure}[h!]
    \begin{center}
    \centerline{\includegraphics[width=\textwidth, trim=0 0 0 0, clip]{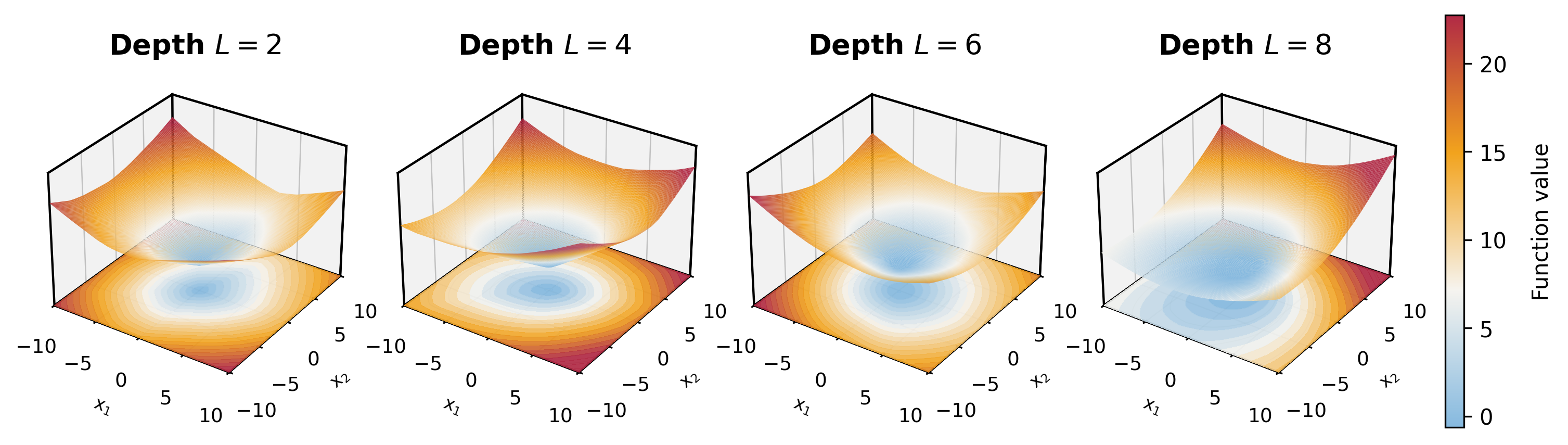}}%
        \caption{HyCNNs at initialization across different depths. Each panel shows a single randomly initialized HyCNN according to the initialization scheme from \Cref{sec:trainingHyCNNs} with maxout activation, input dimension $d=2$, domain $\bx\in [-10,10]^2$, width $m = 32$, and depths $L\in \{2,4,6,8\}$. The surface plot is shown together with its contour projection on the base plane. The visualization demonstrates that the initialized functions remain bounded and roughly centered around the origin as depth increases, and that deeper networks produce smoother surfaces compared to the more polyhedral shallow networks.}
         \label{fig:HyCNN_init_vis}
         \end{center}
\end{figure}

\subsection{Regression experiments}\label{app:regression_experiments}

In this section, we provide details and additional simulations to the regression experiments described in \Cref{sec:experiments}. We first describe the experimental setup, then present univariate regression fits (\Cref{app:univariate_regression}), $50$-dimensional regression across different sample sizes (\Cref{app:regression_dim50}), regression across different dimensions (\Cref{app:regression_dimensions}), and interpolation across different dimensions (\Cref{app:interpolation}).

In total, we consider the following ground-truth functions:
\begin{align*}
&\textstyle \text{(a) } f_1(\bx) = \|\bx\|^2_2, &&\text{(b) } f_2(\bx) = \|\bx\|_4^4, \;&&\text{(c) } f_3(\bx) = \|\bx\|^2_2 + 0.25\sin\left(20\|\bx\|_2\right),\\
       &\textstyle\text{(d) } f_4(\bx) = \|\bx\|_1, &&\textstyle \text{(e) } f_5(\bx) = \exp(\|\bx\|_1/\sqrt{d}),&&\textstyle \text{(f) } f_6(\bx) =\max\left(\|\bx-\boldsymbol{\mu}\|_2^2, \|\bx+\boldsymbol{\mu}\|_2^2\right),
\end{align*}
where for every generated sample, we first draw the entries of the vector $\boldsymbol{\mu} \in \RR^d$ independently from $N(0,1/d)$.  %

\parabold{Data generation}
We draw training samples $(\bx_i, f(\bx_i) + \epsilon_i)_{i=1}^n$ with $\bx_i$ drawn independently from $\text{Unif}([-1,1]^d)$ and the noise $\epsilon_i$ drawn independently from $N(0,\sigma^2)$ for $\sigma^2 \geq 0$.

\parabold{Network architectures}
We compare MLPs with ReLU activation, ICNNs with ReLU activation, GroupMax networks (i.e., HyCNNs without skip connections) and HyCNNs. For each architecture, we consider networks of depth $L = 2,4,8, 16, 32$ and width $m =64$ for MLP and ICNN, and width $m = 48$ for GroupMax and HyCNN. %

\parabold{Training}
For each architecture $\hat f_\theta$ we first normalize the data per
coordinate by subtracting the empirical mean and dividing by the empirical standard deviation along each input dimension. Each network is then trained on the standardised data, and finally the trained models are destandardized to compute the reported MSE in the original units. For training, we perform empirical risk minimization with respect to the mean-squared-error loss, i.e., we minimize
\begin{align*}
\theta \;\mapsto\; \frac{1}{n}\sum_{i=1}^{n}\big(\hat f_\theta(\bx_i) - y_i\big)^2,
\end{align*}
over the full training set $(\bx_i, y_i)_{i=1}^n$, where $y_i = f(\bx_i) + \epsilon_i$. Optimization is carried out with the Adam optimizer \citep{kingma2014adam} with learning rate $\lambda = 10^{-2}$, $\beta_1 = 0.9$, $\beta_2 = 0.999$, batch size $\min(n, 1000)$, and $100$ epochs, applied jointly to all trainable parameters. For the ICNN and HyCNN architectures, the hidden-to-hidden weight matrices $V_\ell$ must remain elementwise non-negative throughout training, as this is required for the resulting network to be input-convex in $\bx$. We impose this constraint via a smooth reparametrization of the weights: we store an unconstrained parameter tensor $R_\ell \in \RR^{n_{\ell+1}\times n_\ell}$ and use, on every forward pass, the effective weight $V_\ell = \operatorname{softplus}(R_\ell) = \log(1 + \exp(R_\ell))$, applied entrywise. Because $\operatorname{softplus}\colon \RR \to (0,\infty)$ is smooth and strictly positive, the entries of $V_\ell$ are automatically non-negative for every value of $R_\ell$, and gradients flow back through the softplus so that Adam can be applied to the unconstrained $R_\ell$ directly. All other parameters (input-to-hidden weights, biases, quadratic-skip matrices where applicable) are unconstrained real-valued parameters and are updated without any reparametrization.

For initialization, we use for the HyCNNs the scheme described in \Cref{sec:HyCNNs} (the initialiaztion acts directly on $V_\ell$ and not on $R_{\ell}$),  for the ICNN we use the initialization scheme of \citet{hoedt2023principled}, for GroupMax networks we use the same initialization scheme as for HyCNNs while setting the skip connection weights to zero, and for MLPs we use PyTorch's default initialization scheme for \texttt{nn.Linear} layers, which draws each weight entry independently from $\mathrm{Uniform}(-1/\sqrt{d_{\mathrm{in}}},\, 1/\sqrt{d_{\mathrm{in}}})$, where $d_{\mathrm{in}}$ denotes the number of input features to the respective layer; the bias vector is sampled from the same distribution. Internally, PyTorch implements this via \texttt{kaiming\_uniform\_(a=math.sqrt(5))} \citep{he2015delving}, which with $a=\sqrt{5}$ reduces the Kaiming bound $\sqrt{6/((1+a^2)\cdot d_{\mathrm{in}})}$ to $1/\sqrt{d_{\mathrm{in}}}$.

\parabold{Evaluation}
Given a trained network $\hat f$, we evaluate the prediction MSE on an independent test set $(\tilde \bx_j)_{j=1}^{N_{\mathrm{test}}}$ of size $N_{\mathrm{test}} = 1000$ drawn from $\textup{Unif}([-1,1]^d)$ (i.e., the same distribution as the training covariates but without noise), defined as
\begin{equation}\label{eq:prediction_MSE}
    \mathrm{MSE}_{\mathrm{pred}} \coloneqq \frac{1}{N_{\mathrm{test}}} \sum_{j=1}^{N_{\mathrm{test}}} \big( f(\tilde \bx_j) - \hat f(\tilde \bx_j) \big)^2,
\end{equation}
where $f$ denotes the ground-truth function. Each simulation run, except for the univariate regression which is only conducted once for visualization, is repeated $10$ times with different random seeds for the training sample generation and the network initialization.

\subsubsection{Univariate regression}\label{app:univariate_regression}

For univariate regression $(d = 1)$, we visualize the fits of the different architectures across different depths for the ground-truth functions (a--f) and different noise levels. \Cref{fig:Univariate_regression} shows the best MLP, ICNN and GroupMax network across the different depths, and the outcome of different HyCNNs architectures. We see that deeper HyCNNs often exhibit a better fit than more shallow HyCNNs. Except for the function $f_3$ which is non-convex, the best MLP, ICNN and GroupMax network across the different depths often exhibit a worse fit than the best HyCNN across the different depths. Moreover, for the non-convex function $f_3$, the best convex shape-constrained network is given by a HyCNN.

\begin{figure}[h!]
    \begin{center}
    \centerline{\includegraphics[width=\textwidth, trim=0 0 0 0, clip]{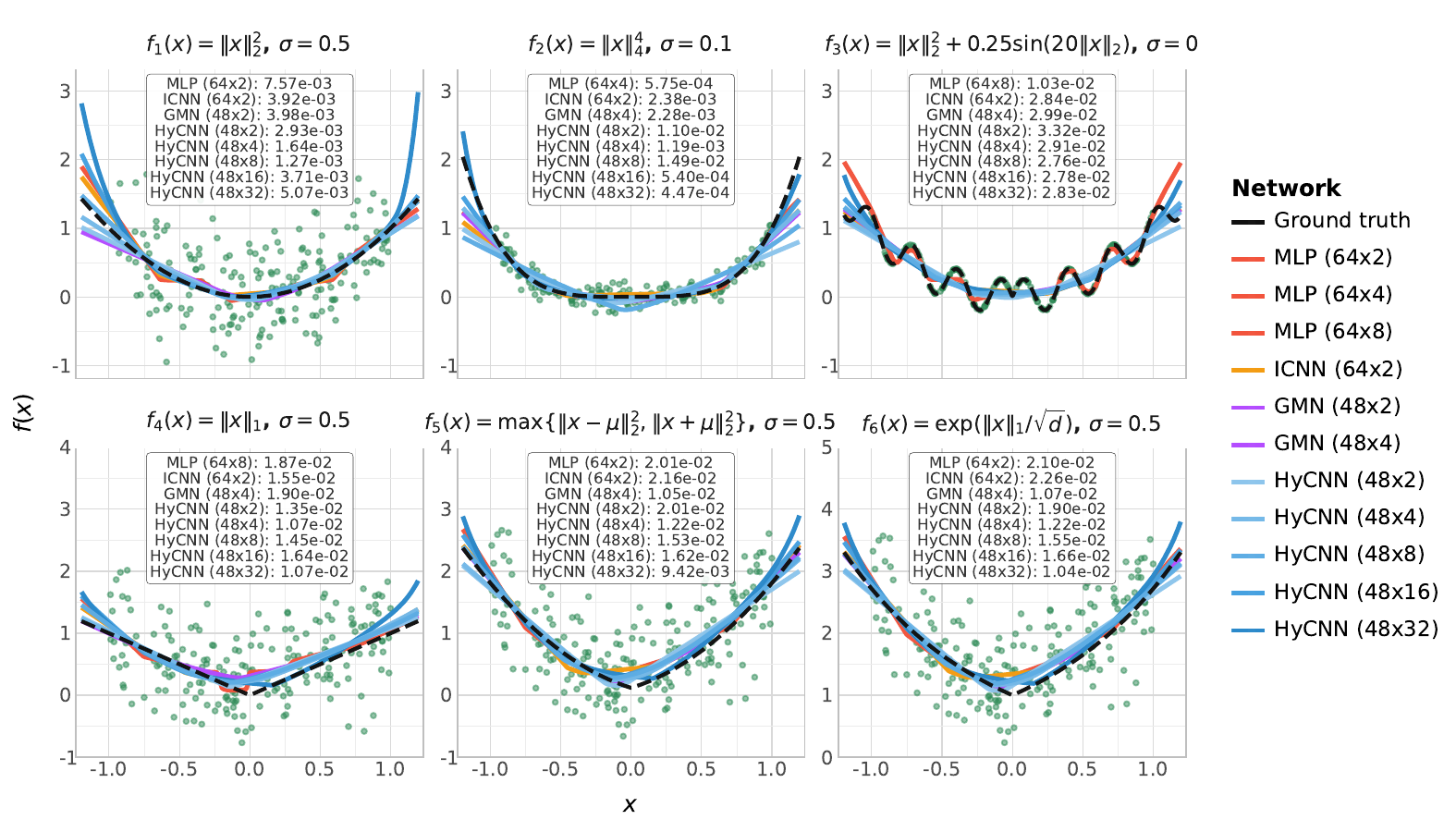}}%
        \caption{Univariate regression results ($d = 1$) for the six ground-truth functions (a--f) from \Cref{app:regression_experiments}. Each panel shows the fitted curves of HyCNNs (width $48$) with depths $L \in \{2,4,8,16,32\}$ and the best MLP (width $64$), ICNN (width $64$), or GMN (width $48$) among the same depths; the green dots depict the training samples and the black curve depicts the ground-truth function. For each function, the visualization is based on a single run (no averaging over seeds) with $n = 5{,}000$ training samples. Noise levels: $\sigma = 1$ for $f_1, f_2, f_5, f_6$ and $\sigma = 0$ for $f_3, f_4$. The best MLP, ICNN, and GMN are selected as the architecture with the lowest prediction MSE among all depths.}
         \label{fig:Univariate_regression}
         \end{center}
\end{figure}

\subsubsection{Regression in dimension $d = 50$ across different sample sizes}\label{app:regression_dim50}

We consider regression in dimension $50$ across different sample sizes $n \in \{100, 500, 1000, 5000, $ $10000\}$ and visualize the prediction MSE in Figures \ref{fig:HyCNN_training_dim50} and \ref{fig:HyCNN_training_dim50_part2} for the same collection of functions (a--f) as before. We see that deeper HyCNNs often exhibit a better fit than more shallow HyCNNs across different sample sizes. Moreover, the best MLP, ICNN and GroupMax network across the different depths often exhibit a worse fit than the best HyCNN across the different depths, especially for larger sample sizes. In particular, even for the non-convex function $f_3$, the best network is nonetheless a HyCNN, which is likely due to the fact that $f_3$ is well approximated by a convex function. For sample sizes several orders of magnitude larger, it is likely that the MLP will perform best, however for the sample sizes considered here, it seems that the HyCNNs are able to more effectively leverage the inductive bias of convexity to achieve better generalization performance than the MLPs.

\begin{figure}[t!]
  \begin{center}
    \centerline{\includegraphics[width=\textwidth, trim = 0 0 0 0, clip]{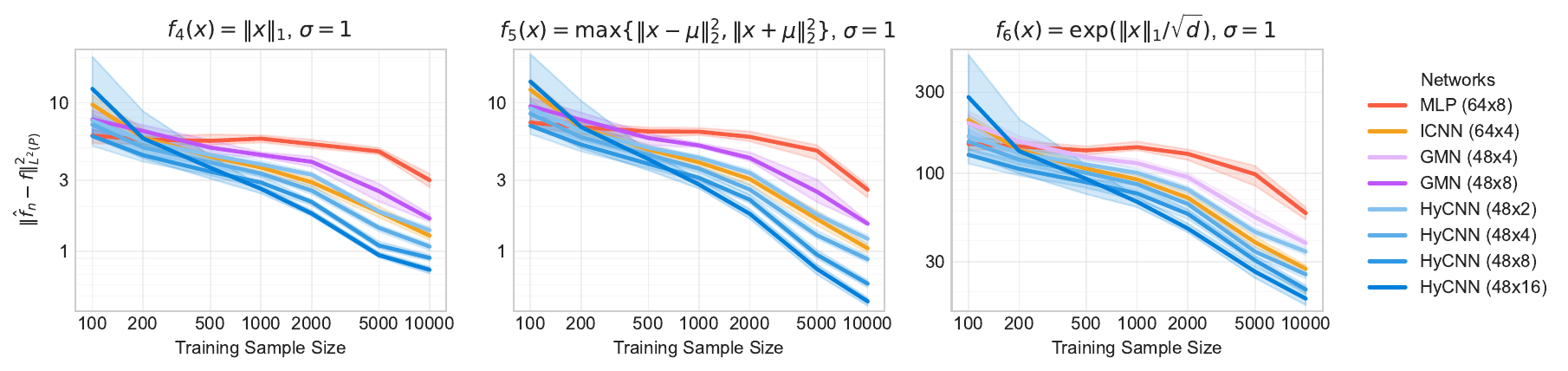}}
    \caption{Prediction MSE for different training sample sizes $n$ for the $50$-dimensional regression tasks with 10th-90th percentile bands across $10$ runs. Each panel shows HyCNNs with depths $2,4,8,16$ and the best MLP, ICNN, GMN at $n = 10^4$ among similar depths. %
    }
    \label{fig:HyCNN_training_dim50_part2}
  \end{center}
\end{figure}

\subsubsection{Regression across different dimensions for sample size $n = 5000$}\label{app:regression_dimensions}

We display the prediction MSE for the same collection of functions (a--f) across different dimensions $d \in \{1,2,5,10,20,50\}$ for a fixed sample size of $n = 5000$ in the noisy setting with additive Gaussian noise ($\sigma = 1$ for $f_1, f_2,  f_4, f_5, f_6$ and $\sigma = 0$ for $f_3$). The results are reported in \Cref{tab:model_l2} ($f_1$), \Cref{tab:model_l4} ($f_2$), \Cref{tab:model_quadratic_sinusoidal} ($f_3$), \Cref{tab:model_l1} ($f_4$), \Cref{tab:model_piecewise_quadratic} ($f_5$), and \Cref{tab:model_exp} ($f_6$).

Across all settings, the key observations are consistent: HyCNNs with depth at least four consistently achieve the lowest prediction MSE, particularly in higher dimensions ($d \geq 10$), and in most examples also consistently outperform both ICNNs and MLPs. GroupMax networks are occasionally competitive with HyCNNs at moderate depths, which may be attributed to their smaller model size providing implicit regularization. Deeper HyCNNs tend to improve performance as the dimension increases, with depth $L = 16$ often providing the best results for $d = 50$. Very deep architectures ($L = 32$) can exhibit instabilities due to occasional training failures, as reflected in the larger standard errors. In contrast, ICNNs and GMNs degrade substantially beyond depth $8$: ICNNs at depth $32$ often exhibit numerical instabilities (MSE $> 10^{11}$), and GMNs at depth $32$ consistently diverge. Since GroupMax networks are HyCNNs without skip connections from the input, this suggests that the skip connections play a critical role in stabilizing training of deeper architectures. MLPs, while trainable at all depths, underperform the convex architectures for most functions and dimensions, reflecting their lack of inductive bias towards convexity.
For the non-convex function $f_3$, the picture is more nuanced: in low dimensions ($d \leq 5$), the MLP performs considerably better than the convex architectures, as it is not constrained to produce convex fits. However, for $d \geq 10$ the HyCNN achieves the best prediction MSE, suggesting that the convexity bias of HyCNNs acts as a beneficial regularizer in higher dimensions, where $f_3$ is well approximated by a convex function.

\begin{table}[h!]
\caption{Prediction MSE and standard error (SE) by network and dimension (\textbf{all entries multiplied by $10^2$}; rounded to 2 decimal places) for learning the function $f(\bx) = \|\bx\|^2$ based on $n = 5,000$ data points $\bX_i \sim \text{Unif}([-1,1]^d)$ and $Y_i = f_1(\bX_i) + \epsilon_i$ with $\epsilon_i \sim N(0, 1)$. Three network types are considered: MLP, ICNN and HyCNN, and for each network type, five architectures with different depths are considered. The networks are trained for 100 epochs, mean and SE are computed over $10$ different seeds, see the main text for more details. For each dimension $d$, the smallest prediction MSE per network type is bold and colored. The \underline{three smallest mean prediction errors} across all networks are underlined. }
\label{tab:model_l2}
\scriptsize
\makebox[\textwidth][c]{%
\resizebox{\textwidth}{!}{%
\begin{tabular}{llcccccc}
\toprule
\multicolumn{2}{c}{Network $\backslash$ $d$} & 1 & 2 & 5 & 10 & 20 & 50 \\
\midrule
\multirow{5}{*}{\rotatebox[origin=c]{90}{MLP}} & $(64\times 2)$ & \textcolor[HTML]{f85d42}{\textbf{0.24}} (0.04) & \textcolor[HTML]{f85d42}{\textbf{1.00}} (0.09) & \textcolor[HTML]{f85d42}{\textbf{8.17}} (0.43) & \textcolor[HTML]{f85d42}{\textbf{29.47}} (0.82) & \textcolor[HTML]{f85d42}{\textbf{80.39}} (2.04) & 534.74 (22.01) \\
 & $(64\times 4)$ & 0.29 (0.05) & 1.66 (0.17) & 27.54 (1.76) & 72.65 (1.96) & 114.46 (5.00) & 476.37 (10.57) \\
 & $(64\times 8)$ & 0.46 (0.06) & 1.44 (0.09) & 22.76 (1.17) & 61.04 (2.94) & 92.29 (3.95) & 446.64 (14.96) \\
 & $(64\times 16)$ & 1.07 (0.53) & 4.80 (2.06) & 27.72 (4.91) & 53.84 (7.80) & 129.69 (17.93) & \textcolor[HTML]{f85d42}{\textbf{405.75}} (21.13) \\
 & $(64\times 32)$ & 8.89 (0.07) & 17.72 (0.23) & 44.41 (0.41) & 88.13 (1.18) & 174.51 (1.96) & 440.28 (3.97) \\
\midrule
\multirow{5}{*}{\rotatebox[origin=c]{90}{ICNN}} & $(64\times 2)$ & \textcolor[HTML]{F3A11C}{\textbf{0.16}} (0.04) & \textcolor[HTML]{F3A11C}{\textbf{0.41}} (0.03) & \textcolor[HTML]{F3A11C}{\textbf{2.31}} (0.07) & \textcolor[HTML]{F3A11C}{\textbf{8.59}} (0.33) & 32.08 (0.89) & 223.60 (4.45) \\
 & $(64\times 4)$ & 1.90 (1.09) & 3.81 (2.14) & 3.40 (0.93) & 16.58 (7.97) & \textcolor[HTML]{F3A11C}{\textbf{29.70}} (0.96) & \textcolor[HTML]{F3A11C}{\textbf{190.39}} (27.29) \\
 & $(64\times 8)$ & 8.90 (0.06) & 16.18 (1.64) & 40.19 (4.00) & 88.49 (1.28) & 135.80 (18.08) & 402.15 (29.32) \\
 & $(64\times 16)$ & 8.97 (0.07) & 17.78 (0.24) & 44.52 (0.39) & 88.43 (1.26) & 175.74 (2.09) & 445.51 (3.82) \\
 & $(64\times 32)$ & 8.90 (0.06) & 17.75 (0.24) & 44.52 (0.40) & 88.45 (1.27) & 175.75 (2.10) & $>10^{11}$ ($\sim 10^{11}$) \\
\midrule
\multirow{5}{*}{\rotatebox[origin=c]{90}{GMN}} & $(48\times 2)$ & 0.14 (0.03) & 0.45 (0.03) & 2.23 (0.10) & 7.59 (0.20) & 29.23 (0.67) & 239.81 (3.59) \\
 & $(48\times 4)$ & \textcolor[HTML]{b842f8}{\underline{\textbf{0.11}}} (0.03) & \textcolor[HTML]{b842f8}{\underline{\textbf{0.36}}} (0.02) & \textcolor[HTML]{b842f8}{\underline{\textbf{1.99}}} (0.05) & \textcolor[HTML]{b842f8}{\textbf{7.50}} (0.31) & \textcolor[HTML]{b842f8}{\textbf{28.58}} (0.58) & \textcolor[HTML]{b842f8}{\textbf{205.36}} (3.72) \\
 & $(48\times 8)$ & \underline{0.12} (0.03) & 0.42 (0.03) & 2.61 (0.33) & 8.40 (0.34) & 39.91 (4.20) & 241.19 (9.58) \\
 & $(48\times 16)$ & 1.20 (0.35) & 4.87 (0.70) & 15.09 (0.77) & 37.60 (1.03) & 92.25 (1.03) & 332.76 (4.39) \\
 & $(48\times 32)$ & $>10^{4}$ ($\sim 10^{3}$) & $>10^{4}$ ($\sim 10^{3}$) & $>10^{4}$ ($\sim 10^{4}$) & $>10^{5}$ ($\sim 10^{4}$) & $>10^{4}$ ($\sim 10^{3}$) & $>10^{4}$ ($\sim 10^{4}$) \\
\midrule
\multirow{5}{*}{\rotatebox[origin=c]{90}{HyCNN}} & $(48\times 2)$ & 0.18 (0.03) & 0.57 (0.05) & 2.44 (0.08) & 7.62 (0.28) & 25.24 (0.60) & 167.86 (2.05) \\
 & $(48\times 4)$ & 0.14 (0.03) & 0.40 (0.03) & 2.01 (0.07) & \textcolor[HTML]{0080DB}{\underline{\textbf{6.61}}} (0.26) & \underline{21.27} (0.48) & \underline{122.17} (1.67) \\
 & $(48\times 8)$ & \textcolor[HTML]{0080DB}{\underline{\textbf{0.11}}} (0.03) & \underline{0.37} (0.03) & \underline{1.96} (0.07) & \underline{6.65} (0.33) & \textcolor[HTML]{0080DB}{\underline{\textbf{20.21}}} (0.39) & \underline{89.61} (1.83) \\
 & $(48\times 16)$ & 0.12 (0.04) & 0.38 (0.03) & \textcolor[HTML]{0080DB}{\underline{\textbf{1.89}}} (0.06) & \underline{7.18} (0.37) & \underline{21.05} (0.41) & \textcolor[HTML]{0080DB}{\underline{\textbf{67.56}}} (1.78) \\
 & $(48\times 32)$ & 0.14 (0.03) & \textcolor[HTML]{0080DB}{\underline{\textbf{0.30}}} (0.03) & 2.97 (1.43) & 9.02 (1.63) & 38.40 (18.98) & 156.83 (53.48) \\
\bottomrule
\end{tabular}%
}%
}
\end{table}

\begin{table}[h!]
\caption{Prediction MSE (standard error) for different models and dimensions (\textbf{all entries multiplied by $10^2$}), as in Table \ref{tab:model_l2}, but with true function $f_2(\bx) = \|\bx\|_4^4$. }
\label{tab:model_l4}
\scriptsize
\makebox[\textwidth][c]{%
\resizebox{\textwidth}{!}{%
\begin{tabular}{llcccccc}
\toprule
\multicolumn{2}{c}{Network $\backslash$ $d$} & 1 & 2 & 5 & 10 & 20 & 50 \\
\midrule
\multirow{5}{*}{\rotatebox[origin=c]{90}{MLP}} & $(64\times 2)$ & \textcolor[HTML]{f85d42}{\textbf{0.26}} (0.04) & \textcolor[HTML]{f85d42}{\textbf{0.97}} (0.07) & \textcolor[HTML]{f85d42}{\textbf{11.25}} (0.39) & \textcolor[HTML]{f85d42}{\textbf{38.94}} (1.06) & \textcolor[HTML]{f85d42}{\textbf{100.21}} (2.99) & 507.67 (18.39) \\
 & $(64\times 4)$ & 0.33 (0.06) & 1.78 (0.14) & 30.17 (1.80) & 80.80 (2.22) & 131.05 (3.62) & 467.75 (9.59) \\
 & $(64\times 8)$ & 0.42 (0.06) & 1.26 (0.09) & 26.35 (1.33) & 71.66 (4.67) & 123.90 (4.56) & 421.46 (11.76) \\
 & $(64\times 16)$ & 1.27 (0.64) & 5.21 (1.90) & 20.05 (3.95) & 48.18 (5.59) & 115.96 (10.32) & 364.76 (14.61) \\
 & $(64\times 32)$ & 7.16 (0.08) & 14.36 (0.25) & 35.79 (0.41) & 70.67 (0.92) & 140.48 (1.96) & \textcolor[HTML]{f85d42}{\textbf{351.78}} (3.28) \\
\midrule
\multirow{5}{*}{\rotatebox[origin=c]{90}{ICNN}} & $(64\times 2)$ & \textcolor[HTML]{F3A11C}{\textbf{0.19}} (0.04) & \textcolor[HTML]{F3A11C}{\underline{\textbf{0.54}}} (0.04) & \textcolor[HTML]{F3A11C}{\textbf{4.29}} (0.08) & \textcolor[HTML]{F3A11C}{\textbf{13.47}} (0.37) & \textcolor[HTML]{F3A11C}{\textbf{39.75}} (1.00) & 227.44 (3.92) \\
 & $(64\times 4)$ & 1.71 (0.85) & 3.28 (1.72) & 7.69 (3.12) & 19.61 (5.91) & 40.47 (1.54) & \textcolor[HTML]{F3A11C}{\textbf{191.78}} (17.83) \\
 & $(64\times 8)$ & 7.17 (0.08) & 13.21 (1.29) & 32.71 (3.02) & 71.09 (1.00) & 115.52 (12.01) & 330.02 (17.17) \\
 & $(64\times 16)$ & 7.23 (0.09) & 14.41 (0.26) & 35.94 (0.40) & 71.04 (0.98) & 141.49 (2.04) & 355.71 (3.44) \\
 & $(64\times 32)$ & 7.17 (0.08) & 14.38 (0.25) & 35.94 (0.39) & 71.04 (0.99) & 141.51 (2.04) & $>10^{11}$ ($\sim 10^{11}$) \\
\midrule
\multirow{5}{*}{\rotatebox[origin=c]{90}{GMN}} & $(48\times 2)$ & 0.30 (0.06) & 0.72 (0.07) & 4.68 (0.14) & 13.63 (0.29) & \textcolor[HTML]{b842f8}{\textbf{37.48}} (0.57) & 233.20 (3.87) \\
 & $(48\times 4)$ & 0.20 (0.04) & \textcolor[HTML]{b842f8}{\underline{\textbf{0.54}}} (0.04) & \textcolor[HTML]{b842f8}{\textbf{4.20}} (0.07) & \textcolor[HTML]{b842f8}{\textbf{12.58}} (0.35) & 38.93 (0.45) & \textcolor[HTML]{b842f8}{\textbf{214.87}} (3.54) \\
 & $(48\times 8)$ & \textcolor[HTML]{b842f8}{\underline{\textbf{0.18}}} (0.03) & 0.63 (0.05) & 4.98 (0.23) & 14.17 (0.47) & 48.43 (3.08) & 230.92 (7.00) \\
 & $(48\times 16)$ & 1.88 (0.47) & 5.36 (0.73) & 13.68 (0.74) & 32.96 (0.69) & 81.86 (1.45) & 283.81 (6.02) \\
 & $(48\times 32)$ & $>10^{4}$ ($\sim 10^{3}$) & $>10^{4}$ ($\sim 10^{3}$) & $>10^{4}$ ($\sim 10^{4}$) & $>10^{5}$ ($\sim 10^{4}$) & $>10^{4}$ ($\sim 10^{3}$) & $>10^{4}$ ($\sim 10^{4}$) \\
\midrule
\multirow{5}{*}{\rotatebox[origin=c]{90}{HyCNN}} & $(48\times 2)$ & 0.31 (0.04) & 1.08 (0.13) & 5.65 (0.10) & 13.25 (0.31) & \underline{34.56} (0.46) & 181.32 (3.52) \\
 & $(48\times 4)$ & 0.24 (0.03) & 0.68 (0.06) & \underline{4.10} (0.06) & \textcolor[HTML]{0080DB}{\underline{\textbf{12.07}}} (0.49) & \textcolor[HTML]{0080DB}{\underline{\textbf{32.07}}} (0.67) & \underline{141.57} (2.17) \\
 & $(48\times 8)$ & 0.21 (0.04) & \textcolor[HTML]{0080DB}{\underline{\textbf{0.57}}} (0.04) & \underline{4.02} (0.07) & \underline{12.14} (0.47) & \underline{32.32} (0.47) & \underline{111.93} (2.13) \\
 & $(48\times 16)$ & \underline{0.19} (0.05) & 0.63 (0.04) & \textcolor[HTML]{0080DB}{\underline{\textbf{3.99}}} (0.05) & \underline{12.50} (0.45) & 35.59 (0.76) & \textcolor[HTML]{0080DB}{\underline{\textbf{95.91}}} (1.51) \\
 & $(48\times 32)$ & \textcolor[HTML]{0080DB}{\underline{\textbf{0.15}}} (0.03) & 0.65 (0.05) & 5.15 (1.36) & 14.64 (1.69) & 49.86 (18.36) & 172.35 (47.44) \\
\bottomrule
\end{tabular}%
}%
}
\end{table}

\begin{table}[h!]
\caption{Prediction MSE (standard error) for different models and dimensions (\textbf{all entries multiplied by $10^2$}),  as in Table \ref{tab:model_l2}, but with true function $f_3(\bx) = \|\bx\|^2_2 + 0.25\sin(20\|\bx\|_2)$ and $\sigma = 0$. }
\label{tab:model_quadratic_sinusoidal}
\scriptsize
\makebox[\textwidth][c]{%
\resizebox{\textwidth}{!}{%
\begin{tabular}{llcccccc}
\toprule
\multicolumn{2}{c}{Network $\backslash$ $d$} & 1 & 2 & 5 & 10 & 20 & 50 \\
\midrule
\multirow{5}{*}{\rotatebox[origin=c]{90}{MLP}} & $(64\times 2)$ & \underline{0.47} (0.08) & \underline{2.75} (0.05) & 3.58 (0.08) & 6.65 (0.13) & 28.30 (0.74) & 402.54 (17.86) \\
 & $(64\times 4)$ & \textcolor[HTML]{f85d42}{\underline{\textbf{0.09}}} (0.01) & \textcolor[HTML]{f85d42}{\underline{\textbf{0.46}}} (0.13) & \underline{2.02} (0.17) & 6.75 (0.25) & 28.84 (0.96) & 365.77 (10.40) \\
 & $(64\times 8)$ & \underline{0.23} (0.08) & \underline{0.48} (0.09) & \textcolor[HTML]{f85d42}{\underline{\textbf{1.20}}} (0.22) & \textcolor[HTML]{f85d42}{\textbf{5.19}} (0.32) & \textcolor[HTML]{f85d42}{\textbf{26.41}} (0.49) & \textcolor[HTML]{f85d42}{\textbf{309.46}} (13.46) \\
 & $(64\times 16)$ & 1.21 (0.20) & 4.96 (2.44) & 19.85 (7.01) & 31.01 (12.43) & 108.51 (23.51) & 326.82 (26.60) \\
 & $(64\times 32)$ & 10.89 (0.11) & 21.42 (0.31) & 47.31 (0.51) & 90.60 (1.13) & 177.66 (2.07) & 442.88 (3.95) \\
\midrule
\multirow{5}{*}{\rotatebox[origin=c]{90}{ICNN}} & $(64\times 2)$ & \textcolor[HTML]{F3A11C}{\textbf{2.74}} (0.04) & \textcolor[HTML]{F3A11C}{\textbf{3.05}} (0.04) & \textcolor[HTML]{F3A11C}{\textbf{3.59}} (0.05) & \textcolor[HTML]{F3A11C}{\textbf{6.04}} (0.07) & 21.53 (0.48) & 184.15 (3.06) \\
 & $(64\times 4)$ & 4.14 (0.98) & 6.53 (2.30) & 3.77 (0.24) & 14.18 (8.54) & \textcolor[HTML]{F3A11C}{\textbf{14.75}} (0.16) & \textcolor[HTML]{F3A11C}{\textbf{155.31}} (30.61) \\
 & $(64\times 8)$ & 10.89 (0.11) & 17.97 (2.40) & 42.78 (4.14) & 90.79 (1.14) & 127.42 (23.24) & 393.39 (36.01) \\
 & $(64\times 16)$ & 10.89 (0.11) & 21.45 (0.32) & 47.39 (0.51) & 90.75 (1.14) & 178.51 (2.17) & 446.64 (3.80) \\
 & $(64\times 32)$ & 10.89 (0.11) & 21.45 (0.32) & 47.39 (0.51) & 90.79 (1.14) & 178.53 (2.17) & $>10^{11}$ ($\sim 10^{11}$) \\
\midrule
\multirow{5}{*}{\rotatebox[origin=c]{90}{GMN}} & $(48\times 2)$ & 2.82 (0.02) & 3.03 (0.03) & 3.40 (0.05) & 5.37 (0.08) & 20.13 (0.34) & 192.82 (2.13) \\
 & $(48\times 4)$ & 2.63 (0.04) & \textcolor[HTML]{b842f8}{\textbf{2.90}} (0.02) & \textcolor[HTML]{b842f8}{\textbf{3.35}} (0.05) & \textcolor[HTML]{b842f8}{\textbf{4.88}} (0.04) & \textcolor[HTML]{b842f8}{\textbf{18.22}} (0.34) & \textcolor[HTML]{b842f8}{\textbf{170.41}} (2.61) \\
 & $(48\times 8)$ & \textcolor[HTML]{b842f8}{\textbf{2.59}} (0.02) & 2.91 (0.03) & 3.48 (0.06) & 5.00 (0.08) & 22.75 (2.21) & 204.13 (10.48) \\
 & $(48\times 16)$ & 2.86 (0.05) & 4.49 (0.42) & 12.11 (1.38) & 34.57 (0.95) & 85.37 (1.15) & 315.88 (3.38) \\
 & $(48\times 32)$ & 309.89 (68.29) & 432.21 (100.78) & $>10^{4}$ ($\sim 10^{3}$) & $>10^{4}$ ($\sim 10^{4}$) & $>10^{4}$ ($\sim 10^{3}$) & $>10^{4}$ ($\sim 10^{4}$) \\
\midrule
\multirow{5}{*}{\rotatebox[origin=c]{90}{HyCNN}} & $(48\times 2)$ & 2.82 (0.01) & 3.10 (0.02) & 3.55 (0.04) & 5.86 (0.09) & 18.33 (0.22) & 139.92 (2.65) \\
 & $(48\times 4)$ & 2.77 (0.03) & 2.98 (0.04) & 3.36 (0.05) & \underline{4.62} (0.06) & \underline{11.84} (0.15) & \underline{106.27} (1.92) \\
 & $(48\times 8)$ & 2.67 (0.02) & 2.93 (0.02) & 3.30 (0.03) & \underline{4.14} (0.07) & \underline{8.34} (0.16) & \underline{94.64} (1.75) \\
 & $(48\times 16)$ & 2.59 (0.03) & \textcolor[HTML]{0080DB}{\textbf{2.89}} (0.02) & \textcolor[HTML]{0080DB}{\underline{\textbf{3.27}}} (0.04) & \textcolor[HTML]{0080DB}{\underline{\textbf{3.85}}} (0.08) & \textcolor[HTML]{0080DB}{\underline{\textbf{6.11}}} (0.23) & \textcolor[HTML]{0080DB}{\underline{\textbf{71.91}}} (2.27) \\
 & $(48\times 32)$ & \textcolor[HTML]{0080DB}{\textbf{2.56}} (0.02) & 2.97 (0.03) & 3.69 (0.37) & 5.41 (0.75) & 18.48 (9.83) & 127.12 (40.39) \\
\bottomrule
\end{tabular}%
}%
}
\end{table}

\begin{table}[h!]
\caption{Prediction MSE (standard error) for different models and dimensions (\textbf{all entries multiplied by $10^2$}),  as in Table \ref{tab:model_l2}, but with true function $f_4(\bx) = \|\bx\|_1$. }
\label{tab:model_l1}
\scriptsize
\makebox[\textwidth][c]{%
\resizebox{\textwidth}{!}{%
\begin{tabular}{llcccccc}
\toprule
\multicolumn{2}{c}{Network $\backslash$ $d$} & 1 & 2 & 5 & 10 & 20 & 50 \\
\midrule
\multirow{5}{*}{\rotatebox[origin=c]{90}{MLP}} & $(64\times 2)$ & \textcolor[HTML]{f85d42}{\textbf{0.27}} (0.05) & \textcolor[HTML]{f85d42}{\textbf{1.13}} (0.10) & \textcolor[HTML]{f85d42}{\textbf{10.17}} (0.35) & \textcolor[HTML]{f85d42}{\textbf{37.13}} (1.85) & \textcolor[HTML]{f85d42}{\textbf{99.88}} (1.09) & 548.76 (13.52) \\
 & $(64\times 4)$ & 0.28 (0.04) & 1.75 (0.14) & 30.22 (1.71) & 82.04 (3.08) & 133.30 (4.17) & 496.88 (9.66) \\
 & $(64\times 8)$ & 0.42 (0.06) & 1.57 (0.13) & 23.90 (1.51) & 66.98 (3.35) & 121.82 (2.87) & 456.16 (8.15) \\
 & $(64\times 16)$ & 1.32 (0.72) & 6.04 (2.20) & 25.98 (4.96) & 70.23 (5.71) & 140.03 (9.92) & 414.02 (13.79) \\
 & $(64\times 32)$ & 8.33 (0.05) & 16.51 (0.17) & 41.58 (0.37) & 82.60 (1.04) & 162.61 (1.65) & \textcolor[HTML]{f85d42}{\textbf{413.48}} (3.96) \\
\midrule
\multirow{5}{*}{\rotatebox[origin=c]{90}{ICNN}} & $(64\times 2)$ & \textcolor[HTML]{F3A11C}{\textbf{0.17}} (0.04) & \textcolor[HTML]{F3A11C}{\textbf{0.52}} (0.04) & \textcolor[HTML]{F3A11C}{\textbf{4.15}} (0.11) & \textcolor[HTML]{F3A11C}{\textbf{13.50}} (0.48) & 41.78 (0.50) & 243.57 (3.55) \\
 & $(64\times 4)$ & 1.80 (1.03) & 3.56 (1.97) & 8.25 (3.60) & 20.67 (6.79) & \textcolor[HTML]{F3A11C}{\textbf{39.53}} (1.24) & \textcolor[HTML]{F3A11C}{\textbf{212.75}} (21.52) \\
 & $(64\times 8)$ & 8.34 (0.04) & 15.03 (1.54) & 37.80 (3.56) & 82.86 (1.10) & 131.26 (14.51) & 385.28 (23.60) \\
 & $(64\times 16)$ & 8.41 (0.07) & 16.57 (0.19) & 41.67 (0.35) & 82.80 (1.10) & 163.75 (1.74) & 419.16 (3.81) \\
 & $(64\times 32)$ & 8.34 (0.04) & 16.55 (0.18) & 41.67 (0.35) & 82.82 (1.11) & 163.75 (1.78) & $>10^{11}$ ($\sim 10^{11}$) \\
\midrule
\multirow{5}{*}{\rotatebox[origin=c]{90}{GMN}} & $(48\times 2)$ & \textcolor[HTML]{b842f8}{\underline{\textbf{0.09}}} (0.02) & \textcolor[HTML]{b842f8}{\underline{\textbf{0.24}}} (0.02) & \textcolor[HTML]{b842f8}{\underline{\textbf{1.88}}} (0.07) & \textcolor[HTML]{b842f8}{\underline{\textbf{8.51}}} (0.27) & 37.98 (0.51) & 255.17 (3.74) \\
 & $(48\times 4)$ & \underline{0.11} (0.03) & \underline{0.32} (0.03) & \underline{2.33} (0.09) & \underline{9.63} (0.42) & \textcolor[HTML]{b842f8}{\textbf{37.01}} (0.75) & \textcolor[HTML]{b842f8}{\textbf{226.24}} (2.67) \\
 & $(48\times 8)$ & \underline{0.11} (0.03) & 0.46 (0.04) & 3.30 (0.37) & 11.98 (0.47) & 50.04 (4.32) & 256.49 (7.62) \\
 & $(48\times 16)$ & 1.00 (0.31) & 5.15 (0.64) & 16.36 (0.77) & 39.91 (1.18) & 95.05 (1.09) & 322.96 (2.98) \\
 & $(48\times 32)$ & $>10^{4}$ ($\sim 10^{3}$) & $>10^{4}$ ($\sim 10^{3}$) & $>10^{4}$ ($\sim 10^{4}$) & $>10^{5}$ ($\sim 10^{4}$) & $>10^{4}$ ($\sim 10^{3}$) & $>10^{4}$ ($\sim 10^{4}$) \\
\midrule
\multirow{5}{*}{\rotatebox[origin=c]{90}{HyCNN}} & $(48\times 2)$ & \textcolor[HTML]{0080DB}{\textbf{0.12}} (0.03) & \textcolor[HTML]{0080DB}{\underline{\textbf{0.32}}} (0.03) & \textcolor[HTML]{0080DB}{\underline{\textbf{2.29}}} (0.08) & \textcolor[HTML]{0080DB}{\underline{\textbf{10.36}}} (0.38) & 34.56 (0.60) & 191.70 (2.14) \\
 & $(48\times 4)$ & 0.13 (0.03) & 0.36 (0.03) & 2.99 (0.13) & 10.66 (0.30) & \textcolor[HTML]{0080DB}{\underline{\textbf{31.53}}} (0.47) & \underline{147.17} (2.06) \\
 & $(48\times 8)$ & 0.13 (0.03) & 0.44 (0.05) & 3.31 (0.09) & 11.13 (0.41) & \underline{32.31} (0.71) & \underline{111.10} (1.95) \\
 & $(48\times 16)$ & 0.20 (0.03) & 0.51 (0.03) & 3.65 (0.10) & 11.73 (0.43) & \underline{33.19} (0.68) & \textcolor[HTML]{0080DB}{\underline{\textbf{96.46}}} (1.77) \\
 & $(48\times 32)$ & 0.28 (0.06) & 0.76 (0.07) & 5.02 (1.44) & 13.70 (1.53) & 48.17 (17.41) & 185.06 (51.63) \\
\bottomrule
\end{tabular}%
}%
}
\end{table}

\begin{table}[h!]
\caption{Prediction MSE (standard error) for different models and dimensions (\textbf{all entries multiplied by $10^2$}),  as in Table \ref{tab:model_l2}, but with true function $f_5(\bx) = \max(\|\bx-\boldsymbol{\mu}\|^2, \|\bx+ \boldsymbol{\mu}\|^2)$ for a randomly drawn $\boldsymbol{\mu}$ with entries from $\calN(0,1/d)$ and  $\sigma = 1$.}
\label{tab:model_piecewise_quadratic}
\scriptsize
\makebox[\textwidth][c]{%
\resizebox{\textwidth}{!}{%
\begin{tabular}{llcccccc}
\toprule
\multicolumn{2}{c}{Network $\backslash$ $d$} & 1 & 2 & 5 & 10 & 20 & 50 \\
\midrule
\multirow{5}{*}{\rotatebox[origin=c]{90}{MLP}} & $(64\times 2)$ & 0.69 (0.22) & \textcolor[HTML]{f85d42}{\textbf{1.47}} (0.25) & \textcolor[HTML]{f85d42}{\textbf{8.36}} (0.28) & \textcolor[HTML]{f85d42}{\textbf{28.95}} (0.81) & \textcolor[HTML]{f85d42}{\textbf{89.41}} (2.08) & 572.06 (22.30) \\
 & $(64\times 4)$ & 0.82 (0.22) & 1.69 (0.14) & 21.19 (1.48) & 64.24 (2.92) & 113.19 (2.49) & 501.88 (10.62) \\
 & $(64\times 8)$ & \textcolor[HTML]{f85d42}{\textbf{0.67}} (0.11) & 2.25 (0.27) & 21.84 (1.21) & 56.80 (1.89) & 106.91 (3.94) & \textcolor[HTML]{f85d42}{\textbf{451.58}} (13.73) \\
 & $(64\times 16)$ & 0.95 (0.10) & 24.61 (14.34) & 26.81 (11.78) & 69.72 (18.03) & 167.82 (29.04) & 475.73 (20.16) \\
 & $(64\times 32)$ & 75.51 (19.85) & 98.36 (25.56) & 131.75 (13.79) & 164.71 (14.55) & 247.58 (8.99) & 509.14 (5.62) \\
\midrule
\multirow{5}{*}{\rotatebox[origin=c]{90}{ICNN}} & $(64\times 2)$ & \textcolor[HTML]{F3A11C}{\textbf{0.27}} (0.05) & \textcolor[HTML]{F3A11C}{\textbf{0.53}} (0.05) & \textcolor[HTML]{F3A11C}{\textbf{2.90}} (0.09) & \textcolor[HTML]{F3A11C}{\textbf{9.67}} (0.31) & 36.14 (0.72) & 237.50 (5.63) \\
 & $(64\times 4)$ & 22.64 (16.19) & 54.22 (30.72) & 4.50 (1.27) & 36.47 (25.71) & \textcolor[HTML]{F3A11C}{\textbf{32.66}} (1.54) & \textcolor[HTML]{F3A11C}{\textbf{205.72}} (35.20) \\
 & $(64\times 8)$ & 75.51 (19.85) & 84.54 (26.75) & 117.35 (16.28) & 165.32 (14.59) & 181.35 (29.24) & 458.21 (37.49) \\
 & $(64\times 16)$ & 75.56 (19.82) & 98.48 (25.63) & 131.81 (13.78) & 165.09 (14.57) & 249.28 (9.09) & 515.78 (5.77) \\
 & $(64\times 32)$ & 75.51 (19.85) & 98.48 (25.63) & 131.75 (13.77) & 165.13 (14.57) & 249.29 (9.12) & $>10^{11}$ ($\sim 10^{11}$) \\
\midrule
\multirow{5}{*}{\rotatebox[origin=c]{90}{GMN}} & $(48\times 2)$ & \underline{0.17} (0.03) & 0.53 (0.05) & 2.62 (0.08) & 9.04 (0.27) & 32.70 (0.94) & 254.03 (3.53) \\
 & $(48\times 4)$ & \underline{0.13} (0.03) & \textcolor[HTML]{b842f8}{\underline{\textbf{0.40}}} (0.03) & \textcolor[HTML]{b842f8}{\underline{\textbf{2.27}}} (0.07) & 8.31 (0.30) & \textcolor[HTML]{b842f8}{\textbf{31.29}} (0.52) & \textcolor[HTML]{b842f8}{\textbf{227.87}} (5.30) \\
 & $(48\times 8)$ & \textcolor[HTML]{b842f8}{\underline{\textbf{0.13}}} (0.03) & \underline{0.42} (0.03) & 2.64 (0.18) & \textcolor[HTML]{b842f8}{\textbf{8.28}} (0.23) & 40.57 (4.17) & 257.12 (10.49) \\
 & $(48\times 16)$ & 1.44 (0.59) & 10.72 (1.70) & 29.95 (4.30) & 55.89 (1.44) & 117.14 (2.70) & 363.39 (4.95) \\
 & $(48\times 32)$ & $>10^{4}$ ($\sim 10^{3}$) & $>10^{4}$ ($\sim 10^{3}$) & $>10^{4}$ ($\sim 10^{4}$) & $>10^{5}$ ($\sim 10^{5}$) & $>10^{4}$ ($\sim 10^{3}$) & $>10^{4}$ ($\sim 10^{4}$) \\
\midrule
\multirow{5}{*}{\rotatebox[origin=c]{90}{HyCNN}} & $(48\times 2)$ & 0.20 (0.03) & 0.61 (0.06) & 2.92 (0.10) & 8.58 (0.22) & 28.40 (0.66) & 184.15 (2.67) \\
 & $(48\times 4)$ & \textcolor[HTML]{0080DB}{\textbf{0.18}} (0.03) & 0.50 (0.03) & \underline{2.33} (0.07) & \textcolor[HTML]{0080DB}{\underline{\textbf{7.33}}} (0.19) & \underline{23.91} (0.59) & \underline{131.53} (1.50) \\
 & $(48\times 8)$ & 0.27 (0.06) & \textcolor[HTML]{0080DB}{\underline{\textbf{0.44}}} (0.03) & 2.37 (0.11) & \underline{7.51} (0.31) & \textcolor[HTML]{0080DB}{\underline{\textbf{22.16}}} (0.60) & \underline{96.75} (1.75) \\
 & $(48\times 16)$ & 0.20 (0.04) & 0.45 (0.03) & \textcolor[HTML]{0080DB}{\underline{\textbf{2.08}}} (0.08) & \underline{7.56} (0.29) & \underline{22.32} (0.48) & \textcolor[HTML]{0080DB}{\underline{\textbf{74.75}}} (1.50) \\
 & $(48\times 32)$ & 0.24 (0.05) & 0.51 (0.08) & 3.67 (1.67) & 11.60 (2.64) & 42.20 (20.35) & 164.47 (52.20) \\
\bottomrule
\end{tabular}%
}%
}
\end{table}

\begin{table}[h!]
\caption{Prediction MSE (standard error) for different models and dimensions (\textbf{all entries multiplied by $10^2$}),  as in Table \ref{tab:model_l2}, but with true function $f_6(\bx) = \exp(\|\bx\|_1/\sqrt{d})$ with $\sigma = 1$.}
\label{tab:model_exp}
\scriptsize
\makebox[\textwidth][c]{%
\resizebox{\textwidth}{!}{%
\begin{tabular}{llcccccc}
\toprule
\multicolumn{2}{c}{Network $\backslash$ $d$} & 1 & 2 & 5 & 10 & 20 & 50 \\
\midrule
\multirow{5}{*}{\rotatebox[origin=c]{90}{MLP}} & $(64\times 2)$ & \textcolor[HTML]{f85d42}{\textbf{0.36}} (0.06) & \textcolor[HTML]{f85d42}{\textbf{1.11}} (0.08) & \textcolor[HTML]{f85d42}{\textbf{9.94}} (0.56) & \textcolor[HTML]{f85d42}{\textbf{47.95}} (1.94) & \textcolor[HTML]{f85d42}{\textbf{250.08}} (3.94) & 11471.25 (349.59) \\
 & $(64\times 4)$ & 0.41 (0.04) & 2.28 (0.26) & 26.33 (1.52) & 91.29 (2.10) & 285.11 (5.21) & 10702.44 (173.90) \\
 & $(64\times 8)$ & 0.58 (0.10) & 2.22 (0.30) & 28.26 (1.60) & 89.18 (4.35) & 259.27 (7.61) & \textcolor[HTML]{f85d42}{\textbf{9582.89}} (290.28) \\
 & $(64\times 16)$ & 3.14 (2.19) & 9.17 (4.70) & 41.87 (11.90) & 125.34 (24.92) & 573.28 (87.54) & 10458.38 (537.45) \\
 & $(64\times 32)$ & 24.17 (0.16) & 37.46 (0.48) & 86.92 (1.10) & 218.91 (3.37) & 795.02 (9.97) & 11078.56 (134.77) \\
\midrule
\multirow{5}{*}{\rotatebox[origin=c]{90}{ICNN}} & $(64\times 2)$ & \textcolor[HTML]{F3A11C}{\textbf{0.23}} (0.04) & \textcolor[HTML]{F3A11C}{\textbf{0.60}} (0.04) & \textcolor[HTML]{F3A11C}{\textbf{5.86}} (0.25) & \textcolor[HTML]{F3A11C}{\textbf{27.52}} (0.62) & 150.64 (2.35) & 5451.47 (46.74) \\
 & $(64\times 4)$ & 4.99 (3.00) & 7.77 (4.53) & 6.97 (0.83) & 46.47 (19.13) & \textcolor[HTML]{F3A11C}{\textbf{126.86}} (1.19) & \textcolor[HTML]{F3A11C}{\textbf{4639.58}} (655.27) \\
 & $(64\times 8)$ & 24.18 (0.15) & 34.14 (3.51) & 78.38 (7.75) & 219.32 (3.48) & 591.78 (92.96) & 10395.42 (680.92) \\
 & $(64\times 16)$ & 24.24 (0.15) & 37.55 (0.50) & 87.05 (1.09) & 219.20 (3.48) & 799.24 (10.20) & 11201.37 (123.12) \\
 & $(64\times 32)$ & 24.18 (0.15) & 37.51 (0.49) & 87.05 (1.10) & 219.29 (3.45) & 799.24 (10.42) & $>10^{12}$ ($\sim 10^{12}$) \\
\midrule
\multirow{5}{*}{\rotatebox[origin=c]{90}{GMN}} & $(48\times 2)$ & \underline{0.15} (0.03) & \underline{0.41} (0.04) & \textcolor[HTML]{b842f8}{\underline{\textbf{2.44}}} (0.09) & \textcolor[HTML]{b842f8}{\underline{\textbf{12.37}}} (0.50) & 157.98 (3.14) & 5904.85 (68.08) \\
 & $(48\times 4)$ & \underline{0.12} (0.03) & \textcolor[HTML]{b842f8}{\underline{\textbf{0.41}}} (0.02) & \underline{2.88} (0.09) & \underline{13.27} (0.77) & \textcolor[HTML]{b842f8}{\textbf{146.12}} (2.97) & \textcolor[HTML]{b842f8}{\textbf{5225.13}} (78.06) \\
 & $(48\times 8)$ & \textcolor[HTML]{b842f8}{\underline{\textbf{0.11}}} (0.03) & 0.51 (0.03) & \underline{3.95} (0.43) & \underline{16.65} (0.63) & 161.02 (9.06) & 6141.55 (241.40) \\
 & $(48\times 16)$ & 1.24 (0.36) & 7.54 (1.24) & 24.80 (1.57) & 80.59 (2.44) & 364.12 (4.04) & 8440.63 (86.10) \\
 & $(48\times 32)$ & $>10^3$ ($\sim 10^3$) & $>10^3$ ($\sim 10^3$) & $>10^3$ ($\sim 10^3$) & $>10^4$ ($\sim 10^4$) & $>10^3$ ($\sim 10^3$) & $>10^{6}$ ($\sim 10^{5}$) \\
\midrule
\multirow{5}{*}{\rotatebox[origin=c]{90}{HyCNN}} & $(48\times 2)$ & 0.20 (0.03) & 0.63 (0.10) & 4.15 (0.28) & 25.66 (1.32) & 165.49 (2.65) & 4624.00 (49.79) \\
 & $(48\times 4)$ & 0.17 (0.03) & \textcolor[HTML]{0080DB}{\underline{\textbf{0.50}}} (0.04) & \textcolor[HTML]{0080DB}{\textbf{4.05}} (0.26) & \textcolor[HTML]{0080DB}{\textbf{21.42}} (0.40) & \underline{116.96} (2.16) & \underline{3461.44} (56.65) \\
 & $(48\times 8)$ & \textcolor[HTML]{0080DB}{\textbf{0.17}} (0.03) & 0.55 (0.04) & 4.74 (0.22) & 23.49 (0.71) & \underline{99.99} (1.27) & \underline{3108.10} (48.81) \\
 & $(48\times 16)$ & 0.17 (0.03) & 0.57 (0.04) & 5.35 (0.18) & 23.80 (0.86) & \textcolor[HTML]{0080DB}{\underline{\textbf{93.95}}} (1.45) & \textcolor[HTML]{0080DB}{\underline{\textbf{2577.29}}} (48.51) \\
 & $(48\times 32)$ & 0.21 (0.05) & 0.76 (0.08) & 7.31 (1.98) & 27.55 (3.13) & 186.48 (85.72) & 4475.24 (1090.57) \\
\bottomrule
\end{tabular}%
}%
}
\end{table}

\clearpage
\subsubsection{Interpolation across different dimensions for sample size $n = 5000$}\label{app:interpolation}

We additionally consider the noiseless interpolation setting ($\sigma = 0$) for the same collection of functions (a--f) across different dimensions $d \in \{1,2,5,10,20,50\}$ for a fixed sample size of $n = 5000$. The noiseless results are reported in \Cref{tab:model_noiseless_l2} ($f_1$), \Cref{tab:model_noiseless_l4} ($f_2$), \Cref{tab:model_noiseless_l1} ($f_4$), \Cref{tab:model_noiseless_piecewise_quadratic} ($f_5$), and \Cref{tab:model_noiseless_exp} ($f_6$). For $f_3$ the prediction MSE is already reported in the noiseless setting in \Cref{tab:model_quadratic_sinusoidal}. The observations are consistent with the noisy setting: HyCNNs outperform all other architectures, with the advantage being most pronounced in higher dimensions. Notably, in the noiseless setting, MLPs perform significantly better than ICNNs, which aligns with the theoretical considerations in \Cref{sec:approximation}, where we show that ReLU ICNNs suffer from limited expressivity due to a merely linear growth of linear pieces with depth.

\begin{table}[h!]
\caption{Prediction MSE and standard error (SE) by network and dimension (\textbf{all entries multiplied by $10^2$}; rounded to 2 decimal places) for learning the function $f_1(\bx) = \|\bx\|_2^2$ in the \emph{noiseless} setting ($\sigma = 0$). All other settings are as in \Cref{tab:model_l2}.}
\label{tab:model_noiseless_l2}
\scriptsize
\makebox[\textwidth][c]{%
\resizebox{\textwidth}{!}{%
\begin{tabular}{llcccccc}
\toprule
\multicolumn{2}{c}{Network $\backslash$ $d$} & 1 & 2 & 5 & 10 & 20 & 50 \\
\midrule
\multirow{5}{*}{\rotatebox[origin=c]{90}{MLP}} & $(64\times 2)$ & $1.6\cdot 10^{-3}$ ($5.3\cdot 10^{-4}$) & \textcolor[HTML]{f85d42}{\textbf{0.01}} ($2.3\cdot 10^{-3}$) & \textcolor[HTML]{f85d42}{\textbf{0.20}} (0.01) & 3.00 (0.09) & 23.83 (0.69) & 395.98 (14.68) \\
 & $(64\times 4)$ & \underline{$3.3\cdot 10^{-4}$} ($2.1\cdot 10^{-4}$) & 0.02 (0.02) & 0.21 (0.03) & \textcolor[HTML]{f85d42}{\textbf{2.46}} (0.17) & \textcolor[HTML]{f85d42}{\textbf{22.85}} (0.51) & 352.80 (6.24) \\
 & $(64\times 8)$ & \textcolor[HTML]{f85d42}{\underline{$\mathbf{1.8\cdot 10^{-4}}$}} ($1.0\cdot 10^{-4}$) & 0.01 ($2.3\cdot 10^{-3}$) & 0.34 (0.12) & 2.49 (0.27) & 40.07 (13.62) & \textcolor[HTML]{f85d42}{\textbf{319.21}} (12.04) \\
 & $(64\times 16)$ & 0.91 (0.85) & 3.51 (2.20) & 14.56 (6.22) & 10.84 (7.89) & 67.92 (21.44) & 381.51 (25.62) \\
 & $(64\times 32)$ & 8.87 (0.07) & 17.71 (0.23) & 44.39 (0.41) & 88.11 (1.19) & 174.49 (1.96) & 440.28 (4.02) \\
\midrule
\multirow{5}{*}{\rotatebox[origin=c]{90}{ICNN}} & $(64\times 2)$ & \textcolor[HTML]{F3A11C}{$\mathbf{1.8\cdot 10^{-3}}$} ($2.4\cdot 10^{-4}$) & \textcolor[HTML]{F3A11C}{\textbf{0.01}} ($1.2\cdot 10^{-3}$) & \textcolor[HTML]{F3A11C}{\textbf{0.27}} (0.01) & \textcolor[HTML]{F3A11C}{\textbf{2.49}} (0.05) & 17.63 (0.29) & 177.93 (2.46) \\
 & $(64\times 4)$ & 1.77 (1.11) & 3.48 (2.20) & 0.43 (0.22) & 10.66 (8.55) & \textcolor[HTML]{F3A11C}{\textbf{10.97}} (0.18) & \textcolor[HTML]{F3A11C}{\textbf{151.34}} (30.59) \\
 & $(64\times 8)$ & 8.87 (0.07) & 14.35 (2.28) & 39.92 (4.19) & 88.30 (1.20) & 124.57 (23.15) & 389.14 (36.78) \\
 & $(64\times 16)$ & 8.87 (0.06) & 17.73 (0.23) & 44.46 (0.40) & 88.27 (1.21) & 175.34 (2.06) & 443.96 (3.79) \\
 & $(64\times 32)$ & 8.87 (0.07) & 17.73 (0.23) & 44.46 (0.40) & 88.30 (1.20) & 175.37 (2.07) & $>10^{11}$ ($\sim 10^{11}$) \\
\midrule
\multirow{5}{*}{\rotatebox[origin=c]{90}{GMN}} & $(48\times 2)$ & $1.8\cdot 10^{-3}$ ($5.8\cdot 10^{-4}$) & 0.01 ($7.7\cdot 10^{-4}$) & 0.14 (0.01) & 1.69 (0.04) & 16.64 (0.31) & 188.11 (2.06) \\
 & $(48\times 4)$ & $1.5\cdot 10^{-3}$ ($5.7\cdot 10^{-4}$) & \textcolor[HTML]{b842f8}{\textbf{0.01}} ($6.2\cdot 10^{-4}$) & \textcolor[HTML]{b842f8}{\underline{\textbf{0.11}}} (0.01) & \textcolor[HTML]{b842f8}{\textbf{1.37}} (0.06) & \textcolor[HTML]{b842f8}{\textbf{14.57}} (0.39) & \textcolor[HTML]{b842f8}{\textbf{163.24}} (3.15) \\
 & $(48\times 8)$ & \textcolor[HTML]{b842f8}{$\mathbf{1.2\cdot 10^{-3}}$} ($1.4\cdot 10^{-4}$) & 0.01 ($6.1\cdot 10^{-4}$) & 0.18 (0.03) & 1.53 (0.05) & 18.81 (2.03) & 201.48 (10.95) \\
 & $(48\times 16)$ & 0.01 ($4.3\cdot 10^{-3}$) & 1.01 (0.41) & 8.21 (1.48) & 30.95 (1.10) & 81.62 (1.03) & 314.68 (3.06) \\
 & $(48\times 32)$ & 251.00 (55.69) & 354.88 (82.97) & $>10^{4}$ ($\sim 10^{3}$) & $>10^{4}$ ($\sim 10^{4}$) & $>10^{4}$ ($\sim 10^{3}$) & $>10^{4}$ ($\sim 10^{4}$) \\
\midrule
\multirow{5}{*}{\rotatebox[origin=c]{90}{HyCNN}} & $(48\times 2)$ & $8.1\cdot 10^{-4}$ ($1.5\cdot 10^{-4}$) & 0.01 ($1.4\cdot 10^{-3}$) & 0.21 (0.02) & 2.34 (0.08) & 14.31 (0.27) & 133.81 (1.98) \\
 & $(48\times 4)$ & $5.7\cdot 10^{-4}$ ($2.7\cdot 10^{-4}$) & \underline{$2.7\cdot 10^{-3}$} ($9.1\cdot 10^{-4}$) & 0.12 (0.05) & \underline{0.97} (0.02) & \underline{8.36} (0.12) & \underline{100.36} (1.77) \\
 & $(48\times 8)$ & $5.7\cdot 10^{-4}$ ($2.7\cdot 10^{-4}$) & \underline{$1.6\cdot 10^{-3}$} ($3.1\cdot 10^{-4}$) & \underline{0.04} ($1.6\cdot 10^{-3}$) & \underline{0.52} (0.01) & \underline{4.84} (0.11) & \underline{89.93} (1.86) \\
 & $(48\times 16)$ & \textcolor[HTML]{0080DB}{\underline{$\mathbf{2.2\cdot 10^{-4}}$}} ($4.9\cdot 10^{-5}$) & \textcolor[HTML]{0080DB}{\underline{$\mathbf{1.1\cdot 10^{-3}}$}} ($4.4\cdot 10^{-4}$) & \textcolor[HTML]{0080DB}{\underline{\textbf{0.03}}} ($4.1\cdot 10^{-3}$) & \textcolor[HTML]{0080DB}{\underline{\textbf{0.36}}} (0.10) & \textcolor[HTML]{0080DB}{\underline{\textbf{2.69}}} (0.19) & \textcolor[HTML]{0080DB}{\underline{\textbf{66.36}}} (1.30) \\
 & $(48\times 32)$ & $2.4\cdot 10^{-3}$ ($9.3\cdot 10^{-4}$) & 0.01 ($2.9\cdot 10^{-3}$) & 0.47 (0.38) & 2.00 (0.76) & 14.82 (9.77) & 121.92 (38.28) \\
\bottomrule
\end{tabular}%
}%
}
\end{table}

\newpage

\begin{table}
\caption{Prediction MSE and standard error (SE) by network and dimension (\textbf{all entries multiplied by $10^2$}; rounded to 2 decimal places) for learning the function $f_2(\bx) = \|\bx\|_4^4$ in the \emph{noiseless} setting ($\sigma = 0$), as in \Cref{tab:model_noiseless_l2}.}
\label{tab:model_noiseless_l4}
\scriptsize
\makebox[\textwidth][c]{%
\resizebox{\textwidth}{!}{%
\begin{tabular}{llcccccc}
\toprule
\multicolumn{2}{c}{Network $\backslash$ $d$} & 1 & 2 & 5 & 10 & 20 & 50 \\
\midrule
\multirow{5}{*}{\rotatebox[origin=c]{90}{MLP}} & $(64\times 2)$ & \textcolor[HTML]{f85d42}{\underline{$\mathbf{1.4\cdot 10^{-3}}$}} ($7.0\cdot 10^{-4}$) & 0.02 (0.01) & 0.56 (0.04) & \textcolor[HTML]{f85d42}{\textbf{8.82}} (0.25) & \textcolor[HTML]{f85d42}{\textbf{34.46}} (0.83) & 355.95 (13.25) \\
 & $(64\times 4)$ & 0.01 ($4.2\cdot 10^{-3}$) & \textcolor[HTML]{f85d42}{\textbf{$1.3\cdot 10^{-2}$}} ($3.2\cdot 10^{-3}$) & \textcolor[HTML]{f85d42}{\underline{\textbf{0.45}}} (0.04) & 9.45 (0.38) & 36.11 (0.71) & 325.59 (5.70) \\
 & $(64\times 8)$ & 0.28 (0.25) & 0.12 (0.07) & \underline{0.50} (0.05) & 15.74 (5.95) & 37.97 (2.06) & \textcolor[HTML]{f85d42}{\textbf{300.14}} (6.42) \\
 & $(64\times 16)$ & 0.71 (0.67) & 2.85 (1.73) & 18.14 (5.54) & 33.92 (9.58) & 77.63 (16.08) & 323.36 (13.15) \\
 & $(64\times 32)$ & 7.13 (0.08) & 14.34 (0.25) & 35.77 (0.40) & 70.65 (0.93) & 140.47 (1.97) & 351.77 (3.31) \\
\midrule
\multirow{5}{*}{\rotatebox[origin=c]{90}{ICNN}} & $(64\times 2)$ & \textcolor[HTML]{F3A11C}{\underline{$\mathbf{1.5\cdot 10^{-3}}$}} ($2.4\cdot 10^{-4}$) & \textcolor[HTML]{F3A11C}{\underline{\textbf{$8.5\cdot 10^{-3}$}}} ($8.4\cdot 10^{-4}$) & \textcolor[HTML]{F3A11C}{\textbf{0.56}} (0.06) & \textcolor[HTML]{F3A11C}{\textbf{7.63}} (0.12) & 25.90 (0.33) & 173.98 (3.08) \\
 & $(64\times 4)$ & 1.40 (0.89) & 2.85 (1.80) & 1.30 (0.17) & 14.23 (6.41) & \textcolor[HTML]{F3A11C}{\textbf{20.98}} (0.37) & \textcolor[HTML]{F3A11C}{\textbf{144.42}} (22.35) \\
 & $(64\times 8)$ & 7.13 (0.08) & 11.69 (1.86) & 32.36 (3.26) & 70.82 (0.95) & 104.92 (16.76) & 317.45 (24.60) \\
 & $(64\times 16)$ & 7.13 (0.08) & 14.36 (0.25) & 35.86 (0.40) & 70.80 (0.95) & 141.06 (2.02) & 354.30 (3.35) \\
 & $(64\times 32)$ & 7.13 (0.08) & 14.36 (0.25) & 35.86 (0.40) & 70.82 (0.95) & 141.10 (2.02) & $>10^{11}$ ($\sim 10^{11}$) \\
\midrule
\multirow{5}{*}{\rotatebox[origin=c]{90}{GMN}} & $(48\times 2)$ & \textcolor[HTML]{b842f8}{\textbf{0.01}} ($1.5\cdot 10^{-3}$) & 0.02 ($1.1\cdot 10^{-3}$) & \textcolor[HTML]{b842f8}{\underline{\textbf{0.54}}} (0.11) & 7.96 (0.16) & 26.55 (0.46) & 182.96 (2.42) \\
 & $(48\times 4)$ & 0.01 ($2.3\cdot 10^{-3}$) & \textcolor[HTML]{b842f8}{\textbf{0.02}} ($1.7\cdot 10^{-3}$) & 0.65 (0.09) & \textcolor[HTML]{b842f8}{\textbf{7.59}} (0.13) & \textcolor[HTML]{b842f8}{\textbf{24.01}} (0.42) & \textcolor[HTML]{b842f8}{\textbf{165.16}} (3.67) \\
 & $(48\times 8)$ & 0.01 ($7.1\cdot 10^{-4}$) & 0.04 ($2.6\cdot 10^{-3}$) & 1.05 (0.09) & 7.82 (0.11) & 29.02 (2.19) & 194.09 (6.24) \\
 & $(48\times 16)$ & 0.04 (0.01) & 0.92 (0.36) & 8.59 (0.95) & 28.54 (0.62) & 71.89 (0.89) & 267.27 (4.27) \\
 & $(48\times 32)$ & 205.24 (45.69) & 288.27 (66.90) & $>10^{4}$ ($\sim 10^{3}$) & $>10^{4}$ ($\sim 10^{4}$) & $>10^{4}$ ($\sim 10^{3}$) & $>10^{4}$ ($\sim 10^{4}$) \\
\midrule
\multirow{5}{*}{\rotatebox[origin=c]{90}{HyCNN}} & $(48\times 2)$ & $2.0\cdot 10^{-3}$ ($5.3\cdot 10^{-4}$) & 0.02 ($2.2\cdot 10^{-3}$) & 1.09 (0.09) & 8.82 (0.16) & 24.00 (0.41) & 137.74 (2.54) \\
 & $(48\times 4)$ & $2.0\cdot 10^{-3}$ ($9.9\cdot 10^{-4}$) & \underline{$8.1\cdot 10^{-3}$} ($1.7\cdot 10^{-3}$) & \textcolor[HTML]{0080DB}{\textbf{0.76}} (0.24) & \underline{6.99} (0.14) & \underline{18.69} (0.23) & \underline{106.29} (1.28) \\
 & $(48\times 8)$ & \textcolor[HTML]{0080DB}{\underline{$\mathbf{1.2\cdot 10^{-3}}$}} ($6.1\cdot 10^{-4}$) & \textcolor[HTML]{0080DB}{\underline{\textbf{$5.3\cdot 10^{-3}$}}} ($9.5\cdot 10^{-4}$) & 0.93 (0.04) & \underline{6.48} (0.09) & \underline{16.65} (0.28) & \underline{94.46} (1.56) \\
 & $(48\times 16)$ & $2.2\cdot 10^{-3}$ ($1.1\cdot 10^{-3}$) & $9.5\cdot 10^{-3}$ ($3.1\cdot 10^{-3}$) & 1.51 (0.06) & \textcolor[HTML]{0080DB}{\underline{\textbf{6.29}}} (0.13) & \textcolor[HTML]{0080DB}{\underline{\textbf{15.28}}} (0.22) & \textcolor[HTML]{0080DB}{\underline{\textbf{76.01}}} (1.18) \\
 & $(48\times 32)$ & $2.3\cdot 10^{-3}$ ($1.2\cdot 10^{-3}$) & 0.04 (0.01) & 2.38 (0.34) & 7.64 (0.67) & 25.36 (8.80) & 124.05 (33.68) \\
\bottomrule
\end{tabular}%
}%
}
\end{table}

\begin{table}
\caption{Prediction MSE and standard error (SE) by network and dimension (\textbf{all entries multiplied by $10^2$}; rounded to 2 decimal places) for learning the function $f_4(\bx) = \|\bx\|_1$ in the \emph{noiseless} setting ($\sigma = 0$), as in \Cref{tab:model_noiseless_l2}.}
\label{tab:model_noiseless_l1}
\scriptsize
\makebox[\textwidth][c]{%
\resizebox{\textwidth}{!}{%
\begin{tabular}{llcccccc}
\toprule
\multicolumn{2}{c}{Network $\backslash$ $d$} & 1 & 2 & 5 & 10 & 20 & 50 \\
\midrule
\multirow{5}{*}{\rotatebox[origin=c]{90}{MLP}} & $(64\times 2)$ & $2.3\cdot 10^{-3}$ ($1.0\cdot 10^{-3}$) & \textcolor[HTML]{f85d42}{\underline{$\mathbf{2.4\cdot 10^{-3}}$}} ($3.8\cdot 10^{-4}$) & \textcolor[HTML]{f85d42}{\textbf{0.05}} (0.01) & \textcolor[HTML]{f85d42}{\textbf{3.63}} (0.68) & \textcolor[HTML]{f85d42}{\textbf{33.71}} (0.72) & 410.02 (12.95) \\
 & $(64\times 4)$ & $3.5\cdot 10^{-3}$ ($1.6\cdot 10^{-3}$) & $4.9\cdot 10^{-3}$ ($1.4\cdot 10^{-3}$) & 0.16 (0.02) & 7.38 (0.26) & 36.56 (0.81) & 359.92 (9.23) \\
 & $(64\times 8)$ & 0.01 ($4.1\cdot 10^{-3}$) & 0.02 (0.01) & 0.36 (0.09) & 7.46 (0.35) & 35.87 (1.34) & \textcolor[HTML]{f85d42}{\textbf{340.10}} (13.85) \\
 & $(64\times 16)$ & \textcolor[HTML]{f85d42}{$\mathbf{2.3\cdot 10^{-3}}$} ($1.1\cdot 10^{-3}$) & 1.68 (1.56) & 13.43 (5.68) & 30.92 (10.72) & 122.71 (18.75) & 363.63 (21.54) \\
 & $(64\times 32)$ & 8.31 (0.05) & 16.50 (0.18) & 41.56 (0.37) & 82.59 (1.04) & 162.58 (1.64) & 413.48 (4.00) \\
\midrule
\multirow{5}{*}{\rotatebox[origin=c]{90}{ICNN}} & $(64\times 2)$ & \textcolor[HTML]{F3A11C}{\textbf{0.01}} ($4.3\cdot 10^{-4}$) & \textcolor[HTML]{F3A11C}{\textbf{0.07}} (0.01) & \textcolor[HTML]{F3A11C}{\textbf{0.34}} (0.03) & \textcolor[HTML]{F3A11C}{\textbf{5.70}} (0.27) & 27.64 (0.26) & 191.79 (3.78) \\
 & $(64\times 4)$ & 1.66 (1.05) & 3.26 (2.03) & 0.68 (0.31) & 11.91 (7.68) & \textcolor[HTML]{F3A11C}{\textbf{21.58}} (0.28) & \textcolor[HTML]{F3A11C}{\textbf{163.53}} (26.00) \\
 & $(64\times 8)$ & 8.31 (0.05) & 14.96 (1.58) & 37.41 (3.86) & 82.74 (1.03) & 119.67 (19.66) & 374.35 (29.63) \\
 & $(64\times 16)$ & 8.31 (0.05) & 16.52 (0.18) & 41.60 (0.37) & 82.72 (1.04) & 163.37 (1.70) & 417.46 (3.76) \\
 & $(64\times 32)$ & 8.31 (0.05) & 16.52 (0.18) & 41.61 (0.37) & 82.74 (1.04) & 163.39 (1.74) & $>10^{11}$ ($\sim 10^{11}$) \\
\midrule
\multirow{5}{*}{\rotatebox[origin=c]{90}{GMN}} & $(48\times 2)$ & \textcolor[HTML]{b842f8}{\underline{$\mathbf{7.4\cdot 10^{-5}}$}} ($1.8\cdot 10^{-5}$) & \textcolor[HTML]{b842f8}{\underline{$\mathbf{5.4\cdot 10^{-4}}$}} ($7.7\cdot 10^{-5}$) & \textcolor[HTML]{b842f8}{\underline{\textbf{0.01}}} ($1.8\cdot 10^{-3}$) & \textcolor[HTML]{b842f8}{\underline{\textbf{0.10}}} (0.01) & 22.50 (0.87) & 204.57 (3.87) \\
 & $(48\times 4)$ & $2.7\cdot 10^{-4}$ ($7.8\cdot 10^{-5}$) & $4.9\cdot 10^{-3}$ ($1.1\cdot 10^{-3}$) & \underline{0.04} (0.01) & \underline{0.15} (0.02) & \textcolor[HTML]{b842f8}{\textbf{20.48}} (0.78) & \textcolor[HTML]{b842f8}{\textbf{178.75}} (3.03) \\
 & $(48\times 8)$ & \underline{$2.4\cdot 10^{-4}$} ($5.2\cdot 10^{-5}$) & 0.01 ($1.3\cdot 10^{-3}$) & 0.20 (0.07) & 0.64 (0.09) & 24.51 (2.13) & 208.05 (8.40) \\
 & $(48\times 16)$ & 0.02 (0.01) & 1.42 (0.45) & 9.80 (1.56) & 33.18 (1.24) & 82.83 (1.18) & 309.10 (1.77) \\
 & $(48\times 32)$ & 233.70 (51.54) & 332.26 (77.87) & $>10^{4}$ ($\sim 10^{3}$) & $>10^{4}$ ($\sim 10^{4}$) & $>10^{4}$ ($\sim 10^{3}$) & $>10^{4}$ ($\sim 10^{4}$) \\
\midrule
\multirow{5}{*}{\rotatebox[origin=c]{90}{HyCNN}} & $(48\times 2)$ & \textcolor[HTML]{0080DB}{\underline{$\mathbf{1.4\cdot 10^{-5}}$}} ($2.7\cdot 10^{-6}$) & 0.01 (0.01) & \textcolor[HTML]{0080DB}{\underline{\textbf{0.02}}} ($1.2\cdot 10^{-3}$) & \textcolor[HTML]{0080DB}{\underline{\textbf{0.34}}} (0.03) & 22.72 (0.34) & 151.35 (2.76) \\
 & $(48\times 4)$ & $2.8\cdot 10^{-4}$ ($1.9\cdot 10^{-4}$) & \textcolor[HTML]{0080DB}{\underline{$\mathbf{1.4\cdot 10^{-3}}$}} ($4.2\cdot 10^{-4}$) & 0.06 (0.04) & 0.60 (0.05) & \underline{17.94} (0.31) & \underline{116.45} (2.43) \\
 & $(48\times 8)$ & $1.7\cdot 10^{-3}$ ($1.3\cdot 10^{-3}$) & $3.4\cdot 10^{-3}$ ($6.5\cdot 10^{-4}$) & 0.05 ($3.1\cdot 10^{-3}$) & 1.21 (0.05) & \underline{15.52} (0.21) & \underline{100.83} (1.62) \\
 & $(48\times 16)$ & 0.01 ($2.3\cdot 10^{-3}$) & 0.02 (0.01) & 0.16 (0.03) & 2.41 (0.41) & \textcolor[HTML]{0080DB}{\underline{\textbf{14.15}}} (0.21) & \textcolor[HTML]{0080DB}{\underline{\textbf{82.19}}} (1.30) \\
 & $(48\times 32)$ & 0.03 (0.01) & 0.09 (0.02) & 1.05 (0.33) & 5.20 (0.76) & 24.90 (8.94) & 140.15 (38.47) \\
\bottomrule
\end{tabular}%
}%
}
\end{table}

\begin{table}
\caption{Prediction MSE and standard error (SE) by network and dimension (\textbf{all entries multiplied by $10^2$}; rounded to 2 decimal places) for learning the function $f_5(\bx) = \max(\|\bx-\boldsymbol{\mu}\|_2^2, \|\bx+\boldsymbol{\mu}\|_2^2)$ in the \emph{noiseless} setting ($\sigma = 0$) based on $n = 5{,}000$ data points $\bX_i \sim \text{Unif}([-1,1]^d)$ and $Y_i = f_5(\bX_i)$, where $\boldsymbol{\mu} \in \RR^d$ has entries independently drawn from $\calN(0,1/d)$. All other settings are as in \Cref{tab:model_l2}.}
\label{tab:model_noiseless_piecewise_quadratic}
\scriptsize
\makebox[\textwidth][c]{%
\resizebox{\textwidth}{!}{%
\begin{tabular}{llcccccc}
\toprule
\multicolumn{2}{c}{Network $\backslash$ $d$} & 1 & 2 & 5 & 10 & 20 & 50 \\
\midrule
\multirow{5}{*}{\rotatebox[origin=c]{90}{MLP}} & $(64\times 2)$ & \textcolor[HTML]{f85d42}{\textbf{0.01}} ($2.7\cdot 10^{-3}$) & \textcolor[HTML]{f85d42}{\textbf{0.04}} (0.03) & \textcolor[HTML]{f85d42}{\textbf{0.32}} (0.03) & 3.69 (0.24) & 28.01 (0.58) & 422.95 (16.82) \\
 & $(64\times 4)$ & 0.02 (0.01) & 0.07 (0.04) & 0.33 (0.05) & 3.82 (0.50) & \textcolor[HTML]{f85d42}{\textbf{26.36}} (1.22) & 364.77 (9.32) \\
 & $(64\times 8)$ & 0.06 (0.04) & 0.11 (0.05) & 0.70 (0.18) & \textcolor[HTML]{f85d42}{\textbf{3.20}} (0.23) & 28.13 (1.97) & \textcolor[HTML]{f85d42}{\textbf{331.43}} (9.00) \\
 & $(64\times 16)$ & 0.07 (0.06) & 7.40 (6.90) & 10.01 (7.97) & 3.29 (0.21) & 44.71 (17.53) & 366.81 (32.00) \\
 & $(64\times 32)$ & 75.47 (19.84) & 98.34 (25.57) & 131.72 (13.80) & 164.70 (14.55) & 247.61 (9.02) & 509.12 (5.66) \\
\midrule
\multirow{5}{*}{\rotatebox[origin=c]{90}{ICNN}} & $(64\times 2)$ & \textcolor[HTML]{F3A11C}{\textbf{0.04}} (0.01) & \textcolor[HTML]{F3A11C}{\textbf{0.08}} (0.02) & \textcolor[HTML]{F3A11C}{\textbf{0.54}} (0.04) & \textcolor[HTML]{F3A11C}{\textbf{3.05}} (0.11) & 19.87 (0.36) & 196.04 (2.89) \\
 & $(64\times 4)$ & 22.42 (16.23) & 46.00 (31.06) & 1.24 (0.73) & 29.86 (26.37) & \textcolor[HTML]{F3A11C}{\textbf{12.62}} (0.32) & \textcolor[HTML]{F3A11C}{\textbf{168.85}} (38.60) \\
 & $(64\times 8)$ & 75.46 (19.85) & 81.91 (27.46) & 116.40 (16.85) & 165.12 (14.57) & 171.96 (33.34) & 448.62 (42.81) \\
 & $(64\times 16)$ & 75.46 (19.83) & 98.45 (25.65) & 131.77 (13.81) & 164.98 (14.57) & 248.85 (9.13) & 514.14 (5.59) \\
 & $(64\times 32)$ & 75.46 (19.84) & 98.48 (25.67) & 131.70 (13.79) & 165.03 (14.56) & 248.87 (9.15) & $>10^{11}$ ($\sim 10^{11}$) \\
\midrule
\multirow{5}{*}{\rotatebox[origin=c]{90}{GMN}} & $(48\times 2)$ & 0.01 (0.01) & 0.03 (0.01) & 0.35 (0.03) & 2.56 (0.11) & 19.07 (0.63) & 202.26 (3.89) \\
 & $(48\times 4)$ & \underline{$3.5\cdot 10^{-3}$} ($1.4\cdot 10^{-3}$) & \textcolor[HTML]{b842f8}{\textbf{0.02}} (0.01) & \textcolor[HTML]{b842f8}{\underline{\textbf{0.21}}} (0.02) & \textcolor[HTML]{b842f8}{\textbf{1.89}} (0.08) & \textcolor[HTML]{b842f8}{\textbf{16.53}} (0.37) & \textcolor[HTML]{b842f8}{\textbf{178.81}} (3.26) \\
 & $(48\times 8)$ & \textcolor[HTML]{b842f8}{\underline{$\mathbf{3.4\cdot 10^{-3}}$}} ($1.0\cdot 10^{-3}$) & 0.02 (0.01) & 0.37 (0.10) & 2.08 (0.12) & 19.48 (1.94) & 214.33 (11.86) \\
 & $(48\times 16)$ & 0.11 (0.08) & 3.54 (0.92) & 20.19 (5.00) & 45.28 (2.75) & 104.93 (2.80) & 348.18 (3.93) \\
 & $(48\times 32)$ & $>10^{4}$ ($\sim 10^{3}$) & $>10^{4}$ ($\sim 10^{3}$) & $>10^{4}$ ($\sim 10^{4}$) & $>10^{5}$ ($\sim 10^{4}$) & $>10^{4}$ ($\sim 10^{3}$) & $>10^{4}$ ($\sim 10^{4}$) \\
\midrule
\multirow{5}{*}{\rotatebox[origin=c]{90}{HyCNN}} & $(48\times 2)$ & $4.9\cdot 10^{-3}$ ($1.2\cdot 10^{-3}$) & 0.05 (0.02) & 0.58 (0.10) & 2.74 (0.09) & 15.89 (0.22) & 142.98 (1.82) \\
 & $(48\times 4)$ & \textcolor[HTML]{0080DB}{\underline{$\mathbf{3.0\cdot 10^{-3}}$}} ($1.2\cdot 10^{-3}$) & \underline{0.02} (0.01) & 0.27 (0.10) & \underline{1.23} (0.05) & \underline{9.29} (0.16) & \underline{107.41} (1.61) \\
 & $(48\times 8)$ & 0.02 (0.02) & \textcolor[HTML]{0080DB}{\underline{\textbf{0.01}}} ($3.2\cdot 10^{-3}$) & \underline{0.10} (0.01) & \underline{0.72} (0.03) & \underline{5.52} (0.13) & \underline{99.11} (2.45) \\
 & $(48\times 16)$ & 0.02 (0.01) & \underline{0.02} (0.01) & \textcolor[HTML]{0080DB}{\underline{\textbf{0.08}}} (0.01) & \textcolor[HTML]{0080DB}{\underline{\textbf{0.51}}} (0.12) & \textcolor[HTML]{0080DB}{\underline{\textbf{3.38}}} (0.21) & \textcolor[HTML]{0080DB}{\underline{\textbf{75.61}}} (2.58) \\
 & $(48\times 32)$ & 0.05 (0.02) & 0.11 (0.06) & 1.03 (0.74) & 4.05 (1.77) & 19.90 (13.38) & 136.98 (39.51) \\
\bottomrule
\end{tabular}%
}%
}
\end{table}

\begin{table}
\caption{Prediction MSE and standard error (SE) by network and dimension (\textbf{all entries multiplied by $10^2$}; rounded to 2 decimal places) for learning the function $f_6(\bx) = \exp(\|\bx\|_1/\sqrt{d})$ in the \emph{noiseless} setting ($\sigma = 0$), as in \Cref{tab:model_noiseless_l2}.}
\label{tab:model_noiseless_exp}
\scriptsize
\makebox[\textwidth][c]{%
\resizebox{\textwidth}{!}{%
\begin{tabular}{llcccccc}
\toprule
\multicolumn{2}{c}{Network $\backslash$ $d$} & 1 & 2 & 5 & 10 & 20 & 50 \\
\midrule
\multirow{5}{*}{\rotatebox[origin=c]{90}{MLP}} & $(64\times 2)$ & $4.9\cdot 10^{-3}$ ($3.0\cdot 10^{-3}$) & \textcolor[HTML]{f85d42}{\textbf{0.02}} ($4.9\cdot 10^{-3}$) & \textcolor[HTML]{f85d42}{\textbf{0.23}} (0.02) & \textcolor[HTML]{f85d42}{\textbf{19.07}} (1.31) & 193.26 (5.24) & $1.1\cdot 10^{4}$ (350.64) \\
 & $(64\times 4)$ & \textcolor[HTML]{f85d42}{$\mathbf{3.2\cdot 10^{-3}}$} ($1.8\cdot 10^{-3}$) & 0.03 (0.01) & 0.44 (0.03) & 23.76 (0.81) & 203.70 (7.37) & $1.0\cdot 10^{4}$ (223.95) \\
 & $(64\times 8)$ & 0.01 ($3.7\cdot 10^{-3}$) & 0.04 (0.02) & 0.48 (0.05) & 22.67 (0.66) & \textcolor[HTML]{f85d42}{\textbf{189.32}} (4.21) & \textcolor[HTML]{f85d42}{$\mathbf{9.5\cdot 10^{3}}$} (258.31) \\
 & $(64\times 16)$ & 0.02 (0.01) & 5.46 (3.66) & 34.74 (13.15) & 83.05 (28.67) & 619.84 (89.93) & $1.0\cdot 10^{4}$ (483.55) \\
 & $(64\times 32)$ & 24.14 (0.16) & 37.44 (0.49) & 86.90 (1.10) & 218.89 (3.38) & 794.99 (9.96) & $1.1\cdot 10^{4}$ (134.94) \\
\midrule
\multirow{5}{*}{\rotatebox[origin=c]{90}{ICNN}} & $(64\times 2)$ & \textcolor[HTML]{F3A11C}{\textbf{0.01}} ($9.4\cdot 10^{-4}$) & \textcolor[HTML]{F3A11C}{\textbf{0.05}} ($4.4\cdot 10^{-3}$) & \textcolor[HTML]{F3A11C}{\textbf{0.44}} (0.03) & \textcolor[HTML]{F3A11C}{\textbf{17.38}} (0.38) & 139.84 (2.12) & $5.5\cdot 10^{3}$ (82.55) \\
 & $(64\times 4)$ & 4.80 (3.03) & 7.34 (4.62) & 0.93 (0.38) & 34.78 (20.34) & \textcolor[HTML]{F3A11C}{\textbf{110.74}} (1.05) & \textcolor[HTML]{F3A11C}{$\mathbf{4.6\cdot 10^{3}}$} (655.17) \\
 & $(64\times 8)$ & 24.14 (0.16) & 30.36 (4.79) & 78.07 (8.00) & 219.25 (3.39) & 592.39 (92.56) & $10.0\cdot 10^{3}$ (848.32) \\
 & $(64\times 16)$ & 24.15 (0.16) & 37.51 (0.49) & 87.02 (1.09) & 219.21 (3.42) & 799.00 (10.23) & $1.1\cdot 10^{4}$ (122.76) \\
 & $(64\times 32)$ & 24.14 (0.16) & 37.50 (0.49) & 87.02 (1.09) & 219.27 (3.38) & 799.00 (10.44) & $>10^{12}$ ($\sim 10^{12}$) \\
\midrule
\multirow{5}{*}{\rotatebox[origin=c]{90}{GMN}} & $(48\times 2)$ & $2.8\cdot 10^{-3}$ ($1.0\cdot 10^{-3}$) & \textcolor[HTML]{b842f8}{\underline{\textbf{0.01}}} ($1.2\cdot 10^{-3}$) & \textcolor[HTML]{b842f8}{\underline{\textbf{0.10}}} (0.01) & \textcolor[HTML]{b842f8}{\underline{\textbf{0.73}}} (0.05) & 133.27 (4.28) & $5.8\cdot 10^{3}$ (95.85) \\
 & $(48\times 4)$ & \underline{$1.6\cdot 10^{-3}$} ($3.3\cdot 10^{-4}$) & 0.02 ($3.5\cdot 10^{-3}$) & \underline{0.15} (0.02) & \underline{0.94} (0.14) & \textcolor[HTML]{b842f8}{\textbf{123.09}} (3.58) & \textcolor[HTML]{b842f8}{$\mathbf{5.3\cdot 10^{3}}$} (79.35) \\
 & $(48\times 8)$ & \textcolor[HTML]{b842f8}{\underline{$\mathbf{1.3\cdot 10^{-3}}$}} ($2.6\cdot 10^{-4}$) & 0.02 ($2.2\cdot 10^{-3}$) & 0.48 (0.12) & \underline{2.53} (0.21) & 133.35 (8.55) & $6.1\cdot 10^{3}$ (265.13) \\
 & $(48\times 16)$ & 0.02 (0.01) & 2.21 (0.82) & 15.46 (2.56) & 70.63 (2.55) & 351.98 (5.10) & $8.5\cdot 10^{3}$ (89.60) \\
 & $(48\times 32)$ & 680.39 (150.66) & 756.01 (176.80) & $2.6\cdot 10^{3}$ ($1.2\cdot 10^{3}$) & $1.5\cdot 10^{4}$ ($9.4\cdot 10^{3}$) & $7.9\cdot 10^{3}$ ($1.5\cdot 10^{3}$) & $>10^{6}$ ($\sim 10^{5}$) \\
\midrule
\multirow{5}{*}{\rotatebox[origin=c]{90}{HyCNN}} & $(48\times 2)$ & $2.7\cdot 10^{-3}$ ($3.0\cdot 10^{-4}$) & 0.02 ($3.0\cdot 10^{-3}$) & \textcolor[HTML]{0080DB}{\underline{\textbf{0.18}}} (0.02) & 4.45 (0.92) & 152.21 (2.17) & $4.6\cdot 10^{3}$ (81.86) \\
 & $(48\times 4)$ & \textcolor[HTML]{0080DB}{\underline{$\mathbf{1.8\cdot 10^{-3}}$}} ($7.8\cdot 10^{-4}$) & \textcolor[HTML]{0080DB}{\underline{\textbf{0.01}}} ($1.6\cdot 10^{-3}$) & 0.37 (0.22) & \textcolor[HTML]{0080DB}{\textbf{3.34}} (0.23) & \underline{101.45} (1.45) & \underline{$3.5\cdot 10^{3}$} (66.26) \\
 & $(48\times 8)$ & $2.6\cdot 10^{-3}$ ($1.4\cdot 10^{-3}$) & \underline{0.01} ($1.8\cdot 10^{-3}$) & 0.22 (0.01) & 6.27 (0.24) & \underline{83.71} (1.77) & \underline{$3.1\cdot 10^{3}$} (41.87) \\
 & $(48\times 16)$ & $3.8\cdot 10^{-3}$ ($1.6\cdot 10^{-3}$) & 0.03 (0.01) & 0.52 (0.08) & 9.90 (0.67) & \textcolor[HTML]{0080DB}{\underline{\textbf{74.66}}} (1.19) & \textcolor[HTML]{0080DB}{\underline{$\mathbf{2.6\cdot 10^{3}}$}} (56.03) \\
 & $(48\times 32)$ & 0.02 (0.01) & 0.13 (0.02) & 2.50 (0.86) & 17.63 (2.59) & 158.06 (70.01) & $4.6\cdot 10^{3}$ ($1.1\cdot 10^{3}$) \\
\bottomrule
\end{tabular}%
}%
}
\end{table}

\newpage

\clearpage

\subsection{HyCNN-based OT map estimation}\label{app:OT_experiments}\label{app:OT_implementation}

We provide additional experimental results and implementation details complementing the optimal transport map estimation experiments described in \Cref{sec:experiments}. The training procedure for HyCNN-based OT map estimation, based on the saddle-point formulation \eqref{eq:HyCNN_OT_potential_estimation}, is summarized in \Cref{alg:HyCNN_ot}.

As in the regression experiments of \Cref{app:regression_experiments}, for the HyCNN potential $f_{\theta_f}$ we do not parameterize the hidden-to-hidden weight matrices $V_\ell^{k}$ directly but through an unconstrained matrix $R_\ell^{k} \in \RR^{n_{\ell+1}\times n_\ell}$, via the entrywise softplus reparametrization $V_\ell^{k} = \operatorname{softplus}(R_\ell^{k}) = \log(1+\exp(R_\ell^{k}))$. Since $\operatorname{softplus}\colon \RR \to (0,\infty)$ is smooth and bijective onto the positive reals, the effective weights $V_\ell^{k}$ are strictly positive on every forward pass, so that input-convexity of $f_{\theta_f}$ in its argument is preserved throughout training; at the same time, Adam updates are applied to the unconstrained $R_\ell^{k}$, i.e.\ the gradient computation uses $\partial V_\ell^{k} / \partial R_\ell^{k}$ and flows back to $R_\ell^{k}$ without requiring any explicit projection onto the non-negative orthant. This way, the full set of parameters of $f_{\theta_f}$ can be trained with standard unconstrained optimizers and standard PyTorch autodiff, while the architectural constraint $V_\ell^{k} \geq 0$ is satisfied exactly by construction.

\begin{algorithm}[h!]
 \caption{HyCNN Optimal Transport Map Learning}
 \label{alg:HyCNN_ot}
 \begin{algorithmic}
   \STATE {\bfseries Input:} data $(\bx_i)_{i=1}^n$ from source $P$, data $(\by_j)_{j=1}^m$ from target $Q$, batch size $M$, inner and outer loop iterations $S,T\in \NN$, HyCNN potential $f_{\theta_f} \in \texttt{HyCNN}_\tau(\RR^d)$ and HyCNN critic $g_{\theta_g} \in \texttt{HyCNN}_\tau(\RR^d)$.
   \STATE Initialize $\theta_f$ and $\theta_g$ according to \Cref{sec:trainingHyCNNs}.
   \FOR{$t=1$ {\bfseries to} $T$}
       \STATE Sample mini-batch $(\bX_i)_{i=1}^M$ from $(\bx_i)_{i=1}^n$.
       \FOR{$s=1$ {\bfseries to} $S$}
           \STATE Sample mini-batch $(\bY_j)_{j=1}^M$ from $(\by_j)_{j=1}^m$.
           \STATE Update $\theta_g$ to minimize $\textstyle\frac{1}{M}\sum_{i=1}^{M} \Big[ f_{\theta_f}(\bX_i)+\langle \bY_i, \nabla g_{\theta_g}(\bY_i)\rangle - f_{\theta_f}(\nabla g_{\theta_g}(\bY_i)) \Big]$ with Adam.
       \ENDFOR
       \STATE Update $\theta_f$ to maximize $\textstyle\frac{1}{M}\sum_{i=1}^{M} \Big[f_{\theta_f}(\bX_i)+\langle \bY_i, \nabla g_{\theta_g}(\bY_i)\rangle - f_{\theta_f}(\nabla g_{\theta_g}(\bY_i)) \Big]$ using Adam.
   \ENDFOR
   \STATE {\bfseries Output:} {Estimated optimal transport map $\hat T\coloneqq \nabla \hat \phi$, where $\hat\phi = f_{\theta_f}$.}
 \end{algorithmic}
\end{algorithm}

\parabold{Network architectures}
Both the potential $f_{\theta_f}$ and the critic $g_{\theta_g}$ are parameterized by HyCNNs with the smooth log-sum-exp activation function $\overline\sigma_\tau$ from \eqref{eq:logsumexp}, using the same architecture (width and depth).

\parabold{Training}
For all experiments, we use the Adam optimizer \citep{kingma2014adam} with $\beta_1 = 0.5$ and $\beta_2 = 0.9$ and identical initial learning rate $\lambda = 10^{-2}$ for both $\theta_f$ and $\theta_g$. The learning rate follows a cosine decay schedule $\lambda_t = \lambda_{\min} + \tfrac{1}{2}(\lambda - \lambda_{\min})(1 + \cos(\pi t / T))$ with final ratio $\lambda_{\min}/\lambda = 0.01$. After each outer update, the weight matrices $V_\ell^j$ in $\theta_f$ are projected onto the non-negative orthant to ensure convexity of $f_{\theta_f}$. 

Notably, unlike the ICNN–based neural OT method by \cite{makkuva2020optimal}, we do not allow the hidden-to-hidden weight matrices $V_\ell^j$ in the critic network to have negative entries, as we found that it degrades performance of HyCNN based OT map estimation in high-dimensional settings. 

\parabold{Training in low dimensions} 
While discontinuities of the OT map can occur in any dimension, in low dimensions it is crucial to accurately resolve these discontinuities, as they have a pronounced effect on the quality of the estimated map. This requires the HyCNN to operate close to the maximum activation gating function (i.e., $\tau \to 0$ in the log-sum-exp activation $\overline\sigma_\tau$). However, training with small $\tau$ is challenging because the $\max$ operator may select a single gate per layer, so that only the weights associated with the active gate receive gradient updates while the remaining gates are not trained. To address this, we employ a cyclic cosine annealing schedule for the smoothness parameter $\tau$, that is, we start from $\tau_0 = 1$ and then periodically oscillate $\tau$ between large and small values and decrease per cycle by a fixed decay factor. 

During phases of large $\tau$, the smooth log-sum-exp activation distributes gradients across all gates, ensuring that all weight matrices are updated. During phases of small $\tau$, the network sharpens toward the piecewise-affine $\max$ regime needed to represent discontinuous maps.

\parabold{Training in moderate and high dimensions}
In higher dimensions ($d \geq 10$), recovering the OT map is already more involved, and empirically larger values of $\tau$ yielded better performance. We therefore use a fixed smoothness parameter $\tau$ without the cyclic cosine schedule in these settings. 

\subsubsection{OT map estimation in dimension $d = 2$}\label{app:OT_dim2}

For the two-dimensional OT visualization experiments, we consider the following examples \begin{enumerate}
    \item[$(i)$] five-component checkerboard (source) to four-component checkerboard (target), 
    \item[$(ii)$] standard normal (source) to half-moon (target),
    \item[$(iii)$] shifted half-moon (source) to shifted rotated half-moon (target), and 
    \item[$(iv)$] standard Gaussian (source) to a mixture of eight Gaussians arranged in a ring (target). 
\end{enumerate}
In Figure \ref{fig:HyCNN_OT_shapes_2d} we depict the training samples from the source distribution and target distribution as well as the pushforward of the source training samples under the learned HyCNN OT map $\nabla f$. In Figure \ref{fig:HyCNN_OT_shapes_2d_reverse} we show the pushforward of the target training samples under the learned gradient of the critic $g$, which corresponds to the reverse OT map. In Figure \ref{fig:HyCNN_OT_shapes_2d_potential} we visualize the learned potential $f$ and $g$ for the four examples using a contour plot. Throughout these examples, we observe that HyCNNs are capable to learn the correct target distribution (and thus the correct OT map) even in the presence of discontinuities. 

For the architecture we use HyCNNs with width $48$ and depth $4$ for both the potential $f$ and the critic $g$. We draw $n = m = M = 2000$ data points from the source and target distributions. The training consists of $T = 2500$ outer iterations for learning $f$, with $S = 5$ inner gradient steps per outer iteration for updating the critic $g$. The learning rate follows a cosine decay schedule with initial learning rate $\lambda = 10^{-2}$ and final ratio $\lambda_{\min}/\lambda = 0.01$. The smoothness parameter $\tau$ for the log-sum-exp activation is also decayed, we start with $\tau = 1$, and within a cycle of $100$ outer iterations, we decay the smoothness parameter $\tau$ by a factor of $0.8$ per cycle, with a cosine decaying schedule and ensure that by iteration $70\%$ of the cycle, $\tau$ has decayed to $0.1$ times its initial value in the cycle. Overall, this ensures that $\tau$ reaches a small value of $0.8^{24} \cdot 0.1 = 0.00047$ by the end of training, which allows the network to sharpen toward the piecewise-affine $\max$ regime needed to represent discontinuous maps, while also ensuring that all gates receive gradient updates during training.

\begin{figure}[t!]
    \begin{center}
    \centerline{\includegraphics[width=\textwidth, trim=0 0 0 0, clip]{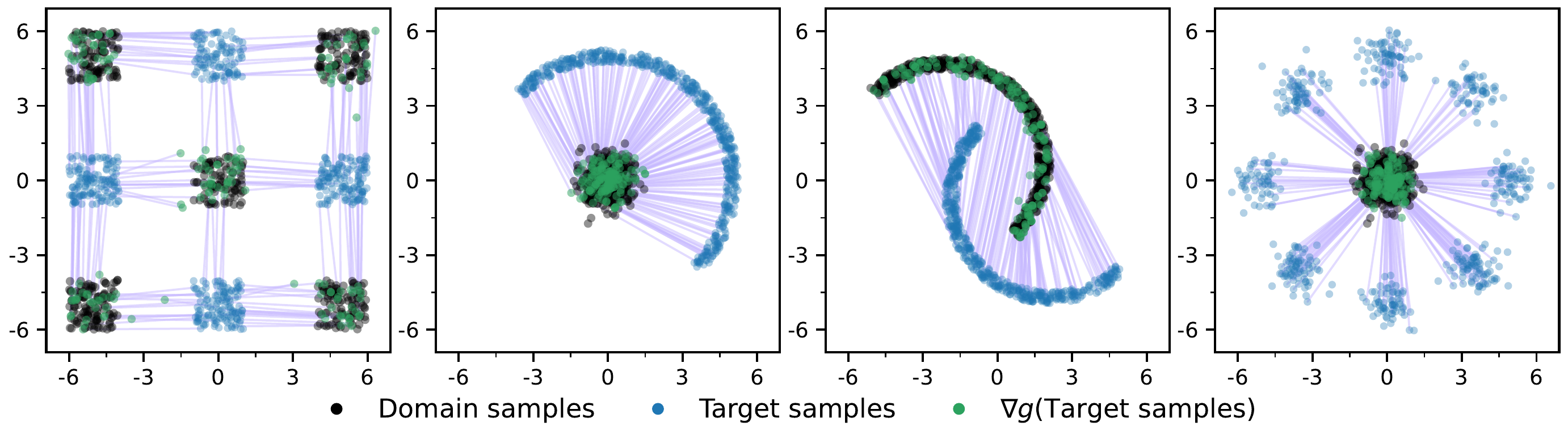}}
        \caption{Reverse OT map estimation in dimension $d=2$. Each panel shows the pushforward of the target distribution under the learned reverse HyCNN OT map (width $48$, depth $4$), trained with $n = m = M = 2000$ data points, $T = 2500$ outer iterations, and $S = 5$ inner steps, using Adam with cosine-decayed learning rate ($\lambda = 10^{-2}$, final ratio $0.01$) and smoothness parameter $\tau_0 = 1$ (cosine decay). From left to right: (i) five-component checkerboard $\to$ four-component checkerboard, (ii) standard normal $\to$ half-moon, (iii) shifted half-moon $\to$ shifted rotated half-moon, (iv) standard Gaussian $\to$ mixture of eight Gaussians.}
         \label{fig:HyCNN_OT_shapes_2d_reverse}
         \end{center}
\end{figure}

\begin{figure}[t!]
    \begin{center}
    \centerline{\includegraphics[width=\textwidth, trim=0 0 0 0, clip]{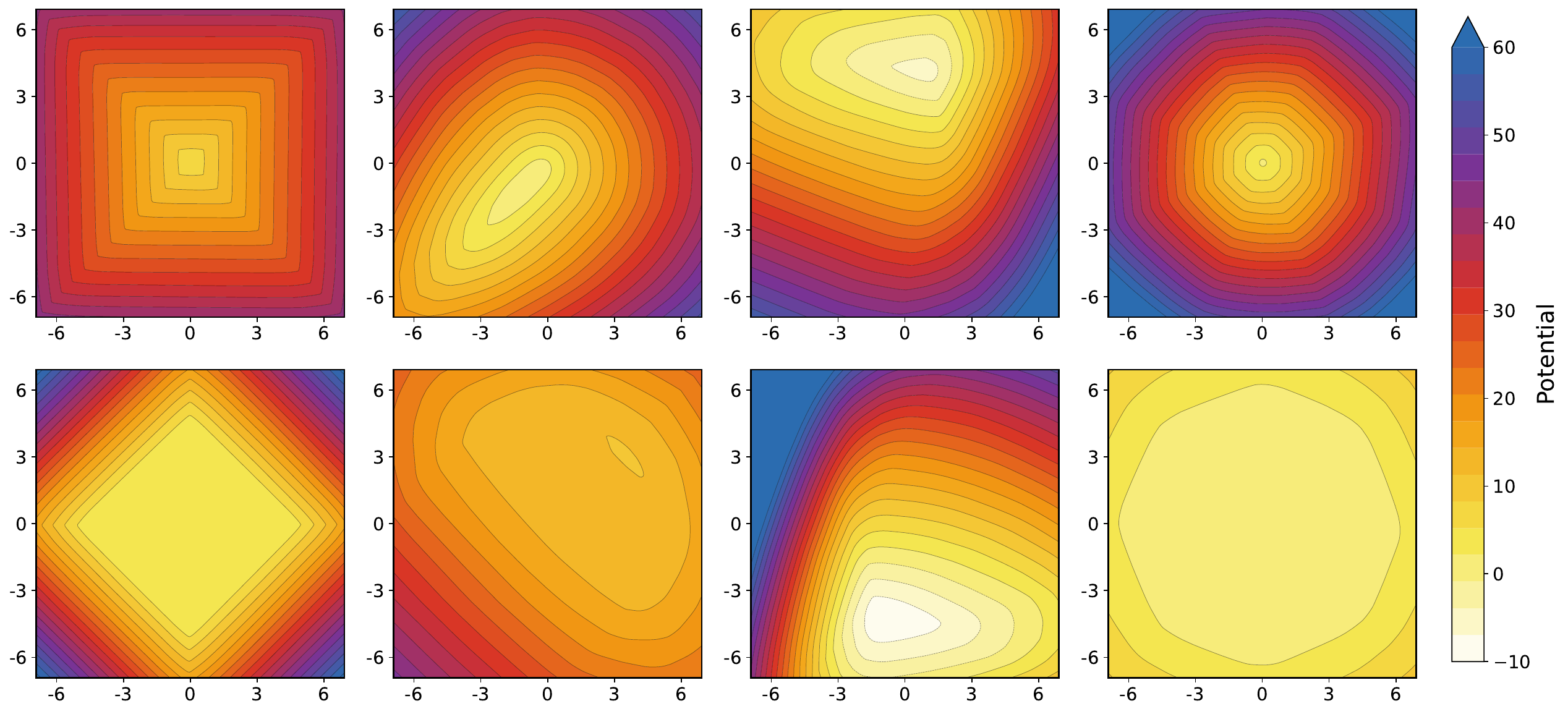}}
        \caption{Contour plots of the learned OT potential $\hat\phi$ in dimension $d=2$. Each panel shows the contour lines of the HyCNN potential (width $48$, depth $4$) learned from $n = m = M = 2000$ data points for the same collection of source--target pairs as in \Cref{fig:HyCNN_OT_shapes_2d_reverse}.} %
         \label{fig:HyCNN_OT_shapes_2d_potential}
         \end{center}
\end{figure}

\subsubsection{OT map estimation across different dimensions for sample sizes $n =5000$}\label{app:OT_dimensions}

In the following, we present detailed performance tables for the OT map estimation experiments described in \Cref{sec:experiments}. In total, we consider the following OT potentials with corresponding OT maps, 
\begin{align*}
    &\textstyle \text{(a) } \textstyle\phi_1(\bx) = \frac{\|\bx\|^2_2}{2}, && T_1(\bx) = \bx, \\ &\text{(b) } \textstyle\phi_2(\bx) = \frac{1}{2}\sum_{i = 1}^{d} \big(1+ \frac{\sin(i)}{2}\big)x_i^2, 
    && T_2(\bx) = \textstyle\big((1+ \frac{\sin(i)}{2})x_i\big)_{i \in [d]},\\
    &\textstyle \text{(c) } \textstyle\phi_3(\bx) = \frac{\|\bx\|^2_2}{2}+ \|\bx\|_1,&& T_3(\bx) = \big(x_i + \textup{sign}(x_i)\big)_{i \in [d]},\\
    &\text{(d) } \textstyle\phi_4(\bx) = \|\bx\|^4_4,&& T_4(\bx) = \big(4x_i^3\big)_{i\in [d]}.
\end{align*}

we report the prediction MSE of the estimated OT map $\hat T = \nabla \hat \phi$ against the ground-truth OT map $\overline T = \nabla \phi_\ell$ for dimension $d = 50$. The source distribution is $P = \calN(0, I_d)$ for (a--c) and $P = \text{Unif}[-1,1]^d$ for (d), and the target distribution is $Q = (\nabla \phi_\ell)_\# P$. We use $n = m = 5000$ independent samples from $P$ and $Q$, respectively. The MSE is defined as the average squared Euclidean distance between the estimated and ground-truth OT map evaluated on an independent test set, i.e., $$\text{MSE}_{\text{pred}} = \frac{1}{N_{\text{test}}} \sum_{i=1}^{N_{\text{test}}} \|\hat T(\bX_i) - \overline T(\bX_i)\|_2^2.$$

The test set consists of $N_{\text{test}} = 1{,}000$ independent samples from $P$ and $Q$. Results are averaged over $10$ independent runs with different random seeds for sample generation and network initialization. The training uses batch size $M = 256$, $T = 1{,}000$ outer iterations, $S = 5$ inner steps, learning rate $\lambda = 10^{-2}$ with cosine decay with final ratio $0.01$. The Adam optimizer uses $\beta_1 = 0.5$, $\beta_2 = 0.9$ for both the potential and critic networks.

We compare the HyCNN-based estimator (width $48$, depths $L \in \{2,4,6\}$, logsumexp smoothness $\tau \in \{0.1, 1, 10\}$) with the entropic optimal transport map estimator (regularization parameter $\epsilon = 0.1, 1, 10$), several ICNN variants (width $64$, depths $L \in \{2, 4, 6\}$, softplus smoothness $\tau \in \{0.1, 1, 10\}$) and the Monge Gap MLP estimator (width $64$, depths $L \in \{2, 4, 6\}$).

\parabold{Entropic OT baseline} As a non-parametric baseline we consider the entropic OT (EOT) map estimator \citep{seguy2017large,pooladian2021entropic}. Given independent samples $(\bX_i)_{i=1}^{n} \sim P$ and $(\bY_j)_{j=1}^{m} \sim Q$, let $\hat P_n = \frac{1}{n}\sum_i \delta_{\bX_i}$ and $\hat Q_m = \frac{1}{m}\sum_j \delta_{\bY_j}$ denote the empirical measures. The estimator solves the discrete entropic OT problem with squared-Euclidean cost $c(\bx,\by) = \tfrac{1}{2}\|\bx-\by\|_2^2$,
\begin{align}\label{eq:eot_primal}
\min_{\pi \in \Pi(\hat P_n, \hat Q_m)}\; \textstyle\sum_{i,j} \pi_{ij}\, c(\bX_i, \bY_j) + \varepsilon\, \mathrm{KL}\big(\pi\;\big\|\; \tfrac{1}{n}\mathbf{1}\otimes \tfrac{1}{m}\mathbf{1}\big),
\end{align}
with regularization parameter $\varepsilon \in \{0.1, 1, 10\}$, via log-domain Sinkhorn iterations until convergence. From the dual potential $g^\star$ associated with $\hat Q_m$, the out-of-sample \emph{barycentric} OT map is evaluated on any source sample via
\begin{align*}
\hat T_{\varepsilon}(\bx) = \sum_{j=1}^{m} w_j^{\varepsilon}(\bx)\, \bY_j, \qquad w_j^{\varepsilon}(\bx) = \frac{\exp\!\big(\big(g^\star(\bY_j) - c(\bx, \bY_j)\big)/\varepsilon\big)}{\sum_{j'=1}^{m} \exp\!\big(\big(g^\star(\bY_{j'}) - c(\bx, \bY_{j'})\big)/\varepsilon\big)}.
\end{align*}
On fresh test points $\tilde \bx \notin \{\bX_i\}_{i=1}^n$ we extend $\hat T_\varepsilon$ canonically to the whole ambient space as suggested by \cite{pooladian2021entropic} by 
\begin{align*}
\hat T_\varepsilon(\tilde \bx) =\sum_{j=1}^{m} w_j^{\varepsilon}(\tilde \bx)\, \bY_j, \qquad w_j^{\varepsilon}(\tilde \bx) = \frac{\exp\!\big(\big(g^\star(\bY_j) - c(\tilde \bx, \bY_j)\big)/\varepsilon\big)}{\sum_{j'=1}^{m} \exp\!\big(\big(g^\star(\bY_{j'}) - c(\tilde \bx, \bY_{j'})\big)/\varepsilon\big)}.
\end{align*}
The Sinkhorn problem itself and the dual potential $g^\star$ are computed with the \texttt{geomloss} package \citep{feydy2019interpolating} using the tensorized backend, squared-Euclidean cost, blur parameter $\sqrt{\varepsilon}$, and no debiasing.

\parabold{ICNN-based baselines}
We compare the HyCNN-based estimator against four ICNN variants, which we collectively refer to as \emph{ICNN-based methods}. All variants share the general ICNN template of \eqref{eq:ICNN}, i.e., hidden activations
\[
\bz_{\ell+1} = \sigma_\ell\big(V_\ell \bz_\ell + W_\ell \bx + \bb_\ell\big), \qquad \ell = 0,\dots,L-1,
\]
with scalar output $\operatorname{ICNN}(\bx) = V_L \bz_L + W_L \bx + \bb_L$, and differ only in their first-layer parameterization and choice of activation $\sigma_\ell$.
\begin{enumerate}
    \item[$(i)$] \textit{ICNN (ReLU).} The standard ICNN with first layer $\bz_1 = \sigma(W_0 \bx + \bb_0)$ and ReLU activation function $\sigma(a) = \max(a,0)$ in every layer.
    \item[$(ii)$] \textit{ICNN (Leaky ReLU).} Identical to $(i)$ but with the leaky-ReLU activation $\sigma(x) = \max(x,\alpha x)$, where $\alpha = 0.2$ is the negative slope.
    \item[$(iii)$] \textit{ICNNq (ReLU).} An ICNN whose first hidden layer is augmented by a learned quadratic skip connection:
    \begin{equation*}
        \bz_1 = \sigma\big(W_0 \bx + (W_{\textup{q}} \bx)^{\odot 2} + \bb_0\big),
    \end{equation*}
    where $W_{\textup{q}} \in \RR^{d_1 \times d}$ is a learned weight matrix, $(\,\cdot\,)^{\odot 2}$ denotes elementwise squaring, and $\sigma$ is the ReLU activation; all subsequent layers are defined as in the standard ICNN. The quadratic term injects a non-negative, input-convex contribution into the first layer and improves the ability of ICNNs to represent mildly curved potentials.
    \item[$(iv)$] \textit{ICNNq (Softplus).} Identical to $(iii)$ but with the activation replaced by the smooth Softplus $\sigma_\tau(a) = \tau \log(1 + e^{a/\tau})$ with temperature $\tau > 0$, which recovers ReLU as $\tau \to 0$. We consider parmeter choices $\tau \in \{0.1, 1, 10\}$ and report the results for all three values.
\end{enumerate}

The potential $f_{\theta_f}$ and the critic $g_{\theta_g}$ of each ICNN baseline are both instantiated as ICNNs of the same variant $(i)$--$(iv)$, width, and depth, but they are treated differently regarding the non-negativity of their hidden-to-hidden weight matrices $V_\ell$. We enforce  the potential $f_{\theta_f}$ to be input-convex by utilizing for $V_\ell \geq 0$  the same softplus reparametrization used for the HyCNN: each $V_\ell$ is represented as the entrywise softplus $V_\ell = \operatorname{softplus}(R_\ell) = \log(1 + \exp(R_\ell))$ of an unconstrained parameter tensor $R_\ell$, so that $V_\ell \in (0,\infty)^{n_{\ell+1}\times n_\ell}$ holds automatically on every forward pass and Adam can update $R_\ell$ in $\RR^{n_{\ell+1}\times n_\ell}$ without any projection. For the critic $g_{\theta_g}$, by contrast, strict input-convexity is not needed: $g_{\theta_g}$ enters the saddle problem only through the gradient $\nabla g_{\theta_g}(\bY)$ at target samples, and a mild soft convexity suffices to make the inner Legendre-type problem well-posed. We therefore drop the softplus reparametrization for $g_{\theta_g}$ and let each hidden-to-hidden matrix of $g_{\theta_g}$ be an ordinary unconstrained real parameter, with non-negativity merely encouraged softly through the convexity penalty $\lambda_{\textup{cvx}}\|(V_\ell)_{-}\|_F^2$ that appears in the training objective \eqref{eq:ICNN_objective} below.

At initialization, all $V_\ell$ are drawn so that their first and second moments match the values derived by \citet{hoedt2023principled} for ICNNs, namely
\begin{align*}
\mu_W = \sqrt{\tfrac{6\pi}{n_\ell\, D_\ell}}, \qquad \sigma_W^2 = \tfrac{1}{n_\ell}, \qquad D_\ell = 6(\pi-1) + (n_\ell-1)\big(3\sqrt{3} + 2\pi - 6\big),
\end{align*}
where $n_\ell$ is the fan-in of layer $\ell$. For the critic $g_{\theta_g}$, whose hidden-to-hidden matrices are unconstrained, we draw $(V_\ell)_{ij} \sim \calN(\mu_W,\sigma_W^2)$ directly. For the potential $f_{\theta_f}$, whose effective weight must be non-negative by construction, we instead draw $V_\ell$ entrywise from the log-normal distribution with the same prescribed moments, $(V_\ell)_{ij} \sim \operatorname{LogNormal}(\mu_{},\sigma_{}^2)$ with
\begin{align*}
 \mu_{} = \log\!\Big(\tfrac{\mu_W^2}{\sqrt{\mu_W^2 + \sigma_W^2}}\Big),\quad \sigma_{}^2 = \log\!\Big(1 + \tfrac{\sigma_W^2}{\mu_W^2}\Big), 
\end{align*}
and then store the corresponding pre-image $R_\ell = \operatorname{softplus}^{-1}(V_\ell) = \log(\exp(V_\ell) - 1)$. By construction the effective weight at initialization is always positive and satisfies $\EE[(V_\ell)_{ij}] = \mu_W$ and $\operatorname{Var}[(V_\ell)_{ij}] = \sigma_W^2$. We choose the log-normal distribution (rather than a truncated Gaussian) because it is inherently non-negative and allows us to match the prescribed mean and variance of \citet{hoedt2023principled} simultaneously.

The remaining parameters are identical for $f_{\theta_f}$ and $g_{\theta_g}$: the input-to-hidden matrices $W_\ell$ are unconstrained and initialized entrywise as $(W_\ell)_{ij}\sim\calN(0, 1/n_\ell)$ (LeCun-normal), the hidden-layer biases are set to the constant $\mu_b = -\sqrt{3n_\ell/D_\ell}$ and the output bias to zero, and for the two quadratic-skip variants $(iii)$--$(iv)$ the additional first-layer matrix $W_{\textup{q}} \in \RR^{n_1\times d}$ is drawn entrywise as $\calN(0, 1/d)$. All random matrices are drawn independently across layers.

All ICNN-based methods are trained by solving an empirical saddle-point problem similar to our HyCNN-based estimator, with the HyCNN potential and critic replaced by the respective ICNN variant and an additional regularization term. We refer to the resulting objective as the ICNN objective; for mini-batches $(\bX_i)_{i\in [M]}$ and $(\bY_j)_{j\in [M]}$ of size $M$, it takes the form
\begin{align}\label{eq:ICNN_objective}
  \min_{\theta_f}\;\max_{\theta_g}\; \bigg\{ \textstyle\frac{1}{M}\sum_{i=1}^{M} \!\Big[ f_{\theta_f}(\bX_i) + \langle \bY_i, \nabla g_{\theta_g}(\bY_i) \rangle - f_{\theta_f}(\nabla g_{\theta_g}(\bY_i))\Big] \;-\; \lambda_{\textup{cvx}}\, \textstyle\sum_{V \in \theta_g} \| (V)_{-} \|_F^2 \bigg\},
\end{align}
where $(V)_{-} \coloneqq \max(0, -V)$ denotes the negative part of $V$ taken elementwise, $\|\cdot\|_F$ is the Frobenius norm, and the sum ranges over all hidden-to-hidden weight matrices $V$ of the critic network $g_{\theta_g}$. The parameter $\lambda_{\textup{cvx}} > 0$ controls the strength of the convexity regularization and softly penalizes negative entries in the hidden-to-hidden weights of the critic but does not enforce strict non-negativity. We use the same value $\lambda_{\textup{cvx}} = 1$ for all ICNN-based methods. The potential $f_{\theta_f}$ and the critic $g_{\theta_g}$ are parameterized by the same architecture (width, depth, activation).

For training we follow the same alternating optimization scheme as for the HyCNN-based estimator, which is inspired by \citet[Algorithm~1]{makkuva2020optimal}. We run $T = 1000$ outer iterations with $S = 5$ inner steps, batch size $M = 256$, and the Adam optimizer \citep{kingma2014adam} with $\beta_1 = 0.5$, $\beta_2 = 0.9$, initial learning rate $\lambda = 10^{-2}$, and cosine decay to final ratio $0.01$, applied identically to both $\theta_f$ and $\theta_g$. The OT map estimate is then read off as $\hat T = \nabla f_{\theta_f}$.

\parabold{Monge Gap estimator}
As a further baseline that does not rely on input-convex parameterizations, we include the Monge Gap estimator of \citet{uscidda2023monge}. Here the OT map is parameterized directly as a plain MLP $T_\theta\colon \RR^d \to \RR^d$ with ELU activations (following the original implementation) and hidden widths $(64,\dots,64)$ of depth $L \in \{2, 4, 6\}$ (matching the ICNN baselines). For source and target mini-batches $\bX_1,\dots,\bX_M$ and $\bY_1,\dots,\bY_M$ of size $M$, let $\hat P_M = \tfrac{1}{M}\sum_i \delta_{\bX_i}$, $\hat Q_M = \tfrac{1}{M}\sum_i \delta_{\bY_i}$ and $(T_\theta)_\# \hat P_M = \tfrac{1}{M}\sum_i \delta_{T_\theta(\bX_i)}$ denote the associated empirical measures. Upon writing $c(\bx,\by) = \tfrac{1}{2}\|\bx-\by\|_2^2$ for the squared-Euclidean cost and
\begin{align*}
\mathrm{OT}_\varepsilon(\mu,\nu) \;\coloneqq\; \min_{\pi \in \Pi(\mu,\nu)} \int c\, \mathrm{d}\pi + \varepsilon\, \mathrm{KL}(\pi \,\|\, \mu \otimes \nu)
\end{align*}
for the entropic OT cost, the training objective combines a Sinkhorn fitting loss and the entropic Monge gap of \citet{uscidda2023monge}, namely
\begin{align}\label{eq:monge_gap_objective}
\min_\theta \; \underbrace{\calS_\varepsilon\big((T_\theta)_\# \hat P_M,\, \hat Q_M\big)}_{\text{fitting loss}} \;+\; \lambda\, \underbrace{\Big[\, \textstyle\frac{1}{M}\sum_{i=1}^M c\big(\bX_i, T_\theta(\bX_i)\big) \;-\; \mathrm{OT}_\varepsilon\!\big(\hat P_M, (T_\theta)_\# \hat P_M\big)\Big]}_{\text{Monge gap }\mathcal{M}_\varepsilon(T_\theta;\, \hat P_M)},
\end{align}
where $\calS_\varepsilon(\mu,\nu) = \mathrm{OT}_\varepsilon(\mu,\nu) - \tfrac{1}{2}\mathrm{OT}_\varepsilon(\mu,\mu) - \tfrac{1}{2}\mathrm{OT}_\varepsilon(\nu,\nu)$ denotes the (debiased) Sinkhorn divergence with squared-Euclidean costs. The gap $\mathcal{M}_\varepsilon(T_\theta;\hat P_M)\geq 0$ vanishes whenever $T_\theta$ transports $\hat P_M$ to $(T_\theta)_\# \hat P_M$ at near OT cost, i.e., whenever $T_\theta$ is itself nearly the gradient of a convex function. Both terms in \eqref{eq:monge_gap_objective} are evaluated via differentiable Sinkhorn iterations provided by the \texttt{ott-jax} library of \citet{cuturi2022optimal}. We set the entropic regularization to $\varepsilon = 0.05$, the Monge-gap weight to $\lambda = 0.1$, and use mean cost normalization (i.e., normalizing the cost function along the training samples to ensure that the average cost equals one) following the default configuration in \citet{uscidda2023monge}. Training minimizes \eqref{eq:monge_gap_objective} by Adam with fixed learning rate $10^{-2}$ as the objective involves no saddle point problem, batch size $M = 256$, and $T = 1000$ outer iterations; all remaining hyperparameters are kept at their \texttt{ott-jax} defaults.

\parabold{Prediction MSE tables for OT map estimation}\label{sec:OT_testmse_tables}

We first present the prediction MSE of the estimated OT map $\hat T = \nabla \hat \phi$ against the ground-truth OT map $\overline T = \nabla \phi_\ell$ for each of the four OT potentials (a)--(d) and dimensions $d \in \{10, 20, 50\}$. The MSE is evaluated on an independent test set of $N_{\text{test}} = 1{,}000$ samples; mean and standard error (SE) are computed over $10$ independent runs. We compare a comprehensive set of estimators: ICNN with ReLU and Leaky ReLU activations, ICNNq with a quadratic skip connection and either ReLU or Softplus activations (the latter for varying smoothness parameters $\tau \in \{0.1, 1, 10\}$), HyCNN with the log-sum-exp activation for $\tau \in \{0.1, 1, 10\}$, the Monge Gap estimator, and entropic OT (EOT) with entropy regularization parameter $\epsilon \in \{0.1, 1, 10\}$. We only report the results for the HyCNN architecture (depth $L$ and smoothness $\tau$) that achieves the smallest Sinkhorn divergence an independent validation set as described in the next section, and we report the results for all $\tau$ variants of ICNNs to provide a comprehensive picture of the performance of these baselines across different smoothness levels.
All ICNN architectures use width $64$ whereas HyCNNs use width $48$, with depths $L \in \{2, 4, 6\}$. The full training setup is described above.
 
The results in Tables \ref{tab:ot_testmse_collapsed_psi1}, \ref{tab:ot_testmse_collapsed_psi4}, \ref{tab:ot_testmse_collapsed_psi2}, and \ref{tab:ot_testmse_collapsed_psi3} reveal several consistent patterns. On the smooth quadratic targets (a) and (b), HyCNN is the best estimator in every dimension and the three HyCNN architectures form the three smallest MSE entries throughout, dominating all baselines, including the strongest ICNN-based competitor, ICNNq (Softplus), by a factor of roughly $3$--$5$ in MSE. Here the large smoothness setting $\tau = 10$ is uniformly preferred by the Sinkhorn validation criterion. On the non-smooth target (c), HyCNN remains best at $d \in \{20, 50\}$ and is only marginally surpassed by the tuned ICNNq (Softplus) configuration at $d = 10$. On the strongly curved target (d), by contrast, ICNNq (Softplus) with a small to moderate temperature $\tau \in \{0.1, 1\}$ clearly beats HyCNN at $d = 10$ and $d = 20$, whereas HyCNN regains the lead at $d = 50$. For HyCNN on (c) and (d) the optimal smoothness is target- and dimension-dependent, with small $\tau \in \{0.1, 1\}$ preferred at $d = 10$ and $\tau = 10$ most accurate and stable at $d = 50$. Among the ICNN baselines, the quadratic-skip variants (ICNNq) clearly outperform the plain ICNN variants across all settings, and the Softplus activation is consistently more accurate and more stable than ReLU once $\tau$ is tuned: small $\tau \in \{0.1, 1\}$ is preferred at $d \in \{10, 20\}$, while large $\tau = 10$ is required at $d = 50$ to avoid the depth-induced instabilities exhibited by ICNNq (ReLU) at $L \in \{4, 6\}$ on targets (c) and (d). Even ICNNq (Softplus) with small $\tau = 0.1$ degrades sharply at $L = 6$, $d = 50$ on the non-smooth and strongly curved targets, confirming that low smoothness amplifies high-dimensional instabilities. ICNN (L-ReLU) is the least robust method overall and exhibits catastrophic failures at $L = 2$, $d = 50$ across all four targets. The best entropic OT estimator is uniformly worst on (a)--(d), and the Monge Gap is outperformed by HyCNN by an order of magnitude in nearly all settings. HyCNN remains stable across $L \in \{2, 4, 6\}$ once $\tau$ is chosen appropriately, with its relative advantage most pronounced on the smooth quadratic targets and at high dimensions.

\parabold{Sinkhorn divergence tables for HyCNN architecture selection}\label{sec:OT_sinkhorn_tables}

In addition to the prediction MSE tables above, we present Sinkhorn divergence tables (with regularization $\varepsilon = 0.1$) evaluated on an independent validation set of $n_{\text{valid}} = 1{,}000$ fresh sample pairs from $P$ and $Q = (\nabla \phi_\ell)_\# P$, computed separately from the training and test data. This validation metric assesses the distributional fit of the learned OT map and serves as a guide for selecting the HyCNN architecture (depth $L$ and smoothness parameter $\tau$) for a given problem. Concretely, for a HyCNN estimator $\hat T$ based on training data, we compute for empirical measures $\hat P_{n_{\text{valid}}}, \hat Q_{n_{\text{valid}}}$ the Sinkhorn divergence \citep{feydy2019interpolating, peyre2019computational} based on the squared Euclidean distance with regularization parameter $\epsilon = 0.1$, 
\begin{align*}
    \mathcal{S}_{\epsilon}(\hat T_{\#} \hat P_{n_{\text{valid}}}, \hat Q_{n_{\text{valid}}}).
\end{align*}
For each simulation run, we draw a single pair of validation data sets from domain and target and compute the above Sinkhorn divergence. Given the $10$ independent runs for training, we thus compute $10$ independent Sinkhorn divergence values which we use to compute mean and standard error (SE). The resulting validation metrics for the HyCNN based OT estimators are depicted in Tables \ref{tab:ot_sinkhorn_full_psi1},
\ref{tab:ot_sinkhorn_full_psi4},
\ref{tab:ot_sinkhorn_full_psi2}, and \ref{tab:ot_sinkhorn_full_psi3},   right below the respective prediction MSE table with the same ground truth. 

The Sinkhorn divergence validation metrics demonstrate that a well-performing HyCNN estimator can be consistently identified through our validation procedure. We conducted analogous experiments with alternative choices of the regularization parameter $\epsilon$ and found the validation technique to yield consistent results across these settings. A noteworthy observation is that identifying an estimator with even a moderately smaller Sinkhorn divergence can translate into a significantly improved fit. We find this phenomenon striking and attribute it to high-dimensional effects as it is inherently difficult to certify a substantially better distributional fit on the basis of independent realizations alone. %

\begin{table}[h!]
\caption{Prediction MSE (mean and SE over $10$ runs) of the estimated OT map for the OT map (a) $\textstyle T_1(\bx) = \bx$ based on $n = m = 5000$ independent training sample points from the source and target distribution. Per HyCNN architecture ($48 \times L$), the $\tau$ variant with the smallest Sinkhorn divergence validation metric ($\epsilon = 0.1$) for $1000$ independent validation samples at that dimension is used; see Table~\ref{tab:ot_sinkhorn_full_psi1}. Per dimension, the smallest mean MSE per method type is bold and colored, with the \underline{three smallest} values across all methods underlined.}
\label{tab:ot_testmse_collapsed_psi1}
\scriptsize
\makebox[\textwidth][c]{%
\begin{tabular}{llccc}
\toprule
\multicolumn{2}{c}{Method $\backslash$ $d$} & 10 & 20 & 50 \\
\midrule
\multirow{3}{*}{Entropic OT} & $\epsilon = 0.1$ & 2.678 (0.017) & 12.098 (0.039) & 51.783 (0.075) \\
 & $\epsilon = 1$ & \textcolor[HTML]{006400}{\textbf{0.999}} (0.008) & \textcolor[HTML]{006400}{\textbf{6.537}} (0.034) & 41.019 (0.067) \\
 & $\epsilon = 10$ & 6.538 (0.022) & 13.129 (0.031) & \textcolor[HTML]{006400}{\textbf{32.786}} (0.058) \\
\midrule
\multirow{3}{*}{MongeGap} & $(64 \times 2)$ & \textcolor[HTML]{E334D4}{\textbf{1.067}} (0.016) & \textcolor[HTML]{E334D4}{\textbf{4.532}} (0.042) & 22.429 (0.150) \\
 & $(64 \times 4)$ & 1.210 (0.035) & 5.296 (0.091) & 21.936 (0.106) \\
 & $(64 \times 6)$ & 1.217 (0.048) & 5.301 (0.117) & \textcolor[HTML]{E334D4}{\textbf{19.610}} (0.109) \\
\midrule
\multirow{3}{*}{ICNN (ReLU)} & $(64 \times 2)$ & 0.852 (0.021) & 3.221 (0.061) & 20.792 (0.567) \\
 & $(64 \times 4)$ & 0.498 (0.010) & 2.222 (0.038) & 12.205 (0.352) \\
 & $(64 \times 6)$ & \textcolor[HTML]{F3A11C}{\textbf{0.391}} (0.007) & \textcolor[HTML]{F3A11C}{\textbf{1.965}} (0.065) & \textcolor[HTML]{F3A11C}{\textbf{9.809}} (0.186) \\
\midrule
\multirow{3}{*}{ICNN (L-ReLU)} & $(64 \times 2)$ & 1.213 (0.055) & 4.360 (0.135) & 614.316 (216.829) \\
 & $(64 \times 4)$ & 0.734 (0.029) & 2.599 (0.089) & 20.861 (1.983) \\
 & $(64 \times 6)$ & \textcolor[HTML]{A46500}{\textbf{0.679}} (0.130) & \textcolor[HTML]{A46500}{\textbf{2.002}} (0.072) & \textcolor[HTML]{A46500}{\textbf{10.183}} (0.474) \\
\midrule
\multirow{3}{*}{ICNNq (ReLU)} & $(64 \times 2)$ & \textcolor[HTML]{E56B00}{\textbf{0.222}} (0.009) & \textcolor[HTML]{E56B00}{\textbf{0.801}} (0.067) & \textcolor[HTML]{E56B00}{\textbf{4.101}} (0.299) \\
 & $(64 \times 4)$ & 0.286 (0.011) & 1.123 (0.062) & 9.581 (0.479) \\
 & $(64 \times 6)$ & 3.788 (3.404) & 1.401 (0.049) & 127.802 (79.678) \\
\midrule
\multirow{9}{*}{ICNNq (Softplus)} & $(64 \times 2)$; $\tau = 0.1$ & 0.121 (0.008) & 0.567 (0.031) & 3.350 (0.143) \\
 & $(64 \times 2)$; $\tau = 1$ & \textcolor[HTML]{8B1301}{\textbf{0.073}} (0.002) & \textcolor[HTML]{8B1301}{\textbf{0.365}} (0.020) & 3.032 (0.256) \\
 & $(64 \times 2)$; $\tau = 10$ & 0.195 (0.007) & 0.486 (0.010) & \textcolor[HTML]{8B1301}{\textbf{2.046}} (0.028) \\
 & $(64 \times 4)$; $\tau = 0.1$ & 0.192 (0.004) & 0.858 (0.036) & 7.563 (0.283) \\
 & $(64 \times 4)$; $\tau = 1$ & 0.103 (0.003) & 0.407 (0.017) & 6.950 (0.244) \\
 & $(64 \times 4)$; $\tau = 10$ & 0.188 (0.008) & 0.490 (0.007) & 2.367 (0.020) \\
 & $(64 \times 6)$; $\tau = 0.1$ & 0.233 (0.010) & 0.971 (0.010) & 10.011 (1.030) \\
 & $(64 \times 6)$; $\tau = 1$ & 0.097 (0.003) & 0.570 (0.050) & 6.617 (1.388) \\
 & $(64 \times 6)$; $\tau = 10$ & 0.186 (0.007) & 0.504 (0.005) & 2.482 (0.021) \\
\midrule
\multirow{3}{*}{HyCNN} & $(48 \times 2)$ & \underline{0.027} (0.001) & \underline{0.109} (0.005) & \underline{0.719} (0.013) \\
 & $(48 \times 4)$ & \textcolor[HTML]{0080DB}{\underline{\textbf{0.023}}} (0.001) & \textcolor[HTML]{0080DB}{\underline{\textbf{0.067}}} (0.002) & \underline{0.392} (0.006) \\
 & $(48 \times 6)$ & \underline{0.065} (0.001) & \underline{0.076} (0.003) & \textcolor[HTML]{0080DB}{\underline{\textbf{0.386}}} (0.004) \\
\bottomrule
\end{tabular}%
}
\end{table}

\begin{table}[h!]
\caption{Sinkhorn divergence (mean and SE over $10$ runs) of the estimated OT map for the OT map (a) $\textstyle T_1(\bx) = \bx$. Per HyCNN architecture ($48 \times L$) and per dimension, the $\tau$ variant with the smallest mean Sinkhorn divergence is bold and colored, with the \underline{three smallest} values across all HyCNN rows underlined.}
\label{tab:ot_sinkhorn_full_psi1}
\scriptsize
\makebox[\textwidth][c]{%
\begin{tabular}{llccc}
\toprule
\multicolumn{2}{c}{HyCNN architecture $\backslash$ $d$} & 10 & 20 & 50 \\
\midrule
 & $(48 \times 2)$; $\tau = 0.1$ & 4.309 (0.024) & 15.970 (0.051) & 61.780 (0.167) \\
 & $(48 \times 2)$; $\tau = 1$ & 4.112 (0.020) & 15.585 (0.048) & 60.994 (0.088) \\
 & $(48 \times 2)$; $\tau = 10$ & \textcolor[HTML]{0080DB}{\underline{\textbf{4.032}}} (0.020) & \underline{\textbf{14.900}} (0.034) & \underline{\textbf{57.793}} (0.102) \\ \hline
 & $(48 \times 4)$; $\tau = 0.1$ & 4.191 (0.027) & 15.600 (0.040) & 62.591 (0.109) \\
 & $(48 \times 4)$; $\tau = 1$ & 4.056 (0.019) & 15.343 (0.035) & 60.454 (0.155) \\
 & $(48 \times 4)$; $\tau = 10$ & \underline{\textbf{4.044}} (0.018) & \underline{\textbf{14.900}} (0.031) & \underline{\textbf{57.555}} (0.091) \\ \hline
 & $(48 \times 6)$; $\tau = 0.1$ & 4.228 (0.021) & 15.567 (0.031) & 61.553 (0.078) \\
 & $(48 \times 6)$; $\tau = 1$ & \underline{\textbf{4.047}} (0.019) & 15.065 (0.032) & 59.543 (0.115) \\
 & $(48 \times 6)$; $\tau = 10$ & 4.054 (0.021) & \textcolor[HTML]{0080DB}{\underline{\textbf{14.888}}} (0.026) & \textcolor[HTML]{0080DB}{\underline{\textbf{57.499}}} (0.096) \\
\bottomrule
\end{tabular}%
}
\end{table}

\begin{table}[h!]
\caption{Prediction MSE (mean and SE over $10$ runs) of the estimated OT map for the OT map (b) $\textstyle T_2(\bx) = \bigl((1 + \frac{\sin(i)}{2})\, x_i\bigr)_{i\in [d]}$ based on $n = m = 5000$ independent training sample points from the source and target distribution. Per HyCNN architecture ($48 \times L$), the $\tau$ variant with the smallest Sinkhorn divergence validation metric ($\epsilon = 0.1$) for $1000$ independent validation samples at that dimension is used; see Table~\ref{tab:ot_sinkhorn_full_psi4}. Per dimension, the smallest mean MSE per method type is bold and colored, with the \underline{three smallest} values across all methods underlined.}
\label{tab:ot_testmse_collapsed_psi4}
\scriptsize
\makebox[\textwidth][c]{%
\begin{tabular}{llccc}
\toprule
\multicolumn{2}{c}{Method $\backslash$ $d$} & 10 & 20 & 50 \\
\midrule
\multirow{3}{*}{Entropic OT} & $\epsilon = 0.1$ & 2.927 (0.018) & 12.404 (0.033) & 52.669 (0.107) \\
 & $\epsilon = 1$ & \textcolor[HTML]{006400}{\textbf{1.078}} (0.011) & \textcolor[HTML]{006400}{\textbf{6.589}} (0.034) & 41.518 (0.101) \\
 & $\epsilon = 10$ & 7.435 (0.026) & 14.022 (0.040) & \textcolor[HTML]{006400}{\textbf{33.704}} (0.084) \\
\midrule
\multirow{3}{*}{MongeGap} & $(64 \times 2)$ & \textcolor[HTML]{E334D4}{\textbf{1.383}} (0.033) & 5.053 (0.037) & 20.856 (0.110) \\
 & $(64 \times 4)$ & 1.558 (0.053) & 5.149 (0.070) & 19.127 (0.140) \\
 & $(64 \times 6)$ & 1.617 (0.045) & \textcolor[HTML]{E334D4}{\textbf{4.994}} (0.075) & \textcolor[HTML]{E334D4}{\textbf{16.897}} (0.088) \\
\midrule
\multirow{3}{*}{ICNN (ReLU)} & $(64 \times 2)$ & 1.010 (0.010) & 3.822 (0.058) & 19.617 (0.202) \\
 & $(64 \times 4)$ & 0.597 (0.007) & 2.341 (0.064) & 12.451 (0.180) \\
 & $(64 \times 6)$ & \textcolor[HTML]{F3A11C}{\textbf{0.426}} (0.007) & \textcolor[HTML]{F3A11C}{\textbf{1.939}} (0.066) & \textcolor[HTML]{F3A11C}{\textbf{9.786}} (0.248) \\
\midrule
\multirow{3}{*}{ICNN (L-ReLU)} & $(64 \times 2)$ & 1.389 (0.071) & 4.702 (0.157) & 60.328 (17.675) \\
 & $(64 \times 4)$ & 0.779 (0.031) & 2.735 (0.153) & 16.110 (0.854) \\
 & $(64 \times 6)$ & \textcolor[HTML]{A46500}{\textbf{0.541}} (0.021) & \textcolor[HTML]{A46500}{\textbf{2.161}} (0.068) & \textcolor[HTML]{A46500}{\textbf{9.996}} (0.275) \\
\midrule
\multirow{3}{*}{ICNNq (ReLU)} & $(64 \times 2)$ & \textcolor[HTML]{E56B00}{\textbf{0.249}} (0.020) & \textcolor[HTML]{E56B00}{\textbf{0.694}} (0.046) & \textcolor[HTML]{E56B00}{\textbf{3.633}} (0.241) \\
 & $(64 \times 4)$ & 0.318 (0.029) & 1.059 (0.037) & 8.347 (0.260) \\
 & $(64 \times 6)$ & 0.368 (0.018) & 1.455 (0.060) & 26.901 (9.868) \\
\midrule
\multirow{9}{*}{ICNNq (Softplus)} & $(64 \times 2)$; $\tau = 0.1$ & 0.139 (0.008) & 0.550 (0.023) & 3.259 (0.174) \\
 & $(64 \times 2)$; $\tau = 1$ & \textcolor[HTML]{8B1301}{\textbf{0.078}} (0.003) & \textcolor[HTML]{8B1301}{\textbf{0.324}} (0.012) & 3.447 (0.257) \\
 & $(64 \times 2)$; $\tau = 10$ & 0.242 (0.013) & 0.570 (0.006) & \textcolor[HTML]{8B1301}{\textbf{2.019}} (0.056) \\
 & $(64 \times 4)$; $\tau = 0.1$ & 0.215 (0.008) & 0.833 (0.031) & 7.303 (0.289) \\
 & $(64 \times 4)$; $\tau = 1$ & 0.108 (0.003) & 0.445 (0.037) & 6.358 (0.318) \\
 & $(64 \times 4)$; $\tau = 10$ & 0.242 (0.007) & 0.553 (0.011) & 2.419 (0.034) \\
 & $(64 \times 6)$; $\tau = 0.1$ & 0.251 (0.012) & 1.022 (0.019) & 20.532 (12.168) \\
 & $(64 \times 6)$; $\tau = 1$ & 0.111 (0.003) & 0.603 (0.028) & 4.063 (0.280) \\
 & $(64 \times 6)$; $\tau = 10$ & 0.243 (0.009) & 0.584 (0.015) & 2.472 (0.024) \\
\midrule
\multirow{3}{*}{HyCNN} & $(48 \times 2)$ & \underline{0.030} (0.001) & \underline{0.102} (0.003) & \underline{0.676} (0.020) \\
 & $(48 \times 4)$ & \textcolor[HTML]{0080DB}{\underline{\textbf{0.029}}} (0.003) & \textcolor[HTML]{0080DB}{\underline{\textbf{0.071}}} (0.002) & \textcolor[HTML]{0080DB}{\underline{\textbf{0.406}}} (0.005) \\
 & $(48 \times 6)$ & \underline{0.073} (0.004) & \underline{0.106} (0.004) & \underline{0.499} (0.018) \\
\bottomrule
\end{tabular}%
}
\end{table}

\begin{table}[h!]
\caption{
Sinkhorn divergence (mean and SE over $10$ runs) of the estimated OT map for the OT map (b) $\textstyle T_2(\bx) = \bigl((1 + \frac{\sin(i)}{2})\, x_i\bigr)_{i\in [d]}$. Per HyCNN architecture ($48 \times L$) and per dimension, the $\tau$ variant with the smallest mean Sinkhorn divergence is bold and colored, with the \underline{three smallest} values across all HyCNN rows underlined.}
\label{tab:ot_sinkhorn_full_psi4}
\scriptsize
\makebox[\textwidth][c]{%
\begin{tabular}{llccc}
\toprule
\multicolumn{2}{c}{HyCNN architecture $\backslash$ $d$} & 10 & 20 & 50 \\
\midrule
 & $(48 \times 2)$; $\tau = 0.1$ & 4.542 (0.035) & 16.132 (0.084) & 62.161 (0.336) \\
 & $(48 \times 2)$; $\tau = 1$ & 4.337 (0.028) & 15.596 (0.043) & 61.467 (0.080) \\
 & $(48 \times 2)$; $\tau = 10$ & \textcolor[HTML]{0080DB}{\underline{\textbf{4.261}}} (0.025) & \textcolor[HTML]{0080DB}{\underline{\textbf{14.897}}} (0.035) & \underline{\textbf{58.242}} (0.111) \\ \hline
 & $(48 \times 4)$; $\tau = 0.1$ & 4.439 (0.033) & 15.650 (0.046) & 63.229 (0.147) \\
 & $(48 \times 4)$; $\tau = 1$ & 4.281 (0.025) & 15.302 (0.043) & 60.680 (0.238) \\
 & $(48 \times 4)$; $\tau = 10$ & \textbf{4.277} (0.025) & \underline{\textbf{14.913}} (0.036) & \textcolor[HTML]{0080DB}{\underline{\textbf{58.124}}} (0.115) \\ \hline
 & $(48 \times 6)$; $\tau = 0.1$ & 4.468 (0.026) & 15.621 (0.045) & 61.882 (0.142) \\
 & $(48 \times 6)$; $\tau = 1$ & \underline{\textbf{4.267}} (0.026) & 15.044 (0.037) & 59.665 (0.155) \\
 & $(48 \times 6)$; $\tau = 10$ & \underline{4.274} (0.026) & \underline{\textbf{14.936}} (0.040) & \underline{\textbf{58.450}} (0.130) \\
\bottomrule
\end{tabular}%
}
\end{table}

\begin{table}[h!]
\caption{Prediction MSE (mean and SE over $10$ runs) of the estimated OT map for the OT map (c) $\textstyle T_3(\bx) = (x_i + \textup{sign}(x_i))_{i\in [d]}$ based on $n = m = 5000$ independent training sample points from the source and target distribution. Per HyCNN architecture ($48 \times L$), the $\tau$ variant with the smallest Sinkhorn divergence validation metric ($\epsilon = 0.1$) for $1000$ independent validation samples at that dimension is used; see Table~\ref{tab:ot_sinkhorn_full_psi2}. Per dimension, the smallest mean MSE per method type is bold and colored, with the \underline{three smallest} values across all methods underlined.}
\label{tab:ot_testmse_collapsed_psi2}
\scriptsize
\makebox[\textwidth][c]{%
\begin{tabular}{llccc}
\toprule
\multicolumn{2}{c}{Method $\backslash$ $d$} & 10 & 20 & 50 \\
\midrule
\multirow{3}{*}{Entropic OT} & $\epsilon = 0.1$ & 8.155 (0.033) & 42.243 (0.072) & 189.406 (0.327) \\
 & $\epsilon = 1$ & \textcolor[HTML]{006400}{\textbf{3.834}} (0.024) & \textcolor[HTML]{006400}{\textbf{30.997}} (0.111) & 169.342 (0.385) \\
 & $\epsilon = 10$ & 16.868 (0.025) & 33.891 (0.029) & \textcolor[HTML]{006400}{\textbf{89.193}} (0.108) \\
\midrule
\multirow{3}{*}{MongeGap} & $(64 \times 2)$ & \textcolor[HTML]{E334D4}{\textbf{5.371}} (0.060) & \textcolor[HTML]{E334D4}{\textbf{19.623}} (0.067) & 84.647 (0.283) \\
 & $(64 \times 4)$ & 7.208 (0.142) & 19.640 (0.041) & 75.989 (0.205) \\
 & $(64 \times 6)$ & 7.476 (0.107) & 19.933 (0.111) & \textcolor[HTML]{E334D4}{\textbf{65.164}} (0.212) \\
\midrule
\multirow{3}{*}{ICNN (ReLU)} & $(64 \times 2)$ & 4.670 (0.263) & 20.552 (0.149) & 89.585 (1.698) \\
 & $(64 \times 4)$ & \textcolor[HTML]{F3A11C}{\textbf{2.165}} (0.133) & 13.822 (0.240) & 66.244 (2.288) \\
 & $(64 \times 6)$ & 2.461 (0.209) & \textcolor[HTML]{F3A11C}{\textbf{9.641}} (0.673) & \textcolor[HTML]{F3A11C}{\textbf{55.948}} (0.771) \\
\midrule
\multirow{3}{*}{ICNN (L-ReLU)} & $(64 \times 2)$ & 134.056 (115.902) & $2.29 \cdot 10^{3}$ (1572.207) & $3.82 \cdot 10^{3}$ (1134.576) \\
 & $(64 \times 4)$ & 2.784 (0.196) & 13.704 (0.131) & 85.385 (2.898) \\
 & $(64 \times 6)$ & \textcolor[HTML]{A46500}{\textbf{2.271}} (0.194) & \textcolor[HTML]{A46500}{\textbf{11.384}} (0.232) & \textcolor[HTML]{A46500}{\textbf{59.944}} (2.004) \\
\midrule
\multirow{3}{*}{ICNNq (ReLU)} & $(64 \times 2)$ & 1.522 (0.067) & 8.499 (0.057) & \textcolor[HTML]{E56B00}{\textbf{26.985}} (0.571) \\
 & $(64 \times 4)$ & \textcolor[HTML]{E56B00}{\textbf{1.220}} (0.026) & 7.211 (0.350) & 48.320 (1.869) \\
 & $(64 \times 6)$ & 1.558 (0.093) & \textcolor[HTML]{E56B00}{\textbf{5.828}} (0.387) & 61.714 (0.967) \\
\midrule
\multirow{9}{*}{ICNNq (Softplus)} & $(64 \times 2)$; $\tau = 0.1$ & 1.605 (0.165) & 8.410 (0.055) & 28.272 (1.198) \\
 & $(64 \times 2)$; $\tau = 1$ & 3.519 (0.035) & 8.172 (0.025) & \textcolor[HTML]{8B1301}{\textbf{24.779}} (0.495) \\
 & $(64 \times 2)$; $\tau = 10$ & 4.769 (0.036) & 9.492 (0.040) & 25.550 (0.229) \\
 & $(64 \times 4)$; $\tau = 0.1$ & \textcolor[HTML]{8B1301}{\underline{\textbf{0.942}}} (0.021) & 6.896 (0.494) & 44.097 (2.595) \\
 & $(64 \times 4)$; $\tau = 1$ & 3.231 (0.071) & 8.967 (0.105) & 30.605 (0.596) \\
 & $(64 \times 4)$; $\tau = 10$ & 4.620 (0.047) & 9.634 (0.068) & 27.468 (0.149) \\
 & $(64 \times 6)$; $\tau = 0.1$ & 1.069 (0.018) & \textcolor[HTML]{8B1301}{\underline{\textbf{5.025}}} (0.480) & 228.345 (118.208) \\
 & $(64 \times 6)$; $\tau = 1$ & 2.878 (0.081) & 9.241 (0.102) & 28.474 (0.415) \\
 & $(64 \times 6)$; $\tau = 10$ & 4.565 (0.047) & 9.568 (0.056) & 27.948 (0.100) \\
\midrule
\multirow{3}{*}{HyCNN} & $(48 \times 2)$ & 1.873 (0.042) & 7.716 (0.026) & \underline{21.572} (0.077) \\
 & $(48 \times 4)$ & \underline{1.026} (0.010) & \textcolor[HTML]{0080DB}{\underline{\textbf{4.704}}} (0.248) & \underline{19.840} (0.038) \\
 & $(48 \times 6)$ & \textcolor[HTML]{0080DB}{\underline{\textbf{1.009}}} (0.016) & \underline{5.571} (0.168) & \textcolor[HTML]{0080DB}{\underline{\textbf{19.771}}} (0.053) \\
\bottomrule
\end{tabular}%
}
\end{table}

\begin{table}[h!]
\caption{Sinkhorn divergence (mean and SE over $10$ runs) of the estimated OT map for the OT map (c) $\textstyle T_3(\bx) = (x_i + \textup{sign}(x_i))_{i\in [d]}$. Per HyCNN architecture ($48 \times L$) and per dimension, the $\tau$ variant with the smallest mean Sinkhorn divergence is bold and colored, with the \underline{three smallest} values across all HyCNN rows underlined.}
\label{tab:ot_sinkhorn_full_psi2}
\scriptsize
\makebox[\textwidth][c]{%
\begin{tabular}{llccc}
\toprule
\multicolumn{2}{c}{HyCNN architecture $\backslash$ $d$} & 10 & 20 & 50 \\
\midrule
 & $(48 \times 2)$; $\tau = 0.1$ & \underline{\textbf{10.721}} (0.077) & 54.105 (0.292) & 221.403 (0.919) \\
 & $(48 \times 2)$; $\tau = 1$ & 11.884 (0.126) & 54.741 (0.187) & 209.904 (0.644) \\
 & $(48 \times 2)$; $\tau = 10$ & 14.601 (0.076) & \textbf{53.160} (0.070) & \underline{\textbf{206.881}} (0.311) \\ \hline
 & $(48 \times 4)$; $\tau = 0.1$ & \underline{\textbf{10.576}} (0.082) & \underline{50.521} (0.125) & 219.477 (0.501) \\
 & $(48 \times 4)$; $\tau = 1$ & 11.161 (0.077) & \textcolor[HTML]{0080DB}{\underline{\textbf{48.541}}} (0.376) & 210.959 (0.286) \\
 & $(48 \times 4)$; $\tau = 10$ & 14.668 (0.083) & 53.117 (0.073) & \underline{\textbf{206.099}} (0.296) \\ \hline
 & $(48 \times 6)$; $\tau = 0.1$ & \textcolor[HTML]{0080DB}{\underline{\textbf{10.570}}} (0.076) & \underline{\textbf{50.527}} (0.144) & 214.874 (0.444) \\
 & $(48 \times 6)$; $\tau = 1$ & 11.498 (0.098) & 50.545 (0.277) & 208.679 (0.276) \\
 & $(48 \times 6)$; $\tau = 10$ & 14.647 (0.075) & 53.056 (0.056) & \textcolor[HTML]{0080DB}{\underline{\textbf{205.461}}} (0.309) \\
\bottomrule
\end{tabular}%
}
\end{table}

\begin{table}[h!]
\caption{Prediction MSE (mean and SE over $10$ runs) of the estimated OT map for the OT map (d) $\textstyle T_4(\bx) = (4x_i^3)_{i \in [d]}$ based on $n = m = 5000$ independent training sample points from the source and target distribution. Per HyCNN architecture ($48 \times L$), the $\tau$ variant with the smallest Sinkhorn divergence validation metric ($\epsilon = 0.1$) for $1000$ independent validation samples at that dimension is used; see Table~\ref{tab:ot_sinkhorn_full_psi3}. Per dimension, the smallest mean MSE per method type is bold and colored, with the \underline{three smallest} values across all methods underlined.}
\label{tab:ot_testmse_collapsed_psi3}
\scriptsize
\makebox[\textwidth][c]{%
\begin{tabular}{llccc}
\toprule
\multicolumn{2}{c}{Method $\backslash$ $d$} & 10 & 20 & 50 \\
\midrule
\multirow{3}{*}{Entropic OT} & $\epsilon = 0.1$ & 8.990 (0.046) & 32.534 (0.144) & 128.256 (0.323) \\
 & $\epsilon = 1$ & \textcolor[HTML]{006400}{\textbf{3.417}} (0.029) & \textcolor[HTML]{006400}{\textbf{16.729}} (0.092) & 98.421 (0.422) \\
 & $\epsilon = 10$ & 16.439 (0.073) & 32.839 (0.138) & \textcolor[HTML]{006400}{\textbf{82.155}} (0.217) \\
\midrule
\multirow{3}{*}{MongeGap} & $(64 \times 2)$ & 10.756 (0.186) & 19.018 (0.242) & 55.945 (0.267) \\
 & $(64 \times 4)$ & \textcolor[HTML]{E334D4}{\textbf{10.500}} (0.179) & \textcolor[HTML]{E334D4}{\textbf{18.823}} (0.178) & 52.765 (0.218) \\
 & $(64 \times 6)$ & 11.182 (0.242) & 19.123 (0.166) & \textcolor[HTML]{E334D4}{\textbf{51.618}} (0.163) \\
\midrule
\multirow{3}{*}{ICNN (ReLU)} & $(64 \times 2)$ & 5.675 (0.143) & 14.824 (0.200) & 68.621 (2.424) \\
 & $(64 \times 4)$ & 4.821 (0.081) & 13.891 (0.320) & 46.814 (0.934) \\
 & $(64 \times 6)$ & \textcolor[HTML]{F3A11C}{\textbf{3.909}} (0.080) & \textcolor[HTML]{F3A11C}{\textbf{11.780}} (0.297) & \textcolor[HTML]{F3A11C}{\textbf{41.207}} (0.559) \\
\midrule
\multirow{3}{*}{ICNN (L-ReLU)} & $(64 \times 2)$ & 6.677 (0.120) & 21.465 (0.546) & $4.5 \cdot 10^{3}$ (2303.743) \\
 & $(64 \times 4)$ & 5.140 (0.131) & 13.069 (0.265) & 65.749 (1.899) \\
 & $(64 \times 6)$ & \textcolor[HTML]{A46500}{\textbf{4.823}} (0.163) & \textcolor[HTML]{A46500}{\textbf{12.307}} (0.373) & \textcolor[HTML]{A46500}{\textbf{44.342}} (0.820) \\
\midrule
\multirow{3}{*}{ICNNq (ReLU)} & $(64 \times 2)$ & 1.754 (0.042) & 10.111 (0.121) & \textcolor[HTML]{E56B00}{\textbf{25.761}} (0.543) \\
 & $(64 \times 4)$ & \textcolor[HTML]{E56B00}{\textbf{1.435}} (0.049) & \textcolor[HTML]{E56B00}{\textbf{8.165}} (0.616) & 437.974 (50.407) \\
 & $(64 \times 6)$ & 1.746 (0.155) & 22.471 (3.388) & $1.82 \cdot 10^{4}$ (2400.253) \\
\midrule
\multirow{9}{*}{ICNNq (Softplus)} & $(64 \times 2)$; $\tau = 0.1$ & 0.878 (0.038) & 8.627 (0.049) & 26.297 (0.826) \\
 & $(64 \times 2)$; $\tau = 1$ & \underline{0.546} (0.064) & 7.295 (0.422) & 26.434 (0.609) \\
 & $(64 \times 2)$; $\tau = 10$ & 3.544 (0.021) & 7.969 (0.035) & 22.885 (0.060) \\
 & $(64 \times 4)$; $\tau = 0.1$ & 0.722 (0.012) & 5.543 (0.209) & 53.773 (3.493) \\
 & $(64 \times 4)$; $\tau = 1$ & \textcolor[HTML]{8B1301}{\underline{\textbf{0.360}}} (0.011) & \underline{3.532} (0.440) & 36.023 (0.716) \\
 & $(64 \times 4)$; $\tau = 10$ & 3.348 (0.044) & 7.731 (0.036) & 22.116 (0.037) \\
 & $(64 \times 6)$; $\tau = 0.1$ & 0.812 (0.030) & \underline{5.330} (0.183) & $2.96 \cdot 10^{3}$ (1116.051) \\
 & $(64 \times 6)$; $\tau = 1$ & \underline{0.412} (0.025) & \textcolor[HTML]{8B1301}{\underline{\textbf{1.913}}} (0.187) & 30.990 (0.838) \\
 & $(64 \times 6)$; $\tau = 10$ & 3.437 (0.037) & 7.611 (0.027) & \textcolor[HTML]{8B1301}{\textbf{21.854}} (0.067) \\
\midrule
\multirow{3}{*}{HyCNN} & $(48 \times 2)$ & 2.536 (0.166) & 7.850 (0.019) & \underline{20.435} (0.069) \\
 & $(48 \times 4)$ & 1.711 (0.190) & \textcolor[HTML]{0080DB}{\textbf{7.775}} (0.015) & \textcolor[HTML]{0080DB}{\underline{\textbf{19.808}}} (0.047) \\
 & $(48 \times 6)$ & \textcolor[HTML]{0080DB}{\textbf{0.822}} (0.027) & 7.894 (0.028) & \underline{20.126} (0.059) \\
\bottomrule
\end{tabular}%
}
\end{table}

\begin{table}[h!]
\caption{Sinkhorn divergence (mean and SE over $10$ runs) of the estimated OT map for the OT map (d) $\textstyle T_4(\bx) = (4x_i^3)_{i \in [d]}$. Per HyCNN architecture ($48 \times L$) and per dimension, the $\tau$ variant with the smallest mean Sinkhorn divergence is bold and colored, with the \underline{three smallest} values across all HyCNN rows underlined.}
\label{tab:ot_sinkhorn_full_psi3}
\scriptsize
\makebox[\textwidth][c]{%
\begin{tabular}{llccc}
\toprule
\multicolumn{2}{c}{HyCNN architecture $\backslash$ $d$} & 10 & 20 & 50 \\
\midrule
 & $(48 \times 2)$; $\tau = 0.1$ & \underline{\textbf{9.730}} (0.093) & 37.941 (0.149) & 140.750 (0.394) \\
 & $(48 \times 2)$; $\tau = 1$ & 10.667 (0.026) & 36.569 (0.101) & 133.017 (0.673) \\
 & $(48 \times 2)$; $\tau = 10$ & 10.445 (0.036) & \underline{\textbf{35.214}} (0.077) & \underline{\textbf{132.086}} (0.175) \\ \hline
 & $(48 \times 4)$; $\tau = 0.1$ & \underline{\textbf{9.687}} (0.136) & 37.262 (0.094) & 145.637 (0.470) \\
 & $(48 \times 4)$; $\tau = 1$ & 10.555 (0.032) & 35.749 (0.079) & 135.593 (0.180) \\
 & $(48 \times 4)$; $\tau = 10$ & 10.446 (0.035) & \textcolor[HTML]{0080DB}{\underline{\textbf{35.068}}} (0.080) & \textcolor[HTML]{0080DB}{\underline{\textbf{131.120}}} (0.189) \\ \hline
 & $(48 \times 6)$; $\tau = 0.1$ & \textcolor[HTML]{0080DB}{\underline{\textbf{9.199}}} (0.044) & 37.335 (0.140) & 144.029 (0.512) \\
 & $(48 \times 6)$; $\tau = 1$ & 10.504 (0.036) & 35.474 (0.084) & 132.464 (0.191) \\
 & $(48 \times 6)$; $\tau = 10$ & 10.461 (0.041) & \underline{\textbf{35.142}} (0.085) & \underline{\textbf{131.391}} (0.185) \\
\bottomrule
\end{tabular}%
}
\end{table}

\clearpage
\renewcommand{\parabold}[1]{\paragraph{#1.}}
\section{Single Cell RNA-seq Data Analysis}\label{app:single_cell}

A central task in modern functional genomics and drug discovery is to predict how a population of cells responds to an external perturbation such as a chemical compound, a genetic intervention, or a change in culture conditions. Because most profiling technologies (single-cell RNA sequencing, mass cytometry, multiplexed imaging) destroy the cell upon measurement, one cannot observe the same cell before and after treatment. As a consequence, the unperturbed \emph{control} population and the \emph{perturbed} population are only accessible through two independent collections of single-cell measurements, and the learning problem is intrinsically one of matching \emph{distributions} rather than individual pairs of data points. This viewpoint is advocated by \citet{schiebinger2019optimal}, who used optimal transport to infer developmental trajectories from unpaired snapshots of single-cell gene expression during reprogramming, and was subsequently taken up in a series of works on perturbation response modeling, most notably \citet{bunne2023learning}. These works established optimal transport as a natural and principled language for formulating single-cell response prediction.

\paragraph{The 4i dataset}

The dataset we consider is the multiplexed imaging dataset of \citet{bunne2023learning}, obtained from the \emph{iterative indirect immunofluorescence imaging} (4i) technology. It is based on the sequential application of indirect immunofluorescence staining cycles to the same specimen, which allows a very large number of protein markers to be probed on the same set of cells. The result is a high-dimensional, single-cell-resolved readout of the proteomic state of each cell under control and under a wide range of drug treatments, made available by \citet{bunne2023learning} in a preprocessed form that we use without further modification. For each drug treatment we obtain two point clouds in a common feature space, one from the control population and one from the cells that were exposed to a drug, with sample sizes between roughly $1{,}300$ and $3{,}000$ cells per treatment (see \Cref{tab:4i-bestval-SD-n50}).

\paragraph{Mathematical task}

Denote by $P$ the distribution of the control cell population and by $Q_D$ the distribution of the cell population after exposure to drug $D$. We are given measurements $\bX_1, \dots, \bX_n$ from the control cell population and after treatment $D$, denoted by $\bY_1, \dots, \bY_m$, which we model as independent and identically distributed realizations from $P$ and $Q_D$, respectively.  We wish to construct an estimator $\hat T \colon \RR^p \to \RR^p$ such that the pushforward $\hat T_{\#} P$ is as close as possible to $Q_D$. The estimator is evaluated on a held-out test partition consisting of samples from both $P$ and $Q_D$, and the quality of the fit is measured by the Sinkhorn divergence, a geometrically meaningful two-sample discrepancy measure between probability distributions \citep{feydy2019interpolating,peyre2019computational}, between the predicted pushforward $\hat T_{\#} \hat P_{\mathrm{test}}$ and $\hat Q_{D,\mathrm{test}}$.

\paragraph{Why optimal transport}

Among distribution matching methodologies, optimal transport is particularly appealing in this setting for two reasons. Conceptually, the static OT map minimizes the expected squared displacement and thus embodies the biological belief that cells evolve and proliferate along minimal-effort trajectories in feature space, rather than being reshuffled arbitrarily. This heuristic is motivated by the presence of an underlying potential, which drives the development and change of cells as they undergo the effects of a treatment. Methodologically, OT has been repeatedly reported to perform well as a predictive framework for single-cell perturbation response, starting from the developmental-trajectory analysis of \citet{schiebinger2019optimal} and including the neural OT approach of \citet{bunne2023learning}, among others. We therefore adopt neural OT map estimation as the basis of our analysis and compare our HyCNN architecture against the most competitive convex neural OT baseline identified in our synthetic experiments.

\paragraph{Concrete modeling choices}

For each drug $D$, we train two families of neural OT estimators to map the control distribution onto the perturbed distribution: a HyCNN of width $48$ and depth $L = 4$, and an ICNN of width $64$ and depth $L = 4$ equipped with softplus activations and a quadratic term in the first layer (ICNN(sm+q)). The ICNN(sm+q) variant consistently emerged as one of the strongest convex neural baselines in our synthetic OT experiments, see \Cref{app:OT_experiments}, which motivates its use as the reference competitor here. The width--depth configuration $64 \times 4$ was chosen because depth four has become a de facto standard in the neural OT literature \citep{makkuva2020optimal,bunne2023learning,uscidda2023monge}, and because the corresponding width of HyCNNs yields models whose capacity is comparable to the ICNN. For the HyCNN, we employ a smooth log-sum-exp activation with a smoothness parameter $\tau = 1$, a choice that is capable of adapting to smooth structure while remaining flexible. The ICNN uses standard softplus activations for the same reason and includes a quadratic term in the first layer, which lead to performance improvements in our experiments.

\paragraph{Learning algorithm}

For training, we first split the control and drug cell data into $70\%-10\%-20\%$ for training, validation, and testing, respectively. We estimate the OT map using the variational semi-dual formulation as in \eqref{eq:HyCNN_OT_potential_estimation} for HyCNN and \eqref{eq:ICNN_objective} for ICNN, in which the outer and inner potentials are both parameterized by a HyCNN or ICNN. For the HyCNN and ICNN, we impose essentially the same learning algorithm as in \Cref{app:OT_implementation}. To assess the effect of the training budget, we consider two complementary schedules that are applied identically to both architectures. The \emph{short} schedule trains the networks for $T = 1{,}000$ outer iterations with $S = 5$ inner iterations per outer step, using the Adam optimizer with $\beta_1 = 0.5$, $\beta_2 = 0.9$, initial learning rate $\lambda = 10^{-2}$, and a cosine schedule with final learning-rate ratio $10^{-2}$. The \emph{long} schedule trains the networks for $T = 50{,}000$ outer iterations with $S = 10$ inner iterations per outer step, using the Adam optimizer with the same momentum parameters and a constant learning rate $\lambda = 5\times 10^{-4}$. The long schedule is roughly $100$ times more expensive than the short one (per-treatment training times of approximately $40$s, $60$s, $2$h, and $3$h for the short ICNN, short HyCNN, long ICNN, and long HyCNN, respectively, on our hardware); for this reason the long schedule is run for a single random seed per treatment, whereas the short schedule is repeated for $50$ seeds. All remaining hyperparameters are kept identical across the two architectures to keep the comparison as controlled as possible.

\paragraph{Evaluation and model selection}

To monitor and select models during training, we evaluate each candidate on a dedicated validation partition drawn from the training set. The validation criterion is the Sinkhorn divergence $\mathcal{S}{\epsilon}$ with squared Euclidean cost and regularization $\epsilon = 0.1$ \citep{feydy2019interpolating,peyre2019computational} between the predicted pushforward $\hat T_{\#} \hat P_{\mathrm{val}}$ and the ground-truth validation target $\hat Q_{D,\mathrm{val}}$. 
Validation losses are recorded every $10$ outer iterations under the short schedule, yielding $100$ candidate checkpoints per run, and every $100$ outer iterations under the long schedule, yielding $500$ candidate checkpoints per run. For each run, we identify the $K = 10$ checkpoints with the smallest validation Sinkhorn divergence and report the mean test Sinkhorn divergence over the $20\%$ held-out test partition. Averaging over the ten best-validation checkpoints, rather than selecting the single best one, ensures that the reported value reflects an entire neighborhood of the validation optimum and confirms that the comparison is stable along the training trajectory rather than an artifact of a single lucky checkpoint.
This \emph{best-validation averaging} rule is applied identically to both architectures. Under the short schedule, we repeat each experiment over $50$ random seeds governing data splits and network initialization, and report cross-seed means together with standard errors. Under the long schedule, the same rule applies, but due to the substantially higher per-run computational cost, we conduct only a single run per treatment, so no standard error is reported.

\paragraph{Results and discussion}

The resulting test-set Sinkhorn divergences are collected in \Cref{tab:4i-bestval-SD-n50}, which reports both training schedules for each of the $35$ drug treatments. Each table entry is the test-set Sinkhorn divergence averaged over the $K = 10$ checkpoints with the smallest validation loss along the corresponding training run, and this best-validation averaging is applied identically to both architectures and both schedules. The left two columns show, for each treatment, the cross-seed mean of this checkpoint-averaged quantity (with standard error in parentheses) over $50$ random seeds under the short schedule. The right two columns show the same checkpoint-averaged quantity for the single random-seed run under the long schedule, for which no standard error is reported. The results of the first seed of the short schedule, together with the long-schedule run, are visualized as grouped bar charts in \Cref{fig:HyCNN_training_OT}\textbf{(b)}; see \Cref{fig:HyCNN_training_OT_large} for an enlarged version. In the figures, treatments are ordered by short HyCNN test performance.

Three patterns emerge consistently from \Cref{tab:4i-bestval-SD-n50}. First, HyCNN attains a smaller test-set Sinkhorn divergence than ICNN consistently across treatments and schedules. Under the short schedule, HyCNN strictly improves over ICNN on all $35$ treatments, with (cross-seed) mean improvements that are typically on the order of $25$–$40\%$ and outside one standard error. Under the long schedule, HyCNN remains the better architecture on $32$ of the $35$ treatments, with the three exceptions (Imatinib, Pomalidomide-carfilzomib-dexamethasone, and Regorafenib) showing only marginal differences in favor of ICNN. Since each entry already averages over the ten best validation checkpoints, these observations cannot be attributed to a single checkpoint but instead reflect a consistent separation between the two architectures along their respective training trajectories. Second, the long schedule lowers the test-set Sinkhorn divergence relative to the short schedule across nearly every treatment for both architectures, confirming that the additional optimization budget is genuinely informative rather than an artifact of the learning-rate schedule. Third, the short HyCNN performs competitively with the long ICNN despite using roughly $100$ times less compute. It matches or improves upon the long ICNN on $13$ of the $35$ treatments and is often outperformed only by small margins on the remaining ones. Altogether, these findings indicate that, within the neural OT framework, replacing the ICNN potential by a HyCNN consistently improves the distributional fit, and that this improvement is not an artifact of an under-trained ICNN baseline. This is a useful insight for practitioners of neural OT in single-cell applications, as the resulting gain comes at only a slight additional per-iteration cost due to the two-gate architecture of HyCNN blocks.

\begin{table}[t!]
\centering
\caption{Sample sizes and test-set prediction errors across drug treatments for ICNN ($64\times4$) with softplus ($\tau = 1$) and a quadratic term in the first layer, and HyCNN ($48\times 4$) with logsumexp ($\tau = 1$). The two architectures are compared under two training budgets: the short run ($T = 1{,}000$ outer iterations, $S = 5$ inner steps, initial learning rate $\lambda = 0.01$, final learning-rate ratio $0.01$ under a cosine scheduler), and the long run ($T = 50{,}000$ outer iterations, $S = 10$ inner steps, constant learning rate $\lambda = 5\times 10^{-4}$). For each drug, we report the test-set Sinkhorn divergence ($\epsilon = 0.1$) on a $20\%$ test partition, averaged over the $K = 10$ checkpoints with smallest loss on a $10\%$ validation partition; validation losses are recorded every $10$ outer iterations under the short schedule and every $100$ outer iterations under the long schedule. The left two columns report the cross-seed mean of this checkpoint-averaged quantity (with standard error in parentheses) across $50$ random seeds; the right two columns report the same checkpoint-averaged quantity for the single long-schedule run per treatment. Bold indicates the smaller mean within each $(T, S)$ pair.     Treatments are abbreviated as in Figure \ref{fig:HyCNN_training_OT_large}, further recall that Figure \ref{fig:HyCNN_training_OT_large} only displays the fit for the first train/validation/test split along all methods, whereas here only for the long runs this split is considered.}
\label{tab:4i-bestval-SD-n50}
\makebox[\textwidth][c]{%
\resizebox{\textwidth}{!}{%
\begin{tabular}{|c|c|l l|c c|}
\hline
\multirow{3}{*}{Treatment} & \multirow{3}{*}{Sample size} & \multicolumn{4}{|c|}{\textcolor{white}{$e^{e^{e^e}}$}Sinkhorn Divergence~($\epsilon = 0.1$)\textcolor{white}{$e^{e^{e^e}}$}} \\[0.05cm]
 & & \multicolumn{1}{|c}{ICNN(sp+q)} & \multicolumn{1}{c|}{HyCNN}& \multicolumn{1}{c}{ICNN(sp+q)} & \multicolumn{1}{c|}{HyCNN} \\[0.05cm]
 &&\multicolumn{2}{|c|}{$(T = 1{,}000,\ S = 5)$}  & \multicolumn{2}{c|}{$(T = 50{,}000,\ S = 10)$} \\[0.05cm]
\hline
\textcolor{white}{$e^{e^{e^e}}$}Cisplatin\textcolor{white}{$e^{e^{e^e}}$} & 2693 & 2.86~(0.63) & \textbf{1.75~(0.011)} & 1.69 & \textbf{1.65} \\
Cisp-olap & 2245 & 2.65~(0.13) & \textbf{2.05~(0.018)} & 1.97 & \textbf{1.95} \\
Crizotinib & 1978 & 5.35~(0.97) & \textbf{3.17~(0.039)} & 3.00 & \textbf{2.90} \\
Dabrafenib & 2675 & 2.57~(0.41) & \textbf{1.65~(0.012)} & 1.64 & \textbf{1.56} \\
Dacarbazine & 2485 & 2.49~(0.25) & \textbf{1.68~(0.013)} & 1.71 & \textbf{1.64} \\
Dasatinib & 2250 & 5.13~(0.55) & \textbf{3.18~(0.039)} & 2.75 & \textbf{2.70} \\
Decitabine & 2648 & 2.54~(0.16) & \textbf{1.92~(0.014)} & 1.80 & \textbf{1.73} \\
Dexamethasone & 2796 & 2.88~(0.29) & \textbf{1.94~(0.018)} & 1.93 & \textbf{1.84} \\
Erlotinib & 2406 & 3.32~(0.47) & \textbf{1.91~(0.014)} & 1.84 & \textbf{1.77} \\
Everolimus & 2592 & 3.23~(0.60) & \textbf{1.86~(0.012)} & 1.88 & \textbf{1.68} \\
Hydroxyurea & 2715 & 2.95~(0.35) & \textbf{1.99~(0.012)} & 2.08 & \textbf{2.03} \\
Imatinib & 2631 & 3.24~(0.30) & \textbf{2.33~(0.023)} & \textbf{2.00} & 2.01 \\
Ixazomib & 2514 & 3.72~(0.18) & \textbf{2.81~(0.021)} & 2.62 & \textbf{2.44} \\
Ixa-Lenal-Dexa & 2522 & 3.40~(0.32) & \textbf{2.16~(0.012)} & 2.20 & \textbf{2.10} \\
Lenalidomide & 2524 & 2.36~(0.26) & \textbf{1.56~(0.010)} & 1.65 & \textbf{1.55} \\
Melphalan & 2352 & 3.75~(1.3) & \textbf{1.71~(0.012)} & 1.65 & \textbf{1.53} \\
Midostaurin & 2425 & 3.24~(0.26) & \textbf{2.08~(0.017)} & 1.94 & \textbf{1.75} \\
Mln2480 & 2367 & 2.47~(0.30) & \textbf{1.56~(0.014)} & 1.64 & \textbf{1.57} \\
Olaparib & 2540 & 3.50~(0.54) & \textbf{2.01~(0.016)} & 1.86 & \textbf{1.82} \\
Paclitaxel & 2636 & 4.72~(1.2) & \textbf{2.76~(0.018)} & 2.51 & \textbf{2.45} \\
Palbociclib & 2322 & 3.01~(0.40) & \textbf{1.90~(0.014)} & 1.83 & \textbf{1.74} \\
Panobinostat & 2492 & 2.95~(0.34) & \textbf{1.92~(0.022)} & 1.70 & \textbf{1.56} \\
Poma-Carfil-Dexa & 2545 & 3.82~(0.23) & \textbf{2.70~(0.025)} & \textbf{2.61} & 2.68 \\
Regorafenib & 2524 & 4.91~(1.7) & \textbf{1.76~(0.012)} & \textbf{1.71} & 1.76 \\
Sorafenib & 2763 & 3.02~(0.53) & \textbf{1.65~(0.012)} & 1.69 & \textbf{1.61} \\
Staurosporine & 1310 & 13.94~(8.8) & \textbf{4.05~(0.041)} & 3.55 & \textbf{3.54} \\
Temozolomide & 2513 & 7.90~(5.5) & \textbf{1.93~(0.017)} & 2.21 & \textbf{1.95} \\
Trametinib & 2637 & 3.42~(1.1) & \textbf{1.38~(0.0094)} & 1.41 & \textbf{1.30} \\
Tram-dabra & 2958 & 2.23~(0.30) & \textbf{1.32~(0.011)} & 1.34 & \textbf{1.30} \\
Tram-erlot & 2452 & 2.07~(0.14) & \textbf{1.39~(0.0092)} & 1.47 & \textbf{1.36} \\
Tram-midos & 2211 & 5.60~(1.3) & \textbf{2.62~(0.026)} & 2.07 & \textbf{2.01} \\
Tram-panob & 2377 & 2.52~(0.37) & \textbf{1.57~(0.017)} & 1.34 & \textbf{1.31} \\
Ulixertinib & 2858 & 2.27~(0.14) & \textbf{1.79~(0.016)} & 1.94 & \textbf{1.66} \\
Vem-cobimet & 2695 & 2.70~(0.17) & \textbf{1.93~(0.017)} & 1.94 & \textbf{1.81} \\
Vindesine & 2102 & 3.49~(0.41) & \textbf{2.66~(0.025)} & 2.27 & \textbf{2.19} \\
\hline
\end{tabular}}}
\end{table}

\begin{figure}[t!]
  \begin{center}
    \centerline{%
    \includegraphics[width=\textwidth, trim = 0 0 0 0, clip]{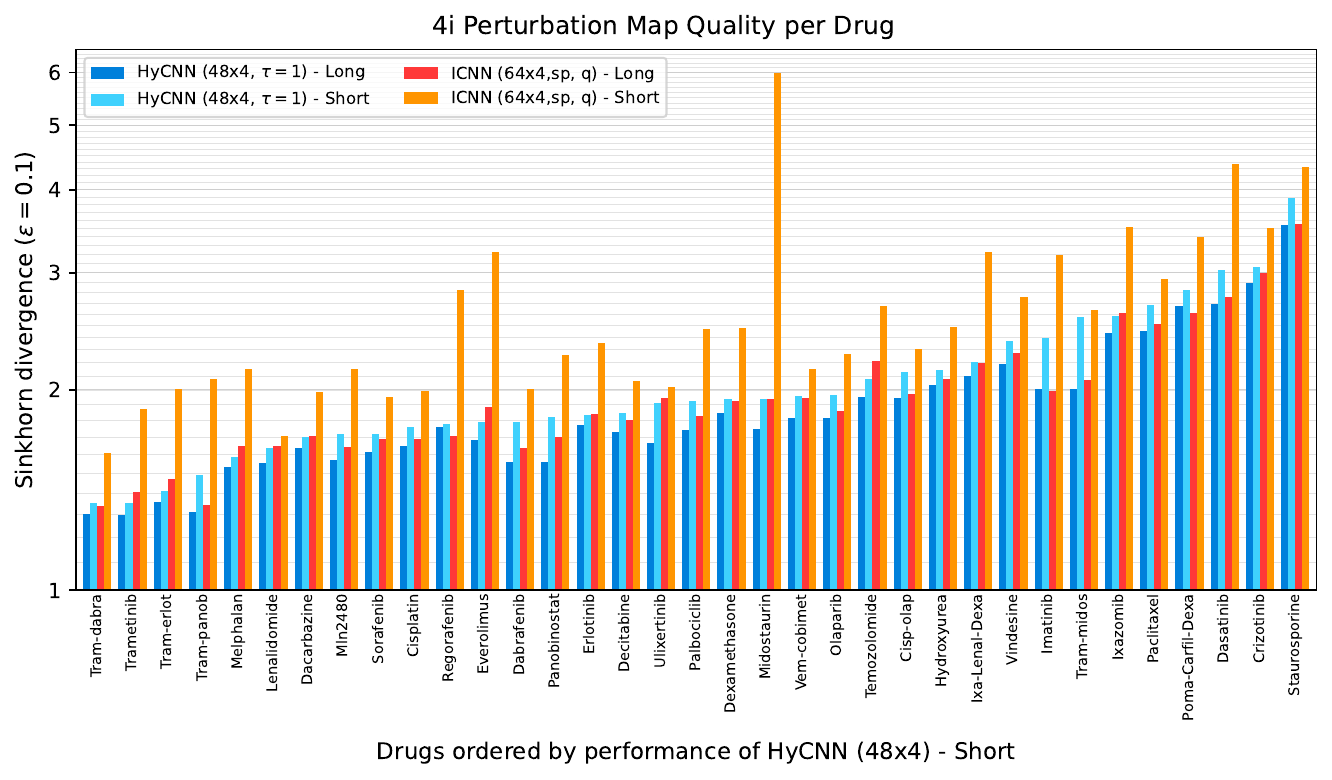}}
    \caption{Enlarged view of Figure \ref{fig:HyCNN_training_OT}\textbf{(b)} with drug labels on the $x$-axis. Sinkhorn divergence between predicted and ground-truth drug-perturbed distributions on a $20\%$ held-out test partition of the 4i dataset for each drug, based on ICNN ($64\times4$, Softplus, $\tau = 1$, quad) and HyCNN ($48\times 4$, logsumexp, $\tau = 1$), trained for $T = 1{,}000$ outer iterations with $S = 5$ inner iterations (short) and $T = 50{,}000$ outer iterations with $S = 10$ inner iterations (long). For each architecture and schedule, the displayed value is the test-set Sinkhorn divergence averaged over the $K = 10$ checkpoints with smallest validation loss along the corresponding training run. Per treatment, the distributional fit is computed on a fixed train/validation/test split, we do this to ensure a fair comparison among methods. This also explains why the values for the short runs slightly deviate from the cross-seed means reported in \Cref{tab:4i-bestval-SD-n50}.  
    In~the figure use the following abbreviations for treatments: Cisp-olap for Cisplatin-olaparib, Ixa-Lenal-Dexa for Ixazomib-lenalidomide-dexamethasone, Poma-Carfil-Dexa for Pomalidomide-carfilzomib-dexamethasone, Tram-dabra for Trametinib-dabrafenib, Tram-erlot for Trametinib-erlotinib, Tram-midos for Trametinib-midostaurin, Tram-panob for Trametinib-panobinostat, and Vem-cobimet for Vemurafenib-cobimetinib.
    }\label{fig:HyCNN_training_OT_large}
  \end{center}
\end{figure}

\parabold{CellOT and future directions}
For context, the CellOT pipeline of \citet{bunne2023learning} trains the ICNN-based neural OT model for substantially more iterations than the $50{,}000$ outer steps used in our long schedule; for instance, \citet{bunne2023learning} use $T = 250{,}000$ iterations with $S = 10$ inner loops at a constant learning rate of $10^{-4}$. Our long schedule already addresses much of the concern that the ICNN baseline could simply be under-trained: increasing the ICNN budget from $1{,}000$ to $50{,}000$ outer iterations narrows its absolute error noticeably, yet HyCNN remains the better architecture on $32$ of the $35$ treatments under the long schedule, and the short HyCNN remains broadly competitive with the long ICNN despite using $100$ times less compute. Since both architectures continue to benefit from additional training while preserving this ordering, we expect a full $250{,}000$-iteration evaluation against the CellOT baseline to reinforce rather than overturn our qualitative picture reported here. A comprehensive comparison with a larger budget is, therefore, a natural next step.

    \end{document}

%% file: settings/header.tex
\usepackage{natbib}

\usepackage{apalike}
\usepackage{comment}
\usepackage{lipsum}
\usepackage{microtype}

\usepackage[font=small,labelfont=bf]{caption}
\usepackage
[
        a4paper,
        left=3cm,
        right=3cm,
        top=3cm,
        bottom=3cm,
]
{geometry}

\usepackage{setspace}
\usepackage{graphicx}
\graphicspath{{./figures}}

\usepackage{multicol,latexsym, array}
\usepackage{textcomp}
\usepackage[ansinew]{inputenc}
\usepackage{amsmath,amssymb, amsthm}
\usepackage[OT1]{fontenc}
\usepackage[english]{babel}
\usepackage{hyperref}

\hypersetup{
   colorlinks,
   linkcolor={blue},
   citecolor={blue},
   urlcolor={blue}
}

\usepackage{enumitem}
\setlist[itemize]{leftmargin=1.5em}
\usepackage{MnSymbol} 
\usepackage{empheq}
\usepackage{rotating}
\usepackage{titletoc}
\usepackage{framed}
\usepackage{multirow,bigdelim}
\usepackage{bbm}
\usepackage{algorithm}
\usepackage{algorithmic}
\usepackage{pgf}
\usepackage{pgfplots}
\usepackage{tikz}
\pgfplotsset{compat=1.17}
\usepackage{shadethm}
\usepackage{thmtools}
\usepackage{mdframed}
\usepackage{lipsum}
\usepackage{fixmath}
\usepackage{multicol}
\usepackage{footmisc}
\usepackage{macros}

\usepackage{caption}
\usepackage{subcaption}

\usepackage{xargs}  
\usepackage{xcolor} 

\definecolor{DarkGray}{RGB}{90,90,90}

\usepackage[colorinlistoftodos,prependcaption,textsize=tiny]{todonotes}
\usepackage{cleveref}

\date{\today}

    {
    \paragraph{Acknowledgements:} 
    }

\usepackage{caption}
\usepackage{mathrsfs}
\usetikzlibrary{arrows}

\usepackage{xpatch}

\makeatletter
\xpatchcmd{\endmdframed}
  {\aftergroup\endmdf@trivlist\color@endgroup}
  {\endmdf@trivlist\color@endgroup\@doendpe}
  {}{}
\makeatother

\usepackage{nicefrac}
\usepackage{dsfont}

\usepackage{bigints}
\usepackage{mathtools}
\usepackage{fixltx2e}

\usepackage{mathtools}
\usepackage{color}
\usepackage{fancyhdr}
\usepackage{lastpage}

\usepackage{ifthen}

\renewcommand{\phi}{\varphi}

\numberwithin{equation}{section}

\newcommand{\footremember}[2]{
	\footnote{#2}
	\newcounter{#1}
	\setcounter{#1}{\value{footnote}}
}

\newcommand{\footrecall}[1]{
	\footnotemark[\value{#1}]
}

\newif\ifexclude
\excludefalse  
\newcommand{\excludePart}[1]{\ifexclude \else #1 \fi}

%% file: settings/meta.tex
\author{
  \begin{tabular}{ccc}
\begin{tabular}{c}
  Shayan Hundrieser\footremember{twente}{\;\;Department of Applied Mathematics, University of Twente, Enschede, The Netherlands}\!\!\footremember{equalContr}{\;\;Equal Contribution} \\[-1ex]
  \footnotesize{\href{mailto:s.hundrieser@utwente.nl}{s.hundrieser@utwente.nl}}
\end{tabular}&
\begin{tabular}{c}
  Insung Kong\footrecall{twente}\footrecall{equalContr} \\[-1ex]
  \footnotesize{\href{mailto:insung.kong@utwente.nl}{insung.kong@utwente.nl}}
\end{tabular}&
\begin{tabular}{c}
  Johannes Schmidt-Hieber\footrecall{twente} \\[-1ex]
  \footnotesize{\href{mailto:a.j.schmidt-hieber@utwente.nl}{a.j.schmidt-hieber@utwente.nl}}
\end{tabular}
\end{tabular}\vspace{0.2cm}
}

\pagenumbering{arabic}

\maketitle

\excludePart{
\begin{abstract}
\noindent  We introduce Hyper Input Convex Neural Networks (HyCNNs), a novel neural network architecture designed for learning convex functions. HyCNNs combine the principles of Maxout networks with input convex neural networks (ICNNs) to create a neural network   that is always convex in the input, theoretically capable of leveraging depth, and performs reliable when trained at scale compared to ICNNs. Concretely, we prove that HyCNNs require exponentially fewer parameters than ICNNs to approximate quadratic functions up to a given precision. Throughout a series of synthetic experiments, we demonstrate that HyCNNs outperform existing ICNNs and MLPs in terms of predictive performance for convex regression and interpolation tasks. We further apply HyCNNs to learn high-dimensional optimal transport maps for synthetic examples and for single-cell RNA sequencing data, where they oftentimes outperform ICNN-based neural optimal transport methods and other baselines across a wide range of settings.
\end{abstract}

\vspace{0.5cm}}